\newtheorem{theorem}{Theorem}[section]
\newtheorem*{theorem*}{Theorem}
\newtheorem{lemma}[theorem]{Lemma}
\newtheorem{corollary}[theorem]{Corollary}
\newtheorem{remark}[theorem]{Remark}
\newtheorem{claim}[theorem]{Claim}
\newtheorem*{claim*}{Claim}
\newtheorem{example}[theorem]{Example}
\newtheorem{observation}[theorem]{Observation}
\theoremstyle{definition}
\newtheorem{definition}[theorem]{Definition}
\crefname{theorem}{Theorem}{Theorems}
\crefname{lemma}{Lemma}{Lemmas}
\crefname{definition}{Definition}{Definitions}
\crefname{claim}{Claim}{Claims}
\crefname{remark}{Remark}{Remarks}
\crefname{observation}{Observation}{Observations}
\crefname{corollary}{Corollary}{Corollaries}
\crefname{appendix}{Appendix}{Appendices}
\crefname{section}{Section}{Sections}
\crefname{equation}{Eq.}{Eqs.}
\crefname{figure}{Figure}{Figures}
\crefname{table}{Table}{Tables}
\newcommand{\rme}{\mathrm{e}}
\newcommand{\bbN}{\mathbb{N}}
\newcommand{\bbQ}{\mathbb{Q}}
\newcommand{\bbR}{\mathbb{R}}
\newcommand{\calC}{\mathcal{C}}
\newcommand{\calI}{\mathcal{I}}
\newcommand{\calJ}{\mathcal{J}}
\newcommand{\calP}{\mathcal{P}}
\renewcommand{\vec}[1]{\mathbf{\bm{#1}}}
\newcommand{\mat}[1]{\mathbf{\bm{#1}}}
\newcommand{\bigO}{\mathord{\mathcal{O}}}
\newcommand{\alg}{\mathord{\textsc{alg}}}
\DeclareMathOperator{\rank}{rank}
\DeclareMathOperator{\sgn}{sgn}
\DeclareMathOperator{\per}{per}
\DeclareMathOperator{\tw}{tw}
\DeclareMathOperator{\nnz}{nz}
\DeclareMathOperator{\isize}{size}
\DeclareMathOperator{\diag}{diag}
\DeclareMathOperator*{\argmax}{argmax}
\DeclareMathOperator{\inv}{inv}
\DeclareMathOperator{\ZZ}{\mathcal{Z}}
\DeclareMathOperator{\intro}{\mathfrak{I}}
\DeclareMathOperator{\forget}{\mathfrak{F}}
\DeclareMathOperator{\join}{\mathfrak{J}}
\newcommand{\Sym}{\mathord{\mathfrak{S}}}
\newcommand{\iu}{\mathord{\mathrm{i\mkern1mu}}}
\newcommand{\state}{\mathord{\mathcal{C}}}
\newcommand{\OPT}{\mathord{\mathrm{OPT}}}
\newcommand*{\bij}{\mathrel{\rightarrowtail\!\!\!\!\!\rightarrow}}
\newcommand{\isep}{\mathrel{..}\nobreak}
\newcommand{\cP}{\textup{\textsf{P}}\xspace}
\newcommand{\FP}{\textup{\textsf{FP}}\xspace}
\newcommand{\RP}{\textup{\textsf{RP}}\xspace}
\newcommand{\UP}{\textup{\textsf{UP}}\xspace}
\newcommand{\ModkP}[1]{\textup{\textsf{Mod}$_{#1}$\textsf{P}}\xspace}
\newcommand{\NP}{\textup{\textsf{NP}}\xspace}
\newcommand{\shP}{\textup{\#\textsf{P}}\xspace}
\newcommand{\FPT}{\textup{\textsf{FPT}}\xspace}
\newcommand{\shW}[1]{\textup{\#\textsf{W}[#1]}\xspace}
\newcommand{\XP}{\textup{\textsf{XP}}\xspace}
\DeclarePairedDelimiter\ceil{\lceil}{\rceil}
\DeclarePairedDelimiter\floor{\lfloor}{\rfloor}
\newcommand{\nicered}{Red2}
\newcommand{\niceblue}{Blue4}
\newcommand{\nicegreen}{Cyan4}
\newcommand{\cmark}{\textcolor{\niceblue}{\ding{51}}\xspace}%
\newcommand{\xmark}{\textcolor{\nicered}{\ding{55}}\xspace}%
\newcommand{\Holder}{H\"{o}lder}
\title{Computational Complexity of Normalizing Constants for the Product of Determinantal Point Processes}
\author[1]{Naoto Ohsaka\thanks{ \href{mailto:ohsaka@nec.com}{\texttt{ohsaka@nec.com}}}}
\author[1]{Tatsuya Matsuoka\thanks{\href{mailto:ta.matsuoka@nec.com}{\texttt{ta.matsuoka@nec.com}}}}
\affil[1]{NEC Corporation}
\date{\today}
\begin{document}

\maketitle

\begin{abstract}%
We consider the \emph{product of determinantal point processes} (DPPs),
a point process whose probability mass is
proportional to the product of principal minors of \emph{multiple} matrices,
as a natural, promising generalization of DPPs.
We study the computational complexity
of computing its \emph{normalizing constant},
which is among the most essential probabilistic inference tasks.
Our complexity-theoretic results (almost) rule out the existence of efficient algorithms for this task
unless the input matrices are forced to have favorable structures.
In particular, we prove the following:
\begin{itemize}
    \item Computing $ \sum_{S} \det(\mat{A}_{S,S})^p $ exactly for every (fixed) \emph{positive even integer} $p$ is \UP-hard and \ModkP{3}-hard,
    which gives a negative answer to an open question posed by \citet{kulesza2012determinantal}.
    
    \item $ \sum_{S} \det(\mat{A}_{S,S}) \det(\mat{B}_{S,S}) \det(\mat{C}_{S,S}) $
    is \NP-hard to approximate within a factor of $ 2^{\bigO(|\calI|^{1-\epsilon})} $ or $2^{\bigO(n^{1/\epsilon})}$ for any $\epsilon > 0$,
    where $|\calI|$ is the input size and $n$ is the order of the input matrix.
    This result is stronger than
    the \shP-hardness for the case of \emph{two} matrices derived by \citet{gillenwater2014approximate}.
    
    \item There exists a $ k^{\bigO(k)} n^{\bigO(1)} $-time algorithm for
    computing $\sum_{S} \det(\mat{A}_{S,S}) \det(\mat{B}_{S,S})$, where
    $k$ is the maximum rank of $\mat{A}$ and $\mat{B}$ or
    the treewidth of the graph formed by nonzero entries of $\mat{A}$ and $\mat{B}$.
    Such parameterized algorithms are said to be \emph{fixed-parameter tractable}.
\end{itemize}
These results can be extended to the \emph{fixed-size} case.
Further, we present two applications of fixed-parameter tractable algorithms given a matrix $\mat{A}$ of treewidth $w$:
\begin{itemize}
    \item We can compute a $2^{\frac{n}{2p-1}}$-approximation to $\sum_S \det(\mat{A}_{S,S})^p$ for any \emph{fractional} number $p>1$ in $w^{\bigO(wp)} n^{\bigO(1)}$ time.
    
    \item We can find a $2^{\sqrt{n}}$-approximation to \emph{unconstrained maximum a posteriori inference} in $w^{\bigO(w \sqrt{n})}n^{\bigO(1)}$ time.
\end{itemize}
\end{abstract}

\clearpage

\tableofcontents

\section{Introduction}
\label{sec:intro}

\emph{Determinantal point processes} (DPPs) offer an appealing probabilistic model to compactly express negative correlation among combinatorial objects
\citep{macchi1975coincidence,borodin2005eynard}.
Given an $n \times n$ matrix $\mat{A}$,
a DPP defines a probability distribution on $2^{[n]}$ such that
the probability of drawing a particular subset $S \subseteq [n]$ is proportional to
the principal minor $\det(\mat{A}_{S,S})$.
Consider the following subset selection task:
given $n$ items (e.g., images \citealp{kulesza2011kdpps}) associated with
quality scores $q_i$ and feature vectors $ \bm{\phi}_i $ for each $i \in [n]$,
we are asked to choose a small group of high-quality, diverse items.
One can construct $\mat{A}$ as $A_{i,j} = q_i q_j \bm{\phi}_i^\top \bm{\phi}_j $,
resulting in that
$ \det(\mat{A}_{S,S}) $
is the squared volume of the parallelepiped spanned by  $\{ q_i \bm{\phi}_i \}_{i \in S}$,
which balances item quality and set diversity.
With the development of efficient algorithms for many inference tasks,
such as normalization, sampling, and marginalization,
DPPs have come to be applied to numerous machine learning tasks, e.g.,
image search \citep{kulesza2011kdpps},
video summarization \citep{gong2014diverse},
object retrieval \citep{affandi2014learning},
sensor placement \citep{krause2008near}, and
the Nystr\"{o}m method \citep{li2016fasta}.

One of the recent research trends is to extend or generalize DPPs
to express more complicated distributions.
Computing the \emph{normalizing constant} (a.k.a.~the \emph{partition function})
for such new models is at the heart of efficient probabilistic inference.
For example, we can efficiently sample a subset from
partition DPPs \citep{celis2017complexity},
which are restricted to including a fixed number of elements from each prespecified group,
by quickly evaluating their normalizing constant.
Such tractability is, of course, not necessarily the case for every generalization.

In this paper, we consider a natural, (seemingly) promising generalization of DPPs involving \emph{multiple} matrices.
The \emph{product DPP} ($\Pi$-DPP) of $m$ matrices $ \mat{A}^1, \ldots, \mat{A}^m $ of size $n \times n$
defines the probability mass for each subset $S$ as proportional to
$ \det(\mat{A}^1_{S,S}) \cdots \det(\mat{A}^m_{S,S}) $,\footnote{
Here, $\mat{A}^1, \ldots, \mat{A}^m$ do not denote a power of matrix $\mat{A}$ but simply denotes (possibly) distinct matrices.
}
which can be significantly expressive:
it enables us to \emph{embed} some constraints in DPPs, e.g.,
those that are defined by partitions \citep{celis2017complexity} and bipartite matching, and
it contains \emph{exponentiated DPPs} \citep{mariet2018exponentiated} of an integer exponent as a special case.
The computational complexity of its normalizing constant, i.e.,
\begin{align*}
\ZZ_m(\mat{A}^1, \ldots, \mat{A}^m) \triangleq \sum_{S \subseteq [n]} \det(\mat{A}^1_{S,S}) \cdots \det(\mat{A}^m_{S,S}),
\end{align*}
is almost nebulous,
except for $m \leq 2$
\citep[see \cref{subsec:intro:related-work}]{gurvits2005complexity,gurvits2009complexity,gillenwater2014approximate,anari2017generalization}.
Our research question is thus the following:

\begin{framed}
\centering
\emph{How hard (or easy) is it to compute normalizing constants for $\Pi$-DPPs?}
\end{framed}

\subsection{Our Contributions}
We present an intensive study on
the computational complexity of computing the normalizing constant for $\Pi$-DPPs.
Our quest can be partitioned into five investigations: intractability, inapproximability, fixed-parameter tractability, extensions to the fixed-size version, and applications.
Our complexity-theoretic results in \cref{sec:intract,sec:inapprox,sec:fpt} summarized in \cref{tab:summary}
(almost) rule out the existence of efficient algorithms for this computation problem
unless the input matrices are forced to have favorable structures.
We also demonstrate two fundamental properties of $\Pi$-DPPs (\cref{sec:property}).
We refer the reader to \cref{subsec:pre:complexity} for a brief introduction to complexity classes.
The paragraph headings begin
with a \xmark mark for negative (i.e., hardness) results and
with a \cmark mark for positive (i.e., algorithmic) results.

\subsection*{\underline{Contribution 1: Intractability (\cref{sec:intract})}}
We analyze the hardness of computing normalizing constants \emph{exactly}.
Computing $\ZZ_2(\mat{A}, \mat{B}) = \sum_S \det(\mat{A}_{S,S}) \det(\mat{B}_{S,S})$ for two positive semi-definite matrices $\mat{A}$ and $\mat{B}$ is known to be \shP-hard \citep{gillenwater2014approximate}.\footnote{
\shP is the class of function problems of
counting the number of accepting paths of a nondeterministic polynomial-time Turing machine (\NP machine), and hence it holds that \NP $\subseteq$ \shP.}

\paragraph{\xmark \textcolor{\nicered}{Exponentiated DPPs.}}
Our first target is a special case where $ \mat{A}^i = \mat{A} $ for all $i$;
i.e., the probability mass for subset $S \subseteq [n]$ is proportional to $\det(\mat{A}_{S,S})^p$ for some integer $p$,
which includes \emph{exponentiated DPPs} (E-DPPs) \citep{mariet2018exponentiated} of an integer exponent.
The diversity preference can be controlled via exponent parameter $p$:
increasing $p$ prefers more diverse subsets than DPPs, while
setting $p=0$ results in a uniform distribution.
The original motivation for computing of the normalizing constant
$ \sum_{S} \det(\mat{A}_{S,S})^p $
is the Hellinger distance between two DPPs \citep{kulesza2012determinantal}.
We prove that for every (fixed) positive even integer $p = 2,4,6,\ldots$,
it is \UP-hard%
\footnote{
\UP is the class of decision problems solvable by an \NP machine with at most one accepting path;
it holds that \cP $\subseteq$ \UP $\subseteq$ \NP.
}
and \ModkP{3}-hard%
\footnote{
\ModkP{3} is the class of decision problems solvable by an \NP machine,
where the number of accepting paths is not divisible by $3$.
}
to compute this normalizing constant,
even when $\mat{A}$ is a $(-1,0,1)$-matrix or a P-matrix (\cref{cor:pow-hard}).
Hence, no polynomial-time algorithm exists for it unless
both \textsc{IntegerFactorization} $\in$ \UP and
\textsc{GraphIsomorphism} $\in$ \ModkP{3} are polynomial-time solvable.
In particular, \UP-hardness excludes the existence of any polynomial-time algorithm unless \RP~$=$~\NP \citep{valiant1986np}.
Our result gives a negative answer to an open question posed by \citet[Section 7.2]{kulesza2012determinantal}.
We must emphasize that
\citet{gurvits2005complexity,gurvits2009complexity} already proved
the \shP-hardness of computing $\sum_S \det(\mat{A}_{S,S})^2$
(see \cref{subsec:intro:related-work}).

\subsection*{\underline{Contribution 2: Inapproximability (\cref{sec:inapprox})}}
After gaining an understanding of the hardness of exact computation,
we examine the possibility of \emph{approximation}.
Our hope is to guess an accurate estimate; e.g.,
an $\rme^n$-approximation is possible for the case of two matrices \citep{anari2017generalization}.

\paragraph{\xmark \textcolor{\nicered}{(Sub)exponential Inapproximability for the Case of Three Matrices.}}
Unfortunately, our hopes are dashed:
we prove that
it is \NP-hard to approximate the normalizing constant for $\Pi$-DPPs of three matrices,
i.e., $\ZZ_3(\mat{A}, \mat{B}, \mat{C}) = \sum_S \det(\mat{A}_{S,S})\det(\mat{B}_{S,S})\det(\mat{C}_{S,S})$,
within a factor of $2^{\bigO(|\calI|^{1-\epsilon})}$ or $2^{\bigO(n^{1/\epsilon})}$ for any $\epsilon > 0 $ even when $\mat{A}$,~$\mat{B}$, and~$\mat{C}$ are positive semi-definite,
where $|\calI|$ is the number of bits required to represent the three matrices
(\cref{thm:inapprox-3}).
For instance, even a $2^{n^{100}}$-approximation cannot be expected in polynomial time.
Moreover, unless \RP~$=$~\NP, \emph{approximate sampling} is impossible; i.e.,
we cannot generate a sample (in polynomial time) from a distribution whose total variation distance from 
the $\Pi$-DPP defined by $\mat{A}$,~$\mat{B}$, and~$\mat{C}$ is at most $\frac{1}{3}$.
The same hardness results hold for the case of four or more matrices (i.e., $m \geq 4$).
On the other hand, a simple guess of the number $1$ is proven to be
a $ 2^{\bigO(|\calI|^2)} $-approximation (\cref{obs:approx-3}).

\paragraph{\xmark \textcolor{\nicered}{Approximation-Preserving Reduction from Mixed Discriminant to the Case of Two Matrices.}}
We devise an approximation-preserving reduction from the \emph{mixed discriminant} to the normalizing constant $\ZZ_2(\mat{A}, \mat{B})$ for the $\Pi$-DPP of two matrices
(\cref{thm:AP-D-ZZ2}).
Not only is the mixed discriminant \shP-hard to compute,
but no fully polynomial-time randomized approximation scheme (FPRAS)\footnote{
An FPRAS is a randomized algorithm 
that outputs an $\rme^{\epsilon}$-approximation with probability at least $\frac{3}{4}$ and runs in polynomial time in the input size and $\epsilon^{-1}$ (see \cref{def:fpras}).
}
is also currently known, and
its existence is rather doubtful \citep{gurvits2005complexity};
hence, the approximation-preserving reduction tells us that $\ZZ_2$ is unlikely to admit an FPRAS.

\subsection*{\underline{Contribution 3: Fixed-Parameter Tractability (\cref{sec:fpt})}}
We now resort to \emph{parameterization},
which has recently succeeded in overcoming the difficulty of machine learning problems \citep{ganian2018parameterized,eiben2019parameterized}.
Parameterized complexity \citep{downey2012parameterized} is a research field aiming to
classify (typically, \NP-hard) problems based on their computational complexity
with respect to some parameters.
Given a \emph{parameter} $k$ that may be independent of the input size $|\calI|$,
we say that a problem is \emph{fixed-parameter tractable} (FPT)
if it is solvable in $\textcolor{\nicegreen}{f(k)} |\calI|^{\bigO(1)}$ time
for some computable function $\textcolor{\nicegreen}{f}$.
On the other hand,
a problem solvable in $|\calI|^{\textcolor{\nicegreen}{f(k)}}$ time is \emph{slice-wise polynomial} (XP).
While both have polynomial runtimes for every fixed $k$,
the polynomial part is dramatically different between them ($|\calI|^{\bigO(1)}$ versus $|\calI|^{\textcolor{\nicegreen}{f(k)}}$).
Selecting appropriate parameters is vital to devising fixed-parameter tractability.
We introduce three parameters; the first two turn out to be FPT, and
the third is unlikely to be even XP.

\paragraph{\cmark \textcolor{\niceblue}{(1) Maximum Rank $\rightarrow$ FPT.}}
\emph{Rank} is a natural parameter for matrices.
We can assume bounded-rank matrices for DPPs if
the feature vectors $\bm{\phi}_i$ are low-dimensional \citep{celis2018fair}, or
the largest possible subset is far smaller than the ground set size $n$; e.g.,
\citet*{gartrell2017low}
learned a matrix factorization of rank $15$ for $n \approx 2{,}000$ by using real-world data.
We prove that there exists an $\textcolor{\nicegreen}{r^{\bigO(r)}} n^{\bigO(1)}$-time FPT algorithm for
computing the normalizing constant for two $n \times n$ positive semi-definite matrices $\mat{A}$ and $\mat{B}$,
where $r$ is the \emph{maximum rank} of $\mat{A}$ and $\mat{B}$ (\cref{thm:fpt-rank-2}).
The central idea is to decompose $\mat{A}$ and $\mat{B}$ into 
$n \times r$ rectangular matrices
and then apply the Cauchy--Binet formula.
Our FPT algorithm can be generalized to
the case of $m$ matrices of rank at most $r$,
increasing the runtime to $\textcolor{\nicegreen}{r^{\bigO(mr)}} n^{\bigO(1)}$ (\cref{thm:fpt-rank-m}).

\paragraph{\cmark \textcolor{\niceblue}{(2) Treewidth of Union $\rightarrow$ FPT.}}
\emph{Treewidth} \citep{halin1976s,robertson1986graph,arnborg1989linear,bertele1972nonserial} is one of the most important graph-theoretic parameters;
it measures the ``tree-likeness'' of a graph
(see \cref{def:treedecomp}).
Many \NP-hard problems on graphs have been shown to be FPT when parameterized by the treewidth \citep{cygan2015parameterized,fomin2010exact}.
Informally, the treewidth of a matrix is
that of the graph formed by the \emph{nonzero entries} in the matrix;
e.g., matrices of \emph{bandwidth} $b$ have treewidth $\bigO(b)$.
If feature vectors $\bm{\phi}_i$ exhibit clustering properties \citep{vandermaaten2008visualizing},
the similarity score $\bm{\phi}_i^\top \bm{\phi}_j$ between items from different clusters would be negligibly small,
and such entries can be discarded to obtain a small-bandwidth matrix.
In the context of change-point detection applications,
\citet*{zhang2016block} observe a small-bandwidth matrix to efficiently solve maximum a posteriori inference on DPPs.
We prove that there exists a $ \textcolor{\nicegreen}{w^{\bigO(w)}} n^{\bigO(1)} $-time FPT algorithm
for computing the normalizing constant $\ZZ_2(\mat{A}, \mat{B})$ for two matrices $\mat{A}$ and $\mat{B}$,
where $w$ is the treewidth of the \emph{union} of nonzero entries in $\mat{A}$ and $\mat{B}$ (\cref{thm:fpt-treewidth-2}).
The proof is based on dynamic programming,
which is a typical approach but requires complicated procedures.
Our FPT algorithm can be generalized
to the case of $m$ matrices, increasing the runtime
to $ \textcolor{\nicegreen}{w^{\bigO(mw)}} n^{\bigO(1)} $ (\cref{thm:fpt-treewidth-m}).

\paragraph{\xmark \textcolor{\nicered}{(3) Maximum Treewidth $\rightarrow$ Unlikely to be XP.}}
Our FPT algorithm in \cref{thm:fpt-treewidth-2} implicitly
benefits from the fact that
$\mat{A}$ and $\mat{B}$ have nonzero entries in similar places.
So, what happens if $\mat{A}$ and $\mat{B}$ are structurally different?
Can we still get FPT algorithms when the parameterization is by the \emph{maximum} treewidth of $\mat{A}$ and $\mat{B}$?
The answer is no:
computing the normalizing constant $\ZZ_2(\mat{A}, \mat{B})$ is \shP-hard even if
\emph{both} $\mat{A}$ and $\mat{B}$ have treewidth at most $3$
(\cref{thm:treewidth-hard}), implying that
even XP algorithms do not exist unless \FP~$=$~\shP
(which is a requirement that is at least as strong as \cP~$=$~\NP).

\afterpage{
\clearpage 
\thispagestyle{empty} 
\begin{landscape}

\begin{table*}[tbp]
    \centering
    \caption{
    Summary of complexity-theoretic results presented in \cref{sec:intract,sec:inapprox,sec:fpt} and in previous work.
    Our positive and negative results are marked with \cmark and \xmark, respectively.
    $ \ZZ_m(\mat{A}^1, \ldots, \mat{A}^m) \triangleq \sum_{S \subseteq [n]} \det(\mat{A}^1_{S,S}) \cdots \det(\mat{A}^m_{S,S}) $
    denotes the normalizing constant for $\Pi$-DPPs,
    $n$ denotes the order of input matrices,
    $|\calI|$ is the number of bits required to represent 
    $\mat{A}^1, \ldots, \mat{A}^m$,
    $\nnz$ is the set of nonzero entries in a matrix, and
    $\tw$ is the treewidth (see Section~\ref{sec:pre}).
    }
    \label{tab:summary}
    \setlength{\tabcolsep}{2pt}
    \small
    \begin{tabular}{|c|c|c|c|c|}
        \hline
        \rowcolor{gray!20}
        \textbf{exact/approx./parameters} & $\ZZ_p(\mat{A}, \ldots, \mat{A})$ \scriptsize{exponentiated DPP}
        & $\ZZ_2(\mat{A}, \mat{B})$
        & $\ZZ_3(\mat{A}, \mat{B}, \mat{C})$
        & $\ZZ_m(\mat{A}^1, \ldots, \mat{A}^m)$ \\
        \hline

        & \cellcolor{\nicered!15} \xmark \UP-hard \& \ModkP{3}-hard
        & \multicolumn{3}{c|}{\shP-hard}
        \\
        \multirow{-2}{*}{\textbf{exact}}
        & \cellcolor{\nicered!15} (\textbf{\cref{cor:pow-hard}}, \scriptsize{$p=2,4,6,\ldots$})
        & \multicolumn{3}{c|}{\citep{gurvits2005complexity,gurvits2009complexity,gillenwater2014approximate}
        }
        \\
        \hline

        & \multicolumn{2}{c|}{$ \rme^n $-approx.~in polynomial time}
        & \multicolumn{2}{c|}{\cellcolor{\nicered!15}
        \xmark $2^{\bigO(|\calI|^{1-\epsilon})}$-approx.~is \NP-hard (\textbf{\cref{thm:inapprox-3}})} \\
        
        \hhline{|~|~~|--}
        \multirow{-2}{*}{\textbf{approximation}}
        & \multicolumn{2}{c|}{\citep{anari2017generalization}}
        & \multicolumn{2}{c|}{\cellcolor{\niceblue!15} \cmark $1$ is $2^{\bigO(|\calI|^2)}$-approx.~(\textbf{\cref{obs:approx-3}})} \\
        \hline

        $m =$ number of matrices &
        \cellcolor{\niceblue!15}
        & \cellcolor{\niceblue!15} \cmark \FPT; $ r^{\bigO(r)}n^{\bigO(1)} $ time
        & \cellcolor{\niceblue!15} & \cellcolor{\niceblue!15} \cmark \FPT; $ r^{\bigO(mr)} n^{\bigO(1)} $ time \\
        $ r = \max_{i \in [m]} \rank(\mat{A}^i) $
        & \cellcolor{\niceblue!15} \multirow{-2}{*}{\cmark (special case of $\rightarrow$)}
        & \cellcolor{\niceblue!15} (\textbf{\cref{thm:fpt-rank-2}}) &
        \cellcolor{\niceblue!15} \multirow{-2}{*}{\cmark (special case of $\rightarrow$)}
        & \cellcolor{\niceblue!15} (\textbf{\cref{thm:fpt-rank-m}}) \\
        \hline

        $m=$ number of matrices
        & \cellcolor{\niceblue!15}
        & \cellcolor{\niceblue!15} \cmark \FPT; $w^{\bigO(w)} n^{\bigO(1)}$ time
        & \cellcolor{\niceblue!15}
        & \cellcolor{\niceblue!15} \cmark \FPT; $w^{\bigO(mw)}n^{\bigO(1)}$ time \\
        $ w = \tw \bigl(\bigcup_{i \in [m]} \nnz(\mat{A}^i) \bigr) $
        & \cellcolor{\niceblue!15} \multirow{-2}{*}{\cmark (special case of $\rightarrow$)}
        & \cellcolor{\niceblue!15} (\textbf{\cref{thm:fpt-treewidth-2}}) &
        \cellcolor{\niceblue!15} \multirow{-2}{*}{\cmark (special case of $\rightarrow$)}
        & \cellcolor{\niceblue!15} (\textbf{\cref{thm:fpt-treewidth-m}}) \\
        \hline
        
        $m=$ number of matrices
        & \cellcolor{\niceblue!15}
        & \multicolumn{3}{c|}{\cellcolor{Red3!20} \xmark \shP-hard ($w \leq 3, m=2$)} \\
        $ w = \max_{i \in [m]} \tw(\nnz(\mat{A}^i)) $
        & \cellcolor{\niceblue!15} \multirow{-2}{*}{\cmark (same as $\uparrow$)}
        & \multicolumn{3}{c|}{\cellcolor{Red3!20} (\textbf{\cref{thm:treewidth-hard}})}
        \\
        \hline
    \end{tabular}
\end{table*}

\end{landscape}
\clearpage
}

\subsection*{\underline{Contribution 4: Extensions to Fixed-Size $\Pi$-DPPs (\cref{sec:kdpp})}}
We extend the complexity-theoretic results devised so far to
the case of \emph{fixed-size} $\Pi$-DPPs.
Given $m$ matrices $\mat{A}^1, \ldots, \mat{A}^m$ of size $n \times n$ and a size parameter $k$, the \emph{$k\Pi$-DPP}
specifies the probability mass for each subset $S \subseteq [n]$ to be proportional to $\det(\mat{A}_{S,S})$ \emph{only if} $|S| = k$.
Thus, the normalizing constant for $k\Pi$-DPP is given by
\begin{align*}
    \sum_{S \subseteq [n]: |S|=k} \det(\mat{A}^1_{S,S}) \cdots \det(\mat{A}^m_{S,S}).
\end{align*}
The special case of $m=1$ coincides with \emph{$k$-DPPs} \citep{kulesza2011kdpps}, which are among the most important extensions of DPPs
because of their applications to image search \citep{kulesza2011kdpps} and the Nystr\"{o}m method \citep{li2016fasta}.
We derive complexity-theoretic results for $k\Pi$-DPPs 
corresponding to those on $\Pi$-DPPs presented in \cref{sec:intract,sec:inapprox,sec:fpt}, including
intractability of E-DPPs,
indistinguishability of $\ZZ_3$,
an approximation-preserving reduction from the mixed discriminant to $\ZZ_2$
(\cref{thm:kdpp-intract-inapprox}), and
FPT algorithms parameterized by maximum rank and treewidth
(\cref{cor:kdpp-fpt}).
Further, we examine the fixed-parameter tractability of
computing the normalizing constant for $k\Pi$-DPPs parameterized by $k$.
It is easy to check that
a brute-force algorithm runs in $n^{\bigO(k)}$ time, which is XP.
One might further expect there to be an FPT algorithm; however,
we show that computing the normalizing constant $\ZZ_2$ parameterized by $k$ is \shW{1}-hard
(\cref{thm:kdpp-size-hard}).
Since it is a plausible assumption that \FPT~$\neq$~\shW{1} in parameterized complexity \citep{flum2004parameterized},
the problem of interest is unlikely to be FPT.

\subsection*{\underline{Contribution 5: Applications of Parameterized Algorithms (\cref{sec:app})}}
Finally, we apply the FPT algorithm to two related problems,
which bypasses the complexity-theoretic barrier in the general case.
Hereafter, we will denote by $w$ the treewidth of an $n \times n$ matrix $\mat{A}$.

\paragraph{\cmark \textcolor{\niceblue}{Approximation Algorithm for E-DPPs of Fractional Exponents.}}
The first application is approximating the normalizing constant for E-DPPs of \emph{fractional} exponent $p > 1$, i.e., $\sum_{S \subseteq [n]} \det(\mat{A}_{S,S})^p$.
Our FPT algorithm (\cref{thm:fpt-treewidth-m}) does not directly apply to this case because
an E-DPP is a $\Pi$-DPP only if $p$ is \emph{integer}.
Generally,
it is possible to compute a $2^{n(p-1)}$-approximation to this quantity efficiently (see \cref{rmk:edpp-frac}),
which is tight up to constant in the exponent \citep{ohsaka2021unconstrained}.
On the other hand, it is known that there is an FPRAS if $p < 1$ (see \cref{subsec:intro:related-work}; \citealp{anari2019log,robinson2019flexible}).
Intriguingly, we can compute a $2^{\frac{n}{2p-1}}$-approximation for a P$_0$-matrix $\mat{A}$ in $w^{\bigO(wp)} n^{\bigO(1)}$ time (\cref{thm:edpp-frac}).
This results is apparently strange since the accuracy of estimation improves with the value of $p$.
The idea behind the proof is to compute the normalizing constant for E-DPPs of exponent $\floor{p}$ or $\ceil{p}$ by exploiting \cref{thm:fpt-treewidth-m},
either of which ensures the desired approximation.

\paragraph{\cmark \textcolor{\niceblue}{Subexponential Algorithm for Unconstrained MAP Inference.}}
The second application is \emph{unconstrained maximum a posteriori (MAP) inference} on DPPs,
which is equivalent to finding a principal submatrix having the maximum determinant,
i.e., to compute $\argmax_{S \subseteq [n]} \det(\mat{A}_{S,S})$.
The current best approximation factor for MAP inference is $2^{\bigO(n)}$ \citep{nikolov2015randomized},
which is optimal up to constant in the exponent \citep[see \cref{subsec:intro:related-work}]{ohsaka2021unconstrained}.
Here, we present a $w^{\bigO(w\sqrt{n})}n^{\bigO(1)}$-time randomized algorithm that approximates unconstrained MAP inference within a factor of $2^{\sqrt{n}}$ (\cref{thm:map-approx}).
In particular, if $w$ is a constant independent of $n$,
we can obtain a \emph{subexponential-time} and \emph{subexponential-approximation} algorithm, which is
a ``sweet spot'' between 
a $2^{n}n^{\bigO(1)}$-time exact (brute-force) algorithm and
a polynomial-time $2^{\bigO(n)}$-approximation algorithm \citep{nikolov2015randomized}.
The proof uses \cref{thm:fpt-treewidth-m} to generate a sample from an E-DPP of a sufficiently large exponent $p = \bigO(\sqrt{n})$,
for the desired guarantee of approximation with high probability.

\subsection{Related Work}
\label{subsec:intro:related-work}

\emph{Exponentiated DPPs} (E-DPPs) of exponent parameter $p > 0$ define the probability mass for each subset $S$
as proportional to $ \det(\mat{A}_{S,S})^p $.
A Markov chain Monte Carlo algorithm on E-DPPs for $p < 1$ is proven to mix rapidly as E-DPPs are strongly log-concave as shown by
\citet*{anari2019log,robinson2019flexible}, implying
an FPRAS for the normalizing constant.
\citet*{mariet2018exponentiated} investigate the case when
a DPP defined by the $p$-th power of $\mat{A}$ is close to an E-DPP of exponent $p$ for $\mat{A}$.
Quite surprisingly, \citet{gurvits2005complexity,gurvits2009complexity} proves
the \shP-hardness of exactly computing $\sum_S \det(\mat{A}_{S,S})^2$ for a P-matrix $\mat{A}$
before the more recent study by \citet{kulesza2012determinantal,gillenwater2014approximate};
this result seems to be not well known in the machine learning community.
\citet{gillenwater2014approximate} proves that computing the normalizing constant for $\Pi$-DPPs defined by \emph{two positive semi-definite} matrices is \shP-hard, while
\citet{anari2017generalization} prove that it is approximable within a factor of $\rme^n$ in polynomial time,
which is an affirmative answer to an open question posed by \citet{kulesza2012determinantal}.
\citet{ohsaka2021unconstrained,ohsaka2021some} proves that
it is \NP-hard to approximate the normalizing constant for E-DPPs
within a factor of $2^{\beta n p}$ when $p \geq \beta^{-1}$,
where $\beta = 10^{-10^{13}}$.
Our study strengthens these results by showing
the hardness for an E-DPP of exponent $p=2,4,6,\ldots$, and
the impossibility of an exponential approximation and approximate sampling for \emph{three} matrices.

$\Pi$-DPPs can be thought of as \emph{log-submodular point processes} \citep{djolonga2014map,gotovos2015sampling},
whose probability mass for a subset $S$ is proportional to $\exp(f(S)) $, where $f$ is a submodular set function.\footnote{
We say that a set function $f: 2^{[n]} \to \bbR$ is \emph{submodular} if
$ f(S) + f(T) \geq f(S \cup T) + f(S \cap T) $
for all $ S, T \subseteq [n] $.
}
Setting $ f(S) \triangleq \log \det(\mat{A}^1_{S,S}) + \cdots + \log \det(\mat{A}^m_{S,S}) = \log (\det(\mat{A}^1_{S,S}) \cdots \det(\mat{A}^m_{S,S})) $ coincides with $\Pi$-DPPs.
\citet*{gotovos2015sampling} devised a bound on the mixing time of a Gibbs sampler,
though this is not very helpful in our case because
$f$ can take $ \log(0) = -\infty $ as a value.

\emph{Constrained DPPs} output
a subset $S$ with probability proportional to $\det(\mat{A}_{S,S})$ if
$S$ satisfies specific constraints, e.g.,
size constraints \citep{kulesza2011kdpps},
partition constraints \citep{celis2018fair},
budget constraints \citep{celis2017complexity}, and
spanning-tree constraints \citep{matsuoka2021spanning}.
Note that $\Pi$-DPPs can express partition-matroid constraints.

\emph{Maximum a posteriori (MAP) inference} on DPPs finds applications wherein
we seek the most diverse subset.
Unconstrained MAP inference is especially preferable if we do not (or cannot) specify in advance the desired size of the output, as in, e.g.,
tweet timeline generation \citep{yao2016tweet},
object detection \citep{lee2016individualness}, and
change-point detection \citep{zhang2016block}.
On the negative side,
it is \NP-hard to approximate size-constrained MAP inference within a factor of $2^{ck}$ for the output size $k$ and some number $c > 0$ \citep{koutis2006parameterized,civril2013exponential,summa2014largest}.
\citet*{kulesza2012determinantal} prove an inapproximability factor of ($\frac{9}{8}-\epsilon$) for unconstrained MAP inference for any $\epsilon > 0$,
which has since been improved to $2^{\beta n}$ for $\beta = 10^{-10^{13}}$ \citep{ohsaka2021unconstrained}.
On the algorithmic side,
\citet*{civril2009selecting} prove that
the standard greedy algorithm for size-constrained MAP inference achieves an approximation factor of $k!^2 = 2^{\bigO(k \log k)}$, where $k$ denotes the output size.
\citet*{nikolov2015randomized} gives
an $\rme^{k}$-approximation algorithm for size-constrained MAP inference; this is the current best approximation factor.
Note that invoking \citeauthor{nikolov2015randomized}'s algorithm
for all $k \in [n]$ immediately yields
an $\rme^{n}$-approximation for unconstrained MAP inference.
The greedy algorithm is widely used in the machine learning community
because it efficiently extracts reasonably diverse subsets in practice \citep{yao2016tweet,zhang2016block}.
Other than the greedy algorithm,
\citet*{gillenwater2012near} propose a gradient-based algorithm;
\citet*{zhang2016block} develop a dynamic-programming algorithm designed for small-bandwidth matrix, which has no provable approximation guarantee.

This article is an extended version of our conference paper presented at the 37th International Conference on Machine Learning \citep{ohsaka2020intractability}.
It includes the following new results:
\begin{itemize}
\item \cref{sec:property} proves two fundamental properties of $\Pi$-DPPs:
(1) we can generate a sample from $\Pi$-DPPs in polynomial time if we are given access to an oracle for the normalizing constant (\cref{thm:sampling});
(2) $\ZZ_m$ either admits an FPRAS or cannot be approximated within a factor of $2^{n^\delta}$ for any $\delta \in (0,1)$ (\cref{thm:all-or-nothing}).

\item \cref{subsec:inapprox:ap-red} presents an approximation-preserving reduction
from the mixed discriminant to the normalizing constant for $\Pi$-DPPs of two matrices, rather than the polynomial-time Turing reduction presented in \citet*{ohsaka2020intractability}.

\item \cref{sec:kdpp} extends the complexity-theoretic results in \cref{sec:intract,sec:inapprox,sec:fpt} to fixed-size $\Pi$-DPPs.

\item \cref{sec:app} introduces two applications of the FPT algorithms:
(1) a $w^{\bigO(wp)}n^{\bigO(1)}$-time algorithm that approximates the normalizing constant for E-DPPs of any fractional exponent $p > 1$ within a factor of $2^{\frac{n}{2p-1}}$ (\cref{thm:edpp-frac});
(2) a $w^{\bigO(w\sqrt{n})}n^{\bigO(1)}$-time (randomized) algorithm that approximates unconstrained MAP inference within a factor of $2^{\sqrt{n}}$ (\cref{thm:map-approx}),
where $n$ is the order of an input matrix and
$w$ is the treewidth of the matrix.

\end{itemize}

\section{Preliminaries}
\label{sec:pre}

\subsection{Notations}
For two integers $m,n \in \bbN$ such that $m \leq n$,
let $ [n] \triangleq \{1, 2, \ldots, n\} $ and
$[m\isep n] \triangleq \{m,m+1,\ldots, n-1,n\}$.
The imaginary unit is denoted $ \iu = \sqrt{-1}$.
For a finite set $S$ and an integer $k \in [0\isep |S|]$,
we write $ {S \choose k} $ for the family of all size-$k$ subsets of $S$.
For a statement $P$, $[\![P]\!]$ is $1$ if $P$ is true, and $0$ otherwise.
The symbol $\uplus$ is used to emphasize that the union is taken over two \emph{disjoint} sets.
The symmetric group on $[n]$,
consisting of all permutations over $[n]$, is denoted $\Sym_n$.
We use $\sigma: S \bij T$ for two same-sized sets $S$ and $T$ to mean a bijection from $S$ to $T$, and
$\sigma|_{X}$ for a set $X$ to denote the restriction of $\sigma$ to $X \cap S$.
We also define
$\sigma(X) \triangleq \{\sigma(x) \mid x \in X\}$ for a set $X \subseteq S$ and
$\sigma^{-1}(Y) \triangleq \{y \mid \sigma(y) \in Y\}$ for a set $Y \subseteq T$.
For a bijection $\sigma: S \bij T$ and an ordering $\prec$ on $S \cup T$,
the \emph{inversion number} and \emph{sign} of $\sigma$ regarding $\prec$ are defined as
\begin{align*}
\inv_{\prec}(\sigma) & \triangleq |\{ (i,j) \mid i \prec j, \sigma(i) \succ \sigma(j) \}|, \\
    \sgn_{\prec}(\sigma) & \triangleq (-1)^{\inv_{\prec}(\sigma)}.
\end{align*}
In particular, if $\sigma$ is a permutation in $\Sym_n$,
there exists $\inv(\sigma) \in \bbN$ and $\sgn(\sigma) \in \{+1,-1\}$ such that
$\inv_{\prec}(\sigma) = \inv(\sigma)$ and $\sgn_{\prec}(\sigma) = \sgn(\sigma)$ for any ordering $\prec$.
We use $S \prec T$ for two sets $S$ and $T$ to mean that $i \prec j$ for all $i \in S$ and $j \in T$.
For two orderings $\prec_1$ on a set $S_1$ and $\prec_2$ on a set $S_2$,
we say that $\prec_1$ and $\prec_2$ \emph{agree on} a set $T \subseteq S_1 \cap S_2$
whenever $i \prec_1 j$ if and only if $i \prec_2 j$ for all $i,j \in T$.
Unless otherwise specified, the base of the logarithm is 2.
For a set $S$ and element $e$, we shall write $S+e$ and $S-e$ as shorthand for $S\cup\{e\}$ and $S\setminus\{e\}$, respectively.

We denote the $n \times n$ identity matrix by $\mat{I}_n$ and the $n \times n$ all-ones matrix by $\mat{J}_n$.
For an $ m \times n $ matrix $\mat{A}$
and two subsets $S \subseteq [m]$ and $T \subseteq [n]$ of indices,
we write
$\mat{A}_{S}$ for the $ |S| \times n $ submatrix
whose rows are the rows of $\mat{A}$ indexed by $S$, and
$\mat{A}_{S,T}$ for the $ |S| \times |T| $ submatrix
whose rows are the rows of $\mat{A}$ indexed by $S$ and
columns are the columns of $\mat{A}$ indexed by $T$.
The \emph{determinant} and \emph{permanent} of a matrix $\mat{A} \in \bbR^{n \times n}$ are defined as
\begin{align*}
    \det(\mat{A}) & \triangleq
    \sum_{\sigma \in \Sym_n} \sgn(\sigma) \prod_{i \in [n]} A_{i,\sigma(i)}, \\
    \per(\mat{A}) & \triangleq
    \sum_{\sigma \in \Sym_n} \prod_{i \in [n]} A_{i,\sigma(i)}.
\end{align*}
In particular,
$ \det(\mat{A}_{S,S}) $ for any $S \subseteq [n]$
is called a \emph{principal minor}.
We define $ \det(\mat{A}_{\emptyset, \emptyset}) \triangleq 1 $.
A symmetric matrix $\mat{A} \in \bbR^{n \times n}$ is called \emph{positive semi-definite} if 
$ \vec{x}^\top \mat{A} \vec{x} \geq 0 $ for all $ \vec{x} \in \bbR^n $.
A matrix $\mat{A} \in \bbR^{n \times n}$ is called a \emph{P-matrix} (resp.~\emph{P$_0$-matrix}) if all of its principal minors are positive (resp.~nonnegative).
A positive semi-definite matrix is a P$_0$-matrix, but not vice versa.
A real-valued matrix $\mat{A}$ is a P-matrix whenever
it has positive diagonal entries and
is row diagonally dominant (i.e., $ |A_{i,i}| > \sum_{j \neq i} |A_{i,j}| $ for all $i$).
For a bijection $\sigma$ from $S \subseteq [n]$ to $T \subseteq [n]$,
we define $\mat{A}(\sigma) \triangleq \prod_{i \in S} A_{i,\sigma(i)}$.

\subsection{Product of Determinantal Point Processes}
Given a matrix $\mat{A} \in \bbR^{n \times n}$,
a \emph{determinantal point process} (DPP) \citep{macchi1975coincidence,borodin2005eynard} is defined as
a probability measure on the power set $2^{[n]}$ whose probability mass for $S \subseteq [n]$ is proportional to $ \det(\mat{A}_{S,S}) $.\footnote{We adopt the L-ensemble form introduced by \citet{borodin2005eynard}.}
Generally speaking, a P$_0$-matrix is acceptable to define a proper probability distribution, while positive semi-definite matrices are commonly used \citep{gartrell2019learning}.
The normalizing constant for a DPP has the following simple closed form \citep{kulesza2012determinantal}:
\begin{align*}
\sum_{S \subseteq [n]} \det(\mat{A}_{S,S}) = \det(\mat{A} + \mat{I}_n).
\end{align*}
Hence, the probability mass for a set $S \subseteq [n]$ is
equal to $ \det(\mat{A}_{S,S}) / \det(\mat{A} + \mat{I}_n) $.
This equality holds for
any (not necessarily symmetric) real-valued matrix $\mat{A}$.

This paper studies a point process whose probability mass is determined from
the \emph{product} of principal minors for multiple matrices.
Given $m$ matrices
$ \mat{A}^1, \ldots, \mat{A}^m \in \bbR^{n \times n} $,
\emph{the product DPP} ($\Pi$-DPP) defines the probability mass for each subset $S \subseteq [n]$ as being proportional to
$\det(\mat{A}^1_{S,S}) \cdots \det(\mat{A}^m_{S,S})$.
We use $\ZZ_m(\mat{A}^1, \ldots, \mat{A}^m)$ to denote its \emph{normalizing constant};
namely,
\begin{align*}
    \ZZ_m(\mat{A}^1, \ldots, \mat{A}^m) \triangleq
    \sum_{S \subseteq [n]} \prod_{i \in [m]} \det(\mat{A}^i_{S,S}).
\end{align*}
In particular, we have that $\ZZ_1(\mat{A}) = \det(\mat{A} + \mat{I}_n) $.
Since $\prod_{i \in [m]} \det(\mat{A}^i_{S,S})$ is easy to compute,
evaluating $\ZZ_m$ is crucial for estimating the probability mass.
Our objective in this paper is to elucidate the computational complexity of estimating $\ZZ_m$.
We shall raise two examples of $\Pi$-DPPs.

\begin{example}[Embedding partition and matching constraints]
Given a partition $\calP$ of $[n]$,
we can build $\mat{A}$ such that
$ \det(\mat{A}_{S,S}) = [\![S$ contains at most one element from each group of $\calP]\!] $
by defining $A_{i,j} = [\![i,j \text{ belong to the same group}]\!]$.
Given a bipartite graph whose edge set is $[n]$,
we can build $\mat{A}$ and $\mat{B}$ such that
$ \det(\mat{A}_{S,S}) \det(\mat{B}_{S,S}) =
[\![S$ has no common vertices$]\!] $ \citep{gillenwater2014approximate};
such an $S$ is called a matching.
\end{example}

\begin{example}[Exponentiated DPPs]
Setting $ \mat{A}^i = \mat{A} $ for all $i \in [m]$,
the $\Pi$-DPP becomes an exponentiated DPP of exponent $p = m \geq 1$,
which sharpens the diversity nature of DPPs
\citep{mariet2018exponentiated}.
\end{example}

\subsection{Graph-Theoretic Concepts}
Here, we briefly introduce the notions and definitions from graph theory
that will play a crucial role in \cref{sec:fpt}.
Let $ G = (V,E)$ be a graph,
where $V$ is the set of vertices, and $E$ is the set of edges.
We use $(u,v)$ to denote an (undirected or directed) edge connecting $u$ and $v$.
Moreover, we define the treewidth of a graph and matrix.
\emph{Treewidth} \citep{halin1976s,robertson1986graph,arnborg1989linear,bertele1972nonserial}
is one of the most important notions in graph theory, which captures the ``tree-likeness'' of a graph.

\begin{definition}[\protect{\citealp[tree decomposition]{robertson1986graph}}]
\label{def:treedecomp}
A \emph{tree decomposition} of an undirected graph $G=(V,E)$ is a pair
$(T, \{X_t\}_{t \in T})$, where
$T$ is a tree of which vertex $t \in T$, referred to as a \emph{node},\footnote{
We will refer to the vertices of $T$ as \emph{nodes} to distinguish them from the vertices of $G$.}
is associated with a vertex set $X_t \subseteq V$,
referred to as a \emph{bag}, such that the following conditions are satisfied:
\begin{itemize}
\item $ \bigcup_{t \in T} X_t = V $;
\item for every edge $(u,v) \in E$,
there exists a node $t \in T$ such that $u, v \in X_t$;
\item for every vertex $v \in V$,
the set $T_u = \{t \mid v \in X_t\}$ induces a connected subtree of $T$.
\end{itemize}
The \emph{width} of a tree decomposition $(T, \{X_t\}_{t \in T})$
is defined as $ \max_{t \in T} |X_t|-1 $.
The \emph{treewidth} of a graph $G$,
denoted $\tw(G)$,
is the minimum possible width among all tree decompositions of $G$.
\end{definition}
For example, a tree has treewidth $1$, an $n$-vertex planar graph has treewidth $\bigO(\sqrt{n})$, and an $n$-clique has treewidth $n-1$.

For an $n \times n$ square matrix $\mat{A}$,
we define $ \nnz(\mat{A}) \triangleq \{ (i, j) \mid A_{i,j} \neq 0, i \neq j \} $.
The treewidth of $\mat{A}$,
denoted $\tw(\nnz(\mat{A}))$ or $ \tw(\mat{A}) $,
is defined as the treewidth of the graph $ ([n], \nnz(\mat{A})) $ formed by the nonzero entries of $\mat{A}$.
See \cref{fig:mat-A,fig:G,fig:td} for an example of a tree decomposition of a matrix.
For example,
$\tw(\mat{I}_n) = 1$,
$\tw(\mat{J}_n) = n-1$, and
a matrix of bandwidth\footnote{
The bandwidth of a matrix $\mat{A}$ is defined as
the smallest integer $b$ such that $A_{i,j} = 0$ whenever $|i-j| > b$.}
$b$ has treewidth $\bigO(b)$.
One important property of tree decompositions is that any bag $X_t$ is a \emph{separator}:
for three nodes $t,t',t''$ of $T$ such that
$t$ is on the (unique) path from $t'$ to $t''$,
$X_t$ separates $X_{t'} \setminus X_t$ and $X_{t''} \setminus X_t$; i.e.,
the submatrices $ \mat{A}_{X_{t'} \setminus X_t, X_{t''} \setminus X_t} $ and $\mat{A}_{X_{t''} \setminus X_t, X_{t'} \setminus X_t}$ must be zero matrices.
It is known that the permanent of bounded-treewidth matrices is polynomial-time computable \citep{courcelle2001fixed}.
Though it is \NP-complete to determine whether an input graph $G$
has treewidth at most $w$,
there exist numerous FPT and approximation algorithms, e.g.,
a $w^{\bigO(w^3)}n$-time exact algorithm \citep{bodlaender1996linear}
and
a $5$-approximation algorithm having faster runtime $2^{\bigO(w)}n$ \citep{bodlaender2016approximation}.

\begin{remark}
Our FPT algorithms parameterized by rank (\cref{subsec:fpt:rank})
and by treewidth
(\cref{subsec:fpt:treewidth})
are not comparable in the sense that
the identity matrix $\mat{I}_n$ has rank $n$ and treewidth $1$ while
the all-ones matrix $\mat{J}_n$ has rank $1$ and treewidth $n-1$.
\end{remark}

\subsection{Computational Models}
We will introduce the notion of input size and computational model carefully
since we use several \emph{reductions} that transform
an input for one problem to an input for another problem.

The \emph{size} of an input $\calI$, denoted $|\calI|$, is
defined as the number of bits required to represent $\calI$.
In particular, we assume that all numbers appearing in this paper are \emph{rational}.\footnote{
We can run the algorithms in \cref{sec:fpt} for real-valued matrices if arithmetic operations on real numbers are allowed.}
The size of
a rational number $ x = p/q \in \bbQ $ (where $p$ and $q$ are relatively prime integers) and
a rational matrix $ \mat{A} \in \bbQ^{m \times n} $ is defined as follows \citep{schrijver1999theory}:
\begin{align*}
\mathrm{size}(x) & \triangleq 1 + \lceil \log(|p|+1) \rceil + \lceil \log(|q|+1) \rceil, \\
\mathrm{size}(\mat{A}) & \triangleq mn + \sum_{i \in [m], j \in [n]} \mathrm{size}(A_{i,j}).
\end{align*}
The size of a graph is defined as the size of its incidence matrix.

Selection of computational models is crucial for determining the runtime of algorithms;
e.g., while multiplying two $n$-bit integers can be done
in $\bigO(n \log n \ 8^{\log^* n})$ time on \emph{Turing machines} \citep{harvey2016even},
we do not need this level of precision.
Thus, for ease of analysis,
we adopt the \emph{unit-cost random-access machine} model of computation,
which can perform basic arithmetic operations
(e.g., add, subtract, multiply, and divide) in constant time.
In other words, we will measure the runtime in terms of the number of operations.
However, abusing unrealistically powerful models
leads to an unreasonable conclusion \citep[Example 16.1]{arora2009computational}:
``iterating $n$ times the operation $ x \leftarrow x^2 $,
we can compute $ 2^{2^n} $, a $2^n$-bit integer,
in $\bigO(n)$ time.''
To avoid such pitfalls,
we will ensure that numbers produced during
the execution of algorithms intermediately are of size $|\calI|^{\bigO(1)}$.

\subsection{Brief Introduction to Complexity Classes}
\label{subsec:pre:complexity}
Here, we briefly introduce the complexity classes appearing throughout this paper.

\paragraph{Decision Problems.}

\begin{itemize}
    \item \cP: The class of decision problems solvable by
    a deterministic polynomial-time Turing machine.
    Examples include \textsc{Primes} (Q.~\emph{Is an input integer prime?}) \citep{agrawal2004primes}.
    
    \item \NP: The class of decision problems solvable by
    a nondeterministic polynomial-time Turing machine (\NP machine).
    Examples include
    \textsc{SAT} (Q.~\emph{Is there a truth assignment satisfying an input Boolean formula?}).
    It is widely believed that \cP~$\neq$~\NP, see, e.g., \citet{arora2009computational}.
    
    \item \RP:
    The class of decision problems for which a probabilistic polynomial-time Turing machine
    exists such that
    (1) if the answer is ``yes,'' then it returns ``yes'' with probability at least $\frac{1}{2}$, and
    (2) if the answer is ``no,'' then it always returns ``no.''
    Note that \cP~$\subseteq$~\RP~$\subseteq$~\NP, and
    it is suspected that \RP~$\neq$~\NP.

    \item \UP: The class of decision problems solvable by an \NP machine
    with \emph{at most one} accepting path.
    Note that \cP~$\subseteq$~\UP~$\subseteq$~\NP, but
    it is unknown if the inclusion is strict.
    Examples include
    \textsc{UnambiguousSAT}
    (Q.~\emph{Is there a truth assignment satisfying
    an input Boolean formula that is restricted to have at most one satisfying assignment?}) and
    \textsc{IntegerFactorization}
    (Q.~\emph{Is there a factor $d \in [m]$ of an integer $n$ given $n$ and $m$?}),
    for which no polynomial-time algorithms are known.
    The Valiant--Vazirani theorem states that if \textsc{UnambiguousSAT} is solvable in  polynomial-time, then \RP~$=$~\NP \citep{valiant1986np}.
\end{itemize}

\paragraph{Counting-Related Problems.}
\begin{itemize}
    \item \FP: The class of 
    functions computable by a deterministic polynomial-time Turing machine,
    which is a function-problem analogue of \cP.
    Examples include \textsc{Determinant}
    (Q.~\emph{Compute the determinant of an input square matrix}),
    which is efficiently-computable via Gaussian elimination \citep{edmonds1967systems,schrijver1999theory}.
    
    \item \shP: The class of function problems of counting the number of accepting paths of an \NP machine.
    Examples of \shP-complete problems include
    \textsc{Permanent} (Q.~\emph{Compute the permanent of an input matrix}) and
    \textsc{$\#$SAT} (Q.~\emph{Compute the number of truth assignments satisfying an input Boolean formula}).
    Note that \cP~$\neq$~\NP implies \FP~$\neq$~\shP \citep{arora2009computational}.
    
    \item \ModkP{k}: The class of decision problems solvable by an \NP machine,
    where the number of accepting paths \emph{is not divisible by $k$}.
    Examples include
    \textsc{GraphIsomorphism}
    (Q.~\emph{Is there an edge-preserving bijection between the vertex sets of two input graphs?}),
    which is in \ModkP{k} for all $k$ \citep{arvind2006graph}.
\end{itemize}

\paragraph{Parameterized Problems.}
\begin{itemize}
    \item \FPT (fixed-parameter tractable):
    The class of problems with \emph{parameter} $k$ solvable in
    $f(k) |\calI|^{\bigO(1)}$ time for some computable function $f$,
    where $|\calI|$ is the input size.
    Examples include
    \textsc{$k$-VertexCover}
    (Q.~\emph{Is there a $k$-vertex set including at least one endpoint of every edge of an $n$-vertex graph?}),
    for which an $ \bigO(1.2738^k + kn) $-time algorithm is known \citep{chen2010improved}.

    \item \XP (slice-wise polynomial):
    The class of problems with \emph{parameter} $k$ solvable in $ |\calI|^{f(k)} $ time
    for some computable function $f$; hence it holds that \FPT~$\subseteq$~\XP.
    Examples include
    \textsc{$k$-Clique} (Q.~\emph{Is there a size-$k$ complete subgraph in an $n$-vertex graph?}),
    for which a brute-force search algorithm runs in $n^{\bigO(k)}$ time.
    It is suspected that \FPT~$\neq$~\XP in parameterized complexity \citep{downey2012parameterized}.
    
    \item \shW{1}:
    The class of function problems parameterized reducible to \#$k$-\textsc{Clique}
    (Q.~\emph{Compute the number of $k$-cliques in an $n$-vertex graph}).
    Note that \FPT~$\subseteq$~\shW{1}~$\subseteq$~\XP.
    It is a plausible assumption in parameterized complexity \citep{flum2004parameterized} that
    \FPT~$\neq$~\shW{1}; i.e.,
    \#$k$-\textsc{Clique} does not admit an FPT algorithm parameterized by $k$.
\end{itemize}

\subsection{Approximation Algorithms}
Here, we introduce some concepts related to approximation algorithms.
We say that an estimate $\widehat{\ZZ}$ is
a \emph{$\rho$-approximation} to some value $\ZZ$ for $\rho \geq 1$ if
\begin{align*}
\ZZ \leq \widehat{\ZZ} \leq \rho \cdot \ZZ.
\end{align*}
The \emph{approximation factor} $\rho$ can be a function of the input size, e.g., $\rho(n) = 2^{n}$;
an (asymptotically) smaller $\rho$ is a better approximation factor.
For a function $f: \Sigma^* \to \bbR $ and an approximation factor $\rho$,
\emph{a $\rho$-approximation algorithm} is a polynomial-time algorithm that returns
a $\rho$-approximation to $f(\calI)$ for every input $\calI \in \Sigma^*$.

We define a fully polynomial-time randomized approximation scheme (FPRAS).
The existence of an FPRAS for a particular problem means that the problem can be efficiently approximated to an arbitrary precision.
\begin{definition}
\label{def:fpras}
For a function $f: \Sigma^* \to \bbR$,
a \emph{fully polynomial-time randomized approximation scheme} (FPRAS)
is a randomized algorithm $\alg$ that takes
an input $\calI \in \Sigma^*$ of $f$ and an error tolerance $\epsilon \in (0,1)$ and
satisfies the following conditions:
\begin{itemize}
    \item for every input $\calI \in \Sigma^*$ and $\epsilon \in (0,1)$,
    it holds that
    \begin{align}
    \label{eq:fpras}
        \Pr_{\alg}\Bigl[ \rme^{-\epsilon} \cdot f(\calI) \leq \alg(\calI) \leq \rme^{\epsilon} \cdot f(\calI) \Bigr] \geq \frac{3}{4},
    \end{align}
    where $\alg(\calI)$ denotes $\alg$'s output on $\calI$;\footnote{
    The constant $\frac{3}{4}$ in \cref{eq:fpras} can be replaced by
    any number in $(\frac{1}{2},1)$ \citep{jerrum1986random}.
    }
    \item the running time of $\alg$ is bounded by
    a polynomial in $|\calI|$ and $\epsilon^{-1}$,
    where $|\calI|$ denotes the size of input $\calI$.
\end{itemize}
\end{definition}

\section{Fundamental Properties of $\Pi$-DPPs}
\label{sec:property}

In this section,
we establish two fundamental properties of $\Pi$-DPPs, i.e.,
(1) exact sampling given oracle access (\cref{thm:sampling}), and
(2) an all-or-nothing nature (\cref{thm:all-or-nothing}).
These properties are common to counting problems that have \emph{self-reducibility} \citep{jerrum2003counting}.

\subsection{Exact Sampling Given Exact Oracle}
\label{subsec:property:sampling}
We will show that if we are given access to an \emph{oracle} that
can (magically) return the value of $\ZZ_m$ in a single step, we can generate a sample from a $\Pi$-DPP defined by any $m$ matrices in polynomial time.

\begin{theorem}
\label{thm:sampling}
Suppose we are given access to an oracle that returns $\ZZ_m$.
Let $\mat{A}^1, \ldots, \mat{A}^m$ be $m$ P$_0$-matrices in $\bbQ^{n \times n}$.
Then, there exists a polynomial-time algorithm that
generates a sample from the $\Pi$-DPP defined by $\mat{A}^1, \ldots, \mat{A}^m$ by calling the oracle for $L \triangleq \bigO(n^2)$ tuples of $m$ matrices.
Furthermore,
if $\{(\mat{A}^{1,\ell}, \ldots, \mat{A}^{m,\ell})\}_{\ell \in [L]}$ denotes the set consisting of the tuples of $m$ matrices
for which the oracle is called (i.e., we call the oracle to evaluate $\ZZ_m(\mat{A}^{1,\ell}, \ldots, \mat{A}^{m,\ell})$ for all $\ell \in [L]$),
then it holds that
$ \rank(\mat{A}^{i,\ell}) \leq \rank(\mat{A}^i) $ and
$ \nnz(\mat{A}^{i,\ell}) \subseteq \nnz(\mat{A}^i)$
for all $i \in [m]$ and $\ell \in [L]$.
\end{theorem}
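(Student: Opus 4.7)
The plan is to apply standard self-reducibility: sample the characteristic vector of $S$ sequentially via the chain rule, committing at step $k$ to $X_k \in \{0,1\}$ according to
\[
p_k \triangleq \Pr[k \in S \mid A \subseteq S,\ B \cap S = \emptyset] = 1 - \frac{q(A, B+k)}{q(A, B)},
\]
where $A, B \subseteq [k-1]$ record the inclusions and exclusions committed so far and $q(A,B) \triangleq \sum_{S \supseteq A,\, S \cap B = \emptyset} \prod_{i \in [m]} \det(\mat{A}^i_{S,S})$. It therefore suffices to evaluate $q(A,B)$ and $q(A, B+k)$ via the $\ZZ_m$-oracle on input matrices that obey the rank and nonzero-pattern invariants of the theorem. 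Excluding the elements of $B$ is easy: zeroing the $B$-indexed rows and columns reduces $\sum_{S \cap B = \emptyset} f(S)$ to a single oracle query. The main obstacle is encoding the ``force-include'' condition $A \subseteq S$ while preserving $\nnz \subseteq \nnz(\mat{A}^i)$, because the natural Schur-complement transformation would reduce it to a single query but generically fills in zero entries and so breaks the sparsity invariant.

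To bypass this I would combine diagonal row-scaling with univariate polynomial interpolation. Let $D(t)$ denote the $n \times n$ diagonal matrix whose $j$-th entry equals $t$ if $j \in A$, $0$ if $j \in B$, and $1$ otherwise, and set $\widetilde{\mat{A}}^1(t) \triangleq D(t)\mat{A}^1$; for $i \geq 2$, let $\widetilde{\mat{A}}^i$ be $\mat{A}^i$ with its $B$-indexed rows zeroed. Using $\det\bigl((D(t)\mat{M})_{S,S}\bigr) = \prod_{j \in S} d_j(t) \cdot \det(\mat{M}_{S,S})$,
\[
P(t) \triangleq \ZZ_m\bigl(\widetilde{\mat{A}}^1(t),\, \widetilde{\mat{A}}^2,\, \ldots,\, \widetilde{\mat{A}}^m\bigr) = \sum_{S : S \cap B = \emptyset} t^{|S \cap A|} \prod_{i \in [m]} \det(\mat{A}^i_{S,S})
\]
is a univariate polynomial in $t$ of degree at most $|A|$ whose leading coefficient equals precisely $q(A,B)$. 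Querying the oracle at the $|A|+1$ distinct positive rationals $t = 1, 2, \ldots, |A|+1$ and Lagrange-interpolating recovers $q(A,B)$; an analogous polynomial obtained by additionally zeroing the $k$-th row in every matrix yields $q(A, B+k)$, so $p_k$ is computable from $2|A| + \bigO(1)$ oracle queries.

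It remains to verify the structural conditions. Since $(D\mat{A}^i)_{k,l} = d_k A^i_{k,l}$, left-multiplication by a diagonal matrix cannot create a new nonzero entry and cannot increase the rank, so $\rank(\widetilde{\mat{A}}^i) \leq \rank(\mat{A}^i)$ and $\nnz(\widetilde{\mat{A}}^i) \subseteq \nnz(\mat{A}^i)$ for every $t \geq 0$ and every $i \in [m]$. For $t \geq 0$, each query matrix also remains a P$_0$-matrix, because its principal minors are nonnegative scalar multiples of principal minors of $\mat{A}^i$; hence the oracle is applicable at every query. Correctness of the final sampler follows from the chain rule, and states with $q(A,B) = 0$ occur with probability zero along the sampled trajectory. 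Summing $2|A_k| + \bigO(1)$ oracle calls over $k = 1, \ldots, n$ gives $L = \bigO(n^2)$ total queries. The conceptually delicate step is exactly this scaling construction: encoding ``force-include'' \emph{without} resorting to Schur complements is what allows us to meet the strict rank-and-sparsity preservation demanded by the theorem.
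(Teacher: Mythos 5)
Your proposal is correct and follows essentially the same route as the paper: sequential chain-rule sampling whose conditional probabilities are obtained by diagonally scaling one input matrix so that the oracle value becomes a low-degree polynomial in the scaling parameter, and then extracting the ``force-include'' coefficient by Lagrange interpolation, with the rank and nonzero-pattern invariants following because diagonal scaling never creates entries or raises rank. The only (immaterial) difference is that you use one-sided row scaling, giving a degree-$|A|$ polynomial, whereas the paper uses the symmetric Hadamard scaling $\mat{D}\mat{A}^1\mat{D}$, giving a degree-$2|Y|$ polynomial interpolated at $2n+1$ points.
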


The proof involves a general sampling procedure using the conditional probability, e.g., \citet{celis2017complexity,jerrum2003counting}.
For two disjoint subsets $Y$ and $N$ of $[n]$ and an element $e$ of $[n]$ not in $Y \uplus N$,
let us consider the following conditional probability:
\begin{align}
\label{eq:condition}
    \Pr_{S \sim \bm{\mu}}\Bigl[e \in S \mid Y \subseteq S, N \cap S = \emptyset\Bigr] =
    \frac{\displaystyle
        \sum_{\substack{S \subseteq [n]: \\ Y+e \subseteq S, N \cap S = \emptyset}} \prod_{i \in [m]} \det(\mat{A}^i_{S,S})
    }{\displaystyle
        \sum_{\substack{S \subseteq [n]: \\ Y \subseteq S, N \cap S = \emptyset}} \prod_{i \in [m]} \det(\mat{A}^i_{S,S})
    },
\end{align}
where $\bm{\mu}$ denotes the $\Pi$-DPP defined by $\mat{A}^1, \ldots, \mat{A}^m$.
\cref{eq:condition} represents the probability that we draw a sample $S$ including $e$ from $\bm{\mu}$ \emph{conditioned on} that $S$ contains $Y$ but does not include any element of $N$.
We first prove that 
the conditional probability can be computed in polynomial time given access to an oracle for $\ZZ_m$.

\begin{lemma}
\label{lem:condition}
Let $Y$ and $N$ be disjoint subsets of $[n]$ and
$e$ be an element of $[n]$ not in $Y \uplus N$.
Given access to an oracle that returns $\ZZ_m$,
we can compute the conditional probability in \cref{eq:condition} in polynomial time by calling the oracle $L \triangleq 4n+2$ times.
Furthermore,
if $\{(\mat{A}^{1,\ell}, \ldots, \mat{A}^{m,\ell})\}_{\ell \in [L]}$ denotes the set consisting of the tuples of $m$ matrices
for which the oracle is called,
then it holds that
$ \rank(\mat{A}^{i,\ell}) \leq \rank(\mat{A}^i) $ and
$ \nnz(\mat{A}^{i,\ell}) \subseteq \nnz(\mat{A}^i)$
for all $i \in [m]$ and $\ell \in [L]$.
\end{lemma}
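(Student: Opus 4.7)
The plan is to express both the numerator and the denominator of the conditional probability in \cref{eq:condition} as leading coefficients of a univariate polynomial of degree $\bigO(n)$ obtained by evaluating $\ZZ_m$ on a parameterized family of matrices, and then recover those coefficients via Lagrange interpolation. First, I would absorb the constraint $N \cap S = \emptyset$ by passing to the principal submatrices $\mat{A}^{i,*} \triangleq \mat{A}^i_{\bar{N}, \bar{N}}$ with $\bar{N} \triangleq [n] \setminus N$: summing over $S \subseteq [n]$ with $N \cap S = \emptyset$ is the same as summing over $S \subseteq \bar{N}$, and $\det(\mat{A}^{i,*}_{S,S}) = \det(\mat{A}^i_{S,S})$ in that range. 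Next, to mark the constraint $Y \subseteq S$ by a formal variable, I would introduce the diagonal matrix $\mat{D}_Y(t) \in \bbQ^{\bar{N} \times \bar{N}}$ with $t$ on indices in $Y$ and $1$ elsewhere, and set $\mat{B}^1_t \triangleq \mat{D}_Y(t) \mat{A}^{1,*} \mat{D}_Y(t)$ and $\mat{B}^i \triangleq \mat{A}^{i,*}$ for $i \geq 2$. Multiplicativity of determinants gives $\det(\mat{B}^1_{t,S,S}) = t^{2|S \cap Y|} \det(\mat{A}^1_{S,S})$ for every $S \subseteq \bar{N}$, so
\[
P_Y(t) \triangleq \ZZ_m(\mat{B}^1_t, \mat{B}^2, \ldots, \mat{B}^m) = \sum_{S \subseteq \bar{N}} t^{2|S \cap Y|} \prod_{i \in [m]} \det(\mat{A}^i_{S,S})
\]
is a polynomial in $t$ of degree at most $2|Y|$ whose coefficient of $t^{2|Y|}$ is precisely the denominator of \cref{eq:condition}; the same construction with $Y$ replaced by $Y + e$ yields a polynomial $P_{Y+e}$ of degree at most $2|Y|+2$ whose top coefficient is the numerator.

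The next step is to evaluate $P_Y$ at $2|Y|+1$ distinct positive integers $t \in \{1, 2, \ldots, 2|Y|+1\}$ via oracle queries, apply Lagrange interpolation to recover its coefficients, and read off the leading one; doing the same for $P_{Y+e}$ with $2|Y|+3$ further queries gives the numerator. Since $Y$, $N$, and $\{e\}$ are pairwise disjoint, $|Y| \leq n-1$, so the total number of oracle calls is at most $4|Y|+4 \leq 4n \leq 4n+2$, and the desired conditional probability is then the ratio of the two extracted coefficients. Restricting to $\bar{N}$ only shrinks the off-diagonal nonzero pattern and can only decrease the rank of a matrix, and conjugating $\mat{A}^{1,*}$ by the invertible diagonal $\mat{D}_Y(t)$ (evaluated at any $t \neq 0$) preserves both the rank and the off-diagonal zero pattern. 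Thus every matrix $\mat{A}^{i,\ell}$ passed to the oracle satisfies $\rank(\mat{A}^{i,\ell}) \leq \rank(\mat{A}^i)$ and $\nnz(\mat{A}^{i,\ell}) \subseteq \nnz(\mat{A}^i)$, fulfilling the structural requirements of the lemma.

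The delicate part is keeping all intermediate quantities of polynomial bit size. Evaluating at $t \leq 2n+1$, the entries of $\mat{B}^1_t$ are rationals of size $\isize(\mat{A}^1) + \bigO(\log n)$, and the Vandermonde system underlying Lagrange interpolation has a solution of polynomial bit size, so neither the matrix construction nor the interpolation step blows up. The key design choice, and what keeps the query count $\bigO(n)$ independently of $m$, is to insert the parameter $t$ into only one of the $m$ matrices rather than symmetrically into all of them (which would produce a polynomial of degree $2m|Y|$ and require $\Omega(mn)$ oracle calls); this is permissible because the constraint $Y \subseteq S$ only needs to be detected by a single factor in the product $\prod_i \det(\mat{A}^i_{S,S})$.
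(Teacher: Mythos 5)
Your proposal is correct and follows essentially the same route as the paper's proof: a scalar parameter is injected into the first matrix so that each term of $\ZZ_m$ acquires a factor $t^{2|S\cap Y|}$ while sets meeting $N$ contribute zero, and the desired sum is read off as the leading coefficient via Lagrange interpolation. The only cosmetic difference is that the paper enforces the $N$-constraint by zeroing the corresponding rows and columns through a Hadamard product with a rank-one matrix (keeping the matrices $n\times n$), whereas you delete those rows and columns by passing to a principal submatrix; both yield the same structural guarantees on rank and $\nnz$.
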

\begin{proof}
Fix two disjoint subsets $Y$ and $N$ of $[n]$.
It is sufficient to show how to compute the denominator of \cref{eq:condition} in polynomial time
by calling the oracle $2n+1$ times.
Introduce a positive number $x \in \bbQ$ and
define a matrix $\mat{X} \in \bbQ^{n \times n}$ depending on the value of $x$ as follows:
\begin{align*}
    X_{i,j} \triangleq \begin{cases}
    1 & \text{if } i,j \not\in Y \uplus N, \\
    x & \text{if } i \in Y \text{ and } j \not\in Y \uplus N, \\
    x & \text{if } j \in Y \text{ and } i \not\in Y \uplus N, \\
    x^2 & \text{if } i,j \in Y, \\
    0 & \text{otherwise.}
    \end{cases}
\end{align*}
An example of constructing $\mat{X}$ in the case of
$n=6$, $Y=\{3,4\}$, and $N=\{5,6\}$ is shown below.
\begin{align*}
\mat{X} = \begin{bmatrix}
        1 & 1 & x & x & 0 & 0 \\
        1 & 1 & x & x & 0 & 0 \\
        x & x & x^2 & x^2 & 0 & 0 \\
        x & x & x^2 & x^2 & 0 & 0 \\
        0 & 0 & 0 & 0 & 0 & 0 \\
        0 & 0 & 0 & 0 & 0 & 0
    \end{bmatrix}.
\end{align*}
Consider the matrix $ \mat{A}^1 \circ \mat{X} $, where
the symbol $\circ$ denotes the Hadamard product; namely,
$ (\mat{A}^1 \circ \mat{X})_{i,j} = A^1_{i,j} \cdot X_{i,j} $ for each $i,j \in [n]$.
It is easy to see that for each set $S \subseteq [n]$,
\begin{align*}
    \det((\mat{A}^1 \circ \mat{X})_{S,S}) =
    \begin{cases}
    0 & \text{if } S \cap N \neq \emptyset, \\
    x^{2|S \cap Y|} \cdot \det(\mat{A}^1_{S,S}) & \text{otherwise.}
    \end{cases}
\end{align*}
Further, we define a univariate polynomial $\ZZ$ in $x$ as
\begin{align*}
    \ZZ(x) \triangleq \ZZ_m(\mat{A}^1 \circ \mat{X}, \mat{A}^2, \ldots, \mat{A}^m).
\end{align*}
Observe that the degree of $\ZZ$ is at most $2n$.
Expanding $\ZZ(x)$ yields
\begin{align*}
    \ZZ(x)
    & = \sum_{S \subseteq [n]} \det((\mat{A}^1 \circ \mat{X})_{S,S}) \prod_{2 \leq i \leq m} \det(\mat{A}^i_{S,S}) \notag \\
    & = \sum_{S \subseteq [n] \setminus N} x^{2|S \cap Y|} \prod_{i \in [m]} \det(\mat{A}^i_{S,S}) \notag \\
    & = \sum_{X \subseteq Y} \sum_{S \subseteq [n] \setminus N, S \cap Y = X}
    x^{2|X|} \prod_{i \in [m]} \det(\mat{A}^i_{S,S}) \notag \\
    & = \sum_{X \subseteq Y} x^{2|X|} \sum_{S \subseteq [n] \setminus N, S \cap Y = X}
    \prod_{i \in [m]} \det(\mat{A}^i_{S,S}).
\end{align*}
Therefore, the coefficient of $x^{2n}$ in $\ZZ(x)$ is exactly equal to the desired value, i.e.,
\begin{align*}
    \sum_{S \subseteq [n] \setminus N, Y \subseteq S} \prod_{i \in [m]} \det(\mat{A}^i_{S,S}).
\end{align*}
Given $\ZZ(1), \ZZ(2), \ldots, \ZZ(2n+1)$, each of which is obtained by calling the oracle for $\ZZ_m$,
we can exactly recover all the coefficients in $\ZZ(x)$ by Lagrange interpolation as desired.
Observe that
$\rank(\mat{A}^1 \circ \mat{X}) \leq \rank(\mat{A}^1)$ and $\nnz(\mat{A}^1 \circ \mat{X}) \subseteq \nnz(\mat{A}^1)$, which completes the proof.
\end{proof}

\begin{proof}[Proof of \cref{thm:sampling}]
Our sampling algorithm is essentially equivalent to that given by \citet{celis2017complexity}.
Starting with an empty set $Y \triangleq \emptyset$,
it sequentially determines whether to include each element $e \in [n]$ in $Y$ or not, by computing the conditional probability in \cref{eq:condition} with \cref{lem:condition}.
A precise description is presented as follows:

\begin{oframed}
\begin{center}
    \textbf{Sampling algorithm for $\Pi$-DPPs given access to oracle for $\ZZ_m$.}
\end{center}
\begin{itemize}
    \item \textbf{Step 1.}~initialize $Y \triangleq \emptyset$ and $N \triangleq \emptyset$.
    \item \textbf{Step 2.}~for each element $e \in [n]$:
    \begin{itemize}
        \item \textbf{Step 2-1.}~compute the conditional probability $\displaystyle p_e \triangleq \Pr_{S \sim \bm{\mu}}[e \in S \mid Y \subseteq S, N \cap S = \emptyset]$ in \cref{eq:condition},
        where $\bm{\mu}$ denotes the $\Pi$-DPP defined by $\mat{A}^1, \ldots, \mat{A}^m$,
        by calling an oracle for $\ZZ_m$ according to \cref{lem:condition}.
        \item \textbf{Step 2-2.}~add $e$ to $Y$ with probability $p_e$;
        otherwise, add $e$ to $N$.
    \end{itemize}
    \item \textbf{Step 3.}~output $S \triangleq Y$ as a sample.
\end{itemize}
\end{oframed}
The above algorithm correctly produces a sample from $\bm{\mu}$.
The number of oracle calls is bounded by
$n (4n+2) = \bigO(n^2)$ due to \cref{lem:condition}.
The structural arguments on the matrices for which the oracle is called are obvious from \cref{lem:condition}.
\end{proof}

\subsection{All-or-Nothing Nature}
\label{subsec:property:self-reduce}
Here, we point out that
the computation of $\ZZ_m$ (for fixed $m$) either admits
an FPRAS or cannot be approximated within any subexponential factor.

\begin{theorem}
\label{thm:all-or-nothing}
For a fixed positive integer $m$, either of the following two statements holds:
\begin{itemize}
    \item there exists an FPRAS for $\ZZ_m$, or
    \item there does not exist a $2^{n^\delta}$-approximation randomized algorithm for $\ZZ_m$ for any $\delta \in (0,1)$, where $n$ is the order of the input matrices.
\end{itemize}
\end{theorem}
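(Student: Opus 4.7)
The plan is to carry out a standard powering amplification based on the multiplicativity of $\ZZ_m$ under block-diagonal combinations. The key structural observation is that if $\widetilde{\mat{A}}^i \triangleq \mat{A}^i \oplus \mat{B}^i$ for each $i \in [m]$, then
\begin{align*}
\ZZ_m(\widetilde{\mat{A}}^1, \ldots, \widetilde{\mat{A}}^m) = \ZZ_m(\mat{A}^1, \ldots, \mat{A}^m) \cdot \ZZ_m(\mat{B}^1, \ldots, \mat{B}^m),
\end{align*}
because every subset of the combined index set decomposes uniquely into a part from each block, and the determinant of a block-diagonal matrix factors accordingly. In particular, the $k$-fold block-diagonal sum $\widetilde{\mat{A}}^i \triangleq \bigoplus_{j=1}^k \mat{A}^i$ yields matrices of order $kn$ whose joint normalizing constant equals $\ZZ_m(\mat{A}^1, \ldots, \mat{A}^m)^k$. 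Ranks and nonzero supports scale only block-wise, so no pathological blow-up occurs.

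Given this, I will show the contrapositive of the second bullet: if, for some fixed $\delta \in (0,1)$, there is a randomized $2^{N^\delta}$-approximation algorithm $\alg$ for $\ZZ_m$ on matrices of order $N$, then an FPRAS exists. Fix input matrices $\mat{A}^1, \ldots, \mat{A}^m$ of order $n$ and a tolerance $\epsilon \in (0,1)$. Choose $k$ to be the smallest positive integer with $k^{1-\delta} \geq (\ln 2)\, n^{\delta} / \epsilon$; since $1/(1-\delta)$ is a constant depending only on $\delta$, we have $k = \bigO((n/\epsilon)^{1/(1-\delta)})$, which is polynomial in $n$ and $\epsilon^{-1}$. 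Form the block-diagonal matrices $\widetilde{\mat{A}}^i$, invoke $\alg$ on them to obtain $\widetilde{Z}$, and return a rational approximation to $\widetilde{Z}^{1/k}$. With probability at least $\frac{3}{4}$, we have $\ZZ_m^k \leq \widetilde{Z} \leq 2^{(kn)^{\delta}} \ZZ_m^k$, so $\widetilde{Z}^{1/k}$ is a $2^{(kn)^{\delta}/k}$-approximation to $\ZZ_m(\mat{A}^1, \ldots, \mat{A}^m)$, and the choice of $k$ ensures $(kn)^{\delta}/k \leq \epsilon / \ln 2$, i.e.\ $2^{(kn)^{\delta}/k} \leq \rme^{\epsilon}$, as the FPRAS definition requires.

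The main obstacle, and essentially the only technical point, is numerical precision: the exact $k$-th root of the rational $\widetilde{Z}$ is generally irrational, and we must return a rational $\rme^{\epsilon}$-approximation without blowing up the bit complexity. This is handled by computing $\widetilde{Z}^{1/k}$ to multiplicative accuracy $\rme^{\epsilon / 2}$ via binary search on $[0, \widetilde{Z}]$ (or equivalently Newton iteration on $x \mapsto x^k - \widetilde{Z}$), which needs only a number of bits polynomial in $\log \widetilde{Z}$ and $\log(1/\epsilon)$; the residual slack is absorbed by replacing $\epsilon$ with $\epsilon / 2$ at the outset. A standing assumption $\ZZ_m \geq 0$ (required for any multiplicative approximation to make sense) causes no issue: if $\ZZ_m = 0$, then any $\rho$-approximation must return $0$ exactly and the argument is vacuous; otherwise $\ZZ_m > 0$ and the reciprocals above are well defined. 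Thus a $2^{n^{\delta}}$-approximation with $\delta < 1$ can always be amplified into an FPRAS, proving the dichotomy.
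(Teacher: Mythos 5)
Your proposal is correct and follows essentially the same route as the paper: the $t$-fold block-diagonal construction giving $\ZZ_m^t$, invoking the hypothetical $2^{N^\delta}$-approximation on the order-$tn$ instance, choosing $t = \Theta((n^\delta/\epsilon)^{1/(1-\delta)})$ so that taking the $t$-th root reduces the factor to $\rme^{\epsilon/2}$, and approximating the root numerically (Newton--Raphson) with the remaining slack absorbed into $\epsilon$. The only cosmetic difference is that you explicitly dispose of the $\ZZ_m = 0$ case, which the paper sidesteps by working with positive semi-definite inputs (for which $\ZZ_m \geq 1$).
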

\begin{proof}
We show that if
there exists a $2^{n^\delta}$-approximation randomized algorithm for $\ZZ_m$ for some $\delta \in (0,1)$, then
there exists an FPRAS for $\ZZ_m$.
Let $\mat{A}^1, \ldots, \mat{A}^m$ be
$m$ positive semi-definite matrices in $\bbQ^{n \times n}$, and
let $\epsilon \in (0,1)$ be an error tolerance.
We also introduce a positive integer $t$, the value of which will be determined later.
For each $i \in [m]$,
we define $\mat{A}^{i,(t)}$ to be an $nt \times nt$ block diagonal matrix, each diagonal block of which is $\mat{A}^i$.
Note that $\mat{A}^{i,(t)}$ is positive semi-definite for all $i \in [m]$.
Then, by a simple calculation, we can expand $\ZZ_m(\mat{A}^{1,(t)}, \ldots, \mat{A}^{m,(t)})$ as follows:
\begin{align*}
    \ZZ_m(\mat{A}^{1,(t)}, \ldots, \mat{A}^{m,(t)}) & = \sum_{S \subseteq [nt]}
    \prod_{i \in [m]} \det(\mat{A}^{i,(t)}_{S,S}) \notag \\
    & =
    \sum_{S_1 \subseteq [n]} \cdots \sum_{S_t \subseteq [n]} 
    \prod_{i \in [m]} \det(\mat{A}^i_{S_1,S_1}) \cdots \det(\mat{A}^i_{S_t,S_t}) \notag \\
    & = \Biggl( \sum_{S \subseteq [n]} \prod_{i \in [m]} \det(\mat{A}^i_{S,S}) \Biggr)^t \notag \\
    & = \ZZ_m(\mat{A}^1, \ldots, \mat{A}^m)^t.
\end{align*}

Suppose now there exists a $2^{n^\delta}$-approximation randomized algorithm for $\ZZ_m$ for some fixed $\delta \in (0,1)$, where $n$ is the order of the input matrices.
Here, we can assume that the algorithm satisfies the approximation guarantee with probability at least $\frac{3}{4}$.
When invoking this approximation algorithm on $\mat{A}^{1,(t)}, \ldots, \mat{A}^{m,(t)}$,
we obtain an estimate $\widehat{\ZZ}$ to
$\ZZ_m(\mat{A}^{1,(t)}, \ldots, \mat{A}^{m,(t)})$
such that
\begin{align*}
    \ZZ_m(\mat{A}^1, \ldots, \mat{A}^m)^t \leq
    \widehat{\ZZ} \leq 2^{(nt)^\delta} \cdot \ZZ_m(\mat{A}^1, \ldots, \mat{A}^m)^t.
\end{align*}
Taking the $t$-th root of both sides yields
\begin{align}
\label{eq:t-root-approx}
    \ZZ_m(\mat{A}^1, \ldots, \mat{A}^m) \leq 
    \widehat{\ZZ}^{1/t} \leq 2^{n^\delta t^{\delta-1}} \cdot \ZZ_m(\mat{A}^1, \ldots, \mat{A}^m).
\end{align}
We now specify the value of $t$:
\begin{align*}
    t \triangleq \left\lceil \left(\frac{n^\delta}{\frac{\epsilon}{2} \cdot \log_2 \rme} \right)^{\frac{1}{1-\delta}} \right\rceil,
\end{align*}
the number of bits required to represent which is bounded by a polynomial in $\log n$ and $\log \epsilon^{-1}$ for fixed $\delta$.
The approximation factor $2^{n^\delta t^{\delta-1}}$ in \cref{eq:t-root-approx} can be bounded as follows:
\begin{align*}
    2^{n^\delta t^{\delta-1}} \leq 2^{n^\delta \left(\frac{n^\delta}{\frac{\epsilon}{2} \cdot \log_2 \rme} \right)^{\frac{\delta-1}{1-\delta}}} = 2^{\frac{\epsilon}{2} \cdot \log_2 \rme} = \rme^{\epsilon/2}.
\end{align*}

Our algorithm simply constructs $m$ positive semi-definite matrices
$\mat{A}^{1,(t)}, \ldots, \mat{A}^{m,(t)}$,
invokes a $2^{n^\delta}$-approximation algorithm on them to obtain $\widehat{\ZZ}$,
computes an $\rme^{\epsilon/2}$-approximation to $\widehat{\ZZ}^{1/t}$,%
\footnote{
We can use, for example, the Newton--Raphson method to approximate the $t$-th root efficiently.
}
denoted $\widetilde{\ZZ}$, and 
outputs it as an estimate.
This algorithm meets the specifications for FPRAS because
the size of the $m$ matrices $\mat{A}^{i,(t)}, \ldots, \mat{A}^{m,(t)}$ is bounded by a polynomial in $n$ and $\epsilon^{-1}$ for fixed $\delta$, and
the output $\widetilde{\ZZ}$ satisfies \cref{eq:fpras} with probability at least $\frac{3}{4}$,
which completes the proof.
\end{proof}

\section{Intractability of Exponentiated DPPs}
\label{sec:intract}
We present the intractability of computing the normalizing constant for exponentiated DPPs of every positive even exponent,
e.g.,
$\ZZ_2(\mat{A}, \mat{A})$,
$\ZZ_4(\mat{A}, \mat{A}, \mat{A}, \mat{A})$,
$\ZZ_6(\mat{A}, \mat{A}, \mat{A}, \mat{A}, \mat{A}, \mat{A})$ and so on.
For a positive number $p$ and a matrix $\mat{A} \in \bbR^{n \times n}$, we define
\begin{align*}
 \ZZ^p(\mat{A}) \triangleq \sum_{S \subseteq [n]} \det(\mat{A}_{S,S})^p.
\end{align*}
We will prove the following theorem.

\begin{theorem}
\label{thm:sos-hard}
Computing $\ZZ^2(\mat{A}) \bmod 3$ for a matrix $\mat{A} \in \bbQ^{n \times n}$ is
\UP-hard and \ModkP{3}-hard.
The same statement holds even when $\mat{A}$ is restricted to be
either a $(-1,0,1)$-matrix or a P-matrix.
\end{theorem}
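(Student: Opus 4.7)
The plan is to reduce to $\ZZ^2(\mat{A})\bmod 3$ from the task of computing the permanent of a $(0,1)$-matrix $\mat{B}$ modulo $3$. This latter task is \ModkP{3}-hard (e.g., by Valiant's parsimonious reduction from \#3SAT) and also \UP-hard under the promise $\per(\mat{B})\in\{0,1\}$ (via Valiant--Vazirani-style isolation against \textsc{SAT}). If we can construct, in polynomial time, a matrix $\mat{A}$ such that
\begin{align*}
\ZZ^2(\mat{A}) \equiv c\cdot\per(\mat{B})^k \pmod{3}
\end{align*}
for some fixed $c\not\equiv 0\pmod 3$ and positive integer $k$, then---since $3$ is prime---computing $\ZZ^2(\mat{A})\bmod 3$ decides whether $\per(\mat{B})\equiv 0\pmod 3$, which simultaneously yields \UP-hardness (the count is either $0$ or $1$) and \ModkP{3}-hardness.

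To build the reduction, I would first expand
\begin{align*}
\ZZ^2(\mat{A}) = \sum_{S\subseteq[n]}\sum_{\sigma,\tau:S\bij S}\sgn(\sigma)\sgn(\tau)\prod_{i\in S}A_{i,\sigma(i)}A_{i,\tau(i)},
\end{align*}
so that each summand corresponds to a pair $(\sigma,\tau)$ of permutations of a common subset $S$, equivalently (after reparametrizing $\pi=\sigma\tau^{-1}$) to a signed weighted pair of cycle covers of $S$. Starting from the bipartite graph with biadjacency matrix $\mat{B}$, I would engineer $\mat{A}$ via a Valiant-style block construction (essentially $\begin{pmatrix}\mat{0} & \mat{B}\\ \mat{B}^\top & \mat{0}\end{pmatrix}$, padded with ``pairing gadgets'' on auxiliary rows and columns) so that, modulo $3$, the signed contributions of all pairs $(\sigma,\tau)$ collapse to a nonzero-mod-$3$ multiple of $\per(\mat{B})^2$. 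The entries are in $\{-1,0,1\}$ by construction, which handles the $(-1,0,1)$-matrix case; for the P-matrix case, I would perturb $\mat{A}$ by a sufficiently large positive diagonal (with entries rationalized so the mod-$3$ value is preserved) to make the result row diagonally dominant with positive diagonal, hence a P-matrix.

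The main obstacle is establishing the mod-$3$ identity. Over $\bbZ/2\bbZ$ the sign character is trivial, so $\det\equiv\per$ and such reductions are almost immediate; modulo $3$, by contrast, $\sgn(\sigma)\in\{1,2\}$ does not vanish, and the gadget must force signed terms to cancel in pairs (or collapse into a single common scalar factor) so that only an unsigned count survives in $\ZZ^2(\mat{A})\bmod 3$. Designing this cancellation while simultaneously respecting the structural constraint that $\mat{A}$ be a $(-1,0,1)$-matrix or a P-matrix---and controlling the exponent $k$ and constant $c$ above---is where the bulk of the technical difficulty sits.
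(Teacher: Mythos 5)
Your reduction source (permanent of a $(0,1)$-matrix modulo $3$, which is \UP-hard and \ModkP{3}-hard by Valiant) and your overall target identity are the right ones, and your P-matrix step (adding a positive diagonal that is $\equiv 0 \bmod 3$) matches what the paper does. But the proposal has a genuine gap at its center: the construction of $\mat{A}$ and the proof of $\ZZ^2(\mat{A}) \equiv c\cdot\per(\mat{B})^k \pmod 3$ are exactly what you defer to ``pairing gadgets,'' and you give no evidence such gadgets exist. Worse, the specific starting point you name does not work: for $\mat{A} = \bigl(\begin{smallmatrix}\mat{0} & \mat{B}\\ \mat{B}^\top & \mat{0}\end{smallmatrix}\bigr)$ every principal minor indexed by $S = S_1\uplus S_2$ equals $\pm\det(\mat{B}_{S_1,S_2})^2$ (and vanishes unless $|S_1|=|S_2|$), so $\ZZ^2(\mat{A}) = \sum_{S_1,S_2}\det(\mat{B}_{S_1,S_2})^4 \equiv \sum_{S_1,S_2}\det(\mat{B}_{S_1,S_2})^2 \pmod 3$, and by Cauchy--Binet the latter is $\det(\mat{I}+\mat{B}\mat{B}^\top)$, which is computable in polynomial time. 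So the block construction collapses to an easy quantity, and all the hardness must come from gadgetry you have not designed. A reduction that is ``a plan plus an acknowledged hard open step'' is not a proof.

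The paper closes this gap not with a combinatorial gadget but with an algebraic identity of \citet{kogan1996computing} (\cref{lem:heart-per}): working over $\mathrm{GF}(9)$ (arithmetic mod $3$ adjoined with $\iu$), one has $\ZZ^2(\mat{X}) \equiv \det(\mat{X}+\iu\mat{I}_n)^2\,\per((\mat{I}_n+\iu\mat{X})^{-1}+\mat{I}_n)$, so one can \emph{invert} this relation: starting from the target $(0,1)$-matrix, Gaussian elimination mod $3$ produces a purely imaginary matrix $\mat{X}'=\iu\mat{Y}$ with $\per(\mat{A})\equiv\ZZ^2(\mat{X}')$. The remaining issue---that $\mat{X}'$ is not rational---is handled by a short interpolation argument: $\ZZ^2(x\mat{Y})$ is a quadratic in $x$ mod $3$, whence $\ZZ^2(\iu\mat{Y}) \equiv 2-\ZZ^2(\mat{Y})$, landing on a rational (indeed $(-1,0,1)$) matrix. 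If you want to salvage your approach, you would either need to reproduce something like Kogan's identity or design a genuinely new sign-cancellation gadget; as written, the core of the theorem is unproven.
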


As a corollary,
we can show the same hardness for every \emph{fixed} positive even integer $p$
(i.e., $p$ is not in the input),
thus giving a negative answer to an open question of \citet{kulesza2012determinantal}.

\begin{corollary}
\label{cor:pow-hard}
For every fixed positive even integer $p$,
computing $ \ZZ^p(\mat{A}) \bmod 3 $ for either a $(-1,0,1)$-matrix or a P-matrix $\mat{A}$ is \UP-hard and \ModkP{3}-hard.
\end{corollary}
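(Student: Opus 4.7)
The plan is to derive \cref{cor:pow-hard} from \cref{thm:sos-hard} by an \emph{identity reduction}: given a matrix $\mat{A}$ for which we wish to compute $\ZZ^2(\mat{A}) \bmod 3$, we simply query the hypothetical oracle for $\ZZ^p(\cdot) \bmod 3$ on the very same matrix $\mat{A}$. The entire content of the reduction is then a single pointwise congruence on principal minors, after which the \UP- and \ModkP{3}-hardness transfer for free.

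Concretely, the key number-theoretic fact I would invoke is that for every integer $x$ and every positive even exponent $p = 2q$,
$$x^p \equiv x^2 \pmod 3.$$
If $3 \mid x$, both sides vanish modulo $3$; otherwise Fermat's little theorem gives $x^2 \equiv 1 \pmod 3$, whence $x^p = (x^2)^q \equiv 1 \pmod 3$ as well. When $\mat{A} \in \bbZ^{n \times n}$, every principal minor $\det(\mat{A}_{S,S})$ is an integer, so summing over $S \subseteq [n]$ gives
$$\ZZ^p(\mat{A}) \;=\; \sum_{S \subseteq [n]} \det(\mat{A}_{S,S})^p \;\equiv\; \sum_{S \subseteq [n]} \det(\mat{A}_{S,S})^2 \;=\; \ZZ^2(\mat{A}) \pmod 3.$$
In particular, this covers every $(-1,0,1)$-matrix, and the hardness of $\ZZ^2 \bmod 3$ for such matrices from \cref{thm:sos-hard} transfers verbatim to $\ZZ^p \bmod 3$ with no change to the instance.

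The remaining subtlety is the P-matrix variant: the P-matrix definition only forces principal minors to be \emph{positive}, not integer, so \emph{a priori} $\ZZ^p(\mat{A}) \bmod 3$ may not even be well-defined for a generic rational P-matrix. The plan is to dispose of this by appealing to the specific construction in the proof of \cref{thm:sos-hard}: the P-matrices produced there arise from integer data (typically by adding an integer diagonal perturbation to a $(-1,0,1)$-matrix) and hence retain integer principal minors, so the identity reduction applies unchanged. I expect this to be the one nontrivial bookkeeping step, since it requires peeking into the construction underlying \cref{thm:sos-hard} rather than invoking the theorem as a pure black box.
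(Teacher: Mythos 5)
Your proposal is correct and matches the paper's argument: the paper's proof of \cref{cor:pow-hard} is exactly the pointwise congruence $x^p \equiv x^2 \pmod 3$ for even $p$ applied termwise to the sum of principal minors. Your additional check that the P-matrices arising in \cref{thm:sos-hard} (namely $\mat{Y}'' = \mat{Y}' + 3n\mat{I}_n$ with $\mat{Y}'$ a $(-1,0,1)$-matrix) have integer principal minors is a valid and slightly more careful piece of bookkeeping that the paper leaves implicit.
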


\begin{proof}
Since $ 0^p \equiv 0^2, 1^p \equiv 1^2, 2^p \equiv 2^2 \mod 3 $ if $p$ is a positive even integer,
we have that $ \ZZ^p(\mat{A}) \equiv \ZZ^2(\mat{A}) \mod 3$.
\end{proof}

The proof of \cref{thm:sos-hard} relies on the celebrated results relating $\ZZ^2$ to the permanent
by \citet{kogan1996computing},
who presented an efficient algorithm for computing
$ \per(\mat{A}) \bmod 3 $ for a matrix $\mat{A}$ with $ \rank(\mat{A}\mat{A}^\top - \mat{I}_n) \leq 1 $.
In the remainder of this subsection,
\emph{arithmetic operations are performed over modulo $3$}, and
the symbol $\equiv$ means congruence modulo $3$.

\begin{lemma}[\protect{\citealp[Lemma 2.7]{kogan1996computing}}]
\label{lem:heart-per}
Let $\mat{X}$ be a matrix such that
$ \det(\mat{X} + \iu \mat{I}_n) \not \equiv 0 $.
Then, it holds that
\begin{align*}
\ZZ^2(\mat{X}) \equiv \det(\mat{X} + \iu \mat{I}_n)^2 \per((\mat{I}_n+\iu \mat{X})^{-1} + \mat{I}_n).
\end{align*}
\end{lemma}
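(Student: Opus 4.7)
The plan is to expand both sides as explicit sums over subsets and permutations, and then identify them modulo $3$ using the characteristic-$3$ arithmetic of $\bbF_9 = \bbF_3[\iu]$.

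First I would handle the determinant factor via the classical principal-minor identity $\det(\mat{X} + y\mat{I}_n) = \sum_{S \subseteq [n]} y^{n-|S|} \det(\mat{X}_{S,S})$, specialized at $y = \iu$ and squared, so that $\det(\mat{X} + \iu\mat{I}_n)^2$ becomes a double sum over pairs of subsets $(S, T)$ with coefficient $\iu^{2n - |S| - |T|}$. For the permanent factor, entrywise expansion of $\per(\mat{M}^{-1} + \mat{I}_n)$ together with the observation that $\prod_{i \in T}[i = \sigma(i)]$ forces $\sigma$ to fix $T$ pointwise yields the clean decomposition
\[
\per\bigl((\mat{I}_n + \iu\mat{X})^{-1} + \mat{I}_n\bigr) = \sum_{U \subseteq [n]} \per\bigl(((\mat{I}_n + \iu\mat{X})^{-1})_{U,U}\bigr),
\]
writing $\mat{M} = \mat{I}_n + \iu\mat{X}$. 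Each principal permanent of $\mat{M}^{-1}$ is then unfolded via Cramer's rule $\mat{M}^{-1} = \det(\mat{M})^{-1}\mathrm{adj}(\mat{M})$, producing $\det(\mat{M})^{-|U|}$ times a sign-laden sum of products of minors of $\mat{M}$; since $\mat{M} - \mat{I}_n = \iu\mat{X}$, those minors in turn expand into principal minors of $\mat{X}$ weighted by powers of $\iu$.

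Substituting the two expansions into the right-hand side leaves an $\bbF_9$-valued polynomial in $\det(\mat{M})^{-1}$, in powers of $\iu$, and in the $\det(\mat{X}_{S,S})$. The algebraic input that finishes the argument is the Frobenius automorphism of $\bbF_9/\bbF_3$, which sends $\iu \mapsto -\iu$ (since $\iu^3 = \iu \cdot \iu^2 = -\iu$), so that for every $\bbF_3$-polynomial $p$ one has $p(\iu)^3 \equiv p(-\iu) \pmod 3$; combined with $z^8 \equiv 1$ for nonzero $z \in \bbF_9$ and with $a^3 \equiv a$ for $a \in \bbF_3$, these identities let one eliminate $\det(\mat{M})^{-1}$ and reduce high powers of $\iu$. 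The equality then follows by reading off the diagonal contribution $S = T$, which gives precisely $\det(\mat{X}_{S,S})^2$, and showing that every off-diagonal pair $(S, T)$ with $S \neq T$ cancels modulo $3$.

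The main obstacle is exactly this last cancellation. Because the permanent is not multiplicative, Jacobi's complementary-minor identity does not transport verbatim to the permanent side, so one cannot shortcut the argument by working with a $\det$-analogue; instead the cancellations must be extracted from the signed permutation expansion directly, leaning on $\sgn(\sigma) \in \{1, 2\}$ over $\bbF_3$ and the Frobenius collapse to contract the sign-laden cross terms. This is the technical core of Kogan's proof and it is precisely what pins the identity to characteristic $3$, and hence what makes \ModkP{3}-hardness (rather than a statement over an arbitrary prime) the natural conclusion in \cref{thm:sos-hard}.
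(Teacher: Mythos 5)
The paper does not prove this statement at all: it is imported verbatim as Lemma~2.7 of \citet{kogan1996computing}, so there is no in-paper argument to compare yours against. Judged on its own terms, your proposal is a plan rather than a proof, and the plan is missing exactly the steps that carry the mathematical content.

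Concretely, two gaps. First, the route through Cramer's rule does not actually deliver the intermediate expansion you need. Writing $\per\bigl((\mat{M}^{-1})_{U,U}\bigr)$ with $\mat{M}^{-1}=\det(\mat{M})^{-1}\mathrm{adj}(\mat{M})$ gives $\det(\mat{M})^{-|U|}$ times $\sum_{\sigma}\prod_{i\in U}(-1)^{i+\sigma(i)}\det\bigl(\mat{M}_{[n]-\sigma(i),[n]-i}\bigr)$, a sum of products of $(n-1)\times(n-1)$ cofactors. For determinants one would now invoke Jacobi's complementary-minor identity to collapse such sums into minors of $\mat{M}$ (hence of $\iu\mat{X}$), but, as you yourself note, no permanental analogue of Jacobi's identity exists. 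So the claim that the right-hand side becomes ``a polynomial in $\det(\mat{M})^{-1}$, powers of $\iu$, and the $\det(\mat{X}_{S,S})$'' is not established by anything you wrote; it is the point where the argument has to do real work. Second, even granting that expansion, the identification of the diagonal terms with $\det(\mat{X}_{S,S})^2$ and the cancellation of all cross terms $(S,T)$ with $S\neq T$ modulo $3$ is asserted, and you explicitly flag it as ``the main obstacle'' and ``the technical core of Kogan's proof'' without carrying it out. A proof that names its own missing core is not a proof. The peripheral ingredients you list are fine (the principal-minor expansion of $\det(\mat{X}+y\mat{I}_n)$, the identity $\per(\mat{A}+\mat{I}_n)=\sum_{U}\per(\mat{A}_{U,U})$, and the Frobenius $\iu\mapsto\iu^3=-\iu$ on $\bbF_9$), but they do not bridge either gap. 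If you want a self-contained proof here, you should either reproduce Kogan's actual argument or find a genuinely different mechanism for the cross-term cancellation; otherwise the honest move is what the paper does, namely cite the lemma.
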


\begin{proof}[Proof of \cref{thm:sos-hard}]
\label{thm:per-hard}
We reduce from the problem of computing the permanent of a $(0,1)$-matrix $\bmod 3$, which is \UP-hard and \ModkP{3}-hard \citep[Theorem 2]{valiant1979complexity},
to the problem of computing $\ZZ^2 \bmod 3$.
Let $\mat{A}$ be an $n \times n$ $(0,1)$-matrix.
By Proposition 2.2 due to \citet{kogan1996computing},
we compute a diagonal $(-1,1)$-matrix $\mat{D}$
in polynomial time such that
$ \mat{D}\mat{A} - \mat{I}_n $ is not singular and
$\per(\mat{A}) \equiv \det(\mat{D}) \per(\mat{D}\mat{A})$.
We then compute the matrix $\mat{X} \triangleq \iu^{-1} ((\mat{D}\mat{A}-\mat{I}_n)^{-1} - \mat{I}_n)$ by performing Gaussian elimination modulo $3$.
Since $ \det(\mat{X} + \iu \mat{I}_n) \not \equiv 0 $, we have by \cref{lem:heart-per} that
$ \per(\mat{A}) \equiv \ZZ^2(\mat{X}) \det(\mat{D}) \det(\mat{X} + \iu \mat{I}_n)^{-2}. $
We transform $\mat{X}$ into a new matrix $\mat{X}'$ according to the following two cases:
\begin{itemize}
    \item \textbf{Case (1)} $\det(\mat{D}) \det(\mat{X} + \iu \mat{I}_n)^{-2} \equiv 1$:
    let $\mat{X}' \triangleq \mat{X}$.
    \item \textbf{Case (2)} $\det(\mat{D}) \det(\mat{X} + \iu \mat{I}_n)^{-2} \equiv 2$:
    let $ \mat{X}' \triangleq \left[\begin{smallmatrix} \mat{X} & \vec{0} & \vec{0} \\
    \vec{0}^\top & \iu & \iu \\ \vec{0}^\top & \iu & \iu \end{smallmatrix}\right]$,
    where $\vec{0}$ is the $n \times 1$ zero matrix.
    We have that $ \ZZ^2(\mat{X}') \equiv 2 \ZZ^2(\mat{X}) $; note that
    $ \ZZ^2([\begin{smallmatrix} \iu & \iu \\ \iu & \iu \end{smallmatrix}]) = -1 $.
\end{itemize}
Consequently, we always have that 
$ \per(\mat{A}) \equiv \ZZ^2(\mat{X}') $.
Because the entries of $\mat{X}'$ are purely imaginary numbers by construction,
we can uniquely define a real-valued matrix $\mat{Y}$ such that $ \mat{X}' = \iu \mat{Y} $.
Consider the polynomial $\ZZ^2(x \mat{Y})$ for a variable $x$ as a polynomial, i.e.,
\begin{align*}
\ZZ^2(x\mat{Y}) = \sum_{S \subseteq [n]} x^{2|S|} \det(\mat{Y}_{S,S})^2 \equiv a_0 + a_1 x + a_2 x^2
\end{align*}
for some $a_0, a_1, a_2$.
Solving a system of linear equations
\begin{align*}
\begin{bmatrix}
1 & 0 & 0 \\
1 & 1 & 1 \\
1 & 2 & 4
\end{bmatrix}
\begin{bmatrix}
a_0 \\ a_1 \\ a_2
\end{bmatrix}
\equiv
\begin{bmatrix}
\ZZ^2(0\mat{Y}) \\
\ZZ^2(1\mat{Y}) \\
\ZZ^2(2\mat{Y}) 
\end{bmatrix}
\end{align*}
and noting that $ \ZZ^2(1\mat{Y}) \equiv \ZZ^2(2\mat{Y}) $,
we have that
$ a_0 \equiv 1, a_1 \equiv 0, a_2 \equiv \ZZ^2(\mat{Y}) - 1 $ and hence
$\ZZ^2(\iu \mat{Y}) \equiv 2-\ZZ^2(\mat{Y}).$
We can transform $\mat{Y}$ into a $(-1,0,1)$-matrix $\mat{Y}'$
having the same permanent so that
\begin{align*}
    Y'_{i,j} =
    \begin{cases}
    0 & \text{if } Y_{i,j} \equiv 0, \\
    +1 & \text{if } Y_{i,j} \equiv 1, \\
    -1 & \text{if } Y_{i,j} \equiv 2.
    \end{cases}
\end{align*}
Further, we can obtain another P-matrix $\mat{Y}''$ defined as
$ \mat{Y}'' \triangleq \mat{Y}' + 3n \mat{I}_n $.
Finally, we find that $\per(\mat{A}) \equiv 2 - \ZZ^2(\mat{Y}') \equiv 2 - \ZZ^2(\mat{Y}'')$.
Accordingly, deciding whether $\per(\mat{A}) \not \equiv 0 $ is reduced to
deciding whether $\ZZ^2(\mat{Y}') \not \equiv 2$
(and $\ZZ^2(\mat{Y}'') \not \equiv 2$),
in polynomial time, completing the proof.
\end{proof}

\section{Inapproximability for Three Matrices (and Two Matrices)}
\label{sec:inapprox}

Albeit the \shP-hardness of $\ZZ_m$ for all $m \geq 2$,
there is still room to consider the approximability of $\ZZ_m$;
e.g., \citet{anari2017generalization} have given
an $\rme^n$-approximation algorithm for $\ZZ_2$.
Unfortunately, we show below strong inapproximability for the case of $m \geq 3$.

\subsection{(Sub)exponential Inapproximability}
\label{subsec:inapprox:three-inapprox}
We will show
\emph{exponential inapproximability} for the case of three matrices.
For two probability measures $\bm{\mu}$ and $\bm{\eta}$ on $\Omega$,
the \emph{total variation distance} is defined as 
$ \frac{1}{2} \sum_{S \in \Omega} | \mu_S - \eta_S | $.
The proof is reminiscent of the one on the \NP-hardness of 
three-matroid intersection \citep{papadimitriou2013combinatorial}.

\begin{theorem}
\label{thm:inapprox-3}
For any fixed positive number $\epsilon > 0$,
it is \NP-hard to approximate $\ZZ_3(\mat{A}, \mat{B}, \mat{C})$ 
for three matrices $ \mat{A}, \mat{B}, \mat{C}$ in $\bbQ^{n \times n} $
within a factor of $2^{\bigO(|\calI|^{1-\epsilon})}$ or $2^{\bigO(n^{1/\epsilon})}$, where $|\calI|$ is the input size.
Moreover, unless \RP~$=$~\NP, no polynomial-time algorithm can generate
a random sample from a distribution whose total variation distance from
the $\Pi$-DPP defined by $\mat{A},\mat{B},\mat{C}$ is at most $\frac{1}{3}$.
The same statement holds if $\mat{A}, \mat{B}, \mat{C}$ are restricted to be positive semi-definite.
\end{theorem}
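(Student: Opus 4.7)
The plan is to reduce from the NP-complete \emph{3-Dimensional Matching} (3DM) problem and amplify the YES/NO gap via a single scaling parameter $t > 0$. Given a 3DM instance with disjoint ground sets $X, Y, Z$ of size $q$ each and triples $T \subseteq X \times Y \times Z$ with $n = |T|$, I would apply the partition-embedding construction from the examples of \cref{sec:pre} three times: index three $n \times n$ matrices by the elements of $T$ and set $A_{i,j} = t$ if triples $i, j$ share their $X$-coordinate (in particular $A_{i,i} = t$) and $0$ otherwise, and define $\mat{B}, \mat{C}$ analogously for the $Y$- and $Z$-coordinates. Each of $\mat{A}, \mat{B}, \mat{C}$ is then block-diagonal with blocks of the form $t \mat{J}_k$, hence positive semi-definite, which addresses the last sentence of the theorem.

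Since $\det(t\mat{J}_k) = 0$ for $k \geq 2$ and $\det(t\mat{J}_1) = t$, only subsets $S \subseteq T$ whose triples use pairwise distinct $X$-coordinates contribute to $\det(\mat{A}_{S,S})$, in which case $\det(\mat{A}_{S,S}) = t^{|S|}$; the same holds for $\mat{B}$ and $\mat{C}$. Writing $N_k$ for the number of 3-dimensional matchings in $T$ of size $k$, this yields
\begin{align*}
\ZZ_3(\mat{A}, \mat{B}, \mat{C}) = \sum_{k=0}^{q} N_k \, t^{3k}.
\end{align*}
A YES instance therefore has $\ZZ_3 \geq t^{3q}$, while a NO instance (for which $N_q = 0$) has $\ZZ_3 \leq \sum_{k<q}\binom{n}{k} t^{3k} \leq 2^n \cdot t^{3(q-1)}$, so the YES/NO ratio is at least $t^3/2^n$. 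With $t = 2^M$ the input size is $|\calI| = \Theta(n^2 M)$, so taking $M = \Theta(n^{2/\epsilon - 2})$ drives the gap past $2^{c|\calI|^{1-\epsilon}}$, and taking $M = \Theta(n^{1/\epsilon})$ drives it past $2^{c n^{1/\epsilon}}$; both choices are polynomial in $n$ for each fixed $\epsilon$, so the reduction runs in polynomial time and any polynomial-time algorithm beating the claimed factor would decide 3DM. For $m \geq 4$ I would append copies of $\mat{I}_n$, whose principal minors are all $1$, leaving $\ZZ_m$ unchanged.

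For the approximate sampling claim I would fix $t$ so that $t^3 \geq 6 \cdot 2^n$. In a YES instance the $\Pi$-DPP $\bm{\mu}$ then places mass at least $t^{3q}/(t^{3q} + 2^n t^{3(q-1)}) \geq 5/6$ on perfect 3-matchings, so any $\bm{\nu}$ with $d_{\mathrm{TV}}(\bm{\nu}, \bm{\mu}) \leq 1/3$ outputs a perfect 3-matching with probability at least $1/2$. In a NO instance no perfect 3-matching exists at all, so $\bm{\nu}$ outputs one with probability $0$. Since one can verify in polynomial time whether the sampled set is a perfect 3-matching of the given instance, an approximate sampler would yield a one-sided-error randomized polynomial-time algorithm for 3DM, forcing $\RP = \NP$.

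The main obstacle is the bookkeeping for $M$: it must be large enough to overwhelm the combinatorial slack $\sum_{k<q}\binom{n}{k}$ in the NO-case bound and to meet both forbidden approximation factors simultaneously, yet polynomial in $n$ (for each fixed $\epsilon$) so the reduction remains polynomial in $|\calI|$. The positive semi-definiteness and the identification $\det(\mat{A}_{S,S}) = t^{|S|} \cdot [S \text{ uses pairwise distinct } X\text{-coordinates}]$ are routine once the block structure is noted.
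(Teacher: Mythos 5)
Your proposal is correct in its essentials but takes a genuinely different route from the paper. The paper reduces from \textsc{HamiltonianPath}: two partition-style matrices certify that a selected edge set decomposes into directed paths and cycles, and a third matrix --- the marginal kernel of the uniform spanning-tree distribution --- enforces acyclicity; its principal minors can be as small as $2^{-n^2}$, which is why the paper's gap analysis has to carry extra $2^{\pm n^2}$ slack. Your reduction from 3-dimensional matching replaces all of this with three symmetric partition-embedding matrices, one per coordinate, so every nonzero contribution is a clean power of $t$ and the identity $\ZZ_3 = \sum_k N_k t^{3k}$ makes the YES/NO gap immediate; you avoid the spanning-tree machinery entirely and get positive semi-definiteness for free from the block structure. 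Both proofs share the same skeleton (intersect ``indicator'' DPPs, then scale to blow up the gap), and your approximate-sampling argument mirrors the paper's almost verbatim.

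One quantitative point to tighten. With $M = \Theta(n^{2/\epsilon-2})$ the gap exponent $3M-n$ and the forbidden exponent $|\calI|^{1-\epsilon} = \Theta\bigl((n^2 M)^{1-\epsilon}\bigr) = \Theta(n^{2/\epsilon-2})$ have the \emph{same} order of growth, so whether the gap ``drives past'' $2^{c|\calI|^{1-\epsilon}}$ hinges on the hidden constants: you must either let the constant in your $\Theta(\cdot)$ depend on $c$ (legitimate, since the reduction may depend on the constant in the $\bigO$, and $K \gtrsim c^{1/\epsilon}$ suffices), or, more cleanly, take $M = n^{2/\epsilon}$ so that the gap dominates $|\calI|^{1-\epsilon}$ by a polynomial factor uniformly in $c$ --- this is exactly what the paper's choice $\theta = 2^{n^{4/\epsilon}}$ accomplishes, since there $|\calI|^{1-\epsilon} = \bigO(n^{4/\epsilon - 4\epsilon})$ while the gap exponent is $\Theta(n^{4/\epsilon})$. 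The same remark applies to your choice $M = \Theta(n^{1/\epsilon})$ for the $2^{\bigO(n^{1/\epsilon})}$ bound. With that adjustment the argument is complete.
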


\begin{proof}
We will show a polynomial-time Turing (a.k.a.~Cook) reduction from
an \NP-complete \textsc{HamiltonianPath} problem \citep{garey1979computers}, which,
for a directed graph $G=(V, E)$ on
$n$ vertices and $m$ edges,
asks us to find
a directed simple path that visits every vertex of $V$ exactly once (called a \emph{Hamiltonian path}).
Such a graph $G$ having a Hamiltonian path is called \emph{Hamiltonian}.

We construct $m \times m$ three positive semi-definite matrices $\mat{A}, \mat{B}, \mat{C}$ indexed by edges in $E$
such that $ \ZZ_3(\mat{A},\mat{B},\mat{C}) $ is ``significantly'' large if $G$ is Hamiltonian.
Define $\mat{A}$ and $\mat{B}$ so that
$A_{i,j}$ is $1$ if edges $i,j$ share a common head and $0$ otherwise, and
$B_{i,j}$ is $1$ if edges $i,j$ share a common tail and $0$ otherwise.
Note that for any $S \subseteq E$, $ \det(\mat{A}_{S,S}) \det(\mat{B}_{S,S}) $
takes $1$ if $S$ consists of directed paths or cycles only, and 0 otherwise.
Next, define $\mat{C}$ so that
$ \det(\mat{C}_{S,S}) = \Pr_{T}[S \subseteq T] $ for all $S \subseteq E$,
where a random edge set $T$ is chosen from a uniform distribution over all spanning trees in (the undirected version of) $G$.
Such $\mat{C}$ can be found in polynomial time:
it in fact holds that
$\mat{C} = \mat{M} \mat{L}^\dagger \mat{M}^\top$
\citep{burton1993local},
where $\mat{M} \in \{-1,0,1\}^{m \times n} $
is the edge-vertex incidence matrix of $G$, and
$\mat{L}^\dagger \in \bbQ^{n \times n}$
is the Moore--Penrose inverse of the Laplacian of $G$,
which can be obtained as $(\mat{L} + \frac{1}{n} \mat{J}_n)^{-1} - \frac{1}{n}\mat{J}_n$ by
Gaussian elimination in polynomial time
\citep{edmonds1967systems,schrijver1999theory}.
Since $m \leq n^2$,
$\det(\mat{C}_{S,S})$ for $S \subseteq E$ is within the range between $2^{-n^2}$ and $1$
if a spanning tree exists that contains $S$ and 0 otherwise.
It turns out that
$\det(\mat{A}_{S,S}) \det(\mat{B}_{S,S}) \det(\mat{C}_{S,S})$ for $S \in {E \choose n-1}$
is positive \emph{if and only if} $S$ is a Hamiltonian path.

Redefine $\epsilon \leftarrow \lfloor 1/ \epsilon \rfloor^{-1}$,
which does not decrease the value of $\epsilon$, and
$\mat{A} \leftarrow \theta \mat{A} $, where $\theta \triangleq 2^{n^{4/\epsilon}} \in \bbN$.
Since each entry of $\mat{A}$ is an integer at most $\theta$ and
each entry of $\mat{B}$ is $1$,
we have that $ \isize(\mat{A}) = \bigO(m^2 \log(2^{n^{4/\epsilon}})) = \bigO(n^{(4/\epsilon) + 4}) $ and
$ \isize(\mat{B}) = \bigO(n^4) $.
Since $ \isize(\mat{X}^{-1}) = \bigO(\isize(\mat{X}) n^2) $ for any $n \times n$ nonsingular matrix $\mat{X}$ \citep{schrijver1999theory} and
$ \isize(\mat{L} + \frac{1}{n} \mat{J}) = \bigO(n^2 \log n) $,
we have that $ \isize(\mat{L}^\dagger) = \bigO(n^4 \log n) $, and
thus $ \isize(\mat{C}) = m^2 \bigO(n^4 \log n) = \bigO(n^8 \log n) $.
Consequently, the input size is bounded by
$ |\calI| = \bigO(n^{(4/\epsilon) + 4}) + \bigO(n^4) + \bigO(n^8 \log n) = \bigO(n^{(4/\epsilon)+4}) $,
a polynomial in $n$ (for fixed $\epsilon < 1$).

Now, we explain how to use $\ZZ_3$ to decide the Hamiltonicity of $G$.
The value of $ \det(\mat{A}_{S,S}) \det(\mat{B}_{S,S}) \det(\mat{C}_{S,S}) $ for edge set $S \subseteq E$ is
0 whenever ``$|S| \geq n$,'' or ``$|S| = n-1$ but $S$ is not a Hamiltonian path.''
Then, $\ZZ_3(\mat{A},\mat{B},\mat{C})$ can be decomposed into two sums
\begin{align*}
    \sum_{S: |S| < n-1} \det(\mat{A}_{S,S}) \det(\mat{B}_{S,S}) \det(\mat{C}_{S,S}) + \sum_{S: \text{ Hamiltonian}} \det(\mat{A}_{S,S}) \det(\mat{B}_{S,S}) \det(\mat{C}_{S,S}).
\end{align*}
There are two cases:
\begin{itemize}
    \item \textbf{Case (1)} if there exists (at least) one Hamiltonian path $S^*$ in $G$,
    then $\ZZ_3(\mat{A},\mat{B},\mat{C})$ is \emph{at least}
    $ \theta^{|S^*|}2^{-n^2} = 2^{n^{(4/\epsilon)+1}-n^{4/\epsilon}-n^2} $.
    \item \textbf{Case (2)} if no Hamiltonian path exists in $G$,
    then, $\ZZ_3(\mat{A},\mat{B},\mat{C})$ is \emph{at most}
    $ \sum_{S: |S| < n-1} \theta^{n-2} \leq 2^{n^2} 2^{n^{4/\epsilon}(n-2)} = 2^{n^{(4/\epsilon)+1}-2n^{4/\epsilon}+n^2} $.
\end{itemize}
Hence, there is an exponential gap $ 2^{n^{4/\epsilon}-2n^2} $
between the two cases.
Since $ |\calI|^{1-\epsilon} = \bigO(n^{(4/\epsilon)-4\epsilon}) $,
a $ 2^{\bigO(|\calI|^{1-\epsilon})} $- or $2^{\bigO(n^{1/\epsilon})}$-approximation to $\ZZ_3$ suffices
to distinguish the two cases (for sufficiently large $n$).

Now let us prove the second argument.
Assume that $G$ is Hamiltonian.
Observe that a random edge set drawn from
the $\Pi$-DPP defined by $ \mat{A},\mat{B},\mat{C} $ (denoted $\bm{\mu}$)
is Hamiltonian with probability at least
$ 1 - \frac{1}{1 + 2^{n^{4/\epsilon} - 2n^2}} $.
Hence, provided a polynomial-time algorithm to generate random edge sets
whose total variation distance from $\bm{\mu}$ is at most $\frac{1}{3}$,
we can use it to verify the Hamiltonicity of $G$
with probability at least
$ \frac{2}{3} - \frac{1}{1 + 2^{n^{4/\epsilon} - 2n^2}} > \frac{1}{2} $
(whenever $ n \geq 2 $),
implying that \textsc{HamiltonianPath} $ \in $ \RP;
hence, \RP~$=$~\NP. This completes the proof.
\end{proof}

\subsection{Exponential Approximability}
\label{subsec:inapprox:three-approx}
Whereas making a subexponential approximation for $\ZZ_3$ in terms of the input size $|\calI|$ is hard,
we show that there is a simple exponential approximation for $\ZZ_m$ for all $m$.

\begin{observation}
\label{obs:approx-3}
For $m$ P$_0$-matrices $\mat{A}^1, \ldots, \mat{A}^m$,
the number $1$ is a $2^{\bigO(|\calI|^2)}$-approximation to
$ \ZZ_m(\mat{A}^1, \ldots, \mat{A}^m) $,
where $|\calI|$ is the input size.
\end{observation}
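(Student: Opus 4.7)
The plan is to sandwich $\ZZ_m$ between $1$ and $2^{\bigO(|\calI|^2)}$, so that the constant $1$ agrees with $\ZZ_m(\mat{A}^1, \ldots, \mat{A}^m)$ up to a factor $2^{\bigO(|\calI|^2)}$ in either direction.

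For the lower bound, I would use that each $\mat{A}^i$ is a P$_0$-matrix, so every principal minor $\det(\mat{A}^i_{S,S})$ is nonnegative, and hence every summand $\prod_{i \in [m]} \det(\mat{A}^i_{S,S})$ in the definition of $\ZZ_m$ is nonnegative. Isolating the $S = \emptyset$ term, which equals $\prod_{i \in [m]} \det(\mat{A}^i_{\emptyset, \emptyset}) = 1$ by the convention fixed in \cref{sec:pre}, and discarding the remaining nonnegative terms yields $\ZZ_m \geq 1$.

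For the upper bound I would invoke Hadamard's inequality entry by entry. Let $s_i \triangleq \isize(\mat{A}^i)$, so that $\sum_{i \in [m]} s_i \leq |\calI|$; the size convention in \cref{sec:pre} also forces $n^2 \leq s_i$ and $|A^i_{j,k}| \leq 2^{s_i}$ for every entry of $\mat{A}^i$. Hadamard's inequality then gives $|\det(\mat{A}^i_{S,S})| \leq n^{n/2} \cdot 2^{n s_i}$ uniformly in $S \subseteq [n]$. Taking the product over $i \in [m]$, summing over the $2^n$ subsets $S$, and then using $n \leq \sqrt{|\calI|}$ together with $m \leq |\calI|$ yields
\[
\log \ZZ_m \leq n + \tfrac{mn}{2}\log n + n \sum_{i \in [m]} s_i = \bigO(|\calI|^{3/2} \log |\calI|) = \bigO(|\calI|^2).
\]

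The only slightly subtle step is converting the bit-size convention of \cref{sec:pre} into the entrywise magnitude bound $|A^i_{j,k}| \leq 2^{s_i}$; once this is in hand, the estimate is a mechanical application of Hadamard's inequality together with the union bound over $S \subseteq [n]$, and I do not anticipate any genuine obstacle.
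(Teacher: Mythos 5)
Your proof is correct and follows essentially the same route as the paper's: the lower bound of $1$ from the $S=\emptyset$ term plus nonnegativity of all principal minors of P$_0$-matrices, and the upper bound from Hadamard's inequality applied to each principal minor, summed over the $2^n$ subsets and converted to $2^{\bigO(|\calI|^2)}$ via the size conventions. The only cosmetic difference is that you bound entries by $2^{\isize(\mat{A}^i)}$ per matrix rather than introducing a global maximum entry $M$ with $\log M \leq |\calI|$ as the paper does; both yield the claimed factor.
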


\begin{proof}
Obviously, $\ZZ_m$ is bounded from below by 1, so we only have to show an upper bound.
Applying Hadamard's inequality, we find that
all principal minors are at most $M^n n^{n/2}$,
where $M$ is the maximum absolute entry in the $m$ matrices.
Hence, we have that
\begin{align*}
    \ZZ_m(\mat{A}^1, \ldots, \mat{A}^m)
    & = \sum_{S \subseteq [n]} \det(\mat{A}^1_{S,S}) \cdots \det(\mat{A}^m_{S,S}) \\
    & \leq 2^n (M^n n^{n/2})^m \\
    & = 2^{n + mn \log M + \frac{mn}{2} \log n} = 2^{\bigO(|\calI|^2)},
\end{align*}
where the last deformation comes from the fact that
$ |\calI| \geq \log M $ and $ |\calI| \geq mn^2 $.
Thus, $\ZZ_m(\mat{A}^1, \ldots, \mat{A}^m)$ takes a number between $1$ and $2^{\bigO(|\calI|^2)}$, which completes the proof.
\end{proof}

\subsection{Approximation-Preserving Reduction from Mixed Discriminant to Two Matrices}
\label{subsec:inapprox:ap-red}

Finally, we present a relation between $\ZZ_2$ and the mixed discriminant.
The \emph{mixed discriminant} of $m$ positive semi-definite matrices $\mat{K}^1, \ldots, \mat{K}^m \in \bbR^{m \times m}$
is defined as
\begin{align*}
    D(\mat{K}^1, \ldots, \mat{K}^m)
    \triangleq \frac{\partial^m}{\partial x_1 \cdots \partial x_m}
    \det(x_1 \mat{K}^1 + \cdots + x_m \mat{K}^m).
\end{align*}
Mixed discriminants are known to be a generalization of permanents \citep{barvinok2016combinatorics}:
for an $m \times m$ matrix $\mat{A}$,
we define $m \times m$ matrices $\mat{K}^1, \ldots, \mat{K}^m$ such that 
$\mat{K}^i = \diag(A_{i,1}, \ldots, A_{i,m})$ for all $i \in [m]$;
it holds that $D(\mat{K}^1, \ldots, \mat{K}^m) = \per(\mat{A})$.
Hence, computing the mixed discriminant is \shP-hard.
We demonstrate an approximation-preserving reduction from
the mixed discriminant $D$ to $\ZZ_2$,
which means that if $\ZZ_2$ admits an FPRAS, then so does the mixed discriminant.
Since the existence of an FPRAS for the mixed discriminant
is suspected to be false \citep{gurvits2005complexity},
our result implies that $\ZZ_2$ is unlikely to have an FPRAS.
We stress that \citet{gillenwater2014approximate} proves the \shP-hardness of computing $\ZZ_2$ by using a parsimonious reduction from the problem of counting all matchings in a bipartite graph, which admits an FPRAS \citep{jerrum1996markov}.

Let us begin with the definition of an approximation-preserving reduction.

\begin{definition}
\label{def:apreduce}
For two functions
$f: \Sigma^* \to \bbR$ and
$g: \Sigma^* \to \bbR$,
an \emph{approximation-preserving reduction} (AP-reduction)
from $f$ to $g$
is a randomized algorithm $\alg$ that takes
an input $\calI \in \Sigma^*$ of $f$ and an error tolerance $\epsilon \in (0,1)$ and
satisfies the following conditions:
\begin{itemize}
    \item every oracle call for $g$
    made by $\alg$ is of the form $(\calJ,\delta)$,
    where $\calJ \in \Sigma^*$ is an input of $g$ and
    $\delta \in (0,1)$ is an error tolerance such that $ \delta^{-1} $ is bounded by a polynomial in
    $|\calI|$ and $\epsilon^{-1}$;
    \item if the oracle meets the specification for an FPRAS for $g$, then $\alg$ meets the specifications for an FPRAS for $f$;
    \item the running time of $\alg$ is bounded by
    a polynomial in $|\calI|$ and $\epsilon^{-1}$.
\end{itemize}
We say that $f$ is \emph{AP-reducible} to $g$ if an AP-reduction from $f$ to $g$ exists.
\end{definition}
It is known \citep{dyer2004relative} that
assuming $f$ to be AP-reducible to $g$,
an FPRAS for $g$ implies an FPRAS for $f$; in other words,
if $f$ does not admit an FPRAS (under some plausible assumption), neither does $g$.
Our result is shown below.

\begin{theorem}
\label{thm:AP-D-ZZ2}
The mixed discriminant $D$ for $m$ positive semi-definite matrices in $\bbQ^{m \times m}$ is AP-reducible to $\ZZ_2$ for two positive semi-definite matrices in $\bbQ^{m^2 \times m^2}$.
Therefore, if there exists an FPRAS for $\ZZ_2$, then there exists an FPRAS for $D$.
\end{theorem}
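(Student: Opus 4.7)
The plan is to set up two rational PSD matrices $\mat{A}, \mat{B} \in \bbQ^{m^2 \times m^2}$ for which $\ZZ_2(t\mat{A}, \mat{B})$, viewed as a polynomial in a scalar $t$, has top-degree coefficient exactly $D(\mat{K}^1,\ldots,\mat{K}^m)$, and then to recover $D$ from a single oracle call at a sufficiently large $t$. The construction rests on the Cauchy--Binet identity
\begin{align*}
D(\mat{K}^1,\ldots,\mat{K}^m) = \sum_{\phi:[m]\to[m]} \det\bigl([\mat{L}^1_{\cdot,\phi(1)} \mid \cdots \mid \mat{L}^m_{\cdot,\phi(m)}]\bigr)^2,
\end{align*}
valid for any factorizations $\mat{K}^i = \mat{L}^i(\mat{L}^i)^\top$ with $\mat{L}^i \in \bbQ^{m \times m}$; such rational factorizations exist and are polynomial-time computable by Cassels' theorem on representing a rational PSD form as a rational sum of squares of linear forms.

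Indexing $[m^2]$ by pairs $(i,a) \in [m]^2$ and letting $\mat{M} \triangleq [\mat{L}^1 \mid \cdots \mid \mat{L}^m] \in \bbQ^{m \times m^2}$, I would take $\mat{A} \triangleq \mat{M}^\top \mat{M}$ and define $\mat{B}$ by $B_{(i,a),(i',a')} \triangleq [\![i=i']\!]$, i.e., the Gram matrix of the standard basis vectors $\vec{e}_i$ attached to each pair. Both matrices are manifestly rational and PSD, with $\mat{A}$ of rank at most $m$ and $\mat{B}$ of rank exactly $m$.

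The crux is then to expand $\ZZ_2(\mat{A}, \mat{B})$. Since $\det(\mat{B}_{S,S})$ equals $1$ precisely when $S$ is the graph of a partial function $\phi : I \to [m]$ (each first coordinate used at most once) and $0$ otherwise, only such $S_\phi$ survive. For a full function ($I = [m]$), Cauchy--Binet applied to $\mat{A} = \mat{M}^\top \mat{M}$ gives $\det(\mat{A}_{S_\phi, S_\phi}) = \det([\mat{L}^i_{\cdot, \phi(i)}]_i)^2$, so the total $|S|=m$ contribution is exactly $D$. For partial functions with $|I| = k < m$, the same manipulation yields nonnegative sums of smaller mixed discriminants of principal submatrices of the $\mat{K}^i$'s. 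Collecting terms by $|S|$ gives $\ZZ_2(t\mat{A}, \mat{B}) = \sum_{k=0}^{m} t^k C_k$ with $C_m = D$ and $C_k \ge 0$ for every $k < m$.

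To extract $D$, I would scale $t$ so that the lower-order $C_k$'s are negligible. Hadamard's inequality bounds $\ZZ_2(\mat{A}, \mat{B}) \le 2^{\mathrm{poly}(|\calI|)}$, and clearing denominators of the $\mat{K}^i$'s supplies an explicit rational lower bound $D_{\min} = 2^{-\mathrm{poly}(|\calI|)}$ whenever $D > 0$. Taking $t$ of order $\ZZ_2(\mat{A}, \mat{B})/(D_{\min}\, \epsilon)$ keeps its bit-size polynomial in $|\calI|$ and $\log \epsilon^{-1}$, and forces $\ZZ_2(t\mat{A}, \mat{B})/t^m \in [D, \rme^{\epsilon/2} D]$; one oracle call at precision $\delta = \epsilon/4$ followed by division by $t^m$ then returns an $\rme^{\pm \epsilon}$-approximation to $D$.

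The hard part will be this error-control step: because the ratio $\ZZ_2(\mat{A},\mat{B})/D$ can be exponentially large, $t$ itself must be exponentially large (yet of polynomial bit-size), and the propagation of the multiplicative $\rme^{\pm \delta}$ slack on the oracle into an $\rme^{\pm \epsilon}$ guarantee on just the leading coefficient $C_m$ must be verified carefully. A smaller secondary subtlety is the boundary case $D=0$, which can be absorbed by thresholding the reconstructed value against $D_{\min}$.
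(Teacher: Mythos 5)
Your proposal is correct and follows the same route as the paper's proof: the matrix $\mat{B}$ you define is exactly the paper's partition-indicator matrix, your $\mat{A}=\mat{M}^\top\mat{M}$ plays the role of the matrix the paper imports from \citet[Proof of Lemma 12]{celis2017complexity} (the paper's version of \cref{eq:ZZ-D} carries an extra $m!$, but that is only a normalization), and in both arguments the reduction rescales one of the two matrices so that the size-$m$ ``rainbow'' contribution dominates the nonnegative lower-order terms before making a single oracle call. The differences are in the supporting machinery. The paper finds a witness $\tilde{S}\in\calC$ with $\det(\mat{A}_{\tilde S,\tilde S})>0$ by matroid intersection and uses $\det(\mat{A}_{\tilde S,\tilde S})/\det(\mat{A}+\mat{I}_n)$ to calibrate the scaling and to detect $D=0$; you instead use an a priori bit-complexity lower bound $D_{\min}$ on any nonzero term of the sum-of-squares expansion, plus thresholding. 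Both work, and your variant avoids the matroid-intersection subroutine at the cost of a slightly more delicate (but, as you carried out, correct) error-propagation computation. The one point you should tighten is the rational factorization $\mat{K}^i=\mat{L}^i(\mat{L}^i)^\top$ with $\mat{L}^i\in\bbQ^{m\times m}$: such a square rational factor need not exist (already $\mat{K}=[2]$ fails), so you must allow $\mat{L}^i\in\bbQ^{m\times r}$ with $r=\bigO(m)$ columns (e.g., via LDL followed by a four-squares decomposition of each diagonal entry). Cauchy--Binet and the rest of your argument go through verbatim with rectangular $\mat{L}^i$, but the ambient dimension becomes $mr=\bigO(m^2)$ rather than exactly $m^2$, a cosmetic deviation from the theorem statement.
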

\begin{proof}
We will construct an AP-reduction from the mixed discriminant $D$ to $\ZZ_2$
mimicking the reduction from $D$ to spanning-tree DPPs presented by the same set of authors as this article \citep{matsuoka2021spanning}.
Suppose we have an FPRAS for $\ZZ_2$.
Let $\mat{K}^1, \ldots, \mat{K}^m$ be $m$
positive semi-definite matrices in $\bbQ^{m \times m}$, and 
define $n = m^2$.
Let $\epsilon \in (0,1)$ be an error tolerance for $D$;
i.e., we are asked to estimate $D(\mat{K}^1, \ldots, \mat{K}^m)$ within a factor of $\rme^{\epsilon}$.
In accordance with \citet[Proof of Lemma 12]{celis2017complexity},
we first construct an $n \times n$ positive semi-definite matrix $\mat{A}$ and an equal-sized partition of $[n]$, 
denoted $P_1, P_2, \ldots, P_m$ with 
$|P_1| = |P_2| = \cdots = |P_m| = m$, in polynomial time such that
\begin{align}
\label{eq:ZZ-D}
    \sum_{S \in \calC} \det(\mat{A}_{S,S})
     = m! \; D(\mat{K}^1, \ldots, \mat{K}^m),
\end{align}
where we define $\calC \triangleq \{ S \in {[n] \choose m} \mid |S \cap P_i| = 1 \text{ for all } i \in [m] \}$.

Then, we construct an $n \times n$ positive semi-definite matrix
$\mat{B}$ as follows:
\begin{align*}
    B_{i,j} \triangleq
    \begin{cases}
    1 & \text{if } i \text{ and } j \text{ belong to the same group in the partition of } [n], \\
    0 & \text{otherwise.}
    \end{cases}
\end{align*}
We claim the following:
\begin{claim}
\label{clm:ZZ2-D}
For each subset $S \subseteq [n]$,
$\det(\mat{B}_{S,S})$ is $1$ if
no two elements in $S$ belong to the same group in the partition and $0$ otherwise.
In particular, we have that
\begin{itemize}
    \item for any $S \in {[n] \choose m}$,
$\det(\mat{B}_{S,S}) = 1$ if and only if $S \in \calC$;
    \item $\det(\mat{B}_{S,S})$ is $0$ whenever $|S| > m$.
\end{itemize}
\end{claim}
\begin{proof}[Proof of \cref{clm:ZZ2-D}]
Fix a subset $S \subseteq [n]$.
If $S$ contains two elements $i,j$ that belong to the same group in the partition, we have that
$\mat{B}_{S,\{i\}} = \mat{B}_{S,\{j\}}$;
hence $\det(\mat{B}_{S,S}) = 0$.
On the other hand,
if $S$ does not contain two such elements,
$\mat{B}_{S,S}$ is exactly an $S \times S$ identity matrix; hence, $\det(\mat{B}_{S,S}) = 1$.
\end{proof}

The following equality is a direct consequence of \cref{clm:ZZ2-D}.
\begin{align}
\label{eq:ZZ2-D}
    \sum_{S \in {[n] \choose m}} \det(\mat{A}_{S,S}) \det(\mat{B}_{S,S}) = \sum_{S \in \calC} \det(\mat{A}_{S,S}).
\end{align}
First, we verify whether a subset $S \in \calC$ exists such that $\det(\mat{A}_{S,S}) > 0$ because otherwise, we can safely declare that \cref{eq:ZZ2-D} is $0$; i.e.,
$D(\mat{K}^1, \ldots, \mat{K}^m)$ is $0$ as well.
Such a subset can be found (if it exists) by performing matroid intersection because 
$\calI_1 = \{S \subseteq [n] \mid \det(\mat{A}_{S,S}) > 0\}$  forms a linear matroid and
$\calI_2 = \{S \subseteq [n] \mid \exists T \in \calC, S \subseteq T \}$ forms a partition matroid.
Denote the subset found by $\tilde{S} \in \calI_1 \cap \calI_2$.
Next, we define a positive number $x \in \bbQ$ as
\begin{align*}
    x \triangleq \frac{\det(\mat{A} + \mat{I}_n)}{\det(\mat{A}_{\tilde{S},\tilde{S}})}
    \frac{2}{\epsilon}.
\end{align*}
Note that the size of $x$ is bounded by a polynomial in the size of $\mat{A}$ and $\epsilon^{-1}$.
It is easy to see that for each $S \subseteq [n]$,
\begin{align}
\label{eq:ZZ2-ap-x}
    \det((x\mat{B})_{S,S}) = x^{|S|}\det(\mat{B}_{S,S}).
\end{align}
Since \cref{clm:ZZ2-D} ensures that
$\det(\mat{A}_{S,S}) \det(\mat{B}_{S,S}) = 0$ whenever $|S| > m$,
we can bound $\ZZ_2(\mat{A}, x\mat{B})$ from above as follows:
\begin{align*}
    \ZZ_2(\mat{A}, x\mat{B})
    & = \sum_{S \subseteq [n]} \det(\mat{A}_{S,S}) \det((x\mat{B})_{S,S}) \\
    & = \sum_{S:|S| \leq m-1} x^{|S|} \det(\mat{A}_{S,S})\det(\mat{B}_{S,S}) + \sum_{S:|S|=m} x^{|S|} \det(\mat{A}_{S,S})\det(\mat{B}_{S,S}) \\
    & \leq \sum_{S:|S|\leq m-1}x^{m-1}\det(\mat{A}_{S,S}) + \sum_{S \in \calC} x^{m} \det(\mat{A}_{S,S}) \\
    & \leq x^{m} \left(\sum_{S \in \calC} \det(\mat{A}_{S,S})\right) \left( 1 + \frac{\sum\limits_{S:|S|\leq m-1}\det(\mat{A}_{S,S})}{\sum\limits_{S \in \calC} \det(\mat{A}_{S,S})}\frac{1}{x} \right) \\
    & \leq x^m \left(\sum_{S \in \calC} \det(\mat{A}_{S,S})\right) \left( 1 + \frac{\sum\limits_{S:|S|\leq m-1}\det(\mat{A}_{S,S})}{\sum\limits_{S \in \calC} \det(\mat{A}_{S,S})} \frac{\det(\mat{A}_{\tilde{S},\tilde{S}})}{\det(\mat{A}+\mat{I}_n)} \frac{\epsilon}{2} \right).
\end{align*}
Observing the fact that 
\begin{align*}
    \sum_{S: |S|\leq m-1} \det(\mat{A}_{S,S}) \leq \det(\mat{A}+\mat{I}_n) \text{ and }
    \det(\mat{A}_{\tilde{S},\tilde{S}}) \leq \sum_{S \in \calC} \det(\mat{A}_{S,S}),
\end{align*}
we further have that
\begin{align*}
    \ZZ_2(\mat{A}, x\mat{B})
    & \leq \Bigl(1+\frac{\epsilon}{2} \Bigr) x^m \sum_{S \in \calC} \det(\mat{A}_{S,S}) \\
    & \leq \rme^{\frac{\epsilon}{2}} x^m m!\; D(\mat{K}^1, \ldots, \mat{K}^m). \qquad\qquad\qquad(\text{by \cref{eq:ZZ-D}})
\end{align*}
Since $ \ZZ_2(\mat{A}, x\mat{B}) \geq x^m m! \; D(\mat{K}^1, \ldots, \mat{K}^m) $, we have that
\begin{align}
\label{eq:ZZ2-ap-bound}
    D(\mat{K}^1, \ldots, \mat{K}^m) \leq \frac{\ZZ_2(\mat{A},x \mat{B})}{x^m m!} \leq \rme^{\frac{\epsilon}{2}} \cdot D(\mat{K}^1, \ldots, \mat{K}^m).
\end{align}

We are now ready to describe the AP-reduction from the mixed discriminant $D$ to $\ZZ_2$.
\begin{oframed}
\begin{center}
\textbf{AP-reduction from $D$ to $\ZZ_2$.}
\end{center}
\begin{itemize}
    \item \textbf{Step 1.}~construct two matrices $\mat{A}, \mat{B} \in \bbQ^{n \times n}$ satisfying \cref{eq:ZZ-D,eq:ZZ2-D} by following the procedure described at the beginning of the proof.
    \item \textbf{Step 2.}~determine if there exists a subset $S \subseteq [n]$
    such that $S \in \calC$ and $\det(\mat{A}_{S,S}) > 0$ by matroid intersection in polynomial time \citep{edmonds1970submodular}.
    If no such subset has been found, declare that
    ``$D(\mat{K}^1, \ldots, \mat{K}^m) = 0$''; otherwise, denote the subset found by $\tilde{S}$.
    \item \textbf{Step 3.}~calculate the value of $x$ according to \cref{eq:ZZ2-ap-x},
    which can be done in polynomial time in the input size and $\epsilon^{-1}$ because the size of $\mat{A}$ is bounded by a polynomial in the size of $\mat{K}^1, \ldots, \mat{K}^m$ and the determinant can be computed in polynomial time by Gaussian elimination \citep{edmonds1967systems,schrijver1999theory}.
    \item \textbf{Step 4.}~call an oracle for $\ZZ_2$ on
    $\mat{A}$ and $x\mat{B}$ with error tolerance $\delta = \epsilon/2$ to obtain an $\rme^{\epsilon/2}$-approximation to $\ZZ_2(\mat{A}, x\mat{B})$, which will be denoted by $\widehat{\ZZ}$.
    \item \textbf{Step 5.}~output $\displaystyle \frac{\widehat{\ZZ}}{x^m m!}$ as an estimate for $D(\mat{K}^1, \ldots, \mat{K}^m)$.
\end{itemize}
\end{oframed}

By \cref{eq:ZZ2-ap-bound},
if the oracle meets the specifications for an FPRAS for $\ZZ_2$,
then the output $\widehat{\ZZ}$ of the AP-reduction described above satisfies that
\begin{align*}
    \rme^{-\epsilon} \cdot D(\mat{K}^1, \ldots, \mat{K}^m) \leq
    \frac{\widehat{\ZZ}}{x^{m} m!} \leq \rme^{\epsilon} \cdot D(\mat{K}^1, \ldots, \mat{K}^m).
\end{align*}
with probability at least $\frac{3}{4}$.
Therefore, the AP-reduction meets the specification for an FPRAS for $D$, which completes the proof.
\end{proof}

\section{Fixed-Parameter Tractability}
\label{sec:fpt}

Here, we investigate the fixed-parameter tractability of computing $\ZZ_m$.
Given a parameter $k$,
a problem is said to be
\emph{fixed-parameter tractable} (FPT) and \emph{slice-wise polynomial} (XP)
if it is solvable in $f(k)|\calI|^{\bigO(1)}$ and $|\calI|^{f(k)}$ time 
for some computable function $f$, respectively.
It should be noted that the value of $k$
may be independent of the input size $|\calI|$ and
may be given by some computable function $k = k(\calI)$ on input $\calI$ (e.g., the rank of an input matrix).
Our goal is either
(1) to develop an FPT algorithm for an appropriate parameter, or
(2) to disprove the existence of such algorithms
under plausible assumptions.

\subsection{Parameterization by Maximum Rank}
\label{subsec:fpt:rank}

First, let us consider the \emph{maximum rank} of matrices as a parameter.
The theorem below demonstrates that computing $ \ZZ_2(\mat{A},\mat{B}) $
for two positive semi-definite matrices $\mat{A}$ and $\mat{B}$
parameterized by the maximum rank is FPT.

\begin{theorem}
\label{thm:fpt-rank-2}
Let $\mat{A}, \mat{B}$ be two positive semi-definite matrices in $\bbQ^{n \times n}$ of rank at most $r$.
Then, there exists an $ r^{\bigO(r)} n^{\bigO(1)} $-time algorithm computing $\ZZ_2(\mat{A}, \mat{B})$ exactly.
\end{theorem}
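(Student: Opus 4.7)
The plan is to use two applications of the Cauchy--Binet formula to reduce the $2^n$-term sum over subsets of $[n]$ to an enumeration of size at most $4^r$. First, since $\mat{A}, \mat{B}$ are positive semi-definite of rank at most $r$, I would compute in polynomial time factorizations $\mat{A} = \mat{U}\mat{U}^\top$ and $\mat{B} = \mat{V}\mat{V}^\top$ with $\mat{U}, \mat{V} \in \bbQ^{n \times r}$ (e.g., via rank-revealing Cholesky). Cauchy--Binet then gives
\[
\det(\mat{A}_{S,S}) = \sum_{T_1 \in \binom{[r]}{|S|}} \det(\mat{U}_{S, T_1})^2, \qquad \det(\mat{B}_{S,S}) = \sum_{T_2 \in \binom{[r]}{|S|}} \det(\mat{V}_{S, T_2})^2,
\]
where $\mat{U}_{S, T_1}$ denotes the submatrix on rows $S$ and columns $T_1$; both expressions vanish whenever $|S| > r$.

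Substituting into the definition of $\ZZ_2$ and swapping orders of summation,
\[
\ZZ_2(\mat{A}, \mat{B}) = \sum_{k=0}^{r} \sum_{T_1, T_2 \in \binom{[r]}{k}} f(T_1, T_2), \qquad f(T_1, T_2) \triangleq \sum_{|S|=k} \det(\mat{U}_{S, T_1})^2 \det(\mat{V}_{S, T_2})^2.
\]
The outer enumeration has at most $\sum_k \binom{r}{k}^2 = \binom{2r}{r} \leq 4^r$ pairs $(T_1, T_2)$, so it suffices to evaluate each $f(T_1, T_2)$ in $k^{\bigO(k)} n^{\bigO(1)}$ time.

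For the inner computation, I would write $\mat{P} = \mat{U}_{:, T_1}$ and $\mat{Q} = \mat{V}_{:, T_2}$ (both $n \times k$) and expand each squared determinant via the Leibniz formula, so that $\det(\mat{P}_S)^2 \det(\mat{Q}_S)^2$ becomes a signed sum over quadruples $(\sigma_1, \sigma_2, \tau_1, \tau_2) \in \Sym_k^4$, with each summand factoring as $\prod_{l=1}^{k} g_l(s_l)$ for scalar functions $g_l : [n] \to \bbQ$ built from entries of $\mat{P}$ and $\mat{Q}$. Converting the outer sum over unordered $k$-subsets of $[n]$ to one over ordered distinct $k$-tuples (up to a factor of $k!$), M\"{o}bius inversion on the partition lattice of $[k]$ then replaces the distinct-tuple constraint with a sum over the $B_k = k^{\bigO(k)}$ set partitions of $[k]$, each contributing a product of ``collapsed'' column sums $\sum_{s \in [n]} \prod_{l \in B} g_l(s)$ that each cost $\bigO(nk)$. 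This yields $f(T_1, T_2)$ in $(k!)^4 \cdot B_k \cdot \bigO(nk) = k^{\bigO(k)} n^{\bigO(1)}$ time, with bit sizes of intermediate rationals remaining polynomial because every value is a polynomially-weighted sub-determinant of $\mat{U}$ or $\mat{V}$. Aggregating gives the claimed runtime $r^{\bigO(r)} n^{\bigO(1)}$.

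The main obstacle is evaluating $f(T_1, T_2)$ efficiently: naive enumeration of $\binom{n}{k}$ subsets would be prohibitive, and it is the Leibniz-plus-partition-lattice reduction that lets the $n$-dependence stay polynomial while the exponential blowup is absorbed into the parameter $r$.
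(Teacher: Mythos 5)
Your proposal is correct and follows the same overall architecture as the paper's proof: factor each matrix into $n\times r$ rectangular pieces, apply Cauchy--Binet to push the subset sum inside, expand via Leibniz into a $(k!)^{\bigO(1)}\binom{2r}{r}$-sized enumeration of column-sets and permutation quadruples, and then evaluate an inner sum of the form $\sum_{S\in\binom{[n]}{k}}\prod_{l}g_l(s_l)$ in polynomial time in $n$. The one place where you genuinely diverge is that inner evaluation: the paper uses a direct dynamic program indexed by (prefix length, maximum element of the partial subset), filling a $k\times n$ table with the recurrence $dp[\ell,o]=\sum_{o'<o}dp[\ell-1,o']\,g_\ell(o)$, whereas you pass to ordered distinct tuples and kill the distinctness constraint by M\"{o}bius inversion over the partition lattice, paying a Bell-number factor $B_k=k^{\bigO(k)}$. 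Both are valid and both land in $k^{\bigO(k)}n^{\bigO(1)}$; the paper's DP is arguably cheaper ($\bigO(kn^2)$ per quadruple with no extra $B_k$ factor) and avoids the symmetrization step, while your route is more generic and would extend to summands with no natural ordering structure. Note that your unordered-to-ordered conversion with the factor $k!$ is justified only because $\det(\mat{P}_S)^2\det(\mat{Q}_S)^2$ is invariant under row permutations of $S$; the conversion must be applied to the full squared-determinant product (or, equivalently, summed over all quadruples) rather than to a single fixed-quadruple summand $\prod_l g_l(s_l)$, which is not permutation-invariant — your computation is fine, but the justification deserves a sentence. One small technical repair: a symmetric factorization $\mat{A}=\mat{U}\mat{U}^\top$ with $\mat{U}\in\bbQ^{n\times r}$ need not exist over the rationals (already $\mat{A}=[2]$ fails), which is why the paper uses an LDL decomposition yielding an asymmetric rational factorization $\mat{A}=\mat{U}\mat{V}^\top$; your argument goes through verbatim after replacing $\det(\mat{U}_{S,T_1})^2$ by $\det(\mat{U}_{S,T_1})\det(\mat{V}_{S,T_1})$.
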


Before proceeding to the proof, we introduce the following technical lemma.

\begin{lemma}
\label{lem:dp}
Let $ \mat{A}^1, \ldots, \mat{A}^m $ be $m$ matrices in $\bbQ^{n \times s}$, and
$ \sigma_1, \ldots, \sigma_m \in \Sym_s$
be $m$ permutations over $[s]$.
Then,
\begin{align*}
    \sum_{S \subseteq {[n] \choose s}}
    \mat{A}^1_S(\sigma_1) \cdots \mat{A}^m_S(\sigma_m) =
    \sum_{S \subseteq {[n] \choose s}}
    \prod_{i \in [s]} (A^1_S)_{i, \sigma_1(i)} \cdots (A^m_S)_{i, \sigma_m(i)}
\end{align*}
can be computed in $\bigO(msn^2)$ time.
\end{lemma}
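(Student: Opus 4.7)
The plan is to reduce the $\binom{n}{s}$-term sum to a dynamic program over an auxiliary $n \times s$ matrix whose rows are indexed by $[n]$ and whose columns are indexed by position in the would-be subset $S$.

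First, I would unpack the notation. Writing $S = \{s_1 < s_2 < \cdots < s_s\}$, the $(j,k)$-entry of the $s \times s$ submatrix $\mat{A}^i_S$ is $A^i_{s_j, k}$, so
\[
\prod_{i \in [m]} \mat{A}^i_S(\sigma_i) \;=\; \prod_{i \in [m]} \prod_{j \in [s]} A^i_{s_j, \sigma_i(j)} \;=\; \prod_{j \in [s]} \prod_{i \in [m]} A^i_{s_j, \sigma_i(j)}.
\]
Then I would introduce a matrix $\mat{C} \in \bbQ^{n \times s}$ by $C_{\ell, j} \triangleq \prod_{i \in [m]} A^i_{\ell, \sigma_i(j)}$, which takes $\bigO(msn)$ time to fill in (each of the $ns$ entries is a product of $m$ known numbers). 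After this reindexing, the target sum becomes
\[
\sum_{S \in \binom{[n]}{s}} \prod_{i \in [m]} \mat{A}^i_S(\sigma_i) \;=\; \sum_{1 \leq \ell_1 < \ell_2 < \cdots < \ell_s \leq n} \prod_{j \in [s]} C_{\ell_j, j},
\]
which is a weighted sum over increasing chains of length $s$ in $[n]$, where column $j$ of $\mat{C}$ supplies the weight at position $j$.

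Next, I would evaluate this by a standard left-to-right dynamic program. For $\ell \in [n]$ and $j \in [s]$, let
\[
f(\ell, j) \;\triangleq\; \sum_{\ell_1 < \cdots < \ell_{j-1} < \ell_j = \ell} \prod_{k \in [j]} C_{\ell_k, k},
\]
with base case $f(\ell, 1) = C_{\ell, 1}$ and recurrence $f(\ell, j) = C_{\ell, j} \cdot \sum_{\ell' < \ell} f(\ell', j-1)$ for $j \geq 2$. The final output is $\sum_{\ell \in [n]} f(\ell, s)$. Each of the $ns$ table entries can be computed in $\bigO(n)$ time by summing over earlier indices, contributing $\bigO(n^2 s)$ to the runtime; together with the $\bigO(msn)$ cost of building $\mat{C}$, the total is $\bigO(msn + n^2 s) \subseteq \bigO(msn^2)$, as claimed.

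There is no real obstacle here: the only point demanding care is the bookkeeping convention that the rows of $\mat{A}^i_S$ are indexed by $[s]$ via the natural order of the elements of $S$, which is what makes the row-wise factorization through $\mat{C}$ legitimate and keeps the product indexed consistently across all $m$ matrices. (Maintaining a running prefix sum $\sum_{\ell' \leq \ell} f(\ell', j-1)$ would actually sharpen the bound to $\bigO(msn + ns)$, but the stated $\bigO(msn^2)$ suffices for the downstream use in \cref{thm:fpt-rank-2}.)
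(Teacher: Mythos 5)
Your proposal is correct and follows essentially the same route as the paper: the paper's table entry $dp[\ell,o]$ is exactly your $f(o,\ell)$ (a sum over size-$\ell$ increasing chains ending at $o$), with the same recurrence and the same final aggregation $\sum_o dp[s,o]$. Your precomputation of $\mat{C}$ is a harmless micro-optimization of the paper's inline product $A^1_{o,\sigma_1(\ell)}\cdots A^m_{o,\sigma_m(\ell)}$, and both analyses land within the stated $\bigO(msn^2)$ bound.
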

\begin{proof}
The proof is based on dynamic programming.
First, we define a table $dp$ of size $ s \times n $,
whose entries for each pair of
$\ell \in [s]$ and $o \in [n]$ are defined as
\begin{align*}
    dp[\ell, o] \triangleq \sum_{\substack{S \subseteq {[n] \choose \ell} \\ \max_{o' \in S}o' = o}}
    \prod_{i \in [\ell]}
    (A^1_S)_{i, \sigma_1(i)} \cdots  (A^m_S)_{i, \sigma_m(i)}.
\end{align*}
The desired value is equal to $ \sum_{o \in [n]} dp[s, o] $.
Observe that for $\ell \in [2 \isep s]$ and $ o \in [n]$,
\begin{align*}
    dp[\ell, o] & = \sum_{1 \leq o' < o} dp[\ell-1, o'] A^1_{o, \sigma_1(\ell)} \cdots A^m_{o, \sigma_m(\ell)}, \\
    dp[1, o] & = A^1_{o, \sigma_1(1)} \cdots A^m_{o, \sigma_m(1)}.
\end{align*}
Note that the number of bits required to express each entry is bounded by
$ \log(2^n) (\isize(\mat{A}^1) + \cdots + \isize(\mat{A}^m)) $.
Since calculating $dp[\ell, o]$ given $dp[\ell-1, o']$ for all $o' \in [n]$ requires $\bigO(nm)$ arithmetic operations,
standard dynamic programming fills all entries of $dp$ within $ \bigO(msn^2) $ arithmetic operations.
\end{proof}

Next, we introduce the Cauchy--Binet formula.
\begin{lemma}[Cauchy--Binet formula]
Let $\mat{A}$ be an $s \times r$ matrix and $\mat{B}$ be an $r \times s$ matrix.
Then, the determinant of $\mat{A}\mat{B}$ is
\begin{align*}
\det(\mat{A}\mat{B}) = \sum_{C \in {[r] \choose s}}
    \det(\mat{A}_{[s], C}) \det(\mat{B}_{C, [s]}).
\end{align*}
\end{lemma}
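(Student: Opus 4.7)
The plan is to prove the Cauchy--Binet formula by expanding $\det(\mat{A}\mat{B})$ using the Leibniz formula for determinants, then rearranging the resulting sum over functions $f:[s] \to [r]$ so that the only surviving contributions correspond to injective $f$, which naturally group by their image $C \in \binom{[r]}{s}$. If $s > r$ no injective function exists and $\binom{[r]}{s} = \emptyset$, so both sides are $0$; hence I may assume $s \leq r$.

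First I would write $(\mat{A}\mat{B})_{i,j} = \sum_{k \in [r]} A_{i,k} B_{k,j}$ and apply Leibniz to obtain
\begin{align*}
\det(\mat{A}\mat{B}) = \sum_{\sigma \in \Sym_s} \sgn(\sigma) \prod_{i \in [s]} \sum_{k \in [r]} A_{i,k} B_{k,\sigma(i)}.
\end{align*}
Expanding the product of sums over all functions $f:[s] \to [r]$ and pulling the $\sigma$-independent factor out gives
\begin{align*}
\det(\mat{A}\mat{B}) = \sum_{f:[s] \to [r]} \left(\prod_{i \in [s]} A_{i,f(i)}\right) \sum_{\sigma \in \Sym_s} \sgn(\sigma) \prod_{i \in [s]} B_{f(i),\sigma(i)}.
\end{align*}
The inner sum is $\det(\mat{M}^f)$, where $\mat{M}^f$ is the $s \times s$ matrix defined by $\mat{M}^f_{i,j} \triangleq B_{f(i),j}$. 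If $f$ is not injective, two rows of $\mat{M}^f$ coincide and $\det(\mat{M}^f) = 0$, so only injective $f$ survive.

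Next I would partition the injective functions by their image $C \in \binom{[r]}{s}$. Writing $C = \{c_1 < c_2 < \cdots < c_s\}$, each injective $f$ with $f([s]) = C$ corresponds uniquely to a permutation $\pi \in \Sym_s$ via $f(i) = c_{\pi(i)}$. Then row $i$ of $\mat{M}^f$ equals row $\pi(i)$ of $\mat{B}_{C,[s]}$ (where the rows of $\mat{B}_{C,[s]}$ are indexed in the natural order of $C$), so $\det(\mat{M}^f) = \sgn(\pi) \det(\mat{B}_{C,[s]})$. Substituting and factoring out $\det(\mat{B}_{C,[s]})$ yields
\begin{align*}
\sum_{\substack{f:[s]\to[r] \\ f([s])=C}} \left(\prod_{i \in [s]} A_{i,f(i)}\right) \det(\mat{M}^f) = \det(\mat{B}_{C,[s]}) \sum_{\pi \in \Sym_s} \sgn(\pi) \prod_{i \in [s]} A_{i,c_{\pi(i)}},
\end{align*}
and the inner sum is exactly the Leibniz expansion of $\det(\mat{A}_{[s],C})$. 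Summing over all $C \in \binom{[r]}{s}$ gives the desired formula.

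The only delicate point is the sign bookkeeping in the step that relates $\det(\mat{M}^f)$ to $\det(\mat{B}_{C,[s]})$: one has to be careful to fix the convention that $\mat{B}_{C,[s]}$ has rows indexed in the natural increasing order of $C$, and then verify that the row-relabeling from $\mat{M}^f$ back to $\mat{B}_{C,[s]}$ contributes precisely the factor $\sgn(\pi)$. Once this convention is pinned down, the argument is routine; it is also the only place where a mis-stated convention would turn the sum into a permanent-like expression rather than a determinantal one.
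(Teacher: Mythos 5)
Your proof is correct. Note that the paper itself states the Cauchy--Binet formula as a classical fact and gives no proof of it, so there is nothing to compare against; your argument is the standard one via the Leibniz expansion of $\det(\mat{A}\mat{B})$, expansion of the product over functions $f:[s]\to[r]$, elimination of non-injective $f$ by repeated rows, and regrouping of injective $f$ by their image $C$ with the factor $\sgn(\pi)$ accounting for the row relabeling. The sign bookkeeping you flag is handled correctly (row $i$ of $\mat{M}^f$ being row $\pi(i)$ of $\mat{B}_{C,[s]}$ contributes $\sgn(\pi^{-1})=\sgn(\pi)$), the degenerate case $s>r$ is addressed, and the identification of the inner sum with $\det(\mat{A}_{[s],C})$ under the increasing-order column convention is exactly right. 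The proof is complete and self-contained.
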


\begin{proof}[Proof of \cref{thm:fpt-rank-2}]
We decompose $\mat{A}$ into two $n \times r$ rectangular matrices.
For this purpose, we first compute an LDL decomposition\footnote{We do not use the Cholesky decomposition because we must avoid the square root computation, which violates the assumption that every number appearing in this paper is rational.}
$\mat{A} = \mat{L}\mat{D}\mat{L}^\top $, where
$\mat{L} \in \bbQ^{n \times n} $ and
$\mat{D} \in \bbQ^{n \times n}$
is a diagonal matrix such that
$ D_{i,i} = 0 $ for all $i \in [r+1 \isep n]$ (since the rank is at most $r$).
This is always possible in polynomial time \citep{odonnell2011semidefinite}
because $\mat{A}$ is positive semi-definite.
We further decompose $\mat{D}$ into the product of
an $n \times r$ matrix $\mat{C}$ such that
$C_{i,i} = D_{i,i}$ for all $i \in [r]$ and all the other elements are $0$, and
an $r \times n$ matrix $\mat{I}$ such that
$I_{i,i} = 1$ for all $i \in [r]$ and all the other elements are $0$.
Setting $\mat{U} = \mat{L}\mat{C} \in \bbQ^{n \times r}$ and
$\mat{V} = \mat{L}\mat{I}^\top \in \bbQ^{n \times r}$,
we have that $ \mat{A} = \mat{U} \mat{V}^\top $.
Similarly, we decompose $\mat{B} = \mat{X}\mat{Y}^\top$,
where $ \mat{X}$ and $\mat{Y}$ are some $n \times r$ rectangular matrices in $\bbQ^{n \times r}$.

Because $ \det(\mat{A}_{S,S}) \det(\mat{B}_{S,S}) = 0$ for all $S \subseteq [n]$ of size greater than $r$,
we can expand $\ZZ_2(\mat{A},\mat{B})$ by using the Cauchy--Binet formula as follows.
\begin{align*}
\ZZ_2(\mat{A},\mat{B}) & = \sum_{0 \leq s \leq r} \sum_{S \in {[n] \choose s}} 
    \det(\mat{U}_{S} \mat{V}_{S}^\top) \det(\mat{X}_{S} \mat{Y}_{S}^\top) \\
    & = \sum_{\substack{
        0 \leq s \leq r \\
        S \in {[n] \choose s}
    }}
    \sum_{C_1 \in {[r] \choose s}}
    \det(\mat{U}_{S,C_1} \mat{V}_{S,C_1}^\top) \sum_{C_2 \in {[r] \choose s}} \det(\mat{X}_{S,C_2} \mat{Y}_{S,C_2}^\top).
\end{align*}

Noting that $|S|=|C_1|=|C_2|$,
we further expand $\ZZ_2(\mat{A},\mat{B})$ as 
\begin{align*}
& \sum_{\substack{0 \leq s \leq r \\ C_1, C_2 \in {[r] \choose s}}} \sum_{\sigma_1, \tau_1, \sigma_2, \tau_2 \in \Sym_s}
    \sgn(\sigma_1) \sgn(\tau_1) \sgn(\sigma_2) \sgn(\tau_2) \times \\
& \quad\quad\quad\quad \underbrace{\sum_{S \in {[n] \choose s}}
    \mat{U}_{S,C_1}(\sigma_1) \mat{V}_{S,C_1}(\tau_1) \mat{X}_{S,C_2}(\sigma_2) \mat{Y}_{S,C_2}(\tau_2)}_{\bigstar}.
\end{align*}
Since $\bigstar$ can be evaluated in $\bigO(sn^2)$ time by \cref{lem:dp},
we can take the sum of $\bigstar$ over all possible combinations of
$s, C_1, C_2, \sigma_1, \tau_1, \sigma_2, \tau_2$ in
$\sum_{0 \leq s \leq r} {r \choose s}^2 (s!)^4 \bigO(s n^2) = \bigO(r^{4r} r^2 n^2)$ time.
Consequently, the overall computation time is bounded by
$ r^{\bigO(r)} n^2 $.
\end{proof}

\cref{thm:fpt-rank-2} can be generalized to the case of
$m$ matrices $\mat{A}^1, \ldots, \mat{A}^m$; that is,
the computation of $ \ZZ_m $
parameterized by the maximum rank $ \max_{i \in [m]} \rank(\mat{A}^i) $ \emph{plus} the number of matrices $m$ is FPT.

\begin{theorem}
\label{thm:fpt-rank-m}
For a positive integer $m$,
let $\mat{A}^1, \ldots, \mat{A}^m$ be $m$ positive semi-definite matrices in $\bbQ^{n \times n}$ of rank at most $r$.
Then, there exists an $r^{\bigO(mr)} n^{\bigO(1)}$-time algorithm computing $\ZZ_m(\mat{A}^1, \ldots, \mat{A}^m)$ exactly.
\end{theorem}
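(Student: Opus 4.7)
The plan is to mirror the proof of \cref{thm:fpt-rank-2} with $m$ matrices in place of two. Since each $\mat{A}^i$ is positive semi-definite of rank at most $r$, I first compute an LDL decomposition of each $\mat{A}^i$ in polynomial time and split its diagonal factor into an $n \times r$ and an $r \times n$ piece (to keep every entry rational) to obtain a factorization $\mat{A}^i = \mat{U}^i (\mat{V}^i)^\top$ with $\mat{U}^i, \mat{V}^i \in \bbQ^{n \times r}$. Because $\prod_{i \in [m]} \det(\mat{A}^i_{S,S}) = 0$ whenever $|S| > r$, only subsets of size at most $r$ contribute, and applying the Cauchy--Binet formula to each factor introduces, for each $i \in [m]$, a column set $C^i \in {[r] \choose s}$ where $s = |S|$.

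Expanding each of the resulting $2m$ $s \times s$ determinants as a signed sum over permutations in $\Sym_s$ and swapping the order of summation yields
\begin{align*}
\ZZ_m(\mat{A}^1, \ldots, \mat{A}^m) = \sum_{s=0}^{r} \sum_{\substack{C^1, \ldots, C^m \\ \in {[r] \choose s}}} \sum_{\substack{\sigma^1, \tau^1, \ldots, \\ \sigma^m, \tau^m \in \Sym_s}} \left(\prod_{i \in [m]} \sgn(\sigma^i) \sgn(\tau^i)\right) \cdot \bigstar,
\end{align*}
where $\bigstar \triangleq \sum_{S \in {[n] \choose s}} \prod_{i \in [m]} \mat{U}^i_{S, C^i}(\sigma^i) \mat{V}^i_{S, C^i}(\tau^i)$. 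The inner quantity $\bigstar$ is exactly of the shape handled by \cref{lem:dp}, applied to the $2m$ matrices $\mat{U}^i_{[n], C^i}, \mat{V}^i_{[n], C^i} \in \bbQ^{n \times s}$, so it can be evaluated in $\bigO(msn^2)$ time.

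To bound the outer cost, note that there are at most $r+1$ values of $s$; for each, at most ${r \choose s}^m \leq 2^{mr}$ tuples $(C^1, \ldots, C^m)$; and $(s!)^{2m} \leq r^{\bigO(mr)}$ choices of permutations. The total number of invocations of \cref{lem:dp} is therefore $r^{\bigO(mr)}$, giving an overall runtime of $r^{\bigO(mr)} n^{\bigO(1)}$. The main obstacle here is bookkeeping rather than a conceptual leap: one must verify that \cref{lem:dp} indeed scales linearly in the number of input matrices (so that passing $2m$ matrices still gives $\bigO(msn^2)$ time), and check that all intermediate numbers produced by the LDL decompositions and the dynamic programming remain of polynomial size in $|\calI|$ so that the unit-cost RAM runtime accounting is honest.
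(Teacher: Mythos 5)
Your proposal is correct and follows essentially the same route as the paper's proof: LDL-based rank-$r$ factorizations of each $\mat{A}^i$, Cauchy--Binet, expansion of the $2m$ determinants over permutations, and evaluation of the inner sum via \cref{lem:dp}, with the identical $\sum_s {r \choose s}^m (s!)^{2m} \bigO(msn^2) = r^{\bigO(mr)} n^{\bigO(1)}$ accounting. The bookkeeping points you flag (that \cref{lem:dp} is stated for an arbitrary number of input matrices with linear dependence on that number, and that intermediate numbers stay polynomial in size) are indeed handled by the lemma as stated, so nothing further is needed.
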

\begin{proof}
Similar to the proof of \cref{thm:fpt-rank-2},
we first decompose each of $m$ matrices into the product of two $n \times r$ rectangular matrices, i.e.,
$ \mat{A}^i = \mat{X}^i (\mat{Y}^i)^\top $ for some $\mat{X}^i, \mat{Y}^i \in \bbQ^{n \times r}$ for all $ i \in [m]$,
by LDL decomposition.
We then expand $\ZZ_m(\mat{A}^1, \ldots, \mat{A}^m)$ as
\begin{align*}
& \sum_{S \subseteq [n]} \det(\mat{A}^1_{S,S}) \cdots \det(\mat{A}^m_{S,S}) \\
& = \sum_{0 \leq s \leq r} \sum_{S \in {[n] \choose s}}
\sum_{C_1 \in {[r] \choose s}} \det(\mat{X}^1_{S,C_1}) \det((\mat{Y}^1_{S,C_1})^\top)
\cdots
\sum_{C_m \in {[r] \choose s}} \det(\mat{X}^m_{S,C_m}) \det((\mat{Y}^m_{S,C_m})^\top)
\\
& = \sum_{0 \leq s \leq r}
\sum_{\substack{C_1 \in {[r] \choose s} \\ \sigma_1 , \tau_1 \in \Sym_s}} \cdots 
\sum_{\substack{C_m \in {[r] \choose s} \\ \sigma_m, \tau_m \in \Sym_s}}
\sgn(\sigma_1) \sgn(\tau_1) \cdots \sgn(\sigma_m) \sgn(\tau_m) \times \\
& \quad\quad\quad\quad \underbrace{
\sum_{S \in {[n] \choose s}} \prod_{i \in [s]}
(X^1_{S,C_1})_{i,\sigma_1(i)} (Y^1_{S,C_1})_{i,\tau_1(i)} \cdots (X^m_{S,C_m})_{i,\sigma_m(i)} (Y^m_{S,C_m})_{i,\tau_m(i)}}_{\clubsuit}.
\end{align*}
By applying \cref{lem:dp},
we can calculate $\clubsuit$ in $\bigO(msn^2)$ time; thus,
the entire time complexity is bounded by
\begin{align*}
\sum_{0 \leq s \leq r} {r \choose s}^m (s!)^{2m} \bigO(msn^2)
=
\bigO(r^{2mr} r^2 n^2) = r^{\bigO(mr)} n^2,
\end{align*}
which completes the proof.
\end{proof}

\subsection{Parameterization by Treewidth of Union}
\label{subsec:fpt:treewidth}

Now we consider the \emph{treewidth} of the graph
formed by the \emph{union} of nonzero entries as a parameter.
The following theorem demonstrates that
computing $\ZZ_2(\mat{A},\mat{B})$ parameterized by $\tw(\nnz(\mat{A}) \cup \nnz(\mat{B}))$ is FPT.

\begin{theorem}
\label{thm:fpt-treewidth-2}
Let $\mat{A}, \mat{B}$ be two matrices in $\bbQ^{n \times n}$.
Then, there exists a $w^{\bigO(w)} n^{\bigO(1)}$-time algorithm
that,
given a tree decomposition of
the graph $ ([n], \nnz(\mat{A}) \cup \nnz(\mat{B})) $
of width at most $w$,
computes $\ZZ_2(\mat{A}, \mat{B})$ exactly.
\end{theorem}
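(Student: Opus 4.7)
The plan is to apply the Leibniz formula to both determinants, writing
\[
\ZZ_2(\mat{A},\mat{B}) \;=\; \sum_{S \subseteq [n]} \sum_{\sigma_A, \sigma_B : S \bij S} \sgn(\sigma_A)\,\sgn(\sigma_B) \prod_{i \in S} A_{i,\sigma_A(i)}\, B_{i,\sigma_B(i)},
\]
and to enumerate all valid triples $(S,\sigma_A,\sigma_B)$ by bottom-up dynamic programming on a \emph{nice} tree decomposition with introduce-vertex, introduce-edge, forget-vertex, and join nodes. Any edge $(u,v)$ with $u\neq v$ used nontrivially by $\sigma_A$ (resp.\ $\sigma_B$) must lie in $\nnz(\mat{A})$ (resp.\ $\nnz(\mat{B})$), so only edges in $\nnz(\mat{A}) \cup \nnz(\mat{B})$ are relevant; by the separator property, each such edge is fully contained in some bag and can be assigned to a unique introduce-edge node.

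The DP state at a bag $X_t$ records (i) the restriction $S_t \triangleq S \cap X_t$; (ii) for each $v \in S_t$, whether $v$ is a fixed point of $\sigma_A$ (contributing the diagonal $A_{v,v}$) or not, and, in the latter case, whether each of $v$'s two ``half-edges''---the incoming $\sigma_A^{-1}(v)$ and the outgoing $\sigma_A(v)$---has already been settled or is still open; (iii) a matching between the currently open outgoing half-edges and the currently open incoming half-edges, encoding which pair of endpoints belongs to the same incomplete path of the partially built $\sigma_A$; and (iv) the analogous data for $\sigma_B$. An open half-edge can only sit at a vertex still in the bag, because a forgotten vertex can never be touched again. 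Each bag therefore carries at most $2^{\bigO(w)} \cdot \bigO(1)^{w} \cdot ((w+1)!)^2 = w^{\bigO(w)}$ states.

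The transitions at introduce-vertex, introduce-edge, and forget-vertex nodes are local. Introducing a vertex $v$ branches on $v \notin S$, $v$ being a fixed point (contributing $A_{v,v}B_{v,v}$), or $v$ non-fixed with both half-edges open. An introduce-edge node for $(u,v)\in \nnz(\mat{A})$ either ignores the edge or commits $\sigma_A(u)=v$; in the latter case we require $u$'s outgoing and $v$'s incoming half-edges to be open, multiply the DP value by $-A_{u,v}$ (the minus sign tallying one edge toward $\sgn(\sigma_A)$), and update the matching by merging two open paths or---if the two half-edges already belonged to a common open path---by closing a cycle, in which case an extra factor of $-1$ is applied so that the contribution of each completed cycle agrees with the Leibniz formula. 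Edges of $\nnz(\mat{B})$ are handled in parallel for $\sigma_B$. A forget-vertex node is valid only when every half-edge of $v$ has been settled.

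The principal technical obstacle is the join node. Two child states at the same bag combine only when they agree on $S_t$ and on fixed-point choices, and for each non-fixed $v \in S_t$ each of $v$'s half-edges must be paired in at most one of the two children (lest $\sigma_A$ or $\sigma_A^{-1}$ be defined twice at $v$). The joined matching on the remaining open half-edges is obtained by overlaying the two children's matchings, viewed together as a $2$-regular graph on $\bigO(w)$ elements; its connected components are exactly the paths (which remain open, to be extended above $t$) and the cycles (which close during the join and each contribute an additional $-1$ factor). Enumerating this combination for a pair of child states takes $\mathrm{poly}(w)$ time, so each join node can be processed in $w^{\bigO(w)}$ total time. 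Correctness follows because at the root (empty bag, no open half-edges) the accumulated DP value is precisely the Leibniz expansion of $\ZZ_2(\mat{A},\mat{B})$; as there are $\bigO(n)$ nodes in the nice tree decomposition, the overall runtime is $w^{\bigO(w)} n^{\bigO(1)}$.
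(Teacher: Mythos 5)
Your proof is correct, but it takes a genuinely different route from the paper's. The paper keeps, for each bag, an explicit partial bijection: a configuration $(O_1,O_2,F_1,F_2,\tau,\nu)$ recording which bag elements lie in the domain/codomain of $\sigma$ restricted to the processed part, the restriction $\tau$ of $\sigma$ to the bag, and crucially the parity $\nu$ of the inversion number of $\sigma$ with respect to a total order $\prec_t$ defined recursively along the decomposition (\cref{def:order}); the sign is then recovered as $(-1)^{\nu_A+\nu_B}$, and the bulk of the technical work (\cref{lem:inv-Xt,lem:intro-inv,lem:join-inv}) goes into showing that the change in inversion parity across introduce, forget, and join nodes depends only on the configuration and not on the bijection itself. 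You instead exploit the functional-graph view of a permutation as a disjoint union of directed cycles and use $\sgn(\sigma)=(-1)^{|S|-\#\text{cycles}}$, charging a $-1$ to every committed non-loop edge and an extra $-1$ to every closed cycle (which indeed reproduces $(-1)^{\ell-1}$ per $\ell$-cycle); your state (half-edge status plus a path-endpoint matching per matrix) carries essentially the same information as the paper's $(O_1,O_2,F_1,F_2,\tau)$, and both give $w^{\bigO(w)}$ states per bag and $w^{\bigO(w)}n^{\bigO(1)}$ overall. What your route buys is the complete elimination of the ordering/inversion machinery, which is the most delicate part of the paper's argument; it is the natural extension to products of determinants of the classical cycle-cover DP for permanents and determinants on bounded-treewidth graphs (\citealp{courcelle2001fixed}), with the join handled exactly as in Hamiltonicity-type DPs. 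What the paper's route buys is that the sign update is derived once and for all from orderings, without having to argue separately about cycle closure at introduce-edge versus join nodes. Two presentational slips in your write-up, neither fatal: the introduce-vertex branching should treat fixed-point status independently for $\sigma_A$ and $\sigma_B$ (your state description (ii)/(iv) already does), and at the join the condition should read that each half-edge is \emph{settled} in at most one child (equivalently, open in at least one), which is what your parenthetical about $\sigma_A$ being defined twice clearly intends.
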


\begin{remark}
To construct ``reasonable'' tree decompositions, we can use existing algorithms,
e.g., a $ 2^{\bigO(w)}n $-time 5-approximation algorithm by \citet*{bodlaender2016approximation},
where $n$ is the number of vertices and $w$ is the treewidth.
Hence, we do not need to be given a tree decomposition to use the algorithm of \cref{thm:fpt-treewidth-2}.
\end{remark}

\subsubsection{Design of Dynamic Programming}
Our proof is based on dynamic programming upon a tree decomposition.
First, we define a \emph{nice tree decomposition} $(T, \{X_t\}_{t \in T})$ formally, which is a convenient form of tree decomposition.
Think of $T$ as a rooted tree by
referring to a particular node $r$ as the \emph{root} of $T$,
which naturally introduces the notions of parent, child, and leaf.
\begin{definition}[\protect{\citealp[nice tree decomposition]{kloks1994treewidth}}]
\label{def:nice-td}
A tree decomposition $(T, \{X_t\}_{t \in T})$ rooted at $r$ is said to be \emph{nice} if
\begin{itemize}
\item every leaf and the root have empty bags; i.e.,
$ X_r = \emptyset $ and $X_\ell = \emptyset$ for every leaf $\ell$ of $T$;
\item each non-leaf node is one of the following:
\begin{itemize}
\item \emph{Introduce node}:
a node $t$ with exactly one child $t'$ such that $X_t = X_{t'} + v$
for some $v \not \in X_{t'}$.
\item \emph{Forget node}:
a node $t$ with exactly one child $t'$ such that $X_t = X_{t'} - v$
for some $v \in X_{t'}$.
\item \emph{Join node}:
a node $t$ with exactly two children $t', t''$ such that
$X_t = X_{t'} = X_{t''}$.
\end{itemize}

\end{itemize}
\end{definition}
For a node $t$ of $T$, we define
\begin{align*}
V_t \triangleq \bigcup_{t' \text{ in subtree rooted at } t} X_{t'}.
\end{align*}
In particular, it holds that $ V_r = [n] $ for the root $r$, and $ V_\ell = X_\ell = \emptyset $ for every leaf $\ell$ of $T$.
\cref{fig:mat-A,fig:G,fig:td,fig:ntc} show an example of a (nice) tree decomposition.

\begin{figure}[tbp]
\begin{minipage}{0.3\hsize}
\centering
$\begin{bmatrix}
*&*&*&0&0&0 \\
*&*&*&*&*&0 \\
*&*&*&0&*&* \\
0&*&0&*&*&0 \\
0&*&*&*&*&* \\
0&0&*&0&*&* \\
\end{bmatrix}$
\caption{Matrix $\mat{A} \in \bbQ^{6 \times 6}$, where ``$*$'' denotes nonzero entries.}
\label{fig:mat-A}
\end{minipage}
\hfill
\begin{minipage}{0.3\hsize}
\centering
\scalebox{0.8}{
\begin{tikzpicture}
[circlenode/.style={draw, circle, minimum height=1.1cm, font=\Large, inner sep=0}]
\node[circlenode, thick](v1){$1$};
\node[circlenode, thick, above right=0.1cm and 1cm of v1](v2){$2$};
\node[circlenode, thick, below right=0.1cm and 1cm of v1](v3){$3$};
\node[circlenode, thick, above right=0.1cm and 1cm of v2](v4){$4$};
\node[circlenode, thick, below right=0.1cm and 1cm of v2](v5){$5$};
\node[circlenode, thick, below right=0.1cm and 1cm of v3](v6){$6$};
\foreach \u / \v in {v1/v2, v1/v3, v2/v3, v2/v4, v2/v5, v3/v5, v3/v6, v4/v5, v5/v6}
    \draw[thick] (\u)--(\v);
\end{tikzpicture}
}
\caption{Graph $G=(V, E)$ constructed from the nonzero entries of $\mat{A}$, where $V = [6]$ and $E=\nnz(\mat{A})$.}
\label{fig:G}
\end{minipage}
\hfill
\begin{minipage}{0.3\hsize}
\centering
\scalebox{0.8}{
\begin{tikzpicture}
[circlenode/.style={draw, circle, minimum height=1.1cm, font=\Large, inner sep=0}]
\node[circlenode, thick](v123){$123$};
\node[circlenode, thick, right=1cm of v123](v235){$235$};
\node[circlenode, thick, above right=0.5cm and 1cm of v235](v245){$245$};
\node[circlenode, thick, below right=0.5cm and 1cm of v235](v356){$356$};
\foreach \u / \v in {v123/v235, v235/v245, v235/v356}
    \draw[thick] (\u)--(\v);
\end{tikzpicture}
}
\caption{Tree decomposition $(T, \{X_t\}_{t \in T})$ of $G$. $T$ contains four nodes, and bags are of size $3$; i.e., its treewidth is $2$.}
\label{fig:td}
\end{minipage}
\vspace{2em}
\begin{minipage}{1\hsize}
\centering
\scalebox{0.6}{
\begin{tikzpicture}
[circlenode/.style={draw, circle, minimum height=1.1cm, font=\Large, inner sep=0}]
\node[circlenode, thick](v0a){$\emptyset$};
\node[above=0cm of v0a, font=\Large]{root};
\node[circlenode, thick, right=1cm of v0a](v1){$1$};
\node[circlenode, thick, right=1cm of v1](v12){$12$};
\node[circlenode, thick, right=1cm of v12](v123){$123$};
\node[circlenode, thick, right=1cm of v123](v23){$23$};
\node[circlenode, thick, right=1cm of v23](v235a){$235$};
\node[circlenode, thick, above right=1cm of v235a](v235b){$235$};
\node[circlenode, thick, right=1cm of v235b](v25){$25$};
\node[circlenode, thick, right=1cm of v25](v245){$245$};
\node[circlenode, thick, right=1cm of v245](v24){$24$};
\node[circlenode, thick, right=1cm of v24](v4){$4$};
\node[circlenode, thick, right=1cm of v4](v0b){$\emptyset$};
\node[above=0cm of v0b, font=\Large]{leaf};
\node[circlenode, thick, below right=1cm of v235a](v235c){$235$};
\node[circlenode, thick, right=1cm of v235c](v35){$35$};
\node[circlenode, thick, right=1cm of v35](v356){$356$};
\node[circlenode, thick, right=1cm of v356](v36){$36$};
\node[circlenode, thick, right=1cm of v36](v6){$6$};
\node[circlenode, thick, right=1cm of v6](v0c){$\emptyset$};
\node[above=0cm of v0c, font=\Large]{leaf};

\foreach \u / \v in {v0a/v1, v1/v12, v12/v123, v123/v23, v23/v235a, v235a/v235b, v235b/v25, v25/v245, v245/v24, v24/v4, v4/v0b, v235a/v235c, v235c/v35, v35/v356, v356/v36, v36/v6, v6/v0c}
    \draw[thick] (\u)--(\v);
\end{tikzpicture}
}
\caption{Nice tree decomposition of $G$. This decomposition is essentially identical to $(T, \{X_t\}_{t \in T})$, but this representation makes easier to develop and analyze dynamic programming algorithms.}
\label{fig:ntc}
\end{minipage}
\end{figure} 

Next we design dynamic programming tables.
Given a nice tree decomposition $(T, \{X_t\}_{t \in T}) $ of
the graph $ ([n], \nnz(\mat{A}) \cup \nnz(\mat{B})) $,
we assume to be given an ordering $\prec_t$ on $V_t$ for node $t$ of $T$, whose definition is deferred to \cref{subsubsec:fpt:treewidth:proof-2}.
We aim to compute the following quantity for each node $t$:
\begin{align}
\label{eq:dp-sum}
    \underset{\substack{
    S \subseteq V_t \setminus X_t \qquad\qquad\qquad\qquad\qquad\qquad \\
    O_{A1}, O_{A2} \subseteq X_t: |O_{A1}|=|O_{A2}|,
    \sigma_A: S \uplus O_{A1} \bij S \uplus O_{A2} \\
    O_{B1}, O_{B2} \subseteq X_t: |O_{B1}|=|O_{B2}|,
    \sigma_B: S \uplus O_{B1} \bij S \uplus O_{B2}
    }}
    {\qquad \sum \quad \sgn_{\prec_t}(\sigma_A) \sgn_{\prec_t}(\sigma_B)
    \mat{A}(\sigma_A) \mat{B}(\sigma_B)}.
\end{align}
Recall that $\mat{A}(\sigma) \triangleq \prod_i A_{i,\sigma(i)}$ for bijection $\sigma$.
In particular,
\cref{eq:dp-sum} is equal to $ \ZZ_2(\mat{A}, \mat{B}) $
at the root $r$ since $ X_r=\emptyset $ and $ V_r=[n] $.
We then discuss how to group exponentially many bijections into an FPT number of bins.
A \emph{configuration for node} $t$ is defined as a tuple $ \state = (O_1, O_2, F_1, F_2, \tau, \nu) $, where
\begin{itemize}
    \item $O_1, O_2 \subseteq X_t$ are subsets such that $|O_1|=|O_2|$;
    \item $F_1 \subseteq O_1, F_2 \subseteq O_2$ are subsets such that $ |F_1|=|F_2|$;
    \item $\tau: O_1 \setminus F_1 \bij O_2 \setminus F_2$ is a bijection;
    \item $\nu \in \{0,1\}$ is the parity of inversion number.
\end{itemize}
In the remainder of this subsection,
\emph{arithmetic operations on the parity of inversion number to be performed over modulo $2$}, and
the symbol $\equiv$ means congruence modulo $2$.
We say that a bijection $\sigma$ is \emph{consistent} with
$ S \subseteq V_t \setminus X_t $ and
$\state = (O_1, O_2, F_1, F_2, \tau, \nu)$ if
\begin{itemize}
    \item $\sigma$ is a bijection $S \uplus O_1 \bij S \uplus O_2$;
    \item $F_1 = \sigma^{-1}(S) \cap O_1$ and $F_2 = \sigma(S) \cap O_2$;
    \item $\tau = \sigma|_{O_1 \setminus F_1}$;
    \item $\nu \equiv \inv_{\prec_t}(\sigma)$.
\end{itemize}
We show that
for any bijection appearing in \cref{eq:dp-sum},
there exists a unique pair of a subset $S$ and a configuration $\state$ that is consistent with the bijection.

\begin{lemma}
\label{lem:bijection}
Let $ S \subseteq V_t \setminus X_t $ and
$ O_1, O_2 \subseteq X_t$ be two subsets such that $ |O_1| = |O_2| $.
For any bijection $ \sigma: S \uplus O_1 \bij S \uplus O_2 $,
there exists a unique configuration $\state$ for $t$ that
$\sigma$ is consistent with.
\end{lemma}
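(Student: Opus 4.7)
The plan is straightforward because each component of the configuration $\state$ is literally named by the consistency definition. Given $\sigma : S \uplus O_1 \bij S \uplus O_2$, I would set $F_1 \triangleq \sigma^{-1}(S) \cap O_1$, $F_2 \triangleq \sigma(S) \cap O_2$, $\tau \triangleq \sigma|_{O_1 \setminus F_1}$, and $\nu \triangleq \inv_{\prec_t}(\sigma) \bmod 2$. Uniqueness is then immediate: any $\state$ consistent with $\sigma$ must satisfy these four equalities verbatim, and $O_1, O_2$ themselves are already pinned down by the domain and codomain of $\sigma$.

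The only remaining task is to check that this tuple actually qualifies as a configuration, i.e., that $|F_1| = |F_2|$ and that $\tau$ is a bijection from $O_1 \setminus F_1$ onto $O_2 \setminus F_2$. Both follow from a single counting observation: by the choice of $F_1$, the restriction $\sigma|_{O_1 \setminus F_1}$ maps into $O_2$, while by the choice of $F_2$, the part of $O_2$ whose preimage under $\sigma$ lies in $S$ is exactly $F_2$, so the complementary part $O_2 \setminus F_2$ must coincide with $\sigma(O_1 \setminus F_1)$. Comparing cardinalities yields $|O_1| - |F_1| = |O_2| - |F_2|$, and the hypothesis $|O_1| = |O_2|$ then gives $|F_1| = |F_2|$; the same identity certifies that $\tau$ is a well-defined bijection onto $O_2 \setminus F_2$, as required.

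I do not anticipate any real obstacle here: the lemma is a definitional unpacking confirming that the configuration format losslessly records the ``boundary'' behavior of $\sigma$ relative to the bag $X_t$ while aggregating the (potentially exponentially many) possibilities for how $\sigma$ permutes inside $S$ into a single parity bit $\nu$. The substantive work, namely using this encoding to combine partial sums across introduce, forget, and join nodes while keeping track of the signs $\sgn_{\prec_t}$, lies in the dynamic-programming steps that follow, not in the lemma itself.
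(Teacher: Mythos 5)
Your proposal is correct and follows essentially the same route as the paper's proof: define $F_1, F_2, \tau, \nu$ directly from $\sigma$ via the consistency conditions and observe uniqueness is forced. The paper states the verification that the resulting tuple is a valid configuration as obvious, whereas you spell out the cardinality check $|F_1|=|F_2|$; this is a welcome but minor elaboration, not a different argument.
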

\begin{proof}
Since $\sigma$ is a bijection,
we can let $F_1 \triangleq \sigma^{-1}(S) \cap O_1$
$F_2 \triangleq \sigma(S) \cap O_2$,
$\tau \triangleq \sigma|_{O_1 \setminus F_1}$, and
$\nu \triangleq \inv_{\prec_t}(\sigma)$.
Then, $\sigma$ must be consistent with $S$ and $\state$.
Uniqueness is obvious from the definition of configuration and consistency.
\end{proof}
Hereafter, we will use $ \Sigma(S, \state) $
to denote the set of all bijections
consistent with a subset $S \subseteq V_t \setminus X_t$ and
a configuration $\state$ for a node $t$ of $T$.
By \cref{lem:bijection}, we have that
\begin{align}
\label{eq:bijection-partition}
\biguplus_{\substack{
S \subseteq V_t \setminus X_t \\ \state \text{ for } t
}}
\Sigma(S, \state) =
\Bigl\{ \sigma: S \uplus O_1 \bij S \uplus O_2 \mid 
S \subseteq V_t \setminus X_t, O_1 \subseteq X_t, O_2 \subseteq X_t, |O_1|=|O_2| \Bigr\}.
\end{align}
We can thus express \cref{eq:dp-sum} as follows:
\begin{align*}
    \text{value of \cref{eq:dp-sum}} & = 
    \sum_{\substack{
        S \in V_t \setminus X_t \\
        \state_A = (O_{A1}, O_{A2}, F_{A1}, F_{A2}, \tau_A, \nu_A) \text{ for } t \\
        \state_B = (O_{B1}, O_{B2}, F_{B1}, F_{B2}, \tau_B, \nu_B) \text{ for } t
    }}
    \sum_{\substack{
        \sigma_A \in \Sigma(S, \state_A) \\
        \sigma_B \in \Sigma(S, \state_B)
    }}
    \sgn_{\prec_t}(\sigma_A) \sgn_{\prec_t}(\sigma_B) \mat{A}(\sigma_A) \mat{B}(\sigma_B) \\
    & = \sum_{\substack{
        \state_A, \state_B \text{ for } t \\ 0 \leq s \leq n
    }}
    (-1)^{\nu_A+\nu_B}
    \sum_{S \subseteq {V_t \setminus X_t \choose s}}
    \Upsilon_{t, A}(S, \state_A) \cdot
    \Upsilon_{t, B}(S, \state_B),
\end{align*}
where we define $\Upsilon_{t, A}$ and $\Upsilon_{t, B}$ as
\begin{align*}
\Upsilon_{t, A}(S, \state_A) & \triangleq
\sum_{\sigma_A \in \Sigma(S, \state_A)} \mat{A}(\sigma_A), \\
\Upsilon_{t, B}(S, \state_B) & \triangleq
\sum_{\sigma_B \in \Sigma(S, \state_B)} \mat{B}(\sigma_B).
\end{align*}
We now define a dynamic programming table $dp_{t,s}$ for each node $t \in T$ and each integer $s \in [0\isep n]$ so as to store the following quantity with key
$\left[{\state_A \atop \state_B}\right]$:
\begin{align*}
    dp_{t,s}\left[{
    \state_A \atop
    \state_B
    }\right] \triangleq
    \sum_{S \in {V_t \setminus X_t \choose s}}
    \Upsilon_{t, A}(S, \state_A) \cdot
    \Upsilon_{t, B}(S, \state_B).
\end{align*}
Since there are at most $ 2^{|X_t|} 2^{|X_t|} 2^{|X_t|} 2^{|X_t|} |X_t|! 2 \leq 16^{w+1} (w+1)! 2 $ possible configurations for node $t$ by definition,
$dp_{t,s}$ contains at most $ w^{\bigO(w)}$ entries.
The number of bits required to represent each entry of $dp_{t,s}$ is roughly bounded by $ \log(2^n n!)(\isize(\mat{A}) + \isize(\mat{B})) = \bigO((\isize(\mat{A}) + \isize(\mat{B})) n \log n) $.

Having defined the dynamic programming table,
we are ready to construct $dp_{t,s}$ given
already-filled $dp_{t',s'}$ for children $t'$ of $t$ and $s' \in [0\isep n]$;
the proof is deferred to \cref{subsubsec:fpt:treewidth:proof-2}.

\begin{lemma}
\label{lem:table-update}
Let $t$ be a non-leaf node of $T$, and $s \in [0\isep n]$.
Given $dp_{t',s'}$ for all children $t'$ of $t$ and $s' \in [0\isep n]$,
we can compute each entry of $ dp_{t,s} $ in $ w^{\bigO(w)} n^{\bigO(1)}$ time.
\end{lemma}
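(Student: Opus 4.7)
The plan is to split into three cases according to the type of non-leaf node $t$ in the nice tree decomposition---Introduce, Forget, and Join---and give a separate update rule for each. In every case the strategy is identical: classify the bijections $\sigma_A, \sigma_B$ consistent with $(S, \state_A, \state_B)$ by how they restrict to the subproblem(s) at the children, and thereby express $dp_{t,s}[\state_A, \state_B]$ as a sum of entries of $dp_{t'}$ (or $dp_{t'}$ and $dp_{t''}$ for Join), each multiplied by an explicit correction factor depending only on the configurations and a few entries of $\mat{A}, \mat{B}$. Since the number of configurations at a single node is bounded by $2^{O(w)}(w+1)! = w^{\bigO(w)}$, enumerating over child configurations per entry runs in $w^{\bigO(w)} n^{\bigO(1)}$ time.

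For an Introduce node with $X_t = X_{t'} + v$, we have $V_t \setminus X_t = V_{t'} \setminus X_{t'}$, so the subset $S$ is unchanged. The key structural fact is that every neighbor of $v$ in $V_t$ already lies in $X_{t'}$ (by the connected-subtree property of tree decompositions); hence if $v \in O_{A1}$ (respectively $v \in O_{A2}$) in $\state_A$, then $\sigma_A(v)$ (respectively $\sigma_A^{-1}(v)$) must lie in $X_t$. Thus the role of $v$ is fully determined by $\state_A$, and $\state_A$ corresponds to a unique child configuration $\state'_A$ obtained by deleting $v$ from the $O,F,\tau$ data and---when $v$'s $\tau$-partners are distinct elements of $X_{t'}$---moving them into the ``non-$\tau$'' part of the configuration. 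Multiplying by the factors $A_{\tau_A^{-1}(v),v}$ and $A_{v,\tau_A(v)}$ (and the analogous factors for $\mat{B}$), together with a sign correction, yields the Introduce update in constant time per entry. For a Forget node with $X_t = X_{t'} - v$, the vertex $v$ moves into the ``completed'' part; we partition contributions by the role of $v$ at the child. The mutually exclusive possibilities are: $v \notin O'_{A1} \cup O'_{A2}$ (so $v \notin S$, subset index unchanged); $v \in O'_{A1} \cap O'_{A2}$ with $\tau'_A(v) = v$ (a diagonal contribution $A_{v,v}$, still $v \notin S$); or $v$ participates as a ``free'' pair with $v \in F'_{A1}$ and $v \in F'_{A2}$, which corresponds to $v \in S$ and shifts the subset index from $s-1$ to $s$. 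Summing the relevant entries of $dp_{t'}$ (and taking the analogous three choices for $\mat{B}$) gives the Forget update.

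For a Join node with $X_t = X_{t'} = X_{t''}$, the set $V_t \setminus X_t$ is the disjoint union $(V_{t'} \setminus X_{t'}) \uplus (V_{t''} \setminus X_{t''})$ (since $V_{t'} \cap V_{t''} \subseteq X_t$), so every $S$ decomposes uniquely as $S = S' \uplus S''$, and every consistent $\sigma_A$ factors into $\sigma'_A$ on $S' \uplus O'_{A1}$ and $\sigma''_A$ on $S'' \uplus O''_{A1}$. The factorization is parameterized by how $O_{A1}, O_{A2}, F_{A1}, F_{A2}$, and $\tau_A$ split into ``left'' and ``right'' pieces consistent with $\state_A$; crucially, each $\tau$-edge of $\tau_A$ must be routed entirely through one child, while the $F$-data is simply partitioned between the two sides. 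The Join update is therefore a convolution over all compatible $(\state'_A, \state''_A, \state'_B, \state''_B)$ and over all splits $s = s' + s''$ of the subset size, giving $w^{\bigO(w)} n$ terms per entry.

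The main technical obstacle is the bookkeeping for the sign $\nu$. The child orderings $\prec_{t'}, \prec_{t''}$ need not agree with $\prec_t$, and in the Join case $\inv_{\prec_t}(\sigma_A)$ is not simply $\inv_{\prec_{t'}}(\sigma'_A) + \inv_{\prec_{t''}}(\sigma''_A)$: there are ``cross inversions'' between interior elements on one side and boundary elements routed through the other. By choosing the orderings $\prec_t$ (as constructed below) so that $V_t \setminus X_t$ precedes $X_t$ and so that $\prec_{t'}, \prec_{t''}$ agree with $\prec_t$ on $X_t$, every cross inversion depends only on the configurations themselves---on which boundary elements go to which child and on their relative positions in $\prec_t$---so the sign correction becomes a deterministic function of the participating configurations and is computable in polynomial time per term. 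This keeps the total cost at $w^{\bigO(w)} n^{\bigO(1)}$ per entry.
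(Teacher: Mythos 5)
Your overall architecture --- a case split into Introduce/Forget/Join nodes, grouping bijections by their restrictions to the children, and taming the sign via orderings that place $V_t \setminus X_t$ before $X_t$ and agree on the bag --- is exactly the paper's, and your Introduce case is essentially correct. The genuine gap is in the Forget case. Your enumeration of the role of $v$ at the child is wrong in three ways. First, if $v \in O'_{A1} \cap O'_{A2}$ at the child, then $v$ \emph{must} join $S$ at the parent regardless of whether $\tau'_A(v)=v$ or where $\sigma(v),\sigma^{-1}(v)$ land: at a forget node $V_t = V_{t'}$, so $\sigma$ is literally the same bijection at parent and child, and since $v \notin X_t$, the only way $v$ can lie in the domain of $\sigma$ is $v \in S$. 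Your second case ($\tau'_A(v)=v$, ``still $v \notin S$'') is therefore misclassified, and your third case ($v \in F'_{A1}\cap F'_{A2}$) omits the child configurations in which $v\in O'_{A1}\cap O'_{A2}$ but $\sigma(v)$ or $\sigma^{-1}(v)$ remains in the bag --- these also correspond to $v\in S$ and to the shift from $s-1$ to $s$. Second, the factor $A_{v,v}$ you insert at the forget step double-counts: every entry $A_{i,\sigma(i)}$ with $i$ or $\sigma(i)$ equal to $v$ is already accumulated inside $\Upsilon_{t',A}(\cdot,\state')$ (it entered when $v$ was introduced), so a forget node must multiply by nothing. The correct rule sums $dp_{t',s}$ over child configurations with $v\notin O'_1\cup O'_2$ and $dp_{t',s-1}$ over child configurations with $v\in O'_1\cap O'_2$, with the map to the parent configuration adjusting $F'_1,F'_2,\tau'$ according to which of $\sigma(v),\sigma^{-1}(v)$ lie in $X_t$.

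A smaller but real inconsistency is in your Join rule. Under the table semantics fixed by the lemma, the bag-to-bag part $\tau$ of the parent configuration appears in \emph{both} children's restrictions (each $\sigma|_{V_{t'}}$ contains all of $\sigma|_{X_t\cap\sigma^{-1}(X_t)}$), so compatible pairs satisfy $\tau'=\tau''$, only the $F$-data is partitioned, and the product $dp_{t',s'}\cdot dp_{t'',s''}$ counts the factors $\mat{A}(\tau)$ and $\mat{B}(\tau)$ twice; the update must divide by $\mat{A}(\tau)\cdot\mat{B}(\tau)$. Your claim that ``each $\tau$-edge is routed entirely through one child'' contradicts those semantics and, taken literally, omits this correction. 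Finally, the cross-inversion parity at a join depends on $|S''|$ and not only on the configurations; this is harmless because the convolution is indexed by $s''$, but it needs to be said.
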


\begin{proof}[Proof of \cref{thm:fpt-treewidth-2}]
Our parameterized algorithm works as follows.
Given a tree decomposition for $ ([n], \nnz(\mat{A}) \cup \nnz(\mat{B})) $
of width at most $w$,
we transform it to a nice tree decomposition
$(T, \{X_t\}_{t \in T})$ rooted at $r$
of width at most $w$ that
has $\bigO(wn)$ nodes in polynomial time \citep{cygan2015parameterized}.
For every leaf $\ell$ of $T$,
any configuration $(O_1,O_2,F_1,F_2,\tau,\nu)$ for $\ell$ satisfies that $O_1 = O_2 = F_1 = F_2 = \emptyset$ and $\tau: \emptyset \bij \emptyset$ 
because $X_\ell = \emptyset$.
We thus initialize $dp_{\ell,s}$ so that
\begin{align*}
 dp_{\ell,s}\left[{
    \emptyset, \emptyset, \emptyset, \emptyset, \emptyset \bij \emptyset, \nu_A \atop
    \emptyset, \emptyset, \emptyset, \emptyset, \emptyset \bij \emptyset, \nu_B
}\right] =
    \begin{cases}
    1 & \text{if } s=0 \text{ and } \nu_A=\nu_B=0, \\
    0 & \text{otherwise}.
    \end{cases}
\end{align*}
Then, for each non-leaf node $t$,
we apply \cref{lem:table-update}
to fill $dp_{t,s}$ using the already-filled $dp_{t',s'}$ for all children $t'$ of $t$
in a bottom-up fashion.
Completing dynamic programming,
we compute $\ZZ_2$ using $dp_{r,s}$ at the root $r$ as follows:
\begin{align*}
\ZZ_2(\mat{A}, \mat{B}) = \sum_{
s \in [0\isep n], \nu_A, \nu_B \in \{0,1\}
}
    (-1)^{\nu_A+\nu_B} 
    dp_{r,s}\left[{
    \emptyset, \emptyset, \emptyset, \emptyset, \emptyset \bij \emptyset, \nu_A \atop
    \emptyset, \emptyset, \emptyset, \emptyset, \emptyset \bij \emptyset, \nu_B }\right].
\end{align*}
Correctness follows from \cref{lem:bijection,lem:table-update}.
We finally bound the time complexity.
Because
$T$ has at most $\bigO(wn)$ nodes,
each table is of size $ w^{\bigO(w)} n^{\bigO(1)} $, and
each table entry can be computed in $w^{\bigO(w)} n^{\bigO(1)}$ time by
\cref{lem:table-update},
the whole time complexity is bounded by
$ w^{\bigO(w)} n^{\bigO(1)}$,
thereby completing the proof.
\end{proof}

\begin{remark}
Our dynamic programming implies that an FPT algorithm exists for
permanental processes \citep{macchi1975coincidence} since it holds that
\begin{align*}
  \sum_{S \subseteq [n]} \per(\mat{A}_S) \per(\mat{B}_S) =
\sum_{s, \nu_A, \nu_B} 
    dp_{r,s}\left[{
    \emptyset, \emptyset, \emptyset, \emptyset, \emptyset \bij \emptyset, \nu_A \atop
    \emptyset, \emptyset, \emptyset, \emptyset, \emptyset \bij \emptyset, \nu_B }\right].  
\end{align*}
\end{remark}

\cref{thm:fpt-treewidth-2} can be generalized 
to the case of $m$ matrices $ \mat{A}^1, \ldots \mat{A}^m $.
Computing $\ZZ_m$ parameterized by the treewidth of
$\nnz(\mat{A}^1) \cup \cdots \cup \nnz(\mat{A}^m)$
\emph{plus} the number of matrices $m$ is FPT,
whose proof is deferred to \cref{subsubsec:fpt:treewidth:proof-m}.

\begin{theorem}
\label{thm:fpt-treewidth-m}
For a positive integer $m$,
let $\mat{A}^1, \ldots, \mat{A}^m$ be
$m$ matrices in $\bbQ^{n \times n}$.
Then, there exists a $w^{\bigO(mw)} n^{\bigO(1)}$-time algorithm that,
given a tree decomposition of
the graph $ ([n], \bigcup_{i \in [m]} \nnz(\mat{A}^i)) $
of width at most $w$,
computes $\ZZ_m(\mat{A}^1, \ldots, \mat{A}^m)$ exactly.
\end{theorem}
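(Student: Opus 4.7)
The plan is to generalize the dynamic programming of \cref{thm:fpt-treewidth-2} from two bijections to $m$ bijections run in parallel on the same nice tree decomposition $(T,\{X_t\}_{t\in T})$ of the graph $([n],\bigcup_{i\in[m]}\nnz(\mat{A}^i))$. For each node $t$ with ordering $\prec_t$ on $V_t$, the target quantity becomes
\begin{align*}
\sum_{\substack{S \subseteq V_t \setminus X_t \\ O_{i,1}, O_{i,2} \subseteq X_t,\ |O_{i,1}|=|O_{i,2}|\ (i \in [m]) \\ \sigma_i : S \uplus O_{i,1} \bij S \uplus O_{i,2}\ (i \in [m])}}
\prod_{i \in [m]} \sgn_{\prec_t}(\sigma_i)\, \mat{A}^i(\sigma_i),
\end{align*}
which at the root $r$ coincides with $\ZZ_m(\mat{A}^1,\ldots,\mat{A}^m)$. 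Note that the $S$ variable is shared across the $m$ bijections, whereas the $O_{i,1},O_{i,2}$ and $\sigma_i$ are per-matrix.

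First I would generalize the notion of configuration: a configuration for $t$ is now an $m$-tuple $(\state_1,\ldots,\state_m)$ with $\state_i = (O_{i,1},O_{i,2},F_{i,1},F_{i,2},\tau_i,\nu_i)$ defined exactly as in \cref{subsubsec:fpt:treewidth:proof-2} but for bijection $\sigma_i$. The analogue of \cref{lem:bijection} still applies per matrix, so each $\sigma_i$ is consistent with a unique $\state_i$, and partitioning each $\sigma_i$'s index set as in \cref{eq:bijection-partition} yields
\begin{align*}
dp_{t,s}[\state_1,\ldots,\state_m] \triangleq \sum_{S \in \binom{V_t \setminus X_t}{s}} \prod_{i \in [m]} \Upsilon_{t,i}(S, \state_i),
\end{align*}
where $\Upsilon_{t,i}(S,\state_i) \triangleq \sum_{\sigma_i \in \Sigma(S,\state_i)} \mat{A}^i(\sigma_i)$. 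The final output is then
\begin{align*}
\ZZ_m(\mat{A}^1,\ldots,\mat{A}^m) = \sum_{\substack{0 \leq s \leq n \\ \nu_1,\ldots,\nu_m \in \{0,1\}}} (-1)^{\nu_1+\cdots+\nu_m}\, dp_{r,s}[\state_1^0(\nu_1),\ldots,\state_m^0(\nu_m)],
\end{align*}
where $\state_i^0(\nu) = (\emptyset,\emptyset,\emptyset,\emptyset,\emptyset \bij \emptyset,\nu)$.

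Next I would extend the update rules of \cref{lem:table-update} coordinate-wise over $i \in [m]$: at introduce, forget, and join nodes the per-matrix transitions of $\state_i$ are exactly the same as in the two-matrix proof (since the nonzero entries of $\mat{A}^i$ at any newly introduced vertex lie inside $X_t$ by the tree-decomposition property), and the product structure of $dp_{t,s}$ over $i \in [m]$ is preserved because $S$ is shared but the $\Upsilon_{t,i}$ factors are independent given $S$. The number of configurations per matrix at node $t$ is $w^{\bigO(w)}$, hence the number of table entries is $(w^{\bigO(w)})^m = w^{\bigO(mw)}$. Updating one entry requires summing over compatible child configurations, which (as in the two-matrix case) is also bounded by $w^{\bigO(mw)} n^{\bigO(1)}$ work. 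Summed over $\bigO(wn)$ nodes and $n+1$ values of $s$, the overall runtime is $w^{\bigO(mw)} n^{\bigO(1)}$.

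The main obstacle is the join node: one has to verify that splitting a common $S \subseteq V_t \setminus X_t$ between two children is compatible with the per-matrix decomposition of each $\sigma_i$. The two-matrix argument handles this by indexing on the size $s$ and on the per-matrix $F$-sets so that the sign $(-1)^{\nu_i}$ factorizes appropriately through the inversion-count identity $\inv_{\prec_t}(\sigma_i) \equiv \inv_{\prec_{t'}}(\sigma_i|_{V_{t'}}) + \inv_{\prec_{t''}}(\sigma_i|_{V_{t''}}) \pmod{2}$ on the two child subtrees; since this identity is purely per-matrix and the sole cross-matrix coupling is the shared size $s$ (and the shared $S$, which is summed out), the same book-keeping carries through unchanged for each $i$, and the join transition decomposes into $m$ independent copies of the two-matrix join convolution, yielding the stated bound.
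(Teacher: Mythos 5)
Your proposal is correct and follows essentially the same route as the paper's proof: the paper likewise runs $m$ per-matrix configurations in parallel over the shared set $S$, reuses the two-matrix $\intro$, $\forget$, and $\join$ maps coordinate-wise, and obtains a table of size $w^{\bigO(mw)}$ with the same runtime accounting. (One minor imprecision: the join-node inversion identity is not literally $\inv_{\prec_t}(\sigma_i) \equiv \inv_{\prec_{t'}}(\sigma_i|_{V_{t'}}) + \inv_{\prec_{t''}}(\sigma_i|_{V_{t''}}) \pmod{2}$ but carries correction terms --- including the subtraction of $\inv_{\prec_t}(\tau_i)$ and cross terms between the children's $F$-sets --- that depend only on the two child configurations and not on $\sigma_i$ itself, which is exactly what the two-matrix join lemma you invoke establishes, so nothing breaks.)
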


\subsubsection{Proof of \cref{lem:table-update}}
\label{subsubsec:fpt:treewidth:proof-2}

We first define an ordering $\prec_t$ on $V_t$ for each node $t$ of $T$.
\begin{definition}
\label{def:order}
An ordering $\prec_t$ on set $V_t$ for node $t$ is recursively defined as follows.
\begin{itemize}
\item If $t$ is a leaf: $\prec_t$ is just an ordering on the empty set $V_t = \emptyset$.
\item If $t$ is an introduce node with one child $t'$ such that $X_t = X_{t'}+v$:
Given $\prec_{t'}$ on set $V_{t'}$,
we define $\prec_t$ on set $V_t = V_{t'}+v$ as follows:
\begin{itemize}
\item $\prec_t$ and $\prec_{t'}$ agree on $V_{t}-v = V_{t'}$;
\item $V_{t}-v \prec_t \{v\}$.
\end{itemize}

\item If $t$ is a forget node with one child $t'$ such that $X_t = X_{t'} - v$:
Given $\prec_{t'}$ on set $V_{t'}$,
we define $\prec_t$ on set $V_t = V_{t'}$ as follows:
\begin{itemize}
\item $\prec_t$ and $\prec_{t'}$ agree on $V_t - v$;
\item $V_t \setminus X_t - v \prec_t \{v\} \prec_t X_t$.
\end{itemize}

\item If $t$ is a join node with two children $t',t''$ such that $X_t = X_{t'} = X_{t''}$:
Given $\prec_{t'}$ and $\prec_{t''}$ on set $V_{t'}$ and $V_{t''}$, respectively,
we define $\prec_t$ on set $V_t = V_{t'} \cup V_{t''}$ as follows:
\begin{itemize}
\item $V_{t'} \setminus X_{t'} \prec_t V_{t''} \setminus X_{t''} \prec_t X_t$;
\item $\prec_t$ and $\prec_{t'}$ agree on $V_{t'} \setminus X_{t'}$;
\item $\prec_t$ and $\prec_{t''}$ agree on $V_{t''} \setminus X_{t''}$;
\item $\prec_t$ and $\prec_{t'}$ agree on $X_t = X_{t'} = X_{t''}$.
\end{itemize}

\end{itemize}
\end{definition}
By construction, we have that $V_t \setminus X_t \prec_t X_t$ for every node $t$ of $T$.
We have an auxiliary lemma that plays a role in updating the parity of inversion number.
\begin{lemma}
\label{lem:inv-Xt}
Let $\state = (O_1, O_2, F_1, F_2, \tau, \nu)$ be a configuration for node $t$,
and $\prec_x$ and $\prec_y$ be two orderings on $V_t$ such that
\begin{itemize}
    \item $\prec_x$ and $\prec_y$ agree on $V_t \setminus X_t$
    (i.e., $v \prec_x w$ if and only if $v \prec_y w$ for all $v,w \in V_t \setminus X_t$);
    \item $V_t \setminus X_t \prec_x X_t$ and $V_t \setminus X_t \prec_y X_t$
    (i.e., $v \prec_x w$ and $v \prec_y w$ for all $v \in V_t \setminus X_t$ and $w \in X_t$).
\end{itemize}
Then, we can compute a 0-1 integer $\Delta = \Delta(\state, \prec_x, \prec_y)$ in polynomial time such that
$\inv_{\prec_x}(\sigma) - \inv_{\prec_y}(\sigma) \equiv \Delta$
for all $\sigma \in \Sigma(S, \state)$ and $S \subseteq V_t \setminus X_t$.
\end{lemma}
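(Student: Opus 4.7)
The plan is to rewrite $\inv_\prec(\sigma) \bmod 2$ as a sum over unordered pairs of domain elements, and then cancel the $\sigma$-dependent terms when taking the difference $\inv_{\prec_x}(\sigma) - \inv_{\prec_y}(\sigma)$. For any ordering $\prec$ on $V_t$ and any bijection $\sigma \colon D \bij R$ with $D = S \uplus O_1$ and $R = S \uplus O_2$, one has the identity
\[
\inv_\prec(\sigma) \;=\; \sum_{\{i,j\} \subseteq D} \Bigl( [\![ i \prec j ]\!] \oplus [\![ \sigma(i) \prec \sigma(j) ]\!] \Bigr),
\]
since each unordered pair contributes one inversion exactly when the induced orders on $\{i,j\}$ and on $\{\sigma(i), \sigma(j)\}$ disagree. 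Writing $d(u,v) \triangleq [\![ u \prec_x v ]\!] \oplus [\![ u \prec_y v ]\!]$ for the bit that flags whether the two orderings disagree on $\{u, v\}$, I would subtract this identity for $\prec_x$ and for $\prec_y$ modulo $2$ and use XOR associativity to obtain
\[
\inv_{\prec_x}(\sigma) - \inv_{\prec_y}(\sigma) \;\equiv\; \sum_{\{i,j\} \subseteq D} d(i,j) \;+\; \sum_{\{i,j\} \subseteq D} d(\sigma(i), \sigma(j)) \pmod{2}.
\]

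Next I would exploit the two structural assumptions on $\prec_x$ and $\prec_y$. Because they agree on $V_t \setminus X_t$ and both place $V_t \setminus X_t$ before $X_t$, the bit $d(u,v)$ vanishes unless both $u$ and $v$ lie in $X_t$; hence the first sum collapses to $\sum_{\{i,j\} \subseteq D \cap X_t} d(i,j) = \sum_{\{i,j\} \subseteq O_1} d(i,j)$. For the second sum I would re-index along the bijection $\sigma$: the unordered pairs $\{\sigma(i), \sigma(j)\}$, as $\{i,j\}$ ranges over unordered pairs of $D$, are precisely the unordered pairs of $R$, so this sum equals $\sum_{\{i',j'\} \subseteq R \cap X_t} d(i',j') = \sum_{\{i',j'\} \subseteq O_2} d(i',j')$. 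Crucially, both sums depend only on $O_1, O_2, \prec_x, \prec_y$, and not on $\sigma$ or $S$.

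Defining
\[
\Delta(\state, \prec_x, \prec_y) \;\triangleq\; \Biggl( \sum_{\{i,j\} \subseteq O_1} d(i,j) \;+\; \sum_{\{i',j'\} \subseteq O_2} d(i',j') \Biggr) \bmod 2
\]
would then yield a $0$-$1$ integer satisfying the required congruence, and it is computable in polynomial time by iterating over the $\bigO(|X_t|^2)$ candidate pairs in each of $O_1$ and $O_2$ and testing the two orderings on each. The main subtlety is verifying that $d$ is supported on pairs lying entirely inside $X_t$; this is precisely where the two structural hypotheses on $\prec_x$ and $\prec_y$ enter the argument. Once that support condition is in place, the $\sigma$-dependence is eliminated by a straightforward change of variables along $\sigma$, and the remainder is routine bookkeeping.
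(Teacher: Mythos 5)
Your proof is correct, and it takes a genuinely different route from the paper's. The paper proves the lemma by decomposing the passage from $\prec_x$ to $\prec_y$ into a sequence of adjacent transpositions of consecutive elements of $X_t$, and for each single swap of $v,w \in X_t$ it runs a case analysis on whether $\{v,w\} \subseteq O_1$ and/or $\{v,w\} \subseteq O_2$ (with subcases on $|\{v,w\} \cap \{\sigma(v),\sigma(w)\}|$) to show the parity change per swap is independent of $\sigma$. You instead use the global identity $\inv_\prec(\sigma) = \sum_{\{i,j\}} \bigl( [\![ i \prec j ]\!] \oplus [\![ \sigma(i) \prec \sigma(j) ]\!] \bigr)$ over unordered pairs of the domain, subtract modulo $2$, observe that the disagreement bit $d(u,v)$ is supported on pairs inside $X_t$ (by the two structural hypotheses), and re-index the image sum along the bijection; this yields the closed form $\Delta \equiv \sum_{\{i,j\} \subseteq O_1} d(i,j) + \sum_{\{i,j\} \subseteq O_2} d(i,j)$. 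Your argument is shorter, avoids the case analysis entirely, and makes the $\sigma$-independence of $\Delta$ transparent; one can check it is consistent with the paper's version, since in the transposition decomposition each pair $\{v,w\} \subseteq X_t$ is swapped an odd number of times exactly when $d(v,w)=1$, and the per-swap parity change is $[\![\{v,w\}\subseteq O_1]\!] + [\![\{v,w\}\subseteq O_2]\!] \bmod 2$. The paper's incremental formulation is what gets reused in the introduce/forget/join updates, but as a proof of this lemma your direct computation is cleaner.
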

\begin{proof}
Given two orderings $\prec_x$ and $\prec_y$ on $V_t$ that meet the assumption,
we can construct a sequence of orderings, denoted
$\prec^{(0)}, \prec^{(1)}, \ldots, \prec^{(\ell-1)}, \prec^{(\ell)}$,
starting from $\prec_x = \prec^{(0)}$ and ending with $\prec_y = \prec^{(\ell)}$ such that
each $\prec^{(i)}$ for $i \in [\ell]$ is obtained from $\prec^{(i-1)}$
by reversing the order of two consecutive elements of $X_t$ with regard to $\prec^{(i-1)}$;
i.e.,
there exists a partition of $V_t$, denoted $P \uplus \{v,w\} \uplus Q$,
where $v,w \in X_t$ and $Q \subseteq X_t$, such that
\begin{itemize}
    \item $\prec^{(i-1)}$ and $\prec^{(i)}$ agree on $P \uplus Q$;
    \item $P \prec^{(i-1)} \{v\} \prec^{(i-1)} \{w\} \prec^{(i-1)} Q$;
    \item $P \prec^{(i)} \{w\} \prec^{(i)} \{v\} \prec^{(i)} Q$.
\end{itemize}
Since it holds that
\begin{align*}
\inv_{\prec_y}(\sigma) - \inv_{\prec_x}(\sigma) = 
\sum_{i \in [\ell]} \inv_{\prec^{(i)}}(\sigma) - \inv_{\prec^{(i-1)}}(\sigma),
\end{align*}
we hereafter assume the existence of a partition $P \uplus \{v,w\} \uplus Q$,
where $v,w\in X_t$ and $Q \subseteq X_t$, such that
\begin{itemize}
    \item $\prec_x$ and $\prec_y$ agree on $P \uplus Q$;
    \item $P \prec_x \{v\} \prec_x \{w\} \prec_x Q$;
    \item $P \prec_y \{w\} \prec_y \{v\} \prec_y Q$.
\end{itemize}

Let $\sigma$ be a bijection in $\Sigma(S, \state)$ for $S \subseteq V_t \setminus X_t$.
In order for $\inv_{\prec_x}(\sigma)$ and $\inv_{\prec_y}(\sigma)$ to differ,
one of the following conditions must be satisfied:
\begin{itemize}
\item $\sigma(v) \prec_x \sigma(w)$ and $\sigma(v) \prec_y \sigma(w)$;
\item $\sigma(v) \succ_x \sigma(w)$ and $\sigma(v) \succ_y \sigma(w)$;
\item $\sigma^{-1}(v) \prec_x \sigma^{-1}(w)$ and $\sigma^{-1}(v) \prec_y \sigma^{-1}(w)$;
\item $\sigma^{-1}(v) \succ_x \sigma^{-1}(w)$ and $\sigma^{-1}(v) \succ_y \sigma^{-1}(w)$.
\end{itemize}
The value of $\Delta \equiv \inv_{\prec_y}(\sigma) - \inv_{\prec_x}(\sigma)$
can be determined based on the following case analysis:
\begin{itemize}
    \item \textbf{Case (1)} $\{v,w\} \not\subseteq O_1$, $\{v,w\} \not\subseteq O_2$:
        Observe easily that $\inv_{\prec_x}(\sigma) = \inv_{\prec_y}(\sigma)$; i.e., $\Delta = 0$.
    \item \textbf{Case (2)} $\{v,w\} \subseteq O_1$, $\{v,w\} \not\subseteq O_2$:
        Since we have that $\{v,w\} \not\subseteq \{\sigma(v), \sigma(w)\}$,
        \begin{itemize}
            \item if $\sigma(v) \prec_x \sigma(w)$, then $ \sigma(v) \prec_y \sigma(w)$, and
            thus $\inv_{\prec_y}(\sigma) = \inv_{\prec_x}(\sigma) + 1$;
            \item if $\sigma(v) \succ_x \sigma(w)$, then $ \sigma(v) \succ_y \sigma(w)$, and
            thus $\inv_{\prec_y}(\sigma) = \inv_{\prec_x}(\sigma) - 1$.
        \end{itemize}
        In either case, $\Delta = 1$.
    \item \textbf{Case (3)} $\{v,w\} \not\subseteq O_1$, $\{v,w\} \subseteq O_2$:
        Since we have that $\{v,w\} \not\subseteq \{\sigma^{-1}(v), \sigma^{-1}(w)\}$,
        \begin{itemize}
            \item if $\sigma^{-1}(v) \prec_x \sigma^{-1}(w)$, then $\sigma^{-1}(v) \prec_y \sigma^{-1}(w)$, and
            thus $\inv_{\prec_y}(\sigma) = \inv_{\prec_x}(\sigma) + 1$;
            \item if $\sigma^{-1}(v) \succ_x \sigma^{-1}(w)$, then $\sigma^{-1}(v) \succ_y \sigma^{-1}(w)$, and
            thus $\inv_{\prec_y}(\sigma) = \inv_{\prec_x}(\sigma) - 1$.
        \end{itemize}
        In either case, $\Delta = 1$.
    \item \textbf{Case (4)} $\{v,w\} \subseteq O_1$, $\{v,w\} \subseteq O_2$:
    We prove that $\Delta = 0$ by exhaustion on the size of $\{v,w\} \cap \{\sigma(v), \sigma(w)\}$.
    \begin{itemize}
        \item \textbf{Case (4-1)} $|\{v,w\} \cap \{\sigma(v), \sigma(w)\}| = 2$:
            Since we have that $\{v,w\} = \{\sigma(v), \sigma(w)\} = \{\sigma^{-1}(v), \sigma^{-1}(w)\}$,
            \begin{itemize}
                \item if $\sigma(v) \prec_x \sigma(w)$, then $\sigma(v) \succ_y \sigma(w)$;
                \item if $\sigma(v) \succ_x \sigma(w)$, then $\sigma(v) \prec_y \sigma(w)$;
                \item if $\sigma^{-1}(v) \prec_x \sigma^{-1}(w)$, then $\sigma^{-1}(v) \succ_y \sigma^{-1}(w)$;
                \item if $\sigma^{-1}(v) \succ_x \sigma^{-1}(w)$, then $\sigma^{-1}(v) \prec_y \sigma^{-1}(w)$.
            \end{itemize}
            Thus, $\inv_{\prec_y}(\sigma) = \inv_{\prec_x}(\sigma)$.
        \item \textbf{Case (4-2)} $|\{v,w\} \cap \{\sigma(v), \sigma(w)\}| = 1$:
            \begin{itemize}
                \item if $\sigma(v) \prec_x \sigma(w)$, then $\sigma(v) \prec_y \sigma(w)$;
                \item if $\sigma(v) \succ_x \sigma(w)$, then $\sigma(v) \succ_y \sigma(w)$;
                \item if $\sigma^{-1}(v) \prec_x \sigma^{-1}(w)$, then $\sigma^{-1}(v) \prec_y \sigma^{-1}(w)$;
                \item if $\sigma^{-1}(v) \succ_x \sigma^{-1}(w)$, then $\sigma^{-1}(v) \succ_y \sigma^{-1}(w)$.
            \end{itemize}
            Thus, $\inv_{\prec_y}(\sigma) - \inv_{\prec_x}(\sigma)$ takes $-2$, $0$, or $2$ as a value.
        \item \textbf{Case (4-3)} $|\{v,w\} \cap \{\sigma(v), \sigma(w)\}| = 0$:
            Since we have that
            $\{v,w\} \cap \{\sigma(v), \sigma(w)\} = \emptyset$ and
            $\{v,w\} \cap \{\sigma^{-1}(v), \sigma^{-1}(w)\} = \emptyset$,
            \begin{itemize}
                \item if $\sigma(v) \prec_x \sigma(w)$, then $\sigma(v) \prec_y \sigma(w)$;
                \item if $\sigma(v) \succ_x \sigma(w)$, then $\sigma(v) \succ_y \sigma(w)$;
                \item if $\sigma^{-1}(v) \prec_x \sigma^{-1}(w)$, then $\sigma^{-1}(v) \prec_y \sigma^{-1}(w)$;
                \item if $\sigma^{-1}(v) \succ_x \sigma^{-1}(w)$, then $\sigma^{-1}(v) \succ_y \sigma^{-1}(w)$.
            \end{itemize}
            Thus, $\inv_{\prec_y}(\sigma) - \inv_{\prec_x}(\sigma)$ takes $-2$, $0$, or $2$ as a value.
    \end{itemize}
    Consequently, in either case, $\Delta = 0$.
\end{itemize}
We can determine which case $\sigma$ falls into \emph{without} looking into $\sigma$,
which completes the proof.
\end{proof}

We are now ready to prove \cref{lem:table-update}.
The proof is separated into the following three lemmas.
\begin{lemma}
\label{lem:introduce}
Let $t$ be an introduce node with one child $t'$ such that $X_t = X_{t'} + v$, and $s \in [0\isep n]$.
Given $dp_{t',s'}$ for all $s'$,
we can compute each entry of $ dp_{t,s} $ in $n^{\bigO(1)}$ time.
\end{lemma}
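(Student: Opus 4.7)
}
The plan is to derive a closed-form update expressing $dp_{t,s}[\state_A, \state_B]$ as $c_A\cdot c_B \cdot dp_{t',s}[\state'_A, \state'_B]$ for a single configuration $\state'_A, \state'_B$ at $t'$ and scalars $c_A, c_B$ that depend only on $\state_A, \state_B$ (and the entries of $\mat{A}, \mat{B}$). The crucial structural observation is that since $v$ is freshly introduced at $t$, vertex $v$ does not appear in any bag of the subtree rooted at $t'$; hence, by the connectivity axiom of tree decompositions, no edge of $\nnz(\mat{A})\cup\nnz(\mat{B})$ joins $v$ to any vertex of $V_{t'}\setminus X_{t'}$. Consequently, $A_{v,u}=A_{u,v}=B_{v,u}=B_{u,v}=0$ for all $u\in V_{t'}\setminus X_{t'}$, so any bijection $\sigma_A\in\Sigma(S,\state_A)$ that maps $v$ into $S$ (or an element of $S$ to $v$) contributes $0$ to $\Upsilon_{t,A}(S,\state_A)$.

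First I would handle $\state_A$ (the handling of $\state_B$ is identical) by a case analysis on the role of $v$: \textbf{(A)} $v\notin O_{A,1}\cup O_{A,2}$; \textbf{(B)} exactly one of $v\in O_{A,1}$ or $v\in O_{A,2}$ holds; \textbf{(C1)} $v\in O_{A,1}\cap O_{A,2}$ with $\tau_A(v)=v$; and \textbf{(C2)} $v\in O_{A,1}\cap O_{A,2}$ with $\tau_A(v)=w\neq v$ and $\tau_A^{-1}(v)=u'\neq v$. In any case where $v\in F_{A,1}\cup F_{A,2}$, or where $v\in O_{A,1}\setminus O_{A,2}$ (or vice versa) and $v\in F_{A,1}$, the contribution is forced to be zero by the zero-entry observation, and we set $dp_{t,s}[\state_A,\state_B]=0$. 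Otherwise, each bijection $\sigma_A\in\Sigma_t(S,\state_A)$ restricts canonically to a bijection $\sigma'_A$ at $t'$: in Case~(A) $\sigma'_A=\sigma_A$ with $\state'_A=\state_A$ and $c_A=1$; in Case~(B), letting $u=\tau_A(v)$ (or $u'=\tau_A^{-1}(v)$), one removes $v$ and $u$ from the domain/range to obtain $\state'_A$ with $c_A=A_{v,u}$ (or $A_{u',v}$); in Case~(C1), $c_A=A_{v,v}$ and we strike $v$ from both sides of $\tau_A$; in Case~(C2), we strike \emph{both} pairs $v,u'$ from the domain and $v,w$ from the range, obtaining $\state'_A$ with $c_A=A_{v,w}\cdot A_{u',v}$. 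Each map is a bijection between $\Sigma_t(S,\state_A)$ and $\Sigma_{t'}(S,\state'_A)$, so the summed $\Upsilon$ values relate by the advertised scalar.

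The remaining ingredient is the parity $\nu'_A$ inside $\state'_A$: since $\prec_t$ and $\prec_{t'}$ agree on $V_{t'}$ and $v$ is the $\prec_t$-maximum of $V_t$, we can compute $\inv_{\prec_t}(\sigma_A)-\inv_{\prec_{t'}}(\sigma'_A)$ by counting only those inversions involving $v$ (and, in Case~(C2), $u'$). A direct count—leveraging that $S\prec_t X_t$ so elements of $S$ never contribute—shows this difference depends solely on $\state_A$ and the position of $w$ (resp.~$u'$) in $\prec_t$ restricted to $O_{A,2}$ (resp.~$O_{A,1}$); thus $\nu'_A$ is determined from $\nu_A$ by an explicit formula computable in polynomial time. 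A symmetric analysis handles $\state_B$ with factor $c_B$ and configuration $\state'_B$, and the update $dp_{t,s}[\state_A,\state_B]=c_A c_B\cdot dp_{t',s}[\state'_A,\state'_B]$ can be evaluated by a constant number of arithmetic operations on numbers of size $|\calI|^{\bigO(1)}$.

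The main obstacle will be Case~(C2): one must check that replacing $v\mapsto w$ and $u'\mapsto v$ by excising both pairs gives a true bijection between the sets of consistent bijections at $t$ and $t'$, that the induced $\state'_A$ still satisfies $|O'_{A,1}|=|O'_{A,2}|$ and the $F, \tau$ constraints, and that the parity correction is handled correctly. Once Case~(C2) is pinned down, the other cases are straightforward specializations, and the $n^{\bigO(1)}$ runtime follows from the fact that each case requires only $\bigO(w)$ set operations and one table lookup.
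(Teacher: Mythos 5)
Your proposal follows essentially the same route as the paper's proof: the same separator-property observation forcing $dp_{t,s}=0$ when $v$ lies in a forgotten set $F$, the same case split on whether $v$ belongs to $O_1$, $O_2$, both, or neither (with the $\tau(v)=v$ versus $\tau(v)\neq v$ subcases), the same multiplicative correction factors $1$, $A_{v,\tau(v)}$, $A_{\tau^{-1}(v),v}$, $A_{v,v}$, or $A_{v,\tau(v)}A_{\tau^{-1}(v),v}$, and the same configuration-only parity correction exploiting that $v$ is $\prec_t$-maximal. The resulting single-lookup update $dp_{t,s}[\state_A,\state_B]=c_Ac_B\cdot dp_{t',s}[\state'_A,\state'_B]$ matches the paper's formula via the mapping $\intro$, so the plan is correct.
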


\begin{lemma}
\label{lem:forget}
Let $t$ be a forget node with one child $t'$ such that $X_t = X_{t'} - v$, and $s \in [0\isep n]$.
Given $dp_{t',s'}$ for all $s'$,
we can compute each entry of $dp_{t,s}$ in $w^{\bigO(w)} n^{\bigO(1)}$ time.
\end{lemma}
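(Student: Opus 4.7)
The plan is to reduce each entry $dp_{t,s}[\state_A, \state_B]$ to a bounded number of lookups in $dp_{t',\cdot}$ by case analysis on whether the forgotten vertex $v$ belongs to $S$. Since $V_t = V_{t'}$ and $V_t \setminus X_t = (V_{t'} \setminus X_{t'}) \uplus \{v\}$, every $S \in \binom{V_t \setminus X_t}{s}$ either has $v \notin S$ (so $S \subseteq V_{t'} \setminus X_{t'}$ with $|S| = s$) or has $v \in S$ (so $S - v \subseteq V_{t'} \setminus X_{t'}$ with $|S-v| = s-1$). I will treat the two cases separately and combine them at the end.

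For the first case, any configuration $\state_A = (O_{A1}, O_{A2}, F_{A1}, F_{A2}, \tau_A, \nu_A)$ for $t$ is simultaneously a configuration for $t'$ because $O_{A1}, O_{A2} \subseteq X_t \subseteq X_{t'}$. A bijection $\sigma_A \in \Sigma(S, \state_A)$ has $v$ neither in its domain nor its codomain, so its weight $\mat{A}(\sigma_A)$ is unchanged and its inversion count under $\prec_t$ equals the one under $\prec_{t'}$ (both orderings agree on $S \uplus O_{A1} \uplus O_{A2}$, which excludes $v$). The same reasoning applies to $\sigma_B$, so the first case contributes exactly $dp_{t', s}[\state_A, \state_B]$.

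For the second case, write $S' = S - v$ and view each $\sigma_A$ as a bijection at $t'$ over $S'$; this forces $v$ into both $O'_{A1}$ and $O'_{A2}$. The possibilities partition into five sub-cases according to $(\sigma_A(v), \sigma_A^{-1}(v))$: both in $S - v$ (yielding $v \in F'_{A1} \cap F'_{A2}$); $\sigma_A(v) \in S - v$ while $\sigma_A^{-1}(v)$ is some $u \in F_{A1}$; $\sigma_A(v) = \sigma_A^{-1}(v) = v$; $\sigma_A(v)$ is some $u' \in F_{A2}$ while $\sigma_A^{-1}(v) \in S - v$; or $\sigma_A(v) = u' \in F_{A2}$ and $\sigma_A^{-1}(v) = u \in F_{A1}$. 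Each sub-case (with the choices of $u, u'$ where applicable) specifies a canonical translation of $\state_A$ into a configuration $\state'_A$ at $t'$: the pair $(O'_{A1}, O'_{A2})$ becomes $(O_{A1} + v, O_{A2} + v)$, and $F'_{A1}, F'_{A2}, \tau'_A$ are updated by inserting or removing $v, u, u'$ as dictated by the sub-case; crucially $\mat{A}(\sigma_A) = \mat{A}(\sigma'_A)$ because they denote the same set-theoretic function. Running the same enumeration independently for $\sigma_B$ produces $O(w^4)$ compatible pairs $(\state'_A, \state'_B)$ at $t'$, each with $|S'| = s - 1$.

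The principal obstacle is the inversion-parity shift. Since $\prec_t$ and $\prec_{t'}$ agree on $V_t - v$ but place $v$ differently -- immediately before $X_t$ in $\prec_t$, somewhere inside $X_{t'}$ in $\prec_{t'}$ -- I must show that $\nu_A \equiv \nu'_A + \Delta_A \pmod 2$ for some $\Delta_A$ depending only on $\state'_A$ and the $\prec_{t'}$-position of $v$, and similarly for $B$. This is an extension of \cref{lem:inv-Xt}: starting from $\prec_{t'}$, walk $v$ leftward through $X_{t'}$ by adjacent transpositions until it reaches the slot immediately before $X_t$, and by the case analysis already used in the proof of \cref{lem:inv-Xt} each swap contributes a deterministic increment to $\Delta_A$ computable from $\state'_A$ alone. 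Granting this, we obtain
\begin{align*}
dp_{t,s}[\state_A, \state_B] = dp_{t', s}[\state_A, \state_B] + \sum_{\text{sub-cases},\,u,\,u'} (-1)^{\Delta_A + \Delta_B}\, dp_{t', s-1}[\state'_A, \state'_B],
\end{align*}
a sum of $w^{\bigO(1)}$ table lookups, which yields running time $w^{\bigO(w)} n^{\bigO(1)}$ per entry of $dp_{t,s}$.
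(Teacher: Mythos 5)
Your overall strategy is the same as the paper's: split on whether $v\in S$, note that the $v\notin S$ part is a verbatim table lookup, and for $v\in S$ enumerate the behaviour of $\sigma(v)$ and $\sigma^{-1}(v)$ (your five sub-cases are exactly the paper's Cases (2-1)--(2-5) for the map $\forget$), with the inversion-parity shift handled by walking $v$ through $X_{t'}$ by adjacent transpositions. That last step is in fact a direct application of \cref{lem:inv-Xt} (both $\prec_{t'}$ and $\prec_t$ place $V_{t'}\setminus X_{t'}$ before $X_{t'}$ and agree there), not an extension of it; the structural pieces, including the observation that $\mat{A}(\sigma_A)=\mat{A}(\sigma_A')$ and the $w^{\bigO(1)}$ count of translated configurations, are all right.

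The final recurrence, however, is wrong: the factor $(-1)^{\Delta_A+\Delta_B}$ must not be there. In this dynamic program the parity $\nu$ is a \emph{key}, not a sign: $\Upsilon_{t,A}(S,\state_A)=\sum_{\sigma\in\Sigma(S,\state_A)}\mat{A}(\sigma)$ is an \emph{unsigned} sum over the bijections whose inversion number has the prescribed parity, and the sign $(-1)^{\nu_A+\nu_B}$ is applied exactly once, at the root. The shift $\nu_A\equiv\nu'_A+\Delta_A$ is therefore accounted for by redirecting the lookup to the configuration whose parity component is $\nu_A-\Delta_A$ (this is what the paper's $\forget$ map does by sending $\nu'$ to $\nu'+\Delta$), after which each term $dp_{t',s-1}\left[{\state_A'\atop\state_B'}\right]$ enters with coefficient $+1$. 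If you adjust the parity key \emph{and} multiply by $(-1)^{\Delta_A+\Delta_B}$, you negate every term with $\Delta_A+\Delta_B$ odd (which does occur, since $\Delta_A$ and $\Delta_B$ depend on the separate configurations $\state_A'$ and $\state_B'$); if instead you keep the key $\nu_A$ unadjusted, the lookup aggregates the wrong parity class of bijections, which no multiplicative sign can repair. Either reading makes the recurrence compute an incorrect value; dropping the sign and using the shifted parity key fixes it.
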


\begin{lemma}
\label{lem:join}
Let $t$ be a join node with two children $t'$ and $t''$ such that $X_t = X_{t'} = X_{t''}$, and $s \in [0\isep n]$.
Given $dp_{t',s'}$ and $dp_{t'',s''}$ for all
$s'$ and $s''$, respectively,
we can compute each entry of $dp_{t,s}$ in $w^{\bigO(w)} n^{\bigO(1)}$ time.
\end{lemma}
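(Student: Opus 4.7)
The plan is to establish a recurrence that writes $dp_{t,s}[\state_A, \state_B]$ as a sum of products $dp_{t',s'}[\state'_A, \state'_B] \cdot dp_{t'',s''}[\state''_A, \state''_B]$ over compatible decompositions. The key structural observation is that, for $S \subseteq V_t \setminus X_t$ split uniquely as $S = S' \uplus S''$ with $S' \subseteq V_{t'} \setminus X_t$ and $S'' \subseteq V_{t''} \setminus X_t$, any bijection $\sigma_A$ with nonzero $\mat{A}(\sigma_A)$ must send $S'$ into $V_{t'}$ and $S''$ into $V_{t''}$ by the separator property (edges of $\nnz(\mat{A})$ between $V_{t'} \setminus X_t$ and $V_{t''} \setminus X_t$ are absent). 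Hence $\sigma_A$ splits into $\sigma'_A$ on $S' \uplus O'_{A1}$ and $\sigma''_A$ on $S'' \uplus O''_{A1}$. The only non-canonical choice is where to place each \emph{pending} pair $(v, \tau_A(v))$ with both endpoints in $X_t$; I would adopt the convention that every pending pair is assigned to $\sigma'_A$. This yields a unique decomposition per $\sigma_A$ and forces $\tau'_A = \tau_A$ and $\tau''_A : \emptyset \bij \emptyset$, with $\mat{A}(\sigma_A) = \mat{A}(\sigma'_A) \cdot \mat{A}(\sigma''_A)$ because the two child domains partition $S \uplus O_{A1}$; the analogous decomposition is applied on the $B$-side.

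Given $\state_A$, the compatible $(\state'_A, \state''_A)$ are parameterized by partitions $F_{A1} = F'_{A1} \uplus F''_{A1}$ and $F_{A2} = F'_{A2} \uplus F''_{A2}$ with $|F'_{A1}| = |F'_{A2}|$, together with a choice of parity $\nu'_A \in \{0,1\}$. Explicitly, $\state'_A = (F'_{A1} \cup (O_{A1} \setminus F_{A1}),\ F'_{A2} \cup (O_{A2} \setminus F_{A2}),\ F'_{A1},\ F'_{A2},\ \tau_A,\ \nu'_A)$ and $\state''_A = (F''_{A1},\ F''_{A2},\ F''_{A1},\ F''_{A2},\ \emptyset \bij \emptyset,\ \nu''_A)$, and similarly on the $B$-side.

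The main technical obstacle is the sign accounting. Since $\prec_t$ restricted to $V_{t'}$ coincides with $\prec_{t'}$ (and likewise for $V_{t''}$), one has $\inv_{\prec_t}(\sigma_A) = \inv_{\prec_{t'}}(\sigma'_A) + \inv_{\prec_{t''}}(\sigma''_A) + \Delta_A$, where $\Delta_A$ counts cross-inversions, i.e., pairs $(i,j)$ with $i$ in the domain of $\sigma'_A$ and $j$ in the domain of $\sigma''_A$ whose relative $\prec_t$-order is reversed by $\sigma_A$. I would argue by case analysis on the regions of $i,j$ (drawn from $S', F'_{A1}, O_{A1} \setminus F_{A1}$ on the $\sigma'_A$-side and $S'', F''_{A1}$ on the $\sigma''_A$-side) and of their images that $\Delta_A \bmod 2$ is fully determined by $(\state'_A, \state''_A)$: in every case, the number of inversions of the given type is either a product of cardinalities such as $|F'_{A2}|\,(|S''| - |F''_{A2}|)$, or a count of pairs $(c, d) \in F'_{A2} \times F''_{A2}$ with $c \succ_t d$ within $X_t$, or (for the pending region) a count determined solely by $\tau_A$ and the $X_t$-ordering. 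The block structure $S' \prec_t S'' \prec_t X_t$ of $\prec_t$ and the fact that pending images are pinned by $\tau_A$ make each case tractable, with \cref{lem:inv-Xt} available to handle any $X_t$-internal reordering. Hence $\Delta_A$ (and analogously $\Delta_B$) is a polynomial-time computable function of the child configurations.

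Putting everything together, the recurrence reads
\[
dp_{t,s}[\state_A, \state_B] \;=\; \sum_{s'+s''=s}\;\; \sum_{\substack{(\state'_A,\state''_A),\,(\state'_B,\state''_B) \\ \text{compatible with }(\state_A,\state_B)}} dp_{t',s'}[\state'_A, \state'_B] \cdot dp_{t'',s''}[\state''_A, \state''_B],
\]
where compatibility enforces the $F$-partitions, the pending conditions $\tau'_A = \tau_A,\ \tau''_A : \emptyset \bij \emptyset$ (and analogously for $B$), and the parity conditions $\nu_A \equiv \nu'_A + \nu''_A + \Delta_A$ and $\nu_B \equiv \nu'_B + \nu''_B + \Delta_B \pmod{2}$. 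By Vandermonde's identity, the number of $F$-partitions on the $A$-side is $\binom{|F_{A1}| + |F_{A2}|}{|F_{A1}|} \leq 4^w$; the free parity contributes a factor of $2$; the same bound holds on the $B$-side; summing over $s'$ adds an $O(n)$ factor; and computing each $\Delta$ and looking up the child entries take $n^{O(1)}$ time. Thus each entry of $dp_{t,s}$ is computed in $w^{O(w)} n^{O(1)}$ time, as claimed.
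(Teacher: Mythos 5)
Your proposal is correct, and it reaches the same recurrence shape as the paper, but via a genuinely different decomposition of the join. The paper splits $\sigma$ into the two \emph{restrictions} $\sigma|_{V_{t'}}$ and $\sigma|_{V_{t''}}$, which overlap on the pending part $\tau$; this forces the compatibility condition $\tau'=\tau''$ and a correction factor $\frac{1}{\mat{A}(\tau)\cdot\mat{B}(\tau)}$ in the recurrence to undo the double-counting of the $X_t\to X_t$ entries. You instead \emph{partition} the domain, routing every pending pair to the $t'$-side child and leaving $\tau'':\emptyset\bij\emptyset$, so that $\mat{A}(\sigma_A)=\mat{A}(\sigma'_A)\cdot\mat{A}(\sigma''_A)$ holds exactly with no division; the price is an asymmetric family of child configurations (the $t''$-side ones have $O''_1=F''_1$, $O''_2=F''_2$), but those are legitimate keys of $dp_{t'',s''}$, so the table lookups go through. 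Both routes require the same cross-inversion bookkeeping, and you correctly identify that the cross terms are determined by the configurations together with $|S'|,|S''|$ (products of cardinalities like $|S''|\cdot|F'_{A1}|$ plus counts over $F'\times F''$ under $\prec_t$), mirroring the paper's \cref{lem:join-inv}. One small imprecision: $\prec_t$ restricted to $V_{t''}$ need \emph{not} coincide with $\prec_{t''}$, since \cref{def:order} only guarantees agreement on $V_{t''}\setminus X_{t''}$ (the two orderings may disagree on $X_t$); you do recover from this by invoking \cref{lem:inv-Xt} for the $X_t$-internal reordering, which is exactly the adjustment $\Delta''$ the paper computes. Your counting of compatible child configurations ($2^{\bigO(w)}$ partitions of $F_{A1},F_{A2}$ times a free parity, times the same on the $B$-side, times $\bigO(n)$ choices of $s'$) gives the claimed $w^{\bigO(w)}n^{\bigO(1)}$ bound.
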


\paragraph{Proof of \cref{lem:introduce}.}

Consider a bijection $\sigma \in \Sigma(S,\state)$ for
$S \subseteq V_t \setminus X_t$ and
$\state = (O_1, O_2, F_1, F_2, \tau, \nu)$ for $t$,
Then, a restriction $\sigma|_{V_{t'}}$ may belong to $\Sigma(S',\state')$ for some $S' \subseteq V_{t'}\setminus X_{t'}$ and $\state'$ for $t'$.
We will show that such $S'$ and $\state'$ can be determined independent of $\sigma$.

Observe first that if $v \in F_1$ or $v \in F_2$,
then we can declare that $\Upsilon_{t,A}(S,\state) = 0$:
if this is the case,
(1) any bijection $\sigma$ in $\Sigma(S,\state)$ satisfies that $\sigma(v) \in V_t \setminus X_t$ or $\sigma^{-1}(v) \in V_t \setminus X_t$, while
(2) $\mat{A}_{\{v\}, V_t \setminus X_t}$ and $\mat{A}_{V_t \setminus X_t, \{v\}}$ must be zero matrices by the separator property of a tree decomposition.
Hereafter, we can safely assume that $v \not\in F_1$ and $v \not\in F_2$.

We first discuss the relation between $\inv_{\prec_t}(\sigma)$ and $\inv_{\prec_{t'}}(\sigma|_{V_{t'}})$.
\begin{lemma}
\label{lem:intro-inv}
Let $\state = (O_1, O_2, F_1, F_2, \tau, \nu)$ be a configuration for $t$.
Then, there exists a 0-1 integer $\Delta = \Delta(\state)$ such that
$\nu \equiv \inv_{\prec_{t'}}(\sigma|_{V_{t'}}) + \Delta$
for any $\sigma \in \Sigma(S,\state)$ for $S \subseteq V_t \setminus X_t$.
Moreover, we can compute the value of $\Delta$ in polynomial time.
\end{lemma}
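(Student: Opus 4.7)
The plan is to compute the change in inversion parity as we pass from $\sigma$ to its restriction $\sigma|_{V_{t'}}$, and show that this change is determined by $\state$ alone. By the recursive construction of $\prec_t$ at an introduce node (\cref{def:order}), the new element $v$ is the unique $\prec_t$-maximum of $V_t$, and $\prec_t$ agrees with $\prec_{t'}$ on $V_{t'} = V_t - v$. Thus the inversions counted by $\inv_{\prec_{t'}}(\sigma|_{V_{t'}})$ are exactly those inversions of $\sigma$ whose two endpoints both lie in $V_{t'}$, and it suffices to show that the discrepancy $D(\sigma) \triangleq \inv_{\prec_t}(\sigma) - \inv_{\prec_{t'}}(\sigma|_{V_{t'}})$ has parity depending only on $\state$.

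Since the preceding discussion rules out $v \in F_1 \cup F_2$ (via the separator property of the tree decomposition, any such bijection contributes $0$ to $\Upsilon_{t,A}$), I may assume $v \notin F_1 \cup F_2$ and split into cases on whether $v \in O_1$ and $v \in O_2$. If $v \notin O_1 \cup O_2$, then $\sigma$ already operates within $V_{t'}$ and $D(\sigma) = 0$, giving $\Delta(\state) = 0$. If $v \in O_1 \setminus F_1$, then $\sigma(v) = \tau(v) \in O_2 \setminus F_2$ is determined by $\state$; because $v$ is $\prec_t$-maximum, the inversions of $\sigma$ with $v$ in the domain are exactly the pairs $(u,v)$ with $u \in (S \uplus O_1)-v$ and $\sigma(u) \succ_t \tau(v)$. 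I would partition these $u$ according to whether they belong to $(O_1 \setminus F_1)-v$, to $F_1$, or to $S$.

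The first sub-count is $|\{u \in (O_1 \setminus F_1) - v \mid \tau(u) \succ_t \tau(v)\}|$, visibly a function of $\state$; the second vanishes because $u \in F_1$ forces $\sigma(u) \in S \prec_t X_t \ni \tau(v)$. The crucial observation is the third sub-count: $u \in S$ with $\sigma(u) \succ_t \tau(v) \in X_t$ forces $\sigma(u) \in X_t$, which by definition of $F_2$ places $\sigma(u)$ in $F_2$; the change of variable $w = \sigma(u)$ then rewrites this count as $|\{w \in F_2 \mid w \succ_t \tau(v)\}|$, again a function of $\state$. A symmetric analysis handles $v \in O_2 \setminus O_1$ (using $\tau^{-1}(v)$ in place of $\tau(v)$ and tracking $\sigma^{-1}$ through $F_1$ rather than $F_2$), and the case $v \in O_1 \cap O_2$ combines the two contributions.

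Summing these state-determined sub-counts and reducing modulo $2$ yields the $0$-$1$ integer $\Delta(\state)$; computing it in polynomial time is immediate since each sub-count ranges over at most $|X_t| \leq w+1$ elements. The main obstacle is that the raw count $D(\sigma)$ depends \emph{a priori} on the values of $\sigma$ at elements of $S$, which are not explicitly recorded in $\state$; this is precisely what the change-of-variable argument resolves, by exploiting that $\state$ records $F_2 = \sigma(S) \cap O_2$ and that bijectivity of $\sigma$ lets us sum over $F_2$ rather than over its preimage in $S$.
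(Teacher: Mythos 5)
Your argument is essentially the paper's: you reduce $\Delta$ to the inversions of $\sigma$ that involve the introduced vertex $v$ (which is $\prec_t$-maximal), and the change of variable $w=\sigma(u)\in F_2$ for $u\in S$ is exactly how the paper's state-only formula $|\{w\in O_2 \mid w\succ_t \tau(v)\}|$ arises (your first and third sub-counts recombine to it, since $\tau$ is a bijection of $O_1\setminus F_1$ onto $O_2\setminus F_2$). The one place you are too quick is the case $v\in O_1\cap O_2$ (with $v\notin F_1\cup F_2$): "combining the two contributions" by simple addition double-counts the single pair $(\tau^{-1}(v),v)$, which lies both in the set of inversions with $v$ in the domain and in the set with $v$ in the codomain whenever $\tau(v)\neq v$. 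The paper therefore splits this case further: if $\tau(v)=v$ both contributions vanish and $\Delta=0$; if $\tau(v)\neq v$ one must add the correction term $+1$ (equivalently, subtract the overlap), giving $\Delta \equiv |\{w\in O_2\mid w\succ_t\tau(v)\}| + |\{w\in O_1\mid w\succ_t\tau^{-1}(v)\}| + 1$. Since a parity error here would corrupt the entire dynamic program, this correction is not optional, but it is the only missing detail; everything else in your proposal matches the paper's proof.
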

\begin{proof}
Let $\sigma \in \Sigma(S, \state)$ for $S \subseteq V_t \setminus X_t$.
By \cref{def:order}, we have that $\inv_{\prec_{t'}}(\sigma|_{V_{t'}}) = \inv_{\prec_t}(\sigma|_{V_{t'}})$.
Simple calculation yields that
\begin{align*}
    \nu \equiv \inv_{\prec_t}(\sigma) & = |\{ (u,w) \mid u \prec_t w, \sigma(u) \succ_t \sigma(w) \}| \\
    & = \underbrace{|\{ (u,w) \mid u \prec_t w, \sigma(u) \succ_t \sigma(w), \{u,w,\sigma(u),\sigma(w)\} \subseteq V_{t'} \}|}_{\inv_{\prec_t}(\sigma|_{V_{t'}})} \\
    & + |\{ (u,w) \mid u \prec_t w, \sigma(u) \succ_t \sigma(w), v \in \{u,w,\sigma(u),\sigma(w)\} \}| \\
    & \equiv \inv_{\prec_{t'}}(\sigma|_{V_{t'}}) + \Delta.
\end{align*}
Observing that $V_{t'} \prec_t \{v\}$,
we can determine the value of $\Delta$ based on the following case analysis:
\begin{itemize}
\item \textbf{Case (1)} $v \not\in O_1$, $v \not\in O_2$:
Since $\sigma = \sigma|_{V_{t'}}$, we have that $\Delta \equiv 0$.
\item \textbf{Case (2)} $v \in O_1\setminus F_1$, $v \not\in O_2$:
Since $\sigma(v) = \tau(v) \in X_t$, we have that
\begin{align*}
    \Delta \equiv |\{ (u,v) \mid \sigma(u) \succ_t \sigma(v) \}| = |\{ w \in O_2 \mid w \succ_t \tau(v) \}|.
\end{align*}
\item \textbf{Case (3)} $v \not\in O_1$, $v \in O_2 \setminus F_2$:
Since $\sigma^{-1}(v) = \tau^{-1}(v) \in X_t$, we have that
\begin{align*}
    \Delta \equiv |\{ (\sigma^{-1}(v), w) \mid \sigma^{-1}(v) \prec_t w \}| = |\{ w \in O_1 \mid w \succ_t \tau^{-1}(v) \}|.
\end{align*}
\item \textbf{Case (4)} $v \in O_1 \setminus F_1$, $v \in O_2 \setminus F_2$, $\sigma(v)=\tau(v)=v$:
Since $u \prec_t v$ and $\tau(u) \prec_t \tau(v)$ for all $u \in V_{t'}$, we have that $\Delta = 0$.
\item \textbf{Case (5)} $v \in O_1\setminus F_1$, $v \in O_2 \setminus F_2$, $\sigma(v)=\tau(v)\neq v$:
We have that
\begin{align*}
    \Delta & = |\{ (u,v) \mid \sigma(u) \succ_t \sigma(v) \} \uplus \{ (\sigma^{-1}(v), w) \mid \sigma^{-1}(v) \prec_t w \} \uplus \{ (\sigma^{-1}(v), v) \}| \\
    & = |\{ w \in O_2 \mid w \succ_t \tau(v) \}| + |\{ w \in O_1 \mid w \succ_t \tau^{-1}(v) \}| + 1.
\end{align*}
\end{itemize}
We can determine which case $\state$ falls into and calculate $\Delta$ in polynomial time, as desired.
\end{proof}

We then define a mapping $\intro$ from a configuration for $t$ to a configuration for $t'$.
\begin{definition}
\label{def:intro-map}
Let $\state = (O_1,O_2,F_1,F_2,\tau,\nu)$ be a configuration for $t$ such that
$v \not \in F_1$ and $v \not \in F_2$.
We define $\intro(\state)$ as follows:
\begin{oframed}
\begin{center}
    \textbf{Definition of $\intro(\state)$.}
\end{center}
Compute $\Delta$ according to \cref{lem:intro-inv}.
\begin{itemize}
\item \textbf{Case (1)} $v \not\in O_1$, $v \not\in O_2$:
We define
\begin{align*}
    \intro(\state) \triangleq (O_1, O_2, F_1, F_2, \tau|_{X_{t'}}, \nu-\Delta).
\end{align*}
\item \textbf{Case (2)} $v \in O_1\setminus F_1$, $v \not\in O_2$:
We define
\begin{align*}
    \intro(\state) \triangleq (O_1-v, O_2-\tau(v), F_1, F_2, \tau|_{X_{t'}}, \nu-\Delta).
\end{align*}
\item \textbf{Case (3)} $v \not\in O_1$, $v \in O_2 \setminus F_2$:
We define
\begin{align*}
    \intro(\state) \triangleq (O_1-\tau^{-1}(v), O_1-v, F_1, F_2, \tau|_{X_{t'}}, \nu-\Delta).
\end{align*}
\item \textbf{Case (4)} $v \in O_1 \setminus F_1$, $v \in O_2 \setminus F_2$, $\tau(v)=v$:
We define
\begin{align*}
    \intro(\state) \triangleq (O_1-v, O_2-v, F_1, F_2, \tau|_{X_{t'}}, \nu-\Delta).
\end{align*}
\item \textbf{Case (5)} $v \in O_1\setminus F_1$, $v \in O_2 \setminus F_2$, $\tau(v)\neq v$:
We define
\begin{align*}
    \intro(\state) \triangleq (O_1-v-\tau^{-1}(v), O_2-\tau(v)-v, F_1, F_2, \tau|_{X_{t'}}, \nu-\Delta).
\end{align*}
\end{itemize}
\end{oframed}
\end{definition}
We claim the following:
\begin{claim}
\label{lem:intro-completeness}
For any $\state$ for $t$, there exists unique $\state'$ for $t'$ such that
for any $\sigma \in \Sigma(S,\state)$ for $S \subseteq V_t \setminus X_t$,
it holds that $\sigma|_{V_{t'}} \in \Sigma(S,\state')$.
Moreover, $\state' = \intro(\state)$.
\end{claim}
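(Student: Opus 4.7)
The plan is to verify the existence half by checking, case-by-case, that the configuration $\intro(\state)$ prescribed in \cref{def:intro-map} records exactly the combinatorial data of $\sigma|_{V_{t'}}$ for any $\sigma \in \Sigma(S,\state)$, and then to obtain uniqueness for free from \cref{lem:bijection}. The first thing I would do is pin down the ambient sets: since $v$ is freshly introduced at $t$, the connectivity condition of a tree decomposition implies no bag in the subtree rooted at $t'$ contains $v$, hence $V_t = V_{t'} + v$ and $V_t \setminus X_t = V_{t'} \setminus X_{t'}$. Thus the same $S$ plays its role for both nodes, and the only way restriction can modify the configuration is by possibly dropping $v$ from the domain and/or $\sigma(v)$ from the image.

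Next I would fix $\sigma \in \Sigma(S,\state)$ with $\state = (O_1,O_2,F_1,F_2,\tau,\nu)$, recall that the case $v \in F_1 \cup F_2$ is excluded (it forces $\Upsilon_{t,A}(S,\state)=0$ via the separator property, as noted just before \cref{def:intro-map}), and walk through the five remaining cases. In Case~(1), $\sigma|_{V_{t'}} = \sigma$, so only the $\nu$-component shifts. In Cases~(2) and~(3), exactly one of $\sigma(v)$ or $\sigma^{-1}(v)$ lies in $X_{t'}$ and is given by $\tau$, so restricting shrinks $O_1$ and $O_2$ by one element each as recorded. In Case~(4), $v$ is a $\tau$-fixed point and is simply removed from both sides. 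In Case~(5), both $v$ and $\tau^{-1}(v)$ disappear from the domain-side, and $v$ and $\tau(v)$ disappear from the image-side. In every case, a direct check confirms that the $F$-subsets are preserved (since $F_1 = \sigma^{-1}(S)\cap O_1 \subseteq X_{t'}$ already, and similarly for $F_2$), that the recorded $\tau|_{X_{t'}}$ equals $\sigma|_{O_1' \setminus F_1'}$, and that $O_1', O_2' \subseteq X_{t'}$ with $|O_1'|=|O_2'|$, so $\intro(\state)$ is indeed a valid configuration for $t'$.

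The parity component is handled separately by invoking \cref{lem:intro-inv}: it supplies a constant $\Delta = \Delta(\state)$, independent of $\sigma$, with $\nu \equiv \inv_{\prec_{t'}}(\sigma|_{V_{t'}}) + \Delta \pmod 2$, and $\intro(\state)$ stores $\nu - \Delta$ in its last coordinate, matching the requirement $\nu' \equiv \inv_{\prec_{t'}}(\sigma|_{V_{t'}})$. For uniqueness, I would simply note that \cref{lem:bijection} shows $\sigma|_{V_{t'}}$ is consistent with \emph{one and only one} configuration for $t'$; since we have just exhibited $\intro(\state)$ as such a configuration, it must be that unique one.

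The main (indeed essentially only) obstacle is the bookkeeping in Case~(5), where two elements $v$ and $\tau^{-1}(v)$ must be deleted simultaneously from $O_1$ and two elements $v, \tau(v)$ from $O_2$, and one has to confirm that the residual map $\tau|_{X_{t'}}$ really is a bijection from $O_1' \setminus F_1'$ to $O_2' \setminus F_2'$; this follows from $\tau$ being a bijection $O_1 \setminus F_1 \bij O_2 \setminus F_2$, together with $F_1, F_2 \subseteq X_{t'}$ and $v \notin F_1 \cup F_2$, so the two removals commute cleanly with restriction. Every other case reduces to straightforward set-theoretic verification.
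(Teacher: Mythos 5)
Your proof is correct and follows exactly the route the paper intends: the paper's own proof is the one-liner ``immediate from \cref{def:intro-map,lem:intro-inv},'' and you have simply filled in the case analysis, the parity bookkeeping via \cref{lem:intro-inv}, and the uniqueness via \cref{lem:bijection}. The only point worth noting is that the residual-bijection issue you flag in Case~(5) is a notational quirk of the paper's $\sigma|_X$ convention (restriction of the domain only), and your resolution of it matches the intended reading.
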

\begin{proof}
The proof is immediate from \cref{def:intro-map,lem:intro-inv}.
\end{proof}

For any bijection $\sigma \in \Sigma(S,\state)$ for $S \subseteq V_t\setminus X_t$ and
$\state = (O_1,O_2,F_1,F_2,\tau,\nu)$ for $t$,
it holds that 
\begin{align*}
    \mat{A}(\sigma) = \mat{A}(\sigma|_{V_{t'}}) \cdot
    \begin{cases}
    1 & \text{Case } (1) \\
    A_{v,\tau(v)} & \text{Case }(2) \\
    A_{\tau^{-1}(v),v} & \text{Case }(3) \\
    A_{v,v} & \text{Case }(4) \\
    A_{v,\tau(v)} \cdot A_{\tau^{-1}(v),v} & \text{Case }(5)
    \end{cases}
    .
\end{align*}
By \cref{lem:intro-completeness}, we have that
\begin{align*}
    \Upsilon_{t,A}(S,\state)
    & = \sum_{\sigma \in \Sigma(S,\state)} \mat{A}(\sigma)
    = \sum_{\sigma' \in \Sigma(S,\intro(\state))} \mat{A}(\sigma') \cdot 
    \begin{cases}
    1 & \text{Case } (1) \\
    A_{v,\tau(v)} & \text{Case }(2) \\
    A_{\tau^{-1}(v),v} & \text{Case }(3) \\
    A_{v,v} & \text{Case }(4) \\
    A_{v,\tau(v)} \cdot A_{\tau^{-1}(v),v} & \text{Case }(5)
    \end{cases} \\
    & = \Upsilon_{t',A}(S,\intro(\state)) \cdot
    \begin{cases}
    1 & \text{Case }(1) \\
    A_{v,\tau(v)} & \text{Case }(2) \\
    A_{\tau^{-1}(v),v} & \text{Case }(3) \\
    A_{v,v} & \text{Case }(4) \\
    A_{v,\tau(v)} \cdot A_{\tau^{-1}(v),v} & \text{Case }(5)
    \end{cases}.
\end{align*}
We have an analogue with regard to $\Upsilon_{t,B}(S,\state)$.
Observing that $V_t \setminus X_t = V_{t'} \setminus X_{t'}$,
we finally obtain that
for $\state_A = (O_{A1},O_{A2},F_{A1},F_{A2},\tau_A,\nu_A)$ and
$\state_B = (O_{B1},O_{B2},F_{B1},F_{B2},\tau_B,\nu_B)$ for $t$,
\begin{align*}
    dp_{t,s}\left[{\state_A \atop \state_B}\right]
    & = \sum_{S \in {V_t\setminus X_t \choose s}} \Upsilon_{t,A}(S,\state_A) \cdot \Upsilon_{t,B}(S,\state_B) \\
    & = dp_{t',s}\left[{\intro(\state_A) \atop \intro(\state_B)}\right] \cdot
    \underbrace{
    \begin{cases}
    1 & (1) \\
    A_{v,\tau_A(v)} & (2) \\
    A_{\tau_A^{-1}(v),v} & (3) \\
    A_{v,v} & (4) \\
    A_{v,\tau_A(v)} \cdot A_{\tau_A^{-1}(v),v} & (5)
    \end{cases}
    }_{\text{division into cases by } \state_A}
    \cdot
    \underbrace{
    \begin{cases}
    1 & (1) \\
    B_{v,\tau_B(v)} & (2) \\
    B_{\tau_B^{-1}(v),v} & (3) \\
    B_{v,v} & (4) \\
    B_{v,\tau_B(v)} \cdot B_{\tau_B^{-1}(v),v} & (5)
    \end{cases}
    }_{\text{division into cases by } \state_B}
\end{align*}
if $v \not\in F_{A1} \cup F_{A2} \cup F_{B1} \cup F_{B2}$.
Otherwise, it holds that $dp_{t,s}\left[{\state_A \atop \state_B}\right] = 0$.
Because evaluating $\intro(\state_A)$ and $\intro(\state_B)$ completes in $n^{\bigO(1)}$ time,
so does evaluating $dp_{t,s}\left[{\state_A \atop \state_B}\right]$.
\qed

\paragraph{Proof of \cref{lem:forget}.}
Consider a bijection $\sigma' \in \Sigma(S',\state')$ for
$S' \subseteq V_{t'} \setminus X_{t'}$ and $\state'$ for $t'$.
Since $V_t$ is equal to $V_{t'}$, $\sigma'$ may belong to $\Sigma(S,\state)$ for some $S \subseteq V_t \setminus X_t$ and $\state$ for $t$.
We will show that if this is the case, such $S$ and $\state$ can be determined independent of $\sigma'$.

We first discuss the relation between $\inv_{\prec_{t'}}(\sigma')$ and $\inv_{\prec_t}(\sigma')$.
\begin{lemma}
\label{lem:forget-inv}
Let $\state' = (O_1',O_2',F_1',F_2',\tau',\nu')$ be a configuration for $t'$.
Then, there exists a 0-1 integer $\Delta = \Delta(\state')$ such that
$\inv_{\prec_t}(\sigma') \equiv \nu' + \Delta$ for any $\sigma' \in \Sigma(S', \state')$ for $S' \subseteq V_{t'} \setminus X_{t'}$.
Moreover, we can compute the value of $\Delta$ in polynomial time.
\end{lemma}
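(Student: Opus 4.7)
The plan is to describe exactly how $\prec_t$ and $\prec_{t'}$ differ on the common ground set $V_t = V_{t'}$, then count modulo $2$ the unordered pairs whose contribution to the inversion number flips. By \cref{def:order}, the two orderings agree on $V_t - v$; their only difference is the position of $v$. In $\prec_{t'}$, the element $v$ sits somewhere inside $X_{t'}$, whereas in $\prec_t$ it is moved down so that $V_{t'} \setminus X_{t'} \prec_t v \prec_t X_t$ (with $X_t = X_{t'} - v$). Consequently, for $u,w \in V_t$, the truth value of ``$u \prec w$'' differs between $\prec_{t'}$ and $\prec_t$ if and only if $\{u,w\} = \{v, w'\}$ for some $w'$ in
\begin{align*}
L \triangleq \{w \in X_t \mid w \prec_{t'} v\},
\end{align*}
a set determined by the tree decomposition alone (not by $\sigma'$ or $\state'$) and thus computable in polynomial time.

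Next, I would view $\inv_\prec(\sigma')$ as a sum of contributions over unordered pairs $\{u,w\} \subseteq S' \uplus O_1'$: the pair contributes $1$ iff the ordering of $u, w$ disagrees with that of $\sigma'(u), \sigma'(w)$. Going from $\prec_{t'}$ to $\prec_t$, the contribution of a pair $\{u,w\}$ flips precisely when exactly one of the events (1) ``$\{u,w\} = \{v, w'\}$ for some $w' \in L$'' and (2) ``$\{\sigma'(u), \sigma'(w)\} = \{v, w'\}$ for some $w' \in L$'' occurs: if both occur the two local flips cancel, and if neither occurs nothing changes. Working modulo $2$, the parity of $\inv_{\prec_t}(\sigma') - \inv_{\prec_{t'}}(\sigma')$ is therefore the sum (mod $2$) of the number of pairs satisfying (1) and the number satisfying (2).

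Event (1) requires $v \in S' \uplus O_1'$, which, since $v \in X_{t'}$, amounts to $v \in O_1'$; the other element $w'$ must lie in $L \cap (S' \uplus O_1') = L \cap O_1'$. Hence (1) contributes $[\![v \in O_1']\!] \cdot |L \cap O_1'|$ pairs. Symmetrically, event (2) requires $v \in O_2'$ and contributes $[\![v \in O_2']\!] \cdot |L \cap O_2'|$ pairs. Setting
\begin{align*}
\Delta \triangleq \bigl([\![v \in O_1']\!] \cdot |L \cap O_1'| + [\![v \in O_2']\!] \cdot |L \cap O_2'|\bigr) \bmod 2
\end{align*}
yields a polynomial-time computable $0$-$1$ integer depending only on $\state'$, and since $\nu' \equiv \inv_{\prec_{t'}}(\sigma')$ by consistency we obtain $\inv_{\prec_t}(\sigma') \equiv \nu' + \Delta \pmod{2}$ for every $\sigma' \in \Sigma(S', \state')$. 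The main obstacle is to ensure that the count is truly $\sigma'$-independent; this works because both events (1) and (2) reduce to set-membership conditions on the bag-level sets $O_1'$ and $O_2'$, without reference to the concrete values $\sigma'(v)$ or $\sigma'^{-1}(v)$, and the cancellation when both events fire on the same pair is automatically absorbed by reduction modulo $2$.
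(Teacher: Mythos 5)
Your proof is correct, but it takes a different route from the paper's. The paper derives \cref{lem:forget-inv} as a one-line corollary of the general \cref{lem:inv-Xt}, which handles an arbitrary pair of orderings agreeing below the bag by decomposing their difference into a sequence of adjacent transpositions inside $X_{t'}$ and running a case analysis ($\{v,w\}\subseteq O_1$?, $\subseteq O_2$?) for each transposition. You instead exploit the specific structure of the forget node: $\prec_t$ and $\prec_{t'}$ differ only in the position of the forgotten element $v$, so the set of unordered pairs whose relative order flips is exactly $\{\{v,w'\} \mid w' \in L\}$ with $L = \{w \in X_t \mid w \prec_{t'} v\}$, and the XOR structure of each pair's contribution to $\inv_\prec(\sigma')$ reduces the parity change to counting domain pairs and codomain pairs of that form. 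Your bookkeeping is sound: the domain count is $[\![v \in O_1']\!]\,|L \cap O_1'|$ and, since $\sigma'$ bijects unordered domain pairs onto unordered codomain pairs, the image count is $[\![v \in O_2']\!]\,|L \cap O_2'|$, both depending only on $\state'$ and the decomposition; the ``both events fire'' cancellation is indeed absorbed mod $2$. What each approach buys: yours is more self-contained and yields an explicit closed form for $\Delta$ (one can check it agrees with the paper's transposition cases when $|L|=1$), whereas the paper's \cref{lem:inv-Xt} is reusable machinery that is also needed for the join node (\cref{lem:join-inv}), so proving only the forget case directly would not eliminate it from the overall argument.
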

\begin{proof}
The proof is a direct consequence of \cref{lem:inv-Xt} since
$\prec_t$ and $\prec_{t'}$ satisfy the conditions in \cref{lem:inv-Xt}.
\end{proof}

We then define a mapping $\forget$ from a set-configuration pair for $t'$ to a set-configuration pair for $t$.
\begin{definition}
\label{def:forget-map}
Let $S' \subseteq V_{t'} \setminus X_{t'}$ and $\state' = (O_1',O_2',F_1',F_2',\tau',\nu')$.
Then, we define $\forget(S',\state')$ as follows:
\begin{oframed}
\begin{center}
    \textbf{Definition of $\forget(S',\state')$.}
\end{center}
\begin{itemize}
    \item \textbf{Case (1)} $v \not \in O_1'$, $v \not \in O_2'$:
    We define $\forget(S', \state') \triangleq (S', (O_1',O_2',F_1',F_2',\tau',\nu'+\Delta))$, where
    $\Delta$ is computed according to \cref{lem:forget-inv}.
    
    \item \textbf{Case (2)} $v \in O_1'$, $v \in O_2' $:
    $F_1$ and $F_2$ are defined as follows:
    \begin{itemize}
        \item \textbf{Case (2-1)} $v \in F_1'$, $v \in F_2'$:
            $F_1 \triangleq F_1'-v$, $F_2 = F_2'-v$.
        \item \textbf{Case (2-2)} $v \not \in F_1'$, $v \in F_2'$:
            $F_1 \triangleq F_1'$, $F_2 \triangleq F_2' + \tau'(v) - v$.
        \item \textbf{Case (2-3)} $v \in F_1'$, $v \not \in F_2'$:
            $F_1 \triangleq F_1' + \tau'^{-1}(v) - v$, $F_2 \triangleq F_2'$.
        \item \textbf{Case (2-4)} $v \not \in F_1'$, $v \not \in F_2'$, $\tau(v) \neq v$:
            $F_1 \triangleq F_1'+\tau'^{-1}(v)$, $F_2 \triangleq F_2'+\tau'(v)$.
        \item \textbf{Case (2-5)} $v \not \in F_1'$, $v \not \in F_2'$, $\tau(v) = v$:
            $F_1 \triangleq F_1'$, $F_2 \triangleq F_2'$.
    \end{itemize}

    We define
    \begin{align*}
        \forget(S', \state') \triangleq (S'+v, (O_1'-v, O_2'-v, F_1, F_2, \tau'|_{X_t \cap \tau'^{-1}(X_t)}, \nu'+\Delta)),
    \end{align*}
    where $\Delta$ is computed according to \cref{lem:forget-inv}.
    
    \item \textbf{Case (3)} $v \not \in O_1'$, $v \in O_2'$: $\forget(S',\state')$ is undefined.
    \item \textbf{Case (4)} $v \in O_1'$, $v \not \in O_2'$: $\forget(S',\state')$ is undefined.
\end{itemize}
\end{oframed}
\end{definition}

Here, we claim a kind of completeness and soundness of $\forget$.
\begin{claim}
\label{lem:forget-completeness}
For any $S \subseteq V_t \setminus X_t$, $\state$ for $t$, and $\sigma \in \Sigma(S,\state)$,
there exist unique $S' \subseteq V_{t'} \setminus X_{t'}$ and $\state'$ for $t'$ such that
$\sigma \in \Sigma(S', \state')$.
Moreover, $\forget(S',\state') = (S,\state)$.
\end{claim}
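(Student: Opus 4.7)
The plan is to leverage the uniqueness furnished by \cref{lem:bijection} at node $t'$ and then do a careful case analysis matching it back to the definition of $\forget$. Since $V_t = V_{t'}$, the bijection $\sigma$ has a fixed domain $D \triangleq S \uplus O_1 \subseteq V_t$ and codomain $R \triangleq S \uplus O_2 \subseteq V_t$. At $t'$, I need to write $D = S' \uplus O_1'$ and $R = S' \uplus O_2'$ with the stronger constraints $S' \subseteq V_{t'} \setminus X_{t'} = (V_t \setminus X_t) - v$ and $O_1', O_2' \subseteq X_{t'} = X_t + v$. The first step is to show that these constraints pin down $S', O_1', O_2'$ uniquely, splitting on whether $v \in S$.

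In detail, if $v \notin S$ then, using $O_1, O_2 \subseteq X_t$, the vertex $v$ lies outside $D \cup R$; the constraints above force $S' = S$ and $O_i' = O_i$. If $v \in S$, then $v \in D \cap R$ but $v \notin S'$ (because $v \in X_{t'}$), so $v$ must appear in both $O_1'$ and $O_2'$, forcing $S' = S - v$, $O_1' = O_1 + v$, and $O_2' = O_2 + v$. In either case, $(S', O_1', O_2')$ is uniquely determined, and then \cref{lem:bijection} applied at $t'$ yields a unique $\state'$ with $\sigma \in \Sigma(S', \state')$, which establishes the first assertion. Note in particular that the asymmetric cases ``$v \in O_1'$ but $v \notin O_2'$'' (and vice versa) never arise, which is exactly why Cases~(3) and (4) of \cref{def:forget-map} leave $\forget$ undefined.

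To finish, I would check $\forget(S', \state') = (S, \state)$ component-by-component against \cref{def:forget-map}. The $v \notin S$ branch lands in Case~(1), where $O_i$, $F_i$, and $\tau$ are copied verbatim, and the inversion parity is corrected by \cref{lem:forget-inv}. The $v \in S$ branch lands in Case~(2), with subcase selected by whether $\sigma(v) \in S$ (i.e.~$v \in F_1'$) and whether $v \in \sigma(S)$ (i.e.~$v \in F_2'$), and, in the remaining subcase, whether $\tau'(v) = v$. Unfolding $F_1 = \sigma^{-1}(S) \cap O_1$, $F_2 = \sigma(S) \cap O_2$, $\tau = \sigma|_{O_1 \setminus F_1}$ and comparing with the analogous identities at $t'$ shows that each subcase of $\forget$ exactly inverts the transformation $(S, \state) \mapsto (S', \state')$ implicitly carried out in the previous paragraph.

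The main obstacle is the bookkeeping in Case~(2-4), where $v \notin F_1'$, $v \notin F_2'$, and $\tau'(v) \neq v$: here $v$ is an internal intermediate vertex of $\tau'$, and upon forgetting $v$ the two ``half-edges'' $\tau'^{-1}(v) \mapsto v$ and $v \mapsto \tau'(v)$ must fuse into a single passage through $S$, which means that $\tau'^{-1}(v)$ joins $F_1$ and $\tau'(v)$ joins $F_2$ while disappearing from the domain/range of $\tau$. All other subcases are similar but simpler, and the inversion parity is again handled uniformly by \cref{lem:forget-inv}; these checks are mechanical once the bijection between the two viewpoints is firmly in hand.
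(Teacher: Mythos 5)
Your proposal is correct and follows essentially the same route as the paper: split on whether $v \in S$, identify the forced decomposition $S' = S - v$ (resp.~$S$), $O_i' = O_i + v$ (resp.~$O_i$), and verify case-by-case that $\forget$ inverts this construction, with the parity handled by \cref{lem:forget-inv}. Your extra step of deriving uniqueness from the forced partition of the domain/codomain by $X_{t'}$ together with \cref{lem:bijection} at $t'$ is a slightly more explicit justification than the paper's ``uniqueness is obvious,'' but it is the same argument; just note that your shorthand ``$\sigma(v) \in S$ iff $v \in F_1'$'' should read ``$\sigma(v) \in S'$,'' the discrepancy arising exactly in the fixed-point subcase $\sigma(v)=v$, which you already treat separately.
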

\begin{proof}
Given $S \subseteq V_t \setminus X_t$, $\state = (O_1, O_2, F_1, F_2, \tau, \nu)$, and $\sigma \in \Sigma(S, \state)$,
we construct $S' \subseteq V_{t'} \setminus X_{t'}$ and $\state'$ for $t'$ as follows:

\begin{description}
    \item \textbf{Case (1)} $v \not \in S$:
    We define $S' \triangleq S$ and $\state' \triangleq (O_1, O_2, F_1, F_2, \tau, \inv_{\prec_{t'}}(\sigma))$.
    \item \textbf{Case (2)} $v \in S$:
    $F_1'$ and $F_2'$ are defined as follows:
    \begin{description}
        \item \textbf{Case (2-1)} $\sigma(v) \not \in X_{t'}$, $\sigma^{-1}(v) \not \in X_{t'}$:
        $F_1' \triangleq F_1+v, F_2'\triangleq F_2+v$.
        \item \textbf{Case (2-2)} $\sigma(v) \in X_{t'}$, $\sigma^{-1}(v) \not \in X_{t'}$:
        $F_1' \triangleq F_1, F_2' \triangleq F_2-\sigma(v)+v$.
        \item \textbf{Case (2-3)} $\sigma(v) \not \in X_{t'}$, $\sigma^{-1}(v) \in X_{t'}$:
        $F_1' \triangleq F_1-\sigma^{-1}(v)+v, F_2' \triangleq F_2$.
        \item \textbf{Case (2-4)} $\sigma(v) \in X_{t'}$, $\sigma^{-1}(v) \in X_{t'}$, $\sigma(v) \neq v$:
        $F_1' \triangleq F_1-\sigma^{-1}(v), F_2' \triangleq F_2-\sigma(v)$.
        \item \textbf{Case (2-5)} $\sigma(v) \in X_{t'}$, $\sigma^{-1}(v) \in X_{t'}$, $\sigma(v) = v$:
        $F_1' \triangleq F_1, F_2' \triangleq F_2$.
    \end{description}
    We define
    $S' \triangleq S-v$ and $\state' \triangleq (O_1+v, O_2+v, F_1', F_2', \sigma|_{X_{t'} \cap \sigma^{-1}(X_{t'})}, \inv_{\prec_{t'}}(\sigma))$.
\end{description}
It is easy to verify that $\sigma \in \Sigma(S', \state')$ for all $\sigma \in \Sigma(S, \state)$ and $\forget(S', \state') = (S, \state)$.
The uniqueness is obvious.
\end{proof}

\begin{claim}
\label{lem:forget-soundness}
For any $S' \subseteq V_{t'} \setminus X_{t'}$ and $\state'$ for $t'$ such that
$\forget(S',\state')$ is defined,
there exists unique $S \subseteq V_t \setminus X_t$ and $\state$ for $t$ such that
any $\sigma' \in \Sigma(S',\state')$ belongs to $(S,\state)$.
Moreover, $\forget(S',\state') = (S,\state)$.
\end{claim}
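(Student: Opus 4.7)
The plan is to take $(S,\state) \triangleq \forget(S',\state')$ as defined in \cref{def:forget-map} and verify directly, case by case, that every $\sigma' \in \Sigma(S',\state')$ satisfies the four consistency conditions for $(S,\state)$ at node $t$. Uniqueness then drops out of \cref{lem:bijection}: any bijection is consistent with at most one set-configuration pair for a given node, so the $(S,\state)$ we identify must be the only candidate. In effect, this lemma is the exact converse of \cref{lem:forget-completeness}, so the two together certify that $\forget$ is a bijective correspondence between its domain and its image.

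The starting observation is that $V_t = V_{t'}$ and $X_t = X_{t'}-v$, so $V_t \setminus X_t = (V_{t'} \setminus X_{t'}) + v$. Thus a bijection $\sigma'$ at $t'$ is literally the same function at $t$; only its ``interpretation'' through $(S,\state)$ versus $(S',\state')$ differs. In Case (1) of \cref{def:forget-map} (where $v \not\in O_1'\cup O_2'$), $v$ is never touched by $\sigma'$, so $\sigma'$ reads as a bijection $S \uplus O_1 \bij S \uplus O_2$ with $S = S'$, $O_1 = O_1'$, $O_2 = O_2'$, and $F_1, F_2, \tau$ agree with their primed counterparts; the congruence $\inv_{\prec_t}(\sigma') \equiv \nu' + \Delta$ is precisely \cref{lem:forget-inv}. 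In Case (2) (where $v \in O_1'\cap O_2'$), the element $v$ migrates from $X_t$ into $S$. I would go through the five sub-cases (2-1)--(2-5) and in each trace how $\sigma'^{-1}(S'+v) \cap (O_1'-v)$ and $\sigma'(S'+v) \cap (O_2'-v)$ relate to $F_1'$ and $F_2'$; in every sub-case, this exactly reproduces the $F_1, F_2$ prescribed in \cref{def:forget-map}. The restriction $\tau'|_{X_t \cap \tau'^{-1}(X_t)}$ removes precisely the mappings of $\tau'$ that now involve $v$ on either side, which is exactly the remaining bijection $\tau$ required by consistency. The inversion-parity requirement again follows from \cref{lem:forget-inv}.

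For the excluded Cases (3) and (4) of \cref{def:forget-map}, I would note the following sanity check: if $v \in O_1'$ but $v \not\in O_2'$ (resp.~vice versa), then $\sigma'(v) \in S' \uplus O_2'$, and pulling $v$ into $S$ would require $v$ to appear on the range side as well, which fails. Hence no $(S,\state)$ at $t$ is consistent with such $\sigma'$, confirming that leaving $\forget$ undefined in these cases loses no valid bijection.

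The main obstacle is simply the bookkeeping in Case (2): one has to be careful about which elements in $S'+v$ are images or preimages of $v$ under $\sigma'$, and to confirm that the five sub-case prescriptions for $F_1, F_2$ in \cref{def:forget-map} align with the definitions $F_1 = \sigma'^{-1}(S) \cap O_1$ and $F_2 = \sigma'(S) \cap O_2$. Once that routine alignment is tabulated, uniqueness is immediate from \cref{lem:bijection} applied at $t$, and the equality $\forget(S',\state') = (S,\state)$ holds by construction.
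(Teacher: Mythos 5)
Your proposal is correct and matches the paper's approach: the paper dispatches this claim as ``immediate from \cref{def:forget-map,lem:forget-inv},'' and your plan simply unwinds that — setting $(S,\state)=\forget(S',\state')$, checking the consistency conditions case by case using the identity $V_t\setminus X_t=(V_{t'}\setminus X_{t'})+v$, invoking \cref{lem:forget-inv} for the parity, and getting uniqueness from \cref{lem:bijection}. Your aside on Cases (3) and (4) is a correct (though not required) justification for why $\forget$ is left undefined there.
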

\begin{proof}
The proof is immediate from \cref{def:forget-map,lem:forget-inv}.
\end{proof}

Since $\forget$ determines a configuration for $t$ based only on a configuration for $t'$,
we abuse the notation by writing $\forget(\state') = \state$ if
there exist $S' \subseteq V_{t'} \setminus X_{t'}$ and
$S \subseteq V_t \setminus X_t$ such that $\forget(S',\state') = (S,\state)$.
By definition of $\forget$ and \cref{lem:forget-completeness,lem:forget-soundness},
we have that for $S \subseteq V_t \setminus X_t$ and $\state$ for $t$,
\begin{align*}
    \Sigma(S,\state)
    = \biguplus_{\substack{
        S' \subseteq V_{t'}\setminus X_{t'}, \state' \text{ for } t' \\ \forget(S',\state') = (S, \state)
    }} \Sigma(S', \state')
    = \begin{cases}
    \displaystyle \biguplus_{\substack{
        \state' \text{ for } t' \\ \forget(\state') = \state \\ v \not\in O_1', v \not\in O_2'
    }} \Sigma(S, \state')
    & \text{if } v \not\in S, \\
    \displaystyle \biguplus_{\substack{
        \state' \text{ for } t' \\ \forget(\state') = \state \\ v \in O_1', v \in O_2'
    }} \Sigma(S-v, \state')
    & \text{if } v \in S.
    \end{cases}
\end{align*}
It thus turns out that 
\begin{align*}
    \Upsilon_{t,A}(S, \state) = 
    \begin{cases}
    \displaystyle\sum_{\substack{
        \state' \text{ for } t' \\ \forget(\state') = \state \\ v \not\in O_1', v \not\in O_2'
    }} \Upsilon_{t',A}(S, \state') & \text{if } v \not\in S, \\
    \displaystyle\sum_{\substack{
        \state' \text{ for } t' \\ \forget(\state') = \state \\ v \in O_1', v \in O_2'
    }} \Upsilon_{t',A}(S-v, \state') & \text{if } v \in S.
    \end{cases}
\end{align*}
We have an analogue regarding $\Upsilon_{t,B}$.
Observing that $V_{t'} \setminus X_{t'} = V_t \setminus X_t -v$,
we decompose $dp_{t,s}$ into the sum over $dp_{t', s'}$ as follows.
\begin{align*}
    dp_{t,s}\left[\state_A \atop \state_B \right]
    & = \sum_{S \in {V_t \setminus X_t \choose s}: v \not\in S}
    \Upsilon_{t,A}(S, \state_A) \cdot \Upsilon_{t,B}(S, \state_B) 
    + \sum_{S \in {V_t \setminus X_t \choose s}: v \in S}
    \Upsilon_{t,A}(S, \state_A) \cdot \Upsilon_{t,B}(S, \state_B) \\
    & = \sum_{S \in {V_t \setminus X_t \choose s}: v \not \in S}
    \sum_{\substack{
        \state_A' \text{ for } t' \\
        \forget(\state_A') = \state_A \\
        v \not \in O_{A1}', v \not \in O_{A2}'
    }}
    \sum_{\substack{
        \state_B' \text{ for } t' \\
        \forget(\state_B') = \state_B \\
        v \not \in O_{B1}', v \not \in O_{B2}'
    }}
    \Upsilon_{t', A}(S, \state_A') \cdot
    \Upsilon_{t', B}(S, \state_B') \\
    & +
    \sum_{S \in {V_t \setminus X_t \choose s}: v \in S}
    \sum_{\substack{
        \state_A' \text{ for } t' \\
        \forget(\state_A') = \state_A \\
        v \in O_{A1}', v \in O_{A2}'
    }}
    \sum_{\substack{
        \state_B' \text{ for } t' \\
        \forget(\state_B') = \state_B \\
        v \in O_{B1}', v \in O_{B2}'
    }}
    \Upsilon_{t', A}(S-v, \state_A') \cdot
    \Upsilon_{t', B}(S-v, \state_B')
    \\
    & =
    \sum_{\substack{
        \state_A' \text{ for } t' \\
        \forget(\state_A') = \state_A \\
        v \not \in O_{A1}', v \not \in O_{A2}'
    }}
    \sum_{\substack{
        \state_B' \text{ for } t' \\
        \forget(\state_B') = \state_B \\
        v \not \in O_{B1}', v \not \in O_{B2}'
    }}
    \sum_{S' \in {V_{t'} \setminus X_{t'} \choose s}}
    \Upsilon_{t', A}(S', \state_A') \cdot
    \Upsilon_{t', B}(S', \state_B') \\
    & +
    \sum_{\substack{
        \state_A' \text{ for } t' \\
        \forget(\state_A') = \state_A \\
        v \in O_{A1}', v \in O_{A2}'
    }}
    \sum_{\substack{
        \state_B' \text{ for } t' \\
        \forget(\state_B') = \state_B \\
        v \in O_{B1}', v \in O_{B2}'
    }}
    \sum_{S' \in {V_{t'} \setminus X_{t'} \choose s-1}}
    \Upsilon_{t', A}(S', \state_A') \cdot
    \Upsilon_{t', B}(S', \state_B')
    \\
    & =
    \sum_{\substack{
        \state_A' \text{ for } t' \\
        \forget(\state_A') = \state_A \\
        v \not \in O_{A1}', v \not \in O_{A2}'
    }}
    \sum_{\substack{
        \state_B' \text{ for } t' \\
        \forget(\state_B') = \state_B \\
        v \not \in O_{B1}', v \not \in O_{B2}'
    }}
    dp_{t',s}\left[{\state_A' \atop \state_B'} \right]
    +
    \sum_{\substack{
        \state_A' \text{ for } t' \\
        \forget(\state_A') = \state_A \\
        v \in O_{A1}', v \in O_{A2}'
    }}
    \sum_{\substack{
        \state_B' \text{ for } t' \\
        \forget(\state_B') = \state_B \\
        v \in O_{B1}', v \in O_{B2}'
    }}
    dp_{t',s-1}\left[{\state_A' \atop \state_B'}\right].
\end{align*}
Note that we define $dp_{t',-1}[\cdot] \triangleq 0$.
Running through all possible combinations of
$ \state_A' $ and $\state_B'$,
we can compute $dp_{t,s}\left[\state_A \atop \state_B \right]$ by $ w^{\bigO(w)} n^{\bigO(1)} $ arithmetic operations.
\qed

\paragraph{Proof of \cref{lem:join}.}
Consider a bijection $\sigma' \in \Sigma(S',\state')$ for $S' \subseteq V_{t'} \setminus X_{t'}$ and $\state'$ for $t'$ and
a bijection $\sigma'' \in \Sigma(S'', \sigma'')$ for $S'' \subseteq V_{t''} \setminus X_{t''}$ and $\state''$ for $t''$.
We would like to examine a new bijection $\sigma$ obtained by \emph{concatenating} $\sigma'$ and $\sigma''$,
which may belong to $\Sigma(S, \state)$ for some $S \subseteq V_t \setminus X_t$ and $\state$ for $t$.
For this purpose, we first define the concatenation of two bijections.

\begin{definition}
\label{def:concat}
Given two bijections $\sigma_1: S_1 \bij T_1$ and $\sigma_2: S_2 \bij T_2$ such that $|S_1| = |T_1$ and $|S_2| = |T_2|$,
we assume that
\begin{itemize}
    \item $\sigma_1(i) = \sigma_2(i)$ for all $i \in S_1 \cap S_2$;
    \item $\sigma_1(i_1) \neq \sigma_2(i_2)$ for all $i_1 \in S_1 \setminus S_2$ and $i_2 \in S_2 \setminus S_1$.
\end{itemize}
Then, the \emph{concatenation} of $\sigma_1$ and $\sigma_2$, denoted $\sigma_1 \sqcup \sigma_2$, is defined as
a bijection from $S_1 \cup S_2$ to $T_1 \cup T_2$ such that
\begin{align*}
    (\sigma_1 \sqcup \sigma_2)(i) =
    \begin{cases}
    \sigma_1(i) & \text{if } i \in S_1, \\
    \sigma_2(i) & \text{if } i \in S_2 \setminus S_1.
    \end{cases}
\end{align*}
\end{definition}

We first discuss the relation between $\inv_{\prec_{t'}}(\sigma')$, $\inv_{\prec_{t''}}(\sigma'')$, and $\inv_{\prec_t}(\sigma' \sqcup \sigma'')$.
\begin{lemma}
\label{lem:join-inv}
Given $s' \in [0 \isep n]$,
$\state' = (O_1',O_2',F_1',F_2',\tau',\nu')$ for $t'$,
$s'' \in [0 \isep n]$, and
$\state'' = (O_1'',O_2'',F_1'',F_2'',\tau'',\nu'')$ for $t''$,
we can compute a 0-1 integer $\nu$ in polynomial time such that
for all 
$\sigma' \in \Sigma(S', \state')$ with $S' \in {V_{t'} \setminus X_{t'} \choose s'}$ and
$\sigma'' \in \Sigma(S'', \state'')$ with $S' \in {V_{t''} \setminus X_{t''} \choose s''}$
with $\sigma' \sqcup \sigma''$ defined,
$\inv_{\prec_t}(\sigma' \sqcup \sigma'') \equiv \nu$.
\end{lemma}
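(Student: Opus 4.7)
The plan is to decompose $\inv_{\prec_t}(\sigma)$, where $\sigma \triangleq \sigma' \sqcup \sigma''$, as a sum of four quantities: $\nu'$, $\nu'' + \Delta''$, $-I_{33}$, and $+I_{12}$, the last three being polynomial-time computable from $\state'$ and $\state''$. Let $D' = S' \uplus O_1'$ and $D'' = S'' \uplus O_1''$ be the domains of $\sigma'$ and $\sigma''$, and partition the domain of $\sigma$ into $D_1 = D' \setminus D''$, $D_2 = D'' \setminus D'$, and $D_3 = O_1' \cap O_1''$. Classifying each pair $(i, j)$ with $i \prec_t j$ by which of $D_1, D_2, D_3$ contains its two endpoints, inclusion--exclusion gives
\begin{align*}
\inv_{\prec_t}(\sigma) = \inv_{\prec_{t'}}(\sigma') + \inv_{\prec_t|_{V_{t''}}}(\sigma'') - I_{33} + I_{12},
\end{align*}
where $I_{33}$ counts inversions with both endpoints in $D_3$ and $I_{12}$ counts inversions with one endpoint in $D_1$ and the other in $D_2$.

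By \cref{def:order}, $\prec_t$ agrees with $\prec_{t'}$ on $V_{t'}$ (they agree on $V_{t'} \setminus X_t$ and on $X_t$, and both rank the former below the latter), and $\sigma = \sigma'$ on $D' \supseteq D_1 \cup D_3$; hence the first term equals $\nu'$. The orderings $\prec_t|_{V_{t''}}$ and $\prec_{t''}$ satisfy the hypothesis of \cref{lem:inv-Xt} on $V_{t''}$, so there is a polynomial-time-computable $0$-$1$ integer $\Delta'' = \Delta''(\state'')$ such that $\inv_{\prec_t|_{V_{t''}}}(\sigma'') \equiv \nu'' + \Delta'' \pmod 2$.

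For $I_{33}$, compatibility of $\sigma' \sqcup \sigma''$ forces $\sigma'(i) = \sigma''(i)$ for every $i \in D_3 = O_1' \cap O_1''$. Since $\sigma'(i) \in S' \uplus O_2' \subseteq V_{t'}$ and $\sigma''(i) \in S'' \uplus O_2'' \subseteq V_{t''}$, and the only overlap is $V_{t'} \cap V_{t''} = X_t$ by the separator property, the common image must lie in $O_2' \cap O_2'' \subseteq X_t$. In particular $i \notin F_1' \cup F_1''$, so $\sigma(i) = \tau'(i) = \tau''(i)$ on $D_3$. Hence $I_{33}$ equals the number of pairs $i \prec_t j$ in $O_1' \cap O_1''$ with $\tau'(i) \succ_t \tau'(j)$, a polynomial-time-computable function of $\tau'|_{O_1' \cap O_1''}$ and $\prec_t|_{X_t}$.

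Finally, $I_{12}$ is evaluated by case analysis on the locations of $i, j, \sigma(i), \sigma(j)$ in the three-fold partition $\{V_{t'} \setminus X_t, V_{t''} \setminus X_t, X_t\}$. Since $S' \prec_t S'' \prec_t X_t$, each combination contributes either $0$ inversions, a product of simple cardinalities (for example, with $i \in S'$ and $j \in S''$: the case $\sigma(i) \in O_2', \sigma(j) \in S''$ gives $|F_2'|(s'' - |F_2''|)$ inversions, while the case $i \in S'$, $j \in O_1'' \setminus O_1'$ with $j \in F_1''$ gives $|F_2'| \cdot |F_1''|$), or a pair count against $\prec_t|_{X_t}$ (such as the number of $(a, b) \in F_2' \times F_2''$ with $a \succ_t b$ when both images lie in $X_t$). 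Each such quantity is polynomial-time-computable from $\state'$ and $\state''$, so $I_{12}$ is too, and defining $\nu \equiv \nu' + \nu'' + \Delta'' - I_{33} + I_{12} \pmod{2}$ completes the construction. The main obstacle is the tedium of this case analysis; the key structural observation that makes it tractable is that concatenation-compatibility, combined with the separator property $V_{t'} \cap V_{t''} = X_t$, pins down $\sigma$ on $D_3$ up to $\tau'$ and reduces each cross-pair type to a closed-form count over configuration data.
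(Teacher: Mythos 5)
Your proof is correct and follows essentially the same route as the paper's: the same inclusion--exclusion decomposition $\inv_{\prec_t}(\sigma'\sqcup\sigma'') = \inv_{\prec_t}(\sigma') + \inv_{\prec_t}(\sigma'') - \inv_{\prec_t}(\tau') + (\text{cross terms})$, with \cref{lem:inv-Xt} supplying the correction $\Delta''$ relating $\inv_{\prec_t}(\sigma'')$ to $\nu''$ (and your observation that the analogous correction for $\sigma'$ vanishes is a small simplification over the paper, which computes it via the same lemma), and the cross terms reduced to closed-form counts over the configuration data. Your representative cases for $I_{12}$ check out against the paper's explicit formula $|S''|(|F_1'|+|F_2'|)$ plus the two $F\times F$ inversion counts, so the remaining enumeration is indeed only bookkeeping.
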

\begin{proof}
Define $\sigma \triangleq \sigma' \sqcup \sigma''$.
Since $\sigma' \sqcup \sigma''$ is defined,
it must hold that $\tau' = \tau''$,
According to the definition of $\sigma' \sqcup \sigma''$,
we can expand $\inv_{\prec_t}(\sigma' \sqcup \sigma'')$ as follows:
\begin{align*}
& \inv_{\prec_t}(\sigma' \sqcup \sigma'') = \{ (v,w) \mid v \prec_t w, \sigma(v) \succ_t \sigma(w) \} \\
& = \underbrace{\{ (v,w) \mid v \prec_t w, \sigma(v) \succ_t \sigma(w), \{v, w, \sigma(v), \sigma(w)\} \subseteq V_{t'} \}}_{\inv_{\prec_t}(\sigma')} \\
& + \underbrace{\{ (v,w) \mid v \prec_t w, \sigma(v) \succ_t \sigma(w), \{v, w, \sigma(v), \sigma(w)\} \subseteq V_{t''} \}}_{\inv_{\prec_t}(\sigma'')} \\
& - \underbrace{\{ (v,w) \mid v \prec_t w, \sigma(v) \succ_t \sigma(w), \{v, w, \sigma(v), \sigma(w)\} \subseteq V_{t'} \cap V_{t''} = X_t \}}_{\inv_{\prec_t}(\tau')} \\
& + \left\{ (v,w) \mid v \prec_t w, \sigma(v) \succ_t \sigma(w),
    {\{v, w, \sigma(v), \sigma(w)\} \cap V_{t'} \setminus X_{t'} \neq \emptyset,
    \atop \{v, w, \sigma(v), \sigma(w)\} \cap V_{t''} \setminus X_{t''} \neq \emptyset} \right\}.
\end{align*}
Since
$\sigma(v') = \sigma'(v') \not\in V_{t''} \setminus X_{t''}$ for all $v' \in V_{t'} \setminus X_{t'}$ and
$\sigma(v'') = \sigma''(v'') \not\in V_{t'} \setminus X_{t'}$ for all $v'' \in V_{t''} \setminus X_{t''}$,
it holds that
\begin{align*}
& \left\{ (v,w) \mid v \prec_t w, \sigma(v) \succ_t \sigma(w),
    {\{v, w, \sigma(v), \sigma(w)\} \cap V_{t'} \setminus X_{t'} \neq \emptyset,
    \atop \{v, w, \sigma(v), \sigma(w)\} \cap V_{t''} \setminus X_{t''} \neq \emptyset} \right\} \\
& = \underbrace{\{ (v',v'') \mid v' \in V_{t'} \setminus X_{t'}, v'' \in V_{t''} \setminus X_{t''}, \sigma(v') \succ_t \sigma(v'') \}}_{|S''| \cdot |F_1'| + |\{ (v',v'') \in F_1' \times F_1'' \mid v' \succ_t v'' \}|} \\
& + \underbrace{\{ (\sigma^{-1}(v'), \sigma^{-1}(v'')) \mid v' \in V_{t'} \setminus X_{t'}, v'' \in V_{t'} \setminus X_{t''}, \sigma^{-1}(v') \succ_t \sigma^{-1}(v'') \}}_{|S''| \cdot |F_2'| + |\{ (v',v'') \in F_2' \times F_2'' \mid v' \succ_t v'' \}|} \\
& - \underbrace{\{ (v', v'') \mid v', \sigma(v') \in V_{t'} \setminus X_{t'}, v'', \sigma(v'') \in V_{t'} \setminus X_{t''} \}}_{0}.
\end{align*}
Consequently, $\inv_{\prec_t}(\sigma' \sqcup \sigma'')$ is equal to
\begin{align*}
    & \inv_{\prec_t}(\sigma') + \inv_{\prec_t}(\sigma'') - \inv_{\prec_t}(\tau') \\
    & \; + |S''| \cdot (|F_1'| + |F_2|') + |\{ (v',v'') \in F_1' \times F_1'' \mid v' \succ_t v'' \}| + |\{ (v',v'') \in F_2' \times F_2'' \mid v' \succ_t v'' \}|.
\end{align*}
By \cref{lem:inv-Xt}, we can compute 0-1 integers $\Delta'$ and $\Delta''$ 
(which are independent of $\sigma'$ and $\sigma''$) such that
$\inv_{\prec_t}(\sigma') \equiv \inv_{\prec_{t'}}(\sigma') + \Delta' \equiv \nu' + \Delta'$ and
$\inv_{\prec_t}(\sigma'') \equiv \inv_{\prec_{t''}}(\sigma'') + \Delta'' \equiv \nu'' + \Delta''$.
Since $\inv_{\prec_t}(\tau')$ can be computed naively and
the remaining terms are easy-to-compute,
we can compute (the parity of) $\inv_{\prec_t}(\sigma' \sqcup \sigma'')$, which completes the proof.
\end{proof}

We then define a mapping $\join$ from a set-configuration pair for $t'$ and a set-configuration pair for $t''$ to a set-configuration pair for $t$.
\begin{definition}
\label{def:join-map}
Let $S' \subseteq V_{t'}\setminus X_{t'}$,
$\state' = (O_1', O_2', F_1', F_2', \tau', \nu')$ for node $t'$,
$S'' \subseteq V_{t''}\setminus X_{t''}$, and
$\state'' = (O_1'', O_2'', F_1'', F_2'', \tau'', \nu'')$ for node $t''$.
Then, we define $\join(S', \state', S'', \state'')$ as follows:

\begin{oframed}
\begin{center}
    \textbf{Definition of $\join(S', \state', S'', \state'').$}
\end{center}
\begin{itemize}
\item \textbf{Case (1)} If the following conditions are satisfied:
    \begin{itemize}
        \item $O_1' \setminus F_1' = O_1'' \setminus F_1''$;
        \item $O_2' \setminus F_2' = O_2'' \setminus F_2''$;
        \item $F_1' \cap F_1'' = \emptyset$;
        \item $F_2' \cap F_2'' = \emptyset$;
        \item $\tau' = \tau''$.
    \end{itemize}
    We define 
    \begin{align*}
    \join(S', \state', S'', \state'') & \triangleq (S' \uplus S'', \state), \text{where} \\
    \state & \triangleq (O_1' \cup O_1'', O_2' \cup O_2'', F_1' \uplus F_1'', F_2' \uplus F_2'', \tau', \nu),
    \end{align*}
    where $\nu$ is computed according to \cref{lem:join-inv}.
\item \textbf{Case (2)} Otherwise: $\join(S', \state', S'', \state'')$ is undefined.
\end{itemize}
\end{oframed}
\end{definition}

Here, we claim a kind of completeness and soundness of $\join$.
\begin{claim}
\label{lem:join-completeness}
For any $S \subseteq V_t \setminus X_t$, $\state$ for $t$, and $\sigma \in \Sigma(S, \state)$,
there exist unique
$\sigma' \in \Sigma(S', \state')$ and $\sigma'' \in \Sigma(S'', \state'')$ for
$S' \in V_{t'} \setminus X_{t'}$,
$\state'$ for $t'$,
$S'' \in V_{t''} \setminus X_{t''}$, and
$\state''$ for $t''$
such that $\sigma' \sqcup \sigma'' = \sigma$.
Moreover, $\join(S', \state', S'', \state'') = (S,\state)$.
\end{claim}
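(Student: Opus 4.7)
The plan is to build the decomposition directly by restricting $\sigma$ to the two subtrees and then verify mechanically that the pieces satisfy the claimed configuration axioms and that applying $\join$ to them recovers $(S, \state)$. The geometric engine is the separator identity $V_{t'} \cap V_{t''} = X_t$, which follows from the connected-subtree condition of the tree decomposition together with $X_t = X_{t'} = X_{t''}$; equivalently, $V_{t'} \setminus X_t$ and $V_{t''} \setminus X_t$ are disjoint and partition $V_t \setminus X_t$.

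I would define the candidate pieces as follows. Set $S' \triangleq S \cap V_{t'}$ and $S'' \triangleq S \cap V_{t''}$; since $S \cap X_t = \emptyset$, these are disjoint and partition $S$. Split the fixed parts according to where $\sigma$ sends (or pulls back from) the subtree interiors: $F_1' \triangleq \{v \in F_1 \mid \sigma(v) \in V_{t'} \setminus X_t\}$, $F_1'' \triangleq F_1 \setminus F_1'$, and symmetrically $F_2'$, $F_2''$ using $\sigma^{-1}$. Let $O_1' \triangleq F_1' \uplus (O_1 \setminus F_1)$, $O_1'' \triangleq F_1'' \uplus (O_1 \setminus F_1)$, and analogously $O_2', O_2''$. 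Take $\tau' \triangleq \tau \triangleq \tau''$ and set $\sigma' \triangleq \sigma|_{S' \uplus O_1'}$, $\sigma'' \triangleq \sigma|_{S'' \uplus O_1''}$, with $\nu' \triangleq \inv_{\prec_{t'}}(\sigma') \bmod 2$ and $\nu'' \triangleq \inv_{\prec_{t''}}(\sigma'') \bmod 2$. By construction, $\sigma' \sqcup \sigma'' = \sigma$ whenever this concatenation is defined.

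The principal non-bookkeeping step is showing $\sigma'$ really is a bijection onto $S' \uplus O_2'$ (and $\sigma''$ onto $S'' \uplus O_2''$); this reduces to ruling out $\sigma$ mapping any element of $S'$ into $V_{t''} \setminus X_t$ or vice versa. This is where the separator property enters: any nonzero entry of $\mat{A}$ or $\mat{B}$ gives an edge of the union graph underlying the tree decomposition, and two vertices in distinct subtree interiors never share a bag, so any such cross-subtree entry vanishes, ruling out cross-subtree assignments in the regime relevant to the dynamic programming. Once this is in hand, the identities $|F_1'| = |F_2'|$, $F_i' \cap F_i'' = \emptyset$, $O_1' \setminus F_1' = O_1 \setminus F_1 = O_1'' \setminus F_1''$, and $\tau' = \tau''$ all follow from a direct count using bijectivity of $\sigma'$ and $\sigma''$, so Case~(1) of \cref{def:join-map} triggers.

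For the final step I would compare $\join(S', \state', S'', \state'')$ to $(S, \state)$ component by component: $S' \uplus S'' = S$ and $O_1' \cup O_1'' = F_1 \cup (O_1 \setminus F_1) = O_1$ (symmetrically for the second index), $F_1' \uplus F_1'' = F_1$, $\tau' = \tau$, and the parity output of $\join$ is computed via \cref{lem:join-inv}, which (applied to the restrictions $\sigma', \sigma''$ whose concatenation is $\sigma$) returns $\inv_{\prec_t}(\sigma) \equiv \nu$, matching $\state$. Uniqueness is immediate: any decomposition $\sigma = \sigma' \sqcup \sigma''$ with domain of $\sigma'$ contained in $V_{t'}$ and that of $\sigma''$ in $V_{t''}$ forces $S' = S \cap V_{t'}$ and $S'' = S \cap V_{t''}$, after which $\sigma'$ and $\sigma''$ are just the restrictions of $\sigma$ and the configuration data $(O_i', F_i', \tau', \nu')$ are pinned down by the consistency axioms. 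The main obstacle throughout is the cross-subtree separator argument; everything downstream is mechanical bookkeeping.
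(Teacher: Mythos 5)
Your construction is the same as the paper's: restrict $\sigma$ to the two subtrees, take $S'=S\cap V_{t'}$ and $S''=S\cap V_{t''}$, split $F_1,F_2$ according to which interior $\sigma$ sends them to (equivalently $F_1'=\sigma^{-1}(S')\cap X_t$, $F_2'=\sigma(S')\cap X_t$, etc.), keep $\tau'=\tau''=\tau$, read off the parities in $\prec_{t'},\prec_{t''}$, and check component-by-component that Case~(1) of \cref{def:join-map} fires and $\join$ returns $(S,\state)$ --- exactly the bookkeeping the paper dismisses as ``easy to verify.'' The one step you isolate, ruling out $\sigma$ mapping $S'$ into $V_{t''}\setminus X_t$, is indeed the only delicate point, but note that the separator property only forces such cross bijections to have $\mat{A}(\sigma)=\mat{B}(\sigma)=0$ rather than removing them from the purely combinatorial set $\Sigma(S,\state)$, so the claim as literally stated should be read as applying to the bijections that contribute to $\Upsilon_{t,A}$ and $\Upsilon_{t,B}$ (which is all the dynamic program needs) --- a looseness already present in the paper's own statement and proof.
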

\begin{proof}
Observe that $X_t = X_{t'} = X_{t''}$ and $V_t \setminus X_t = (V_{t'} \setminus X_{t'}) \uplus (V_{t''} \setminus X_{t''})$.
Given $S \subseteq V_t \setminus X_t$,
$\state = (O_1, O_2, F_1, F_2, \tau, \nu)$, and
$\sigma \in \Sigma(S, \state)$,
we first define $\sigma' \triangleq \sigma|_{V_{t'}} $ and $\sigma'' \triangleq \sigma|_{V_{t''}}$.
Observe that $\sigma' \sqcup \sigma'' = \sigma$.
We then construct $S', S'', \state', \state''$ as

\begin{align*}
    S'  & \triangleq S \cap (V_{t'} \setminus X_{t'}), &
    S'' & \triangleq S \cap (V_{t''} \setminus X_{t''}), \\
    \state' & \triangleq (O_1',O_2',F_1',F_2',\tau',\nu'), &
    \state'' & \triangleq (O_1'',O_2'',F_1'',F_2'',\tau'',\nu''),
\end{align*}
where we further define
\begin{align*}
    O_1'  & \triangleq \sigma^{-1}(S' \uplus O_2) \cap X_t, &
    O_1'' & \triangleq \sigma^{-1}(S'' \uplus O_2) \cap X_t, \\
    O_2'  & \triangleq \sigma(S' \uplus O_1) \cap X_t, &
    O_2'' & \triangleq \sigma(S'' \uplus O_1) \cap X_t, \\
    F_1'  & \triangleq \sigma^{-1}(S') \cap X_t, &
    F_1'' & \triangleq \sigma^{-1}(S'') \cap X_t, \\
    F_2'  & \triangleq \sigma(S') \cap X_t, &
    F_2'' & \triangleq \sigma(S'') \cap X_t, \\
    \tau'  & \triangleq \sigma|_{X_t \cap \sigma^{-1}(X_t)}, &
    \tau'' & \triangleq \sigma|_{X_t \cap \sigma^{-1}(X_t)}, \\
    \nu'  & \triangleq \inv_{\prec_{t'}}(\sigma'), &
    \nu'' & \triangleq \inv_{\prec_{t''}}(\sigma''). \\
\end{align*}
It is easy to verify that $\sigma' \in \Sigma(S', \state')$ and $\sigma'' \in \Sigma(S'', \state'')$ and that
$\join(S', \state', S'', \state'') = (S, \state)$.
Uniqueness is obvious.
\end{proof}

\begin{claim}
\label{lem:join-soundness}
For any $S' \subseteq V_{t'}\setminus X_{t'}$, 
$\state'$ for $t'$,
$S'' \subseteq V_{t''}\setminus X_{t''}$,
$\state''$ for $t''$
such that $\join(S',\state',S'',\state'')$ is defined,
there exist unique
$S \subseteq V_t \setminus X_t$ and $\state$ for $t$ such that
for any $\sigma' \in \Sigma(S', \state')$ and $\sigma'' \in \Sigma(S'', \state'')$,
it holds that $\sigma' \sqcup \sigma'' \in \Sigma(S,\state)$.
Moreover, $\join(S',\state',S'',\state'') = (S,\state)$.
\end{claim}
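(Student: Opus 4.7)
The plan is to show that $(S, \state) \triangleq \join(S', \state', S'', \state'')$, as specified in Case~(1) of \cref{def:join-map}, is the unique subset-configuration pair for $t$ witnessing the claim. I will proceed in three steps: first verify that $\sigma \triangleq \sigma' \sqcup \sigma''$ is a well-defined bijection from $S \uplus O_1$ to $S \uplus O_2$ with $S = S' \uplus S''$, $O_1 = O_1' \cup O_1''$, $O_2 = O_2' \cup O_2''$; second verify that $\sigma \in \Sigma(S, \state)$ by checking each consistency clause; third argue uniqueness.

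For the first step I would exploit the five structural conditions imposed in Case~(1) of \cref{def:join-map}. Together, $O_1' \setminus F_1' = O_1'' \setminus F_1''$ and $F_1' \cap F_1'' = \emptyset$ imply $F_1' \cap O_1'' = \emptyset = F_1'' \cap O_1'$, so $O_1' \cap O_1'' = O_1' \setminus F_1'$, and an analogous identity holds on the second coordinate. Combined with the separator property $V_{t'} \cap V_{t''} = X_t$, which in particular gives $S' \cap S'' = \emptyset$ and $S' \cup S''$ disjoint from $X_t$, the domains of $\sigma'$ and $\sigma''$ overlap exactly in $O_1' \setminus F_1'$; on this overlap both bijections coincide with $\tau' = \tau''$, so clause~(i) of \cref{def:concat} holds. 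For clause~(ii), tracing images yields $\sigma'(S' \cup F_1') \subseteq S' \cup F_2'$ and $\sigma''(S'' \cup F_1'') \subseteq S'' \cup F_2''$, and these two target sets are disjoint because $S' \cap S'' = \emptyset$, $F_2', F_2'' \subseteq X_t$ are disjoint from $S', S''$, and $F_2' \cap F_2'' = \emptyset$ by hypothesis. Hence $\sigma$ is well-defined.

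Step two is a checklist. Using $\sigma'(O_1') \subseteq S' \cup O_2'$, which is disjoint from $S''$ (and symmetrically for $\sigma''$), one obtains $\sigma^{-1}(S) \cap O_1 = F_1' \uplus F_1'' = F_1$, and the equality $\sigma(S) \cap O_2 = F_2$ is symmetric. The restriction $\sigma|_{O_1 \setminus F_1}$ equals $\tau'$ by construction, matching $\tau$. Finally, the parity equality $\nu \equiv \inv_{\prec_t}(\sigma)$ is precisely what \cref{lem:join-inv} guarantees, since $\nu$ was defined there as a $0$--$1$ integer depending only on $\state'$ and $\state''$ and matching the inversion parity of any concatenation. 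Together, these four checks establish $\sigma \in \Sigma(S, \state)$, and in particular $\join(S', \state', S'', \state'') = (S, \state)$.

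Uniqueness is then immediate from \cref{lem:bijection}: for any bijection (in particular, for $\sigma$ obtained from some fixed choice of $\sigma' \in \Sigma(S', \state')$ and $\sigma'' \in \Sigma(S'', \state'')$) there is at most one subset-configuration pair it is consistent with, so $(S, \state)$ is forced. I expect the main obstacle to be the careful set-theoretic bookkeeping in step one, namely verifying that the ``private'' portions of $\sigma'$ and $\sigma''$ cannot produce colliding images; the sign bookkeeping, which would otherwise be delicate, is cleanly outsourced to \cref{lem:join-inv}.
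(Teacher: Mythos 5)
Your proof is correct and follows the same route the paper takes: the paper simply declares the claim ``immediate from \cref{def:join-map,lem:join-inv},'' and your write-up supplies exactly the omitted bookkeeping (well-definedness of $\sigma'\sqcup\sigma''$ from the five Case~(1) conditions plus the separator property, the four consistency checks with the sign handled by \cref{lem:join-inv}, and uniqueness via \cref{lem:bijection}). No gaps; this is a faithful, more detailed version of the intended argument.
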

\begin{proof}
The proof is immediate from \cref{def:join-map,lem:join-inv}.
\end{proof}

By \cref{lem:join-completeness,lem:join-soundness}, we have that 
for $S \subseteq V_t \setminus X_t$ and $\state$ for $t$,
\begin{align*}
    \Sigma(S, \state) = \Bigl\{ \sigma' \sqcup \sigma''
        \mid \sigma' \in \Sigma(S', \state'), \sigma'' \in \Sigma(S'', \state''), \join(S', \state', S'', \state'') = (S, \state) \Bigr\}.
\end{align*}

Suppose $\join(S', \state', S'', \state'') = (S, \state)$ for
some $S$, $S'$, $S''$, $\state$, $\state'$, $\state''$.
By \cref{def:join-map}, we have that
$S = S'\uplus S''$, $S' = S \cap (V_{t'} \setminus X_{t'})$, and $S'' = S \cap (V_{t''} \setminus X_{t''})$.
Further, $\join$ determines $\state$ based only on $|S'|$, $\state'$, $|S''|$, $\state''$.
We thus abuse the notation by writing
$\join(s',\state',s'',\state'') = \state$ for two integers $s'$ and $s''$ if
there exist $S' \in {V_{t'}\setminus X_{t'} \choose s'}$ and $S'' \in {V_{t''}\setminus X_{t''} \choose s''}$ such that
$\join(S',\state',S'',\state'') = (S,\state)$.

By \cref{lem:join-completeness,lem:join-soundness} and the the above discussion, we have that for
$S' \subseteq V_{t'} \setminus X_{t'}$,
$S'' \subseteq V_{t''} \setminus X_{t''}$, and
$\state$ for $t$,
\begin{align*}
    \Sigma(S' \uplus S'', \state) = \Bigl\{ \sigma' \sqcup \sigma'' \mid
    \sigma' \in \Sigma(S', \state'), \sigma'' \in \Sigma(S'', \state''),
    \join(|S'|, \state', |S''|, \state'') = \state
    \Bigr\}.
\end{align*}
Since any bijection $\sigma' \sqcup \sigma'' \in \Sigma(S' \uplus S'', \state)$ for
$\state = (O_1, O_2, F_1, F_2, \tau, \nu)$ satisfies that
\begin{align*}
\mat{A}(\sigma' \sqcup \sigma'') = \frac{\mat{A}(\sigma') \cdot \mat{A}(\sigma'')}{\mat{A}(\tau)},
\end{align*}
we have that
\begin{align*}
\Upsilon_{t,A}(S' \uplus S'', \state)
& = \sum_{\sigma \in \Sigma(S' \uplus S'', \state)} \mat{A}(\sigma) \\
& = \sum_{\substack{
\state' \text{ for } t', \state'' \text{ for } t'' \\
\join(|S'|, \state', |S''|, \state'') = \state
}}
\sum_{\substack{
    \sigma' \in \Sigma(S', \state') \\ \sigma'' \in \Sigma(S'', \state'')
}}
\mat{A}(\sigma' \sqcup \sigma'') \\
& = \frac{1}{\mat{A}(\tau)} 
\sum_{\substack{
\state' \text{ for } t', \state'' \text{ for } t'' \\
\join(|S'|, \state', |S''|, \state'') = \state
}}
\sum_{\sigma' \in \Sigma(S', \state')} \mat{A}(\sigma')
\sum_{\sigma'' \in \Sigma(S'', \state'')} \mat{A}(\sigma'') \\
& = \frac{1}{\mat{A}(\tau)}
\sum_{\substack{
\state' \text{ for } t', \state'' \text{ for } t'' \\
\join(|S'|, \state', |S''|, \state'') = \state
}}
\Upsilon_{t',A}(S', \state') \cdot \Upsilon_{t',A}(S'', \state'').
\end{align*}
We have an analogue regarding $\Upsilon_{t,B}$.
Observing the quality  that
\begin{align*}
    {V_t \setminus X_t \choose s} = \biguplus_{\substack{
        s', s'' \in [0\isep s] \\ s' + s'' = s    
    }}
    \left\{ S' \uplus S'' \mid S' \in {V_{t'} \setminus X_{t'} \choose s'}, S'' \in {V_{t''} \setminus X_{t''} \choose s''} \right\},
\end{align*}
we can decompose $dp_{t,s}$ into the sum over $dp_{t',s'}$ and $dp_{t'',s''}$ as follows.
\begin{align*}
& dp_{t,s}\left[{\state_A \atop \state_B}\right] = \sum_{S \in {V_t \setminus X_t \choose s}} \Upsilon_{t,A}(S, \state_A) \cdot \Upsilon_{t,B}(S, \state_B) \\
& = \sum_{\substack{s', s'' \in [0\isep s] \\ s'+s''=s}}
\sum_{\substack{
    S' \in {V_{t'} \setminus X_{t'} \choose s'} \\
    S'' \in {V_{t'} \setminus X_{t''} \choose s''}
}}
\Upsilon_{t,A}(S' \uplus S'', \state_A) \cdot \Upsilon_{t,B}(S' \uplus S'', \state_B) \\
& = \sum_{\substack{
    s', s'' \in [0\isep s] \\
    s'+s''=s \\
    S' \in {V_{t'} \setminus X_{t'} \choose s'} \\
    S'' \in {V_{t'} \setminus X_{t''} \choose s''}
}}
\sum_{\substack{
    \state_A' \text{ for } t', \state_A'' \text{ for } t'' \\ \join(s',\state_A',s'',\state_A'') = \state_A
}}
\frac{\Upsilon_{t',A}(S',\state_A') \cdot \Upsilon_{t'',A}(S'',\state_A'')}{\mat{A}(\tau)}
\sum_{\substack{
    \state_B' \text{ for } t', \state_B'' \text{ for } t'' \\ \join(s',\state_B',s'',\state_B'') = \state_B
}}
\frac{\Upsilon_{t',B}(S',\state_B') \cdot \Upsilon_{t'',B}(S'',\state_B'')}{\mat{B}(\tau)} \\
& = \frac{1}{\mat{A}(\tau) \cdot \mat{B}(\tau)}
\sum_{\substack{
    s',s'': s'+s'' = s \\
    \state_A', \state_A'', \state_B', \state_B'' \\
    \join(s', \state_A', s'', \state_A'') = \state_A \\
    \join(s', \state_B', s'', \state_B'') = \state_B
}}
\Biggl( \sum_{S' \in {V_{t'} \setminus X_{t'} \choose s'}} \Upsilon_{t',A}(S',\state_A') \cdot \Upsilon_{t',B}(S',\state_B') \Biggr)
\\
& \qquad\qquad\qquad\qquad\qquad\qquad\qquad
\Biggl( \sum_{S'' \in {V_{t''} \setminus X_{t''} \choose s''}} \Upsilon_{t'',A}(S'',\state_A'') \cdot \Upsilon_{t'',B}(S'',\state_B'') \Biggr) \\
& = \frac{1}{\mat{A}(\tau) \cdot \mat{B}(\tau)}
\sum_{\substack{
    s',s'': s'+s'' = s \\
    \state_A', \state_A'', \state_B', \state_B'' \\
    \join(s', \state_A', s'', \state_A'') = \state_A \\
    \join(s', \state_B', s'', \state_B'') = \state_B
}}
dp_{t',s'}\left[{\state_A' \atop \state_B'}\right] \cdot dp_{t'',s''}\left[{\state_A'' \atop \state_B''}\right].
\end{align*}
Running through all possible combinations of
$s'$, $s''$,
$ \state_A' $, $\state_A''$, $\state_B'$, $\state_B''$,
we can compute $dp_{t,s}\left[{\state_A \atop \state_B}\right]$ by
$ w^{\bigO(w)} n^{\bigO(1)} $ arithmetic operations.
\qed

\subsubsection{Proof of \cref{thm:fpt-treewidth-m}}
\label{subsubsec:fpt:treewidth:proof-m}

Let $\mat{A}^1, \ldots, \mat{A}^m$ be $m$ matrices in $\bbQ^{n \times n}$ and
$ (T, \{X_v\}_{v \in T}) $ be a nice tree decomposition of graph
$ ([n], \bigcup_{i \in [m]} \nnz(\mat{A}^i)) $,
which is of width at most $w$ and rooted at $r \in T$.
We aim to compute the following quantity for each node $t$:
\begin{align}
\label{eq:dp-sum-m}
    \sum_{\substack{
    S \subseteq V_t \setminus X_t \\
    }}
    \prod_{i \in [m]}
    \sum_{\substack{
        O_{i,1}, O_{i,2} \subseteq X_t: |O_{i,1}|=|O_{i,2}| \\
        \sigma_i: S \uplus O_{i,1} \bij S \uplus O_{i,2}
    }}
    \sgn_{\prec_t}(\sigma_i) \mat{A}^i(\sigma_i).
\end{align}
In particular,
\cref{eq:dp-sum-m} is equal to $\ZZ_m(\mat{A}^1, \ldots, \mat{A}^m)$ at the root $r$.
A \emph{configuration for node $t$} is defined as a tuple
$ \state = (O_1, O_2, F_1, F_2, \tau, \nu) $
in the same manner as in \cref{subsec:fpt:treewidth}.
Due to \cref{lem:bijection,eq:bijection-partition},
letting $ \state_i = (O_{i,1}, O_{i,2}, F_{i,1}, F_{i,2}, \tau_i, \nu_i) $ for $i \in [m]$ be a configuration for node $t$ of $T$,
we can express \cref{eq:dp-sum-m} as follows:
\begin{align*}
    \sum_{\substack{
    \state_1, \ldots, \state_m \text{ for } t \\ 0 \leq s \leq n
    }}
    (-1)^{\nu_1 + \cdots + \nu_m}
    \sum_{S \subseteq {V_t \setminus X_t \choose s}}
    \prod_{i \in [m]} \Upsilon_{t,i}(S,\state_i),
\end{align*}
where for all $i \in [m]$, we define
\begin{align*}
    \Upsilon_{t,i}(S, \state_i) \triangleq \sum_{\sigma_i \in \Sigma(S, \state_i)} \mat{A}^i(\sigma_i).
\end{align*}

We then define a dynamic programming table $dp_{t,s}$ for each $t \in T$ and $s \in [0\isep n]$
to store the following quantity with key
$\left[\begin{matrix}\state_1 \\ \vdots \\ \state_m\end{matrix}\right]$:
\begin{align*}
    dp_{t,s}\left[\begin{matrix}
    \state_1 \\ \vdots \\ \state_m
    \end{matrix}\right] \triangleq
    \sum_{S \in {V_t \setminus X_t \choose s}}
    \prod_{i \in [m]} \Upsilon_{t,i}(S,\state_i).
\end{align*}
By definition, $dp_{t,s}$ contains at most $ w^{\bigO(mw)} $ entries.
The number of bits required to represent each entry of $dp_{t,s}$ is roughly bounded by $ \bigO(\sum_{i \in [m]}\isize(\mat{A}^i) n \log n) $.

We can easily extend the proof of \cref{lem:introduce,lem:forget,lem:join} for the case of $m$ matrices as follows.

\begin{lemma}
\label{lem:introduce-m}
Let $t$ be an introduce node with one child $t'$ such that $X_t = X_{t'} + v$, and $s \in [0\isep n]$.
Given $dp_{t',s'}$ for all $s'$,
we can compute each entry of $ dp_{t,s} $ in $ (mn)^{\bigO(1)} $ time.
\end{lemma}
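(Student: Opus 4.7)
The plan is to generalize the proof of \cref{lem:introduce} from two matrices to $m$ matrices essentially verbatim, since the introduce-node update factorizes across the matrices: each $\Upsilon_{t,i}(S,\state_i)$ is transformed independently.

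First, I would observe that if $v \in F_{i,1} \cup F_{i,2}$ for some $i \in [m]$, then $\Upsilon_{t,i}(S, \state_i) = 0$. This follows exactly as in the proof of \cref{lem:introduce}: every bijection $\sigma_i \in \Sigma(S, \state_i)$ must satisfy $\sigma_i(v) \in V_t \setminus X_t$ or $\sigma_i^{-1}(v) \in V_t \setminus X_t$, but the separator property of the tree decomposition (with respect to the union graph $([n], \bigcup_{j \in [m]} \nnz(\mat{A}^j))$) forces $\mat{A}^i_{\{v\}, V_t \setminus X_t}$ and $\mat{A}^i_{V_t \setminus X_t, \{v\}}$ to be zero blocks, so $\mat{A}^i(\sigma_i) = 0$. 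Hence we may assume $v \notin F_{i,1} \cup F_{i,2}$ for every $i \in [m]$.

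Next, I would invoke \cref{lem:intro-inv} and \cref{def:intro-map} unchanged on each $\state_i$ separately. \cref{lem:intro-completeness} then yields, for every $i \in [m]$,
\begin{align*}
\Upsilon_{t,i}(S,\state_i) = \Upsilon_{t',i}(S,\intro(\state_i)) \cdot \mu_i(\state_i),
\end{align*}
where $\mu_i(\state_i)$ is the five-case multiplier from the proof of \cref{lem:introduce} (equal to $1$, $A^i_{v,\tau_i(v)}$, $A^i_{\tau_i^{-1}(v),v}$, $A^i_{v,v}$, or $A^i_{v,\tau_i(v)} \cdot A^i_{\tau_i^{-1}(v),v}$ depending on the configuration). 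Since $V_t \setminus X_t = V_{t'} \setminus X_{t'}$, multiplying these identities over $i \in [m]$ and summing over $S \in \binom{V_t \setminus X_t}{s}$ gives
\begin{align*}
dp_{t,s}\!\left[\begin{matrix}\state_1 \\ \vdots \\ \state_m\end{matrix}\right]
= dp_{t',s}\!\left[\begin{matrix}\intro(\state_1) \\ \vdots \\ \intro(\state_m)\end{matrix}\right] \cdot \prod_{i \in [m]} \mu_i(\state_i),
\end{align*}
whenever $v \notin \bigcup_{i \in [m]}(F_{i,1} \cup F_{i,2})$; otherwise the entry is $0$.

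For the runtime, each $\intro(\state_i)$ is computable in $n^{\bigO(1)}$ time by \cref{lem:intro-inv,def:intro-map}, and each multiplier $\mu_i(\state_i)$ is a product of at most two matrix entries. Doing this for all $i \in [m]$ gives $(mn)^{\bigO(1)}$ total time per entry, as required. I do not anticipate a genuine obstacle here: the key structural facts (separator property, case analysis for $\intro$, and parity correction $\Delta$) are about a single matrix at a time, so the $m$-matrix generalization is just a coordinatewise lift with no new interactions between matrices to track.
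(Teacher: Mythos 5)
Your proposal is correct and follows essentially the same route as the paper: apply the mapping $\intro$ and the five-case multiplier coordinatewise to each $\state_i$, obtaining $dp_{t,s}$ as the product of the corresponding $dp_{t',s}$ entry with $\prod_{i\in[m]}\mu_i(\state_i)$. Your explicit handling of the degenerate case $v \in F_{i,1} \cup F_{i,2}$ (via the separator property of the union graph) is a detail the paper leaves implicit but is exactly the right justification.
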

\begin{proof}
Using a mapping $\intro$ introduced in the proof of \cref{lem:introduce},
we have that for $m$ configurations $\state_1, \ldots, \state_m$ for $t$,
\begin{align}
    dp_{t,s}\left[\begin{matrix}
    \state_1 \\ \vdots \\ \state_m
    \end{matrix}\right] =
    dp_{t',s}\left[\begin{matrix}
    \intro(\state_1) \\ \vdots \\ \intro(\state_m)
    \end{matrix}\right]
    \prod_{i \in [m]}
    \underbrace{
    \begin{cases}
    1 & \text{Case } (1), \\
    A^i_{v,\tau_i(v)} & \text{Case } (2), \\
    A^i_{\tau_i^{-1}(v),v} & \text{Case } (3), \\
    A^i_{v,v} & \text{Case } (4), \\
    A^i_{v,\tau_i(v)} \cdot A^1_{\tau_i^{-1}(v),v} & \text{Case } (5).
    \end{cases}
    }_{\text{division into cases by } \state_i}
\end{align}
Since evaluating $\intro(\state_i)$ for each $i \in [m]$
completes in $(mn)^{\bigO(1)}$ time, so does evaluating $dp_{t,s}$.
\end{proof}

\begin{lemma}
\label{lem:forget-m}
Let $t$ be a forget node with one child $t'$ such that $X_t = X_{t'} - v $, and $s \in [0\isep n]$.
Given $dp_{t',s'}$ for all $s'$,
we can compute each entry of $dp_{t,s}$ in $w^{\bigO(wm)} n^{\bigO(1)}$ time.
\end{lemma}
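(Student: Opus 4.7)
The plan is to mimic the proof of \cref{lem:forget} essentially verbatim, but applying the $\forget$ mapping independently to each of the $m$ configuration slots in the dynamic programming key. The $\forget$ mapping is a property of a single configuration: it determines a configuration for $t$ from a configuration for $t'$ together with the information of whether $v$ is forced into $S$ or forced out of $S$. Since $\forget$ acts on a single matrix's configuration and only depends on the node data $X_{t'}, v$, we can re-use exactly the same Definition~\ref{def:forget-map} and \cref{lem:forget-completeness,lem:forget-soundness} for each matrix $\mat{A}^i$ separately.

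First I would split the sum defining $dp_{t,s}$ into the two cases $v\notin S$ and $v\in S$, using the fact that $V_{t'}\setminus X_{t'} = (V_t \setminus X_t) - v$. In the case $v\notin S$, for each fixed $i\in[m]$ we rewrite
\begin{align*}
\Upsilon_{t,i}(S,\state_i) = \sum_{\substack{\state_i' \text{ for } t' \\ \forget(\state_i')=\state_i \\ v\notin O_{i,1}',\, v\notin O_{i,2}'}} \Upsilon_{t',i}(S,\state_i'),
\end{align*}
while in the case $v\in S$ we analogously get a sum over $\state_i'$ with $\forget(\state_i')=\state_i$ and $v\in O_{i,1}', v\in O_{i,2}'$, evaluated at $S-v \in \binom{V_{t'}\setminus X_{t'}}{s-1}$. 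Taking the product over $i\in[m]$, interchanging products and sums, and pulling the $\sum_S$ inside yields
\begin{align*}
dp_{t,s}\!\left[\begin{matrix}\state_1\\ \vdots\\ \state_m\end{matrix}\right]
= \sum_{\substack{\state_1',\ldots,\state_m' \text{ for } t' \\ \forget(\state_i')=\state_i\,\forall i \\ v\notin O_{i,1}',\, v\notin O_{i,2}'\,\forall i}} dp_{t',s}\!\left[\begin{matrix}\state_1'\\ \vdots\\ \state_m'\end{matrix}\right]
+ \sum_{\substack{\state_1',\ldots,\state_m' \text{ for } t' \\ \forget(\state_i')=\state_i\,\forall i \\ v\in O_{i,1}',\, v\in O_{i,2}'\,\forall i}} dp_{t',s-1}\!\left[\begin{matrix}\state_1'\\ \vdots\\ \state_m'\end{matrix}\right],
\end{align*}
with the convention $dp_{t',-1}[\cdot]\triangleq 0$, exactly as in the two-matrix case.

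For correctness, the key observation is that the partition identity underlying \cref{lem:forget-completeness,lem:forget-soundness} applies factorwise: fixing $S$, each $\Upsilon_{t,i}(S,\state_i)$ is independently expressed as a sum of $\Upsilon_{t',i}(S,\state_i')$ over the appropriate $\state_i'$, so the $m$-fold product of such expansions is a sum over tuples $(\state_1',\ldots,\state_m')$ of products of $\Upsilon_{t',i}$'s. The compatibility conditions (``all $v\in O_{i,1}', v\in O_{i,2}'$'' or ``all $v\notin O_{i,1}', v\notin O_{i,2}'$'') are forced by the common subset $S$, which is the only obstacle: the same $S$ must be used for all $i$, so we cannot mix the two cases across different $i$'s.

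For the runtime, the number of configurations for $t'$ is $w^{\bigO(w)}$, so the number of tuples $(\state_1',\ldots,\state_m')$ to iterate over is at most $w^{\bigO(wm)}$; for each tuple, evaluating $\forget(\state_i')$ and the containment checks for $v$ takes $n^{\bigO(1)}$ time, and looking up the relevant entry of $dp_{t',s}$ or $dp_{t',s-1}$ takes $n^{\bigO(1)}$ time. Summing everything gives the claimed $w^{\bigO(wm)} n^{\bigO(1)}$ bound per entry of $dp_{t,s}$, completing the proof.
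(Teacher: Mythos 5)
Your proposal is correct and follows essentially the same route as the paper's proof: apply the $\forget$ mapping independently to each of the $m$ configuration slots, split on whether $v\in S$, and sum over all compatible tuples $(\state_1',\ldots,\state_m')$, giving the $w^{\bigO(wm)}n^{\bigO(1)}$ bound. Your recurrence even correctly uses $dp_{t',s-1}$ in the $v\in S$ branch, where the paper's displayed formula has an apparent typo writing $dp_{t',s}$ in both terms.
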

\begin{proof}
Using a mapping $\forget$ introduced in the proof of \cref{lem:forget},
for any $S \subseteq V_t \setminus X_t$,
$\state$ for $t$, and $i \in [m]$,
we have that 
\begin{align*}
    \Upsilon_{t,i}(S, \state) = 
    \begin{cases}
    \displaystyle\sum_{\substack{
        \state' \text{ for } t' \\ \forget(\state') = \state \\ v \not\in O_1', v \not\in O_2'
    }} \Upsilon_{t',i}(S, \state') & \text{if } v \not\in S, \\
    \displaystyle\sum_{\substack{
        \state' \text{ for } t' \\ \forget(\state') = \state \\ v \in O_1', v \in O_2'
    }} \Upsilon_{t',i}(S-v, \state') & \text{if } v \in S.
    \end{cases}
\end{align*}
Consequently, $dp_{t,s}$ can be decomposed into two sums as follows.

\begin{align*}
dp_{t,s}\left[\begin{matrix}
\state_1 \\ \vdots \\ \state_m
\end{matrix}\right]
= \sum_{\substack{
        \state_1' \text{ for } t' \\
        \forget(\state_1') = \state_1 \\
        v \not \in O_{1,1}', v \not \in O_{1,2}'
    }}
    \cdots
    \sum_{\substack{
        \state_m' \text{ for } t' \\
        \forget(\state_m') = \state_m \\
        v \not \in O_{m,1}', v \not \in O_{m,2}'
    }}
    dp_{t',s}\left[\begin{matrix}
    \state_1' \\ \vdots \\ \state_m'
    \end{matrix}\right] +
    \sum_{\substack{
        \state_1' \text{ for } t' \\
        \forget(\state_1') = \state_1 \\
        v \in O_{1,1}', v \in O_{1,2}'
    }}
    \cdots
    \sum_{\substack{
        \state_m' \text{ for } t' \\
        \forget(\state_m') = \state_m \\
        v \in O_{m,1}', v \in O_{m,2}'
    }}
    dp_{t',s}\left[\begin{matrix}
    \state_1' \\ \vdots \\ \state_m'
    \end{matrix}\right].
\end{align*}
Here, $\state_i'$ for $i \in [m]$ denotes a tuple
$(O'_{i,1}, O'_{i,2}, F'_{i,1}, F'_{i,2}, \tau'_i, \nu'_i)$.
Running through all possible combinations of $\state_1', \ldots, \state_m'$,
we can compute each entry of $ dp_{t,s} $ in $ w^{\bigO(wm)} n^{\bigO(1)} $ time.
\end{proof}

\begin{lemma}
\label{lem:join-m}
Let $t$ be a join node with two children $t'$ and $t''$ such that $X_t = X_{t'} = X_{t''}$, and $s \in [0\isep n]$.
Given $dp_{t',s'}$ and $dp_{t'',s''}$ for all $s'$ and $s''$, respectively,
we can compute each entry of $dp_{t,s}$ in $w^{\bigO(wm)} n^{\bigO(1)}$ time.
\end{lemma}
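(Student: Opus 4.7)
The plan is to generalize the proof of \cref{lem:join} from two matrices to $m$ matrices, reusing the concatenation operation $\sqcup$ from \cref{def:concat} and the mapping $\join$ from \cref{def:join-map} entirely unchanged. These objects are defined independently of any particular matrix, so they apply verbatim: given $m$ configurations $\state_i'$ for $t'$ and $m$ configurations $\state_i''$ for $t''$ that are compatible in the sense that $\join(s', \state_i', s'', \state_i'')$ is defined for every $i \in [m]$ (which forces in particular $\tau_i' = \tau_i''$; we call this common value $\tau_i$), each bijection counted at node $t$ factors uniquely as a concatenation of bijections counted at $t'$ and $t''$.

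First I would observe that \cref{lem:join-completeness,lem:join-soundness} — which establish the bijective correspondence between $\Sigma(S,\state)$ and tuples of the form $(\sigma' \sqcup \sigma'')$ where $\sigma' \in \Sigma(S',\state')$, $\sigma'' \in \Sigma(S'',\state'')$ with $\join(S',\state',S'',\state'') = (S,\state)$ — are statements about a single matrix-indexed configuration and apply independently to each of the $m$ coordinates. Next, for each $i \in [m]$, any $\sigma_i' \sqcup \sigma_i'' \in \Sigma(S' \uplus S'', \state_i)$ satisfies the same multiplicative identity as before:
\begin{align*}
\mat{A}^i(\sigma_i' \sqcup \sigma_i'') = \frac{\mat{A}^i(\sigma_i') \cdot \mat{A}^i(\sigma_i'')}{\mat{A}^i(\tau_i)}.
\end{align*}
Taking the product over $i \in [m]$ and summing over $\sigma_1, \ldots, \sigma_m$ factors independently across coordinates, yielding
\begin{align*}
\prod_{i \in [m]} \Upsilon_{t,i}(S' \uplus S'', \state_i) = \prod_{i \in [m]} \frac{1}{\mat{A}^i(\tau_i)} \sum_{\substack{\state_i' \text{ for } t', \state_i'' \text{ for } t'' \\ \join(|S'|, \state_i', |S''|, \state_i'') = \state_i}} \Upsilon_{t',i}(S', \state_i') \cdot \Upsilon_{t'',i}(S'', \state_i'').
\end{align*}

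Summing over $S = S' \uplus S''$ with $|S|=s$ and partitioning the sum on $s = s' + s''$, I would obtain the recurrence
\begin{align*}
dp_{t,s}\!\left[\begin{matrix} \state_1 \\ \vdots \\ \state_m \end{matrix}\right] = \frac{1}{\prod_{i \in [m]} \mat{A}^i(\tau_i)} \sum_{\substack{s'+s''=s \\ \state_1', \ldots, \state_m', \state_1'', \ldots, \state_m'' \\ \join(s', \state_i', s'', \state_i'') = \state_i \; \forall i}} dp_{t',s'}\!\left[\begin{matrix} \state_1' \\ \vdots \\ \state_m' \end{matrix}\right] \cdot dp_{t'',s''}\!\left[\begin{matrix} \state_1'' \\ \vdots \\ \state_m'' \end{matrix}\right].
\end{align*}
The number of configurations for a single index is $w^{\bigO(w)}$, so the number of tuples $(\state_1', \ldots, \state_m')$ (and independently $(\state_1'', \ldots, \state_m'')$) is $w^{\bigO(wm)}$; combined with the $\bigO(n)$ choices of $s'$, evaluating the right-hand side takes $w^{\bigO(wm)} n^{\bigO(1)}$ arithmetic operations.

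The main obstacle is bookkeeping rather than conceptual: one must verify that the factorization of bijections and of the sign $\sgn_{\prec_t}$ — encoded through \cref{lem:join-inv} — decouples cleanly across the $m$ matrix indices, because $\join$ enforces the \emph{same} $\tau_i$ on both sides only within each index $i$ and imposes no cross-index constraints. Once this is noted, the per-index analysis from the two-matrix case carries over and the claim follows.
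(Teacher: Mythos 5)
Your proposal is correct and follows essentially the same route as the paper's proof: apply the $\join$ machinery (\cref{def:join-map,lem:join-inv,lem:join-completeness,lem:join-soundness}) coordinate-wise to each of the $m$ configurations, use the identity $\mat{A}^i(\sigma_i' \sqcup \sigma_i'') = \mat{A}^i(\sigma_i')\mat{A}^i(\sigma_i'')/\mat{A}^i(\tau_i)$ per index, and sum over size splits $s'+s''=s$ and all compatible configuration tuples to get the product recurrence, with the $w^{\bigO(wm)}$ count of tuples giving the stated runtime. Your explicit remark that $\join$ imposes no cross-index constraints is exactly the point the paper leaves implicit.
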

\begin{proof}
Using a mapping $\join$ introduced in the proof of \cref{lem:join},
for any
$S' \subseteq V_{t'} \setminus X_{t'}$,
$S'' \subseteq V_{t''} \setminus X_{t''}$
$\state = (O_1,O_2,F_1,F_2,\tau,\nu)$ for $t$, and $ i \in [m]$,
we have that 
\begin{align*}
\Upsilon_{t,i}(S' \uplus S'', \state) = \frac{1}{\mat{A}^i(\tau)}
\sum_{\substack{
\state' \text{ for } t', \state'' \text{ for } t'' \\
\join(|S'|, \state', |S''|, \state'') = \state
}}
\Upsilon_{t',A}(S', \state') \cdot \Upsilon_{t',A}(S'', \state'').
\end{align*}
Consequently, $ dp_{t,s} $ can be decomposed as follows.

\begin{align*}
dp_{t,s}\left[\begin{matrix}
\state_1 \\ \vdots \\ \state_m
\end{matrix}\right] =
    \frac{1}{\mat{A}^1(\tau) \cdots \mat{A}^m(\tau)}
    \sum_{\substack{s',s'' \in [0\isep s] \\ s'+s'' = s}}
    \sum_{\substack{
        \state_1', \state_1'' \text{ for } t' \\
        \join(s',\state_1',,s''\state_1'') = \state_1
    }}
    \cdots
    \sum_{\substack{
        \state_m', \state_m'' \text{ for } t' \\
        \join(s',\state_m',s'',\state_m'') = \state_m
    }}
    dp_{t',s'}\left[\begin{matrix}
    \state_1' \\ \vdots \\ \state_m'
    \end{matrix}\right] \cdot
    dp_{t'',s''}\left[\begin{matrix}
    \state_1'' \\ \vdots \\ \state_m''
    \end{matrix}\right].
\end{align*}
Running through all possible combinations of 
$s',s'',\state_1',\state_1'', \ldots, \state_m',\state_m''$,
we can compute each entry of $dp_{t,s}$ in $w^{\bigO(wm)} n^{\bigO(1)}$ time.
\end{proof}

We are now ready to describe an FPT algorithm for computing $\ZZ_m$.
Our algorithm is almost identical that for \cref{thm:fpt-treewidth-2}.
Given a tree decomposition for $ ([n], \bigcup_{i \in [m]} \nnz(\mat{A}^i)) $
of width at most $w$,
we transform it to a nice tree decomposition
$(T, \{X_t\}_{t \in T})$ rooted at $r$
of width at most $w$ that
has $\bigO(wn)$ nodes in polynomial time \citep{cygan2015parameterized}.
For every leaf $\ell$ of $T$, we initialize $dp_{\ell,s}$ as
\begin{align*}
    dp_{\ell,s}\left[\begin{matrix}
    \emptyset,\emptyset,\emptyset,\emptyset,\emptyset \bij \emptyset,\nu_1 \\ \vdots \\ \emptyset,\emptyset,\emptyset,\emptyset,\emptyset \bij \emptyset,\nu_m
    \end{matrix}\right] = 
    \begin{cases}
    1 & \text{if } s = 0 \text{ and } \nu_1 = \cdots = \nu_m = 0, \\
    0 & \text{otherwise}.
    \end{cases}
\end{align*}
Then, for each non-leaf node $t \in T$,
we apply either of \cref{lem:introduce-m,lem:forget-m,lem:join-m} to fill
$dp_{t,s}$ using already-filled $dt_{t',s'}$ for
all children $t'$ of $t$
in a bottom-up fashion.
Completing dynamic programming,
we compute $\ZZ_m$ as follows:
\begin{align*}
    \ZZ_m(\mat{A}^1, \ldots, \mat{A}^m) =
    \sum_{s, \nu_1, \ldots, \nu_m} (-1)^{\nu_1 + \cdots + \nu_m} \cdot
    dp_{r,s}\left[\begin{matrix}
    \emptyset,\emptyset,\emptyset,\emptyset,\emptyset \bij \emptyset,\nu_1 \\
    \vdots \\
    \emptyset,\emptyset,\emptyset,\emptyset,\emptyset \bij \emptyset,\nu_m
    \end{matrix}\right].
\end{align*}
The correctness follows from \cref{lem:introduce-m,lem:forget-m,lem:join-m}.
Because
$T$ has at most $\bigO(wn)$ nodes, each table is of size $w^{\bigO(wm)}$, and
each table entry can be computed in $ w^{\bigO(wm)} n^{\bigO(1)} $ time by
\cref{lem:introduce-m,lem:forget-m,lem:join-m},
the whole time complexity is bounded by $ w^{\bigO(wm)} n^{\bigO(1)} $.
\qed

\subsection{Parameterization by Maximum Treewidth}
\label{subsec:fpt:intract}

Finally, let us take the \emph{maximum treewidth} of two matrices as a parameter and
refute its fixed-parameter tractability.
This parameterization is preferable to the previous one
because the maximum treewidth can be far smaller than the treewidth of the union.
One can, for example, construct two $n \times n$ matrices $\mat{A}$ and $\mat{B}$ such that
$ \max\{\tw(\mat{A}), \tw(\mat{B})\} = 1 $ and
$ \tw(\nnz(\mat{A}) \cup \nnz(\mat{B})) = \bigO(\sqrt{n}) $
because we can ``weave'' two paths into a grid.
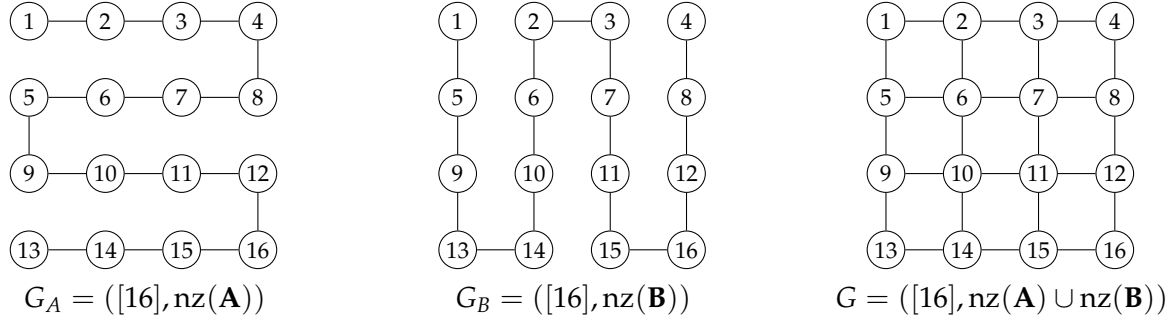
\begin{figure}[tbp]
\begin{minipage}{0.31\hsize}
\centering
\scalebox{0.5}{
\begin{tikzpicture}
[circlenode/.style={draw, circle, minimum height=1cm, font=\LARGE, inner sep=0}]
\node[circlenode, thick](v1){$1$};
\node[circlenode, thick, right=1cm of v1](v2){$2$};
\node[circlenode, thick, right=1cm of v2](v3){$3$};
\node[circlenode, thick, right=1cm of v3](v4){$4$};

\node[circlenode, thick, below=1cm of v1](v5){$5$};
\node[circlenode, thick, right=1cm of v5](v6){$6$};
\node[circlenode, thick, right=1cm of v6](v7){$7$};
\node[circlenode, thick, right=1cm of v7](v8){$8$};

\node[circlenode, thick, below=1cm of v5](v9){$9$};
\node[circlenode, thick, right=1cm of v9](v10){$10$};
\node[circlenode, thick, right=1cm of v10](v11){$11$};
\node[circlenode, thick, right=1cm of v11](v12){$12$};

\node[circlenode, thick, below=1cm of v9](v13){$13$};
\node[circlenode, thick, right=1cm of v13](v14){$14$};
\node[circlenode, thick, right=1cm of v14](v15){$15$};
\node[circlenode, thick, right=1cm of v15](v16){$16$};
\foreach \u / \v in {v1/v2, v2/v3, v3/v4, v4/v8, v5/v6, v6/v7, v7/v8, v5/v9, v9/v10, v10/v11, v11/v12, v12/v16, v13/v14, v14/v15, v15/v16}
    \draw[thick] (\u)--(\v);
\end{tikzpicture}
}
$G_A=([16], \nnz(\mat{A}))$
\end{minipage}
\hfill
\begin{minipage}{0.31\hsize}
\centering
\scalebox{0.5}{
\begin{tikzpicture}
[circlenode/.style={draw, circle, minimum height=1cm, font=\LARGE, inner sep=0}]
\node[circlenode, thick](v1){$1$};
\node[circlenode, thick, right=1cm of v1](v2){$2$};
\node[circlenode, thick, right=1cm of v2](v3){$3$};
\node[circlenode, thick, right=1cm of v3](v4){$4$};

\node[circlenode, thick, below=1cm of v1](v5){$5$};
\node[circlenode, thick, right=1cm of v5](v6){$6$};
\node[circlenode, thick, right=1cm of v6](v7){$7$};
\node[circlenode, thick, right=1cm of v7](v8){$8$};

\node[circlenode, thick, below=1cm of v5](v9){$9$};
\node[circlenode, thick, right=1cm of v9](v10){$10$};
\node[circlenode, thick, right=1cm of v10](v11){$11$};
\node[circlenode, thick, right=1cm of v11](v12){$12$};

\node[circlenode, thick, below=1cm of v9](v13){$13$};
\node[circlenode, thick, right=1cm of v13](v14){$14$};
\node[circlenode, thick, right=1cm of v14](v15){$15$};
\node[circlenode, thick, right=1cm of v15](v16){$16$};
\foreach \u / \v in {v1/v5, v5/v9, v9/v13, v13/v14, v14/v10, v10/v6, v6/v2, v2/v3, v3/v7, v7/v11, v11/v15, v15/v16, v16/v12, v12/v8, v8/v4}
    \draw[thick] (\u)--(\v);
\end{tikzpicture}
}
$G_B=([16], \nnz(\mat{B}))$
\end{minipage}
\hfill
\begin{minipage}{0.31\hsize}
\centering
\scalebox{0.5}{
\begin{tikzpicture}
[circlenode/.style={draw, circle, minimum height=1cm, font=\LARGE, inner sep=0}]
\node[circlenode, thick](v1){$1$};
\node[circlenode, thick, right=1cm of v1](v2){$2$};
\node[circlenode, thick, right=1cm of v2](v3){$3$};
\node[circlenode, thick, right=1cm of v3](v4){$4$};

\node[circlenode, thick, below=1cm of v1](v5){$5$};
\node[circlenode, thick, right=1cm of v5](v6){$6$};
\node[circlenode, thick, right=1cm of v6](v7){$7$};
\node[circlenode, thick, right=1cm of v7](v8){$8$};

\node[circlenode, thick, below=1cm of v5](v9){$9$};
\node[circlenode, thick, right=1cm of v9](v10){$10$};
\node[circlenode, thick, right=1cm of v10](v11){$11$};
\node[circlenode, thick, right=1cm of v11](v12){$12$};

\node[circlenode, thick, below=1cm of v9](v13){$13$};
\node[circlenode, thick, right=1cm of v13](v14){$14$};
\node[circlenode, thick, right=1cm of v14](v15){$15$};
\node[circlenode, thick, right=1cm of v15](v16){$16$};
\foreach \u / \v in {v1/v2, v2/v3, v3/v4, v5/v6, v6/v7, v7/v8, v9/v10, v10/v11, v11/v12, v13/v14, v14/v15, v15/v16, v1/v5, v5/v9, v9/v13, v2/v6, v6/v10, v10/v14, v3/v7, v7/v11, v11/v15, v4/v8, v8/v12, v12/v16}
    \draw[thick] (\u)--(\v);
\end{tikzpicture}
}
$G = ([16], \nnz(\mat{A}) \cup \nnz(\mat{B}))$
\end{minipage}
\caption{Three graphs.}
\label{fig:weave}
\end{figure} 

\begin{example}
Consider any two $16 \times 16$ matrices $\mat{A}$ and $\mat{B}$
whose nonzero entries $\nnz(\mat{A})$ and $\nnz(\mat{B})$ are drawn in \cref{fig:weave}.
Because each of $\nnz(\mat{A})$ and $\nnz(\mat{B})$ forms a path graph on $16$ vertices, we have that
$ \max\{\tw(\mat{A}), \tw(\mat{B})\} = 1 $.
On the other hand, the graph $([16], \nnz(\mat{A}) \cup \nnz(\mat{B}))$ forms a $4 \times 4$ grid graph, and we thus have that
$ \tw(\nnz(\mat{A}) \cup \nnz(\mat{B})) = 4 $.
\end{example}
Unluckily, even when both $\mat{A}$ and $\mat{B}$ have treewidth at most $3$,
it is still \shP-hard to compute $\ZZ_2(\mat{A},\mat{B})$.
Thus, parameterization by $\max\{ \tw(\mat{A}), \tw(\mat{B}) \}$ is not even XP unless \FP~$=$~\shP.

\begin{theorem}
\label{thm:treewidth-hard}
Let $\mat{A}, \mat{B}$ be two positive semi-definite $(0,1)$-matrices
such that $\tw(\mat{A}) \leq 3$ and $\tw(\mat{B}) \leq 3$.
Then, computing $\ZZ_2(\mat{A}, \mat{B})$ is \shP-hard.
\end{theorem}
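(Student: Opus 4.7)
The plan is to give a polynomial-time reduction from the \shP-hard problem of counting matchings in a bipartite graph of bounded maximum degree (which remains \shP-hard even when restricted to $3$-regular bipartite graphs) to computing $\ZZ_2(\mat{A},\mat{B})$. Given such a bipartite graph $G = (L \uplus R, E)$ with $|E| = n$ and $\Delta(G) \leq 4$, I would index $[n]$ by the edges of $G$ and construct $\mat{A},\mat{B} \in \{0,1\}^{n \times n}$ by $A_{e,e'} \triangleq [\![e \text{ and } e' \text{ share a vertex in } L]\!]$ (with $e = e'$ treated as ``sharing''), and analogously $B_{e,e'}$ with respect to $R$. This is exactly the matching construction already mentioned in \cref{sec:pre} and in \citet{gillenwater2014approximate}; the whole point of \cref{thm:treewidth-hard} is to observe that, on suitably sparse inputs, this construction already realizes simultaneously bounded treewidth on both factors.

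First I would verify the matching-counting identity. After permuting rows and columns so that edges sharing a common left endpoint are contiguous, $\mat{A}$ is block-diagonal with one block $\mat{J}_{\deg(v)}$ for each $v \in L$. Hence for any $S \subseteq E$, the submatrix $\mat{A}_{S,S}$ is block-diagonal with blocks $\mat{J}_{|S \cap \delta(v)|}$, where $\delta(v)$ denotes the set of edges incident to $v$. Therefore $\det(\mat{A}_{S,S})$ equals $1$ precisely when every $v \in L$ satisfies $|S \cap \delta(v)| \leq 1$, and $0$ otherwise. The analogous statement for $\mat{B}$ gives $\det(\mat{A}_{S,S}) \det(\mat{B}_{S,S}) = [\![S \text{ is a matching of } G]\!]$, so $\ZZ_2(\mat{A},\mat{B})$ equals the number of matchings in $G$ and the reduction is complete. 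Positive semi-definiteness follows because each block $\mat{J}_d$ is rank-$1$ with the nonnegative eigenvalue $d$, and the $(0,1)$-property is built in by construction.

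Next I would bound the treewidth. The graph $([n], \nnz(\mat{A}))$ is a vertex-disjoint union of cliques, one on $\delta(v)$ for each $v \in L$, each of size at most $\Delta(G) \leq 4$. Since a vertex-disjoint union of cliques of size at most $k$ has treewidth exactly $k-1$ (a witnessed tree decomposition is a star on the cliques with one bag per clique), this yields $\tw(\mat{A}) \leq 3$; symmetrically $\tw(\mat{B}) \leq 3$. (Using $3$-regular bipartite inputs would in fact give the stronger bound $\tw \leq 2$.)

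The main obstacle is locating a bipartite graph class simultaneously satisfying (a) the maximum degree is small enough to force $\tw(\mat{A}), \tw(\mat{B}) \leq 3$ and (b) counting all matchings --- not just perfect matchings --- remains \shP-hard. Standard \shP-hardness results for the matching polynomial on $3$-regular (bipartite) graphs, evaluated at $x = 1$, supply this ingredient; once it is in hand, everything else above is a short verification.
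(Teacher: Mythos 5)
Your proposal is correct and is essentially identical to the paper's proof: the same Gillenwater-style edge-indexed matrices, the same block-diagonal/clique analysis giving both the matching-counting identity and the treewidth bound of $3$. The hardness ingredient you were unsure about is exactly what the paper cites, namely the \shP-completeness of counting all matchings in bipartite graphs of maximum degree $4$ due to \citet{vadhan2001complexity}.
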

\begin{proof}
We reduce from the problem of counting all (not necessarily perfect) matchings
in a bipartite graph of maximum degree $4$,
which is known to be \shP-complete \citep{vadhan2001complexity}.
Let $ H = (X \uplus Y, E) $
be a bipartite graph of maximum degree $4$,
where $ E \subseteq X \times Y $ is the set of edges between $X$ and $Y$.
In accordance with \citet{gillenwater2014approximate}'s reduction,
we construct two positive semi-definite matrices
$ \mat{A}, \mat{B} \in \{0,1\}^{E \times E} $
so that
$A_{i,j}$ is $1$ if edges $i, j$ in $E$ share a common vertex in $X$ and $0$ otherwise, and
$B_{i,j}$ is $1$ if edges $i, j$ in $E$ share a common vertex in $Y$ and $0$ otherwise.
Then,
we have that for any edge set $S \subseteq E$,
the value of $ \det(\mat{A}_{S,S}) \det(\mat{B}_{S,S}) $
is $1$ if $S$ is a matching in $H$ and $0$ otherwise, that is,
$ \ZZ_2(\mat{A}, \mat{B}) $ is exactly equal to
the total number of matchings in $H$,
and thus \shP-hardness follows.

By construction, $\mat{A}$ and $\mat{B}$ must be block-diagonal matrices,
where each block is a $k \times k$ all-one matrix for some $k \in [4]$.
Because the graph $([n], \nnz(\mat{A}))$ is the disjoint union of cliques of size at most $4$, whose treewidth is at most $3$,
it holds that $ \tw(\mat{A}) \leq 3 $.
Similarly, we have that $ \tw(\mat{B}) \leq 3 $, completing the proof.
\end{proof}

\section{Extensions to Fixed-Size $\Pi$-DPPs}
\label{sec:kdpp}

In this section,
we impose \emph{size constraints} on $\Pi$-DPPs so as to produce a fixed-size set and 
analyze the normalizing constant of the resulting distribution.
Formally, given $m$ P$_0$-matrices $\mat{A}^1, \ldots, \mat{A}^m \in \bbR^{n \times n}$ and a size parameter $k \in [0\isep n]$,
the \emph{$k\Pi$-DPP} is defined as a distribution whose probability mass for each subset $S \subseteq [n]$ as
proportional to $\det(\mat{A}^1_{S,S}) \cdots \det(\mat{A}^m_{S,S}) \cdot [\![S \in {[n] \choose k}]\!]$.
We will use $\ZZ_{m,k}(\mat{A}^1, \ldots, \mat{A}^m)$ to denote 
its \emph{normalizing constant}; namely,
\begin{align*}
    \ZZ_{m,k}(\mat{A}^1, \ldots, \mat{A}^m) \triangleq
    \sum_{S \in {[n] \choose k}} \prod_{i \in [m]} \det(\mat{A}^i_{S,S}).
\end{align*}
The special case of $m=1$ coincides with $k$-DPPs \citep{kulesza2011kdpps},
the normalizing constant of which is known to be amenable to compute.
We will investigate the computational complexity of estimating $\ZZ_{m,k}$ by taking a similar approach to the one in
\cref{sec:intract,sec:inapprox,sec:fpt}.
Intuitively, $\ZZ_{m,k}$ seems harder to estimate than $\ZZ_m$ because 
we have that
$ \ZZ_m = \sum_{0 \leq k \leq n} \ZZ_{m,k} $.

\subsection{Intractability, Inapproximability, and Indistinguishability}
First, we provide hardness results that can be thought of as size-constraint counterparts to 
those in \cref{sec:intract,sec:inapprox}:
\begin{theorem}
\label{thm:kdpp-intract-inapprox}
The following results hold for $\ZZ_{m,k}$:
\begin{itemize}
    \item For every fixed positive even integer $p$,
    computing $ \ZZ_{p,k}(\mat{A}, \ldots, \mat{A}) \bmod 3 $ for either a $(-1,0,1)$-matrix or a P-matrix $\mat{A}$ is \UP-hard and \ModkP{3}-hard.
    \item It is \NP-hard to determine whether
    $\ZZ_{3,k}(\mat{A}, \mat{B}, \mat{C})$ is positive or $0$
    for three positive semi-definite matrices $ \mat{A}, \mat{B}, \mat{C}$ in $ \bbQ^{n \times n} $.
    Therefore, for any polynomial-time computable function $\rho$,
    $\ZZ_{3,k}$ cannot be approximated within a factor of $\rho(|\calI|)$, where $|\calI|$ is the input size, unless \cP~$=$~\NP.
    \item The mixed discriminant $D$ is AP-reducible to $\ZZ_{2,k}$.
    Therefore, if there exists an FPRAS for $\ZZ_{2,k}$, then there exists an FPRAS for $D$.
\end{itemize}
\end{theorem}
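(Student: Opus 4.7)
For the first bullet, I plan to give a polynomial-time Turing reduction from the problem of computing $\ZZ^p(\mat{A}) \bmod 3$, which is \UP-hard and \ModkP{3}-hard by \cref{cor:pow-hard} even when $\mat{A}$ is restricted to be a $(-1,0,1)$-matrix or a P-matrix. Given such an instance matrix $\mat{A}$, the reduction simply queries the oracle for $\ZZ_{p,k}(\mat{A}, \ldots, \mat{A}) \bmod 3$ at every $k \in [0 \isep n]$ and aggregates the answers modulo $3$ through the identity
\[
\ZZ^p(\mat{A}) = \sum_{k=0}^n \ZZ_{p,k}(\mat{A}, \ldots, \mat{A}).
\]
Only $n+1$ queries are made on the same matrix, so the $(-1,0,1)$ and P-matrix restrictions are automatically preserved; the hardness then transfers, since the function-problem notions of \UP- and \ModkP{3}-hardness used throughout \cref{sec:intract} are closed under polynomial-time Turing reducibility.

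For the second bullet, I plan to reuse the Hamiltonian-path construction in the proof of \cref{thm:inapprox-3} (without its $\theta$-scaling, since we now only need to distinguish positivity from zero) and specialize the size parameter to $k = n-1$, where $n$ is the number of vertices of the input directed graph $G$. That proof already shows that for every $S \subseteq E$ with $|S| = n-1$, the value $\det(\mat{A}_{S,S}) \det(\mat{B}_{S,S}) \det(\mat{C}_{S,S})$ is strictly positive if and only if $S$ is a Hamiltonian path of $G$. Consequently, $\ZZ_{3,n-1}(\mat{A}, \mat{B}, \mat{C}) > 0$ iff $G$ is Hamiltonian, so \NP-hardness of deciding positivity follows, and any polynomial-time computable multiplicative approximation factor $\rho(|\calI|)$ would suffice to decide this.

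For the third bullet, I plan to observe that the construction underlying \cref{thm:AP-D-ZZ2} is in fact cleaner in the fixed-size version. Its preprocessing step already produces two positive semi-definite matrices $\mat{A}, \mat{B} \in \bbQ^{m^2 \times m^2}$ satisfying
\[
\sum_{S \in {[m^2] \choose m}} \det(\mat{A}_{S,S}) \det(\mat{B}_{S,S}) = m!\, D(\mat{K}^1, \ldots, \mat{K}^m),
\]
whose left-hand side is exactly $\ZZ_{2,m}(\mat{A}, \mat{B})$. The AP-reduction then merely invokes the presumed FPRAS for $\ZZ_{2,m}$ on $(\mat{A}, \mat{B})$ with the input tolerance and divides the estimate by $m!$; the scaling trick with an auxiliary parameter $x$ in the unconstrained proof (there solely to suppress contributions from $|S| \neq m$) becomes unnecessary because the size constraint now does this automatically.

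The principal technical obstacle is the first bullet's reliance on Turing reductions to carry \UP- and \ModkP{3}-hardness, but this is standard for the function-problem formulations already used inside the proof of \cref{thm:sos-hard} and its antecedents. The remaining two bullets essentially fall out by reading off the size-$(n-1)$ or size-$m$ summands already isolated in the original proofs, so no new complexity-theoretic ingredients beyond those in \cref{sec:intract,sec:inapprox} are required.
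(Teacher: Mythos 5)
Your proposal is correct and follows essentially the same route as the paper's proof: the first bullet via the identity $\ZZ_p(\mat{A},\ldots,\mat{A}) = \sum_{0 \leq k \leq n} \ZZ_{p,k}(\mat{A},\ldots,\mat{A})$ combined with \cref{cor:pow-hard}, the second by specializing the \cref{thm:inapprox-3} construction to $k = n-1$, and the third by reading off $\ZZ_{2,m}(\mat{A},\mat{B}) = m!\,D(\mat{K}^1,\ldots,\mat{K}^m)$ from \cref{eq:ZZ2-D} so that the $x$-scaling becomes unnecessary. The paper's own remark that the third reduction is "much simpler" than \cref{thm:AP-D-ZZ2} matches your observation exactly.
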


Some remarks are in order.
The first statement is a direct consequence of \cref{cor:pow-hard}.
The second statement corresponds to \cref{thm:inapprox-3};
however, there is a crucial difference between 
$\ZZ_{3,k}$ and $\ZZ_3$ wherein
the former cannot be approximated within \emph{any} (polynomial-time computable) factor, while
the latter is approximable within a factor of $2^{\bigO(|\calI|^2)}$ (\cref{obs:approx-3}).
The third statement corresponds to \cref{thm:AP-D-ZZ2},
whose proof is much simpler than that of \cref{thm:AP-D-ZZ2}.

\begin{proof}[Proof of \cref{thm:kdpp-intract-inapprox}]
The first statement is a direct consequence of 
\cref{cor:pow-hard} and the following equality
for any $n \times n$ matrix $\mat{A}$ and any positive integer $p$:
\begin{align}
\ZZ_p(\underbrace{\mat{A}, \ldots, \mat{A}}_{p \text{ times}}) = \sum_{0 \leq k \leq n} \ZZ_{p,k}(\underbrace{\mat{A}, \ldots, \mat{A}}_{p \text{ times}}).
\end{align}
To prove the second statement,
we show a polynomial-time Turing reduction from \textsc{HamiltonianPath} \citep{garey1979computers}.
Let $G = (V,E)$ be a directed graph on $n$ vertices and $m$ edges.
We construct $m \times m$ three positive semi-definite matrices
$\mat{A}, \mat{B}, \mat{C} \in \bbQ^{E \times E}$ in polynomial time according 
to the procedure described in the proof of \cref{thm:inapprox-3}.
In particular,
the value of $\det(\mat{A}_{S,S}) \det(\mat{B}_{S,S}) \det(\mat{C}_{S,S})$
for $S \in {E \choose n-1}$ is positive if and only if 
$S$ is a Hamiltonian path.
We can thus use
$\ZZ_{3,n-1}(\mat{A},\mat{B},\mat{C})$
to decide the Hamiltonicity of $G$ as follows:
\begin{itemize}
    \item \textbf{Case (1)} if there exists (at least) one Hamiltonian path in $G$, then $\ZZ_{3,n-1}(\mat{A},\mat{B},\mat{C}) > 0$;
    \item \textbf{Case (2)} if no Hamiltonian path exists in $G$,
    then $\ZZ_{3,n-1}(\mat{A},\mat{B},\mat{C}) = 0$.
\end{itemize}

Finally, we show a polynomial-time Turing reduction from $D$ to $\ZZ_{2,k}$, which is sufficient to prove AP-reducibility.
Let $\mat{K}^1, \ldots, \mat{K}^m$ be $m$ positive semi-definite matrices in $\bbQ^{m \times m}$, and define $n = m^2$.
We construct the two matrices $\mat{A},\mat{B} \in \bbQ^{n \times n}$ by following the procedure in the proof of \cref{thm:AP-D-ZZ2}.
By \cref{eq:ZZ2-D},
we have the following relation:
\begin{align}
    \ZZ_{2,m}(\mat{A},\mat{B})
    = \sum_{S \in {[n] \choose m}}\det(\mat{A}_{S,S})\det(\mat{B}_{S,S})
    = m!\;D(\mat{K}^1, \ldots, \mat{K}^m),
\end{align}
which completes the reduction.
\end{proof}

\subsection{Fixed-Parameter Tractability}
Here, we demonstrate the fixed-parameter tractability of computing $\ZZ_{m,k}$.
First, we show that even if we are only given access to an oracle for $\ZZ_m$,
we can \emph{recover}
the values of $\ZZ_{m,k}$ by calling the oracle polynomially many times.
\begin{theorem}
\label{thm:ZZ-ZZk}
Let $\mat{A}^1, \ldots, \mat{A}^m$ be $m$ matrices in $\bbQ^{n \times n}$ and $k$ be an integer in $[n]$.
Suppose we are given access to an oracle that returns $\ZZ_m$.
Then, for all $k$, $ \ZZ_{m,k}(\mat{A}^1, \ldots, \mat{A}^m)$
can be computed in polynomial time, by calling
the oracle $L \triangleq n+1$ times.
Furthermore, if $\{(\mat{A}^{1,\ell}, \ldots, \mat{A}^{m,\ell})\}_{\ell \in [L]}$ denotes the set consisting of the tuples of $m$ matrices for which the oracle is called,
we have that $\rank(\mat{A}^{i,\ell}) \leq \rank(\mat{A}^i)$ and 
$\nnz(\mat{A}^{i,\ell}) \subseteq \nnz(\mat{A}^i)$
for all $i \in [m]$ and $\ell \in [L]$.
\end{theorem}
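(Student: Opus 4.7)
The plan is to encode the size statistic $|S|$ as the exponent of an auxiliary variable $x$, reducing the recovery of $\ZZ_{m,k}$ for all $k$ to polynomial interpolation on a single univariate polynomial of degree at most $n$. Concretely, for a scalar $x \in \bbQ$, I will scale the first matrix by $x$ and query the oracle on $(x\mat{A}^1, \mat{A}^2, \ldots, \mat{A}^m)$. Using the multilinearity of the determinant in rows (equivalently, $\det((x\mat{A}^1)_{S,S}) = x^{|S|} \det(\mat{A}^1_{S,S})$), the expansion
\begin{align*}
\ZZ_m(x\mat{A}^1, \mat{A}^2, \ldots, \mat{A}^m) = \sum_{S \subseteq [n]} x^{|S|} \prod_{i \in [m]} \det(\mat{A}^i_{S,S}) = \sum_{k=0}^{n} x^k \cdot \ZZ_{m,k}(\mat{A}^1, \ldots, \mat{A}^m)
\end{align*}
shows that, viewed as a univariate polynomial $P(x)$ in $x$, the coefficient of $x^k$ is exactly the quantity $\ZZ_{m,k}$ that we want to compute.

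Next, I will evaluate $P$ at the $n+1$ distinct points $x_\ell \triangleq \ell$ for $\ell \in [0\isep n]$, each of which requires one oracle call on the tuple $(\ell \mat{A}^1, \mat{A}^2, \ldots, \mat{A}^m)$; setting $L \triangleq n+1$ matches the claimed bound. Recovery of the coefficients then reduces to solving the $(n+1) \times (n+1)$ Vandermonde system $V \vec{c} = \vec{y}$ with $V_{\ell,k} = \ell^k$ and $y_\ell = P(\ell)$. Since the points are distinct, $V$ is invertible, and Lagrange interpolation (or Vandermonde inversion) yields $\vec{c} = (\ZZ_{m,0}, \ldots, \ZZ_{m,n})$ in polynomial time; all arithmetic stays of polynomial size because the evaluation points have bit size $\bigO(\log n)$ and the oracle outputs $P(\ell)$ exactly.

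Finally, I need to check the structural conditions on the queried matrices. Since $\ell \mat{A}^1$ is a scalar multiple of $\mat{A}^1$, we automatically have $\rank(\ell \mat{A}^1) \leq \rank(\mat{A}^1)$ and $\nnz(\ell \mat{A}^1) \subseteq \nnz(\mat{A}^1)$ (with equality when $\ell \neq 0$, and both rank and nnz being $0$ or $\emptyset$ when $\ell = 0$). Matrices $\mat{A}^{i,\ell} = \mat{A}^i$ for $i \geq 2$ trivially satisfy both conditions. The only step that is even mildly delicate is the polynomial-time bound on the interpolation, and even there the standard bound on bit-complexity of rational Gaussian elimination on a Vandermonde matrix with small entries suffices; hence there is no real obstacle, and the argument is self-contained given access to the oracle.
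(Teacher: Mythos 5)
Your proposal is correct and follows essentially the same route as the paper's proof: scale $\mat{A}^1$ by a scalar $x$ so that $\ZZ_m(x\mat{A}^1,\mat{A}^2,\ldots,\mat{A}^m)=\sum_k x^k\,\ZZ_{m,k}$, evaluate at $n+1$ distinct points via $n+1$ oracle calls, and recover the coefficients by Lagrange/Vandermonde interpolation, with the rank and $\nnz$ conditions following immediately from scalar multiplication. The only (immaterial) difference is your choice of evaluation points $0,\ldots,n$ versus the paper's $1,\ldots,n+1$.
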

\begin{proof}
Introduce a positive number $x \in \bbQ$ and
consider a polynomial $\ZZ(x) \triangleq \ZZ_m(x \mat{A}^1, \mat{A}^2, \ldots, \mat{A}^m)$ in $x$.
We can expand it as follows:
\begin{align*}
    \ZZ_m(x \mat{A}^1, \mat{A}^2, \ldots, \mat{A}^m)
    & = \sum_{S \subseteq [n]} x^{|S|} \prod_{i \in [m]} \det(\mat{A}^i)
    = \sum_{k \in [n]} x^k \cdot \ZZ_{m,k}(\mat{A}^1, \ldots, \mat{A}^m).
\end{align*}
Given the values of $\ZZ(x)$ for all $x \in [n+1]$,
each of which can be computed by calling the oracle for $\ZZ_m$,
we can recover all the coefficients of $\ZZ(x)$ by Lagrange interpolation as desired.
The structural arguments are obvious.
\end{proof}

As a corollary of \cref{thm:fpt-rank-m,thm:fpt-treewidth-m,thm:ZZ-ZZk},
we obtain respective FPT algorithms 
for computing $\ZZ_{m,k}$ parameterized by maximum rank and treewidth.
\begin{corollary}
\label{cor:kdpp-fpt}
For $m$ matrices $\mat{A}^1, \ldots, \mat{A}^m$ in $\bbQ^{n \times n}$,
the following fixed-parameter tractability results hold:
\begin{itemize}
    \item there exists an $ r^{\bigO(mr)} n^{\bigO(1)} $-time algorithm
    for computing $\ZZ_{m,k}(\mat{A}^1, \ldots, \mat{A}^m)$ for all $k \in [n]$, where
    $r = \max_{i \in [m]} \rank(\mat{A}^i)$ denotes the maximum rank among the $m$ matrices;
    \item there exists a $ w^{\bigO(mw)} n^{\bigO(1)} $-time algorithm
    for computing $\ZZ_{m,k}(\mat{A}^1, \ldots, \mat{A}^m)$ for all $k \in [n]$, where
    $w = \tw(\bigcup_{i \in [m]} \nnz(\mat{A}^i))$ denotes
    the treewidth of the union of nonzero entries in the $m$ matrices.
\end{itemize}
\end{corollary}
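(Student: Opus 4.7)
The plan is a direct combination of \cref{thm:ZZ-ZZk} with the FPT algorithms of \cref{thm:fpt-rank-m,thm:fpt-treewidth-m}. Recall that \cref{thm:ZZ-ZZk} provides a polynomial-time procedure that, given access to an oracle for $\ZZ_m$, recovers $\ZZ_{m,k}(\mat{A}^1, \ldots, \mat{A}^m)$ for every $k \in [0\isep n]$ by calling the oracle $n+1$ times on the tuples $(x \mat{A}^1, \mat{A}^2, \ldots, \mat{A}^m)$ for $x \in [n+1]$, followed by Lagrange interpolation. The key structural guarantees it affords, namely $\rank(x \mat{A}^1) \leq \rank(\mat{A}^1)$ and $\nnz(x \mat{A}^1) \subseteq \nnz(\mat{A}^1)$ for every $x$ queried, are precisely what lets us preserve the parameterization.

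For the first bullet, I would implement the oracle used by \cref{thm:ZZ-ZZk} by invoking \cref{thm:fpt-rank-m}. Since the maximum rank among the matrices in each of the $n+1$ query tuples is still at most $r$, each call costs $r^{\bigO(mr)} n^{\bigO(1)}$ time; summing over the $n+1$ calls and adding the cost of Lagrange interpolation (polynomial in the bit-size of the queried values) yields the claimed $r^{\bigO(mr)} n^{\bigO(1)}$ runtime.

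For the second bullet, I would analogously use \cref{thm:fpt-treewidth-m} as the oracle. A single tree decomposition of $([n], \bigcup_{i \in [m]} \nnz(\mat{A}^i))$ of width $\bigO(w)$ can be computed once up-front in $2^{\bigO(w)} n$ time via a constant-factor approximation algorithm \citep{bodlaender2016approximation}; because scaling $\mat{A}^1$ does not alter its set of nonzero entries, the same decomposition is valid for every oracle query. Each of the $n+1$ queries then costs $w^{\bigO(mw)} n^{\bigO(1)}$ time, giving the same overall bound.

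I expect no real obstacle: the proof is essentially a two-line composition. The only small item worth verifying is that the bit-size of each queried matrix, and hence of each computed $\ZZ_m$ value fed into the interpolation, remains polynomial in the input size; this is immediate since each query matrix differs from the original only in that $\mat{A}^1$ is scaled by an integer at most $n+1$, so the bit-size grows only by $\bigO(\log n)$ per entry.
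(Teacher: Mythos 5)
Your proof is correct and matches the paper exactly: the paper states this result as an immediate corollary of \cref{thm:ZZ-ZZk} combined with \cref{thm:fpt-rank-m,thm:fpt-treewidth-m}, relying on precisely the structural guarantees ($\rank$ and $\nnz$ preservation under the oracle queries) that you identify. Your additional care about computing a tree decomposition up-front and about bit-sizes is consistent with the paper's surrounding remarks and does not change the argument.
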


\subsection{Parameterization by Output Size}
Finally, we investigate the fixed-parameter tractability of the computation of $\ZZ_{m,k}$ parameterized by $k$.
Since a brute-force algorithm that examines
all possible $ {n \choose k} = \bigO(n^k) $ subsets has time complexity $ m n^{k + \bigO(1)} $,
computing $\ZZ_{m,k}$ parameterized by $k$ is XP.
On the other hand, we show that computing $\ZZ_{m,k}$ is \shW{1}-hard.
Consequently, an FPT algorithm parameterized by $k$ for computing $\ZZ_{m,k}$ does not exist unless \FPT~$=$~\shW{1},
which is suspected to be false.

\begin{theorem}
\label{thm:kdpp-size-hard}
Let $\mat{A}, \mat{B}$ be two $n \times n$ positive semi-definite $(0,1)$-matrices
and $k$ be a positive integer in $[n]$.
Then,
it is \shW{1}-hard to compute $\ZZ_{2,k}(\mat{A},\mat{B})$ parameterized by $k$.
\end{theorem}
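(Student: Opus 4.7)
The plan is to reuse the construction from the proof of \cref{thm:treewidth-hard} and compose it with the (external) \shW{1}-hardness of counting fixed-size matchings in bipartite graphs. Given an instance $(H, k)$ of \#$k$-\textsc{BipartiteMatching}, where $H = (X \uplus Y, E)$ is a bipartite graph and $k$ is the target matching size, I will build the two $|E| \times |E|$ positive semi-definite $(0,1)$-matrices $\mat{A}, \mat{B} \in \{0,1\}^{E \times E}$ exactly as in the proof of \cref{thm:treewidth-hard}: $A_{i,j} = 1$ iff edges $i,j \in E$ share an endpoint in $X$, and $B_{i,j} = 1$ iff they share an endpoint in $Y$. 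Each matrix is block-diagonal with all-ones blocks (one block per vertex of $X$, respectively $Y$), so both are positive semi-definite and $(0,1)$-valued, as required by the statement.

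The analysis performed in the proof of \cref{thm:treewidth-hard} already establishes that $\det(\mat{A}_{S,S})\det(\mat{B}_{S,S})$ equals $1$ exactly when $S$ is a matching of $H$ and $0$ otherwise. Restricting the sum defining $\ZZ_{2,k}$ to size-$k$ subsets therefore gives the exact count of $k$-matchings of $H$:
\begin{align*}
    \ZZ_{2,k}(\mat{A}, \mat{B}) = \Bigl|\Bigl\{ S \in {E \choose k} : S \text{ is a matching in } H \Bigr\}\Bigr|.
\end{align*}
Because the reduction runs in polynomial time and leaves the parameter $k$ unchanged, it is a parameterized (indeed parsimonious) reduction from \#$k$-\textsc{BipartiteMatching} to the problem of computing $\ZZ_{2,k}$.

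The final step is to invoke the \shW{1}-hardness, parameterized by $k$, of counting matchings of size $k$ in a bipartite graph. This follows from Curticapean's theorem that \#$k$-\textsc{Matching} is \shW{1}-hard, which holds even when the input graph is bipartite (either by a standard vertex-duplication reduction or directly from Curticapean's interpolation construction). Combining the two steps yields \shW{1}-hardness of computing $\ZZ_{2,k}(\mat{A},\mat{B})$ parameterized by $k$. The only nontrivial obstacle is the appeal to this external hardness result; the reduction itself is an immediate recycling of the construction from \cref{thm:treewidth-hard}, and verifying parameter preservation and the PSD/$(0,1)$ structure is routine.
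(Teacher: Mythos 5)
Your proof is correct and follows essentially the same route as the paper: a parameter-preserving parsimonious reduction from counting size-$k$ matchings in a bipartite graph (\shW{1}-hard by Curticapean) via the Gillenwater-style construction already used in \cref{thm:treewidth-hard}. The paper cites \citet[Theorem I.2]{curticapean2014complexity} directly for the bipartite case, so your aside about deriving bipartiteness separately is unnecessary but harmless.
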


\begin{proof}
We show a polynomial-time parsimonious reduction from the problem of
counting all (imperfect) matchings of size $k$ in a bipartite graph $H$,
which was proven to be \shW{1}-hard by \citet[Theorem I.2]{curticapean2014complexity}.
In according with \citet{gillenwater2014approximate} (cf.~proof of \cref{thm:treewidth-hard}),
we construct two matrices $ \mat{A}$ and $\mat{B} $ from $H$ in polynomial time in $n$, so that
$\ZZ_{2,k}(\mat{A},\mat{B})$
is equal to the total number of $k$-matchings in $G$.
Because this reduction from an input $G$ with a parameter $k$ to an input $(\mat{A}, \mat{B})$ with a parameter $k$
is an FPT parsimonious reduction
\citep[see][]{flum2004parameterized},
the proof follows.
\end{proof}

\section{Application of FPT Algorithms to Two Related Problems}
\label{sec:app}
In this section,
we introduce two applications of the FPT algorithm for computing $\ZZ_m$ parameterized by the treewidth (\cref{thm:fpt-treewidth-m}).

\subsection{Approximation Algorithm for E-DPPs of Fractional Exponents}
\label{subsec:app:edpp}

We first use \cref{thm:fpt-treewidth-m} to estimate the normalizing constant for E-DPPs of \emph{fractional} exponent $p$.
Since $\Pi$-DPPs include E-DPPs of exponent $p$ only if $p$ is an \emph{integer},
fixed-parameter tractability does not apply to the fractional case.
The resulting application is a $w^{\bigO(wp)}$-time parameterized algorithm that estimates
$\ZZ^p(\mat{A})$ within a factor of $2^{\frac{n}{2p-1}}$.

\begin{theorem}
\label{thm:edpp-frac}
Let $\mat{A}$ be a P$_0$-matrix in $\bbQ^{n \times n}$ and
$p$ be a positive fractional number greater than $1$.
Then, there exists a $w^{\bigO(wp)}n^{\bigO(1)}$-time algorithm that
returns a $2^{\frac{n}{2p-1}}$-factor approximation to $\ZZ^p(\mat{A})$.
\end{theorem}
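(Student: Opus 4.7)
The plan is to combine Theorem \ref{thm:fpt-treewidth-m} with the monotonicity of $\ell^r$-norms to bracket $\ZZ^p(\mat{A})$ by simple functions of its integer-exponent analogues. Set $q \triangleq \lfloor p \rfloor \geq 1$, $\alpha \triangleq p - q \in (0,1)$, and $\beta \triangleq 1 - \alpha = (q+1)-p$. Because every factor in the tuple $(\mat{A},\ldots,\mat{A})$ equals $\mat{A}$, the union-graph $([n], \bigcup_i \nnz(\mat{A}))$ is just $([n], \nnz(\mat{A}))$ and has treewidth $w$, so Theorem \ref{thm:fpt-treewidth-m} yields both $\ZZ^q(\mat{A}) = \ZZ_q(\mat{A},\ldots,\mat{A})$ and $\ZZ^{q+1}(\mat{A}) = \ZZ_{q+1}(\mat{A},\ldots,\mat{A})$ exactly in $w^{\bigO(wp)} n^{\bigO(1)}$ time. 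Since $\mat{A}$ is a P$_0$-matrix, the vector $\vec{d} \in \bbR^{2^n}_{\geq 0}$ indexed by $S \subseteq [n]$ with $d_S \triangleq \det(\mat{A}_{S,S})$ is entrywise nonnegative, and $\|\vec{d}\|_r^r = \ZZ^r(\mat{A})$ for each $r > 0$.

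The next step is to invoke the standard norm inequality $\|\vec{v}\|_s \leq \|\vec{v}\|_r \leq N^{1/r - 1/s} \|\vec{v}\|_s$ (valid for $\vec{v} \in \bbR^N_{\geq 0}$ and $0 < r \leq s$) once with $(r,s) = (q,p)$ and once with $(r,s) = (p, q+1)$, taking $N = 2^n$. Raising each sandwich to the $p$-th power yields
\begin{align*}
\ZZ^p(\mat{A}) & \leq \bigl(\ZZ^q(\mat{A})\bigr)^{p/q} \leq 2^{n\alpha/q}\,\ZZ^p(\mat{A}), \\
\bigl(\ZZ^{q+1}(\mat{A})\bigr)^{p/(q+1)} & \leq \ZZ^p(\mat{A}) \leq 2^{n\beta/(q+1)}\,\bigl(\ZZ^{q+1}(\mat{A})\bigr)^{p/(q+1)}.
\end{align*}
Hence $E_1 \triangleq \bigl(\ZZ^q(\mat{A})\bigr)^{p/q}$ and $E_2 \triangleq 2^{n\beta/(q+1)}\bigl(\ZZ^{q+1}(\mat{A})\bigr)^{p/(q+1)}$ are both upper bounds on $\ZZ^p(\mat{A})$, with approximation factors $2^{n\alpha/q}$ and $2^{n\beta/(q+1)}$ respectively. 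I would compute both (approximating the fractional powers by Newton iteration to a $(1+o(1))$ multiplicative error, which is negligible against the exponential scale) and output $\widehat{\ZZ^p} \triangleq \min(E_1, E_2)$, yielding a $2^{n\min(\alpha/q,\,\beta/(q+1))}$-approximation.

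The hard part of the proof, though short in hindsight, is the analytical inequality $\min(\alpha/q, \beta/(q+1)) \leq 1/(2p-1)$; I would dispatch it by a pigeonhole argument. Suppose for contradiction that both $\alpha/q > 1/(2p-1)$ \emph{and} $\beta/(q+1) > 1/(2p-1)$; then $\alpha > q/(2p-1)$ and $\beta > (q+1)/(2p-1)$, so summing gives $1 = \alpha + \beta > (2q+1)/(2p-1)$, i.e., $2p-1 > 2q+1$, i.e., $p > q+1$, contradicting $p < \lfloor p \rfloor + 1 = q+1$. Thus $\widehat{\ZZ^p}$ is a $2^{n/(2p-1)}$-approximation to $\ZZ^p(\mat{A})$, and the total runtime is dominated by the two applications of Theorem \ref{thm:fpt-treewidth-m}, namely $w^{\bigO(wp)} n^{\bigO(1)}$.
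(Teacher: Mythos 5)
Your proposal is correct and follows essentially the same route as the paper: compute $\ZZ^{\floor{p}}$ and $\ZZ^{\ceil{p}}$ exactly via \cref{thm:fpt-treewidth-m}, bracket $\ZZ^p$ with the $\ell_r$-norm comparison inequalities, and show the better of the two resulting upper bounds achieves exponent $\frac{n}{2p-1}$. The only (cosmetic) differences are that you take the minimum of the two estimates instead of selecting one by comparing $\lambda$ to the threshold $\lambda^* = \frac{\floor{p}+1}{2\floor{p}+1}$, and you prove $\min(\alpha/q,\beta/(q+1)) \leq \frac{1}{2p-1}$ by pigeonhole rather than by explicitly maximizing the minimum.
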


We introduce the general equivalence of $\ell_p$ norms,
which can be derived from \Holder's inequality.

\begin{lemma}[General equivalence of $\ell_p$-norms, see, e.g., \citealp{steele2004cauchy}]
For an $n$-dimensional vector $\vec{x}$ in $\bbR^{n}$ and 
two positive real numbers $p,q$ such that $0<p<q$,
the following inequality holds:
\begin{align}
\label{eq:lp-ineq}
    \|\vec{x}\|_q \leq \|\vec{x}\|_p \leq n^{\frac{1}{p}-\frac{1}{q}} \cdot \|\vec{x}\|_q,
\end{align}
where $\| \cdot \|_p$ denotes the $\ell_p$ norm; i.e.,
\begin{align*}
    \|\vec{x}\|_p \triangleq \Biggl(\sum_{i \in [n]} |x_i|^p\Biggr)^{\frac{1}{p}}.
\end{align*}
\end{lemma}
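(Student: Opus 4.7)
The plan is to sidestep the restriction of \cref{thm:fpt-treewidth-m} to integer exponents by sandwiching $\ZZ^p(\mat{A})$ between the normalizing constants for E-DPPs of exponents $\floor{p}$ and $\ceil{p}$, and then using the $\ell_p$-norm equivalence lemma to bound the approximation loss. The key observation is that $\ZZ^p(\mat{A}) = \|\vec{d}\|_p^p$ where $\vec{d} \in \bbR^{2^n}$ is the vector indexed by subsets $S \subseteq [n]$ with entries $d_S \triangleq \det(\mat{A}_{S,S})$, all of which are nonnegative because $\mat{A}$ is a P$_0$-matrix.

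First, I invoke \cref{thm:fpt-treewidth-m} on $\floor{p}$ (resp.~$\ceil{p}$) copies of $\mat{A}$ to compute $\ZZ^{\floor{p}}(\mat{A})$ and $\ZZ^{\ceil{p}}(\mat{A})$ exactly. Since the union of the nonzero entries of these copies equals $\nnz(\mat{A})$, which has treewidth $w$, the runtime is $w^{\bigO(wp)} n^{\bigO(1)}$. Note that a tree decomposition of width $\bigO(w)$ can be computed in $2^{\bigO(w)}n$ time via the approximation algorithm of \citet{bodlaender2016approximation}, so no decomposition needs to be given as input.

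Next, I apply \cref{eq:lp-ineq} to the $2^n$-dimensional vector $\vec{d}$. Letting $k \triangleq \floor{p}$ and $\alpha \triangleq p - k \in (0,1)$, I obtain two candidate estimates by raising everything to the $p$-th power:
\begin{itemize}
    \item Using $q = k < p$: $\ZZ^p(\mat{A}) \leq (\ZZ^{k}(\mat{A}))^{p/k} \leq 2^{n \alpha/k} \cdot \ZZ^p(\mat{A})$, so $(\ZZ^{k}(\mat{A}))^{p/k}$ is a $2^{n\alpha/k}$-approximation.
    \item Using $q = k+1 > p$: $(\ZZ^{k+1}(\mat{A}))^{p/(k+1)} \leq \ZZ^p(\mat{A}) \leq 2^{n(1-\alpha)/(k+1)} \cdot (\ZZ^{k+1}(\mat{A}))^{p/(k+1)}$, so $2^{n(1-\alpha)/(k+1)} \cdot (\ZZ^{k+1}(\mat{A}))^{p/(k+1)}$ is a $2^{n(1-\alpha)/(k+1)}$-approximation.
\end{itemize}
I then output whichever of the two has the smaller approximation factor (this can be decided from $p$ alone).

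To conclude, it suffices to show $\min(\alpha/k, (1-\alpha)/(k+1)) \leq 1/(2p-1)$ for all $\alpha \in (0,1)$. The maximum of this min over $\alpha$ is attained at the crossover $\alpha^\star = k/(2k+1)$, where both expressions equal $1/(2k+1)$. Since $p < k+1$, we have $2p - 1 < 2k + 1$, whence $1/(2k+1) < 1/(2p-1)$, and the desired $2^{n/(2p-1)}$-approximation follows. The main technical subtlety I anticipate is the evaluation of the fractional powers $(\ZZ^q(\mat{A}))^{p/q}$: these cannot be computed exactly over $\bbQ$ but can be approximated to any desired precision in polynomial time via, e.g., Newton--Raphson, so an extra $(1+o(1))$ factor from numerical rounding is easily absorbed into the stated bound. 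A modest care will also be needed to handle the edge case $\ZZ^q(\mat{A}) = 0$, which forces $\ZZ^p(\mat{A}) = 0$ and can be reported exactly.
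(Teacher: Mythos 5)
Your proposal does not prove the stated lemma at all: what you have written is a proof of the downstream result (\cref{thm:edpp-frac}, the $2^{\frac{n}{2p-1}}$-approximation for E-DPPs of fractional exponent), which \emph{invokes} the $\ell_p$-norm equivalence \cref{eq:lp-ineq} as a black box. The statement you were asked to establish is \cref{eq:lp-ineq} itself, i.e., that $\|\vec{x}\|_q \leq \|\vec{x}\|_p \leq n^{\frac{1}{p}-\frac{1}{q}} \|\vec{x}\|_q$ for $0 < p < q$; nowhere in your argument is either of these two inequalities derived. (The paper likewise does not spell out a proof, citing \citet{steele2004cauchy}, but that does not make your text a proof of the lemma.)

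For completeness, the lemma admits a short self-contained argument. For the left inequality, assume without loss of generality that $\|\vec{x}\|_p = 1$, so that $|x_i| \leq 1$ for every $i$ and hence $|x_i|^q \leq |x_i|^p$; summing gives $\|\vec{x}\|_q^q \leq \|\vec{x}\|_p^p = 1$, i.e., $\|\vec{x}\|_q \leq 1 = \|\vec{x}\|_p$. For the right inequality, apply \Holder's inequality to the vectors $(|x_i|^p)_i$ and $(1)_i$ with conjugate exponents $\frac{q}{p}$ and $\frac{q}{q-p}$:
\begin{align*}
\|\vec{x}\|_p^p = \sum_{i \in [n]} |x_i|^p \cdot 1
\leq \Biggl(\sum_{i \in [n]} |x_i|^{q}\Biggr)^{\frac{p}{q}} \cdot n^{1-\frac{p}{q}}
= \|\vec{x}\|_q^{p} \cdot n^{1-\frac{p}{q}},
\end{align*}
and taking $p$-th roots yields $\|\vec{x}\|_p \leq n^{\frac{1}{p}-\frac{1}{q}} \|\vec{x}\|_q$. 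Your application of the lemma to the $2^n$-dimensional vector of principal minors, and the balancing of the two estimates at $\alpha^\star = k/(2k+1)$, is consistent with how the paper proves \cref{thm:edpp-frac}, but it answers a different question from the one posed.
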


\begin{remark}
\label{rmk:edpp-frac}
If we apply \cref{eq:lp-ineq} to a $2^n$-dimensional vector $\vec{x} \in \bbQ^{2^{[n]}}$ such that $x_S \triangleq \det(\mat{A}_{S,S}) $ for each $S \subseteq [n]$
for a P$_0$-matrix $\mat{A} \in \bbQ^{n \times n}$,
we have that $ \ZZ^p(\mat{A}) \leq \det(\mat{A}+\mat{I})^{p} \leq 2^{n(p-1)} \ZZ^p(\mat{A}) $ for any $p>1$,
which gives a $2^{n(p-1)}$-approximation.
On the other hand,
it is \NP-hard to approximate $\ZZ^p(\mat{A})$ within a factor of $2^{\beta pn}$ for some $\beta > 0$ \citep{ohsaka2021unconstrained}.
Therefore, $2^{\bigO(pn)}$ is a tight approximation factor for the normalizing constant for E-DPPs in the general case.
\end{remark}

\begin{proof}[Proof of \cref{thm:edpp-frac}]
Fix a positive semi-definite matrix $\mat{A} \in \bbQ^{n \times n}$ and
a fractional number $p > 1$.
Since $\floor{p} < p < \ceil{p}$,
we can write $p = \lambda \floor{p} + (1-\lambda) \ceil{p}$
for some $\lambda \in (0,1)$.
First, we derive two estimates of $\ZZ^p(\mat{A})$ using
$\ZZ^{\floor{p}}(\mat{A})$ and $\ZZ^{\ceil{p}}(\mat{A})$.

\paragraph{Estimate Using $\ZZ^{\floor{p}}$.}
We will show that $(\ZZ^{\floor{p}}(\mat{A}))^{\frac{p}{\floor{p}}}$
is a $2^{n(\frac{p}{\floor{p}}-1)}$-approximation to $\ZZ^p(\mat{A})$.
By using \cref{eq:lp-ineq},
we bound $(\ZZ^{\floor{p}}(\mat{A}))^{\frac{1}{\floor{p}}}$ as follows:
\begin{align*}
\Biggl(\sum_{S \subseteq [n]} \det(\mat{A}_{S,S})^{p} \Biggr)^{\frac{1}{p}}
\leq
\Biggl(\sum_{S \subseteq [n]} \det(\mat{A}_{S,S})^{\floor{p}} \Biggr)^{\frac{1}{\floor{p}}}
\leq 
(2^n)^{\frac{1}{\floor{p}} - \frac{1}{p}} \cdot
\Biggl(\sum_{S \subseteq [n]} \det(\mat{A}_{S,S})^{p} \Biggr)^{\frac{1}{p}},
\end{align*}
which immediately implies that
\begin{align}
\label{eq:ZZp-floor}
    \ZZ^p(\mat{A}) \leq (\ZZ^{\floor{p}}(\mat{A}))^{\frac{p}{\floor{p}}}
    \leq 2^{n(\frac{p}{\floor{p}}-1)} \cdot \ZZ^p(\mat{A}).
\end{align}

\paragraph{Estimate Using $\ZZ^{\ceil{p}}$.}
We will show that $(\ZZ^{\ceil{p}}(\mat{A}))^{\frac{p}{\ceil{p}}}$ is
a $2^{n(1-\frac{p}{\ceil{p}})}$-approximation to
$\ZZ^p(\mat{A})$.
By using \cref{eq:lp-ineq},
we bound $(\ZZ^{\ceil{p}}(\mat{A}))^{\frac{1}{\ceil{p}}}$ as follows:
\begin{align*}
(2^n)^{\frac{1}{\ceil{p}} - \frac{1}{p}} \cdot
\Biggl(\sum_{S \subseteq [n]} \det(\mat{A}_{S,S})^{p} \Biggr)^{\frac{1}{p}}
\leq
\Biggl(\sum_{S \subseteq [n]} \det(\mat{A}_{S,S})^{\ceil{p}} \Biggr)^{\frac{1}{\ceil{p}}}
\leq
\Biggl(\sum_{S \subseteq [n]} \det(\mat{A}_{S,S})^{p} \Biggr)^{\frac{1}{p}},
\end{align*}
which immediately implies that
\begin{align}
\label{eq:ZZp-ceil}
    \ZZ^p(\mat{A})
    \leq 2^{n(1-\frac{p}{\ceil{p}})} \cdot (\ZZ^{\ceil{p}}(\mat{A}))^{\frac{p}{\ceil{p}}}
    \leq 2^{n(1-\frac{p}{\ceil{p}})} \cdot \ZZ^p(\mat{A}).
\end{align}

Next, we choose the ``better'' of the two estimates, depending on the value of $\lambda$.
Since it holds that
$p = \lambda \floor{p} + (1-\lambda) \ceil{p} = \floor{p} + 1 - \lambda$,
we can simplify the exponents of the approximation factors in \cref{eq:ZZp-floor,eq:ZZp-ceil} as follows:
\begin{align*}
    \frac{p}{\floor{p}} - 1 = \frac{1-\lambda}{\floor{p}} \text{ and }
    1-\frac{p}{\ceil{p}} = \frac{\lambda}{\floor{p} + 1}.
\end{align*}
Observing that
the linear equation $\frac{1-\lambda}{\floor{p}} = \frac{\lambda}{\floor{p}+1}$
has a unique solution $\lambda^* \triangleq \frac{\floor{p}+1}{2\floor{p}+1}$,
we have the following relation:
\begin{align*}
    \min\left\{ \frac{1-\lambda}{\floor{p}}, \frac{\lambda}{\floor{p}+1} \right\}
    =
    \begin{cases}
    \dfrac{1-\lambda}{\floor{p}} & \text{if } \lambda > \lambda^* = \dfrac{\floor{p} + 1}{2\floor{p}+1}, \\[1em]
    \dfrac{\lambda}{\floor{p}+1} & \text{otherwise}.
    \end{cases}
\end{align*}
Further, $\min\left\{ \frac{1-\lambda}{\floor{p}}, \frac{\lambda}{\floor{p}+1} \right\}$ takes $\frac{1}{2\floor{p}+1}$ as the maximum value
when $\lambda = \lambda^* \in (0,1)$.

Our algorithm works as follows.
First, we compute $\ZZ^{\floor{p}}(\mat{A})$ and $\ZZ^{\ceil{p}}(\mat{A})$ in $w^{\bigO(wp)}n^{\bigO(1)}$ time by using the FPT algorithm in \cref{thm:fpt-treewidth-m}.
Then, if $\lambda > \lambda^*$, we output an $\alpha$-approximation to
$(\ZZ^{\floor{p}}(\mat{A}))^{\frac{p}{\floor{p}}}$;
otherwise we output an $\alpha$-approximation to $2^{n(1-\frac{p}{\ceil{p}})} \cdot (\ZZ^{\ceil{p}}(\mat{A}))^{\frac{p}{\ceil{p}}}$,
where $\alpha \triangleq 2^{(\frac{1}{2p-1} - \frac{1}{2\floor{p}+1})n} > 1$.%
\footnote{
We can use, for example, the Newton--Raphson method to compute an $\alpha$-approximation to fractional exponents efficiently.
}
The output ensures an approximation factor of
$\alpha \cdot 2^{\frac{n}{2\floor{p}+1}} \leq 2^{\frac{n}{2p-1}}$,
which completes the proof.
\end{proof}

\subsection{Subexponential Algorithm for Unconstrained MAP Inference}
\label{subsec:app:map}
We apply the FPT algorithm to maximum a posteriori (MAP) inference.
For a positive semi-definite matrix $\mat{A} \in \bbQ^{n \times n}$,
\emph{unconstrained MAP inference}
requests that we find a subset $S \subseteq [n]$ having the maximum determinant,
i.e., $\argmax_{S \subseteq [n]} \det(\mat{A}_{S,S})$.
We say that a polynomial-time algorithm $\alg$ is
a \emph{$\rho$-approximation algorithm} for $\rho \geq 1$ if for all positive semi-definite matrix $\mat{A} \in \bbQ^{n \times n}$,
\begin{align*}
    \det(\alg(\mat{A})) \geq \left(\frac{1}{\rho}\right) \cdot \max_{S \subseteq [n]} \det(\mat{A}_{S,S}),
\end{align*}
where $\alg(\mat{A})$ is the output of $\alg$ on $\mat{A}$.
The approximation factor $\rho$ can be a function in the input size $|\calI|$, and
(asymptotically) smaller $\rho$ is a better approximation factor.
The following theorem states that
we can find a $2^{\sqrt{n}}$-approximation to unconstrained MAP inference in $2^{\bigO(\sqrt{n})}$ time (with high probability)
provided that the treewidth of $\mat{A}$ is a constant.

\begin{theorem}
\label{thm:map-approx}
Let $\mat{A}$ be a positive semi-definite matrix of treewidth $w$ in $\bbQ^{n \times n}$.
Then, there exists a $w^{\bigO(w\sqrt{n})} n^{\bigO(1)}$-time randomized algorithm that
outputs a $2^{\sqrt{n}}$-approximation to unconstrained MAP inference on $\mat{A}$ with probability at least $1-2^{-n}$.
In particular, if the treewidth $w$ is $\bigO(1)$,
then the time complexity is bounded by
$2^{\bigO(\sqrt{n})}$.
\end{theorem}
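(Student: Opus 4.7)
The plan is to sample a subset $S$ from the E-DPP of an integer exponent $p = \Theta(\sqrt{n})$ defined by $\mat{A}$, and then output this $S$ as the approximate MAP solution. To generate the sample I will invoke the oracle-based sampling procedure of \cref{thm:sampling} on the $p$ identical matrices $\mat{A}, \ldots, \mat{A}$ (a valid P$_0$ tuple since $\mat{A}$ is positive semi-definite), using the FPT algorithm of \cref{thm:fpt-treewidth-m} as the oracle for $\ZZ_p$. Because the E-DPP with large integer exponent assigns almost all mass to near-maximizers of $\det(\mat{A}_{S,S})$, a single draw will suffice with very high probability.

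For the approximation guarantee let $S^\star \in \argmax_{S \subseteq [n]} \det(\mat{A}_{S,S})$ and set $\OPT \triangleq \det(\mat{A}_{S^\star, S^\star})$, which is positive since $\det(\mat{A}_{\emptyset,\emptyset}) = 1$. Call $T \subseteq [n]$ \emph{bad} whenever $\det(\mat{A}_{T,T}) < \OPT / 2^{\sqrt{n}}$, and write $\bm{\mu}$ for the E-DPP of exponent $p$ defined by $\mat{A}$. The denominator of any $\bm{\mu}$-probability is at least $\OPT^p$, and the sum in the numerator has at most $2^n$ terms each strictly less than $(\OPT/2^{\sqrt{n}})^p$, so
\begin{align*}
    \Pr_{S \sim \bm{\mu}}[S \text{ is bad}]
    = \frac{\sum_{T \text{ bad}} \det(\mat{A}_{T,T})^p}{\ZZ^p(\mat{A})}
    \leq \frac{2^n \cdot (\OPT/2^{\sqrt{n}})^p}{\OPT^p} = 2^{\, n - p\sqrt{n}}.
\end{align*}
Taking $p \triangleq \lceil 2\sqrt{n}\, \rceil$ forces this to be at most $2^{-n}$, so with probability at least $1-2^{-n}$ the sampled $S$ satisfies $\det(\mat{A}_{S,S}) \geq \OPT/2^{\sqrt{n}}$, i.e., is a $2^{\sqrt{n}}$-approximation to MAP.

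For the runtime, \cref{thm:sampling} makes $\bigO(n^2)$ oracle queries and guarantees that every queried tuple $(\mat{A}^{1,\ell}, \ldots, \mat{A}^{p,\ell})$ satisfies $\nnz(\mat{A}^{i,\ell}) \subseteq \nnz(\mat{A})$. Consequently $\tw\bigl(\bigcup_{i} \nnz(\mat{A}^{i,\ell})\bigr) \leq w$, and a single tree decomposition of the graph $([n], \nnz(\mat{A}))$, computable in $2^{\bigO(w)}n$ time by the $5$-approximation algorithm of \citet{bodlaender2016approximation}, serves every query. Then \cref{thm:fpt-treewidth-m} evaluates each oracle call in $w^{\bigO(wp)} n^{\bigO(1)} = w^{\bigO(w\sqrt{n})} n^{\bigO(1)}$ time, yielding the claimed overall bound.

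The bulk of the work is really the one-line mass-concentration inequality above, after which the theorem reduces to plugging \cref{thm:fpt-treewidth-m} into \cref{thm:sampling}. The only subtle point I expect to dwell on is verifying that the treewidth bound survives every oracle call inside the sampling procedure; this is handled entirely by the structural guarantee $\nnz(\mat{A}^{i,\ell}) \subseteq \nnz(\mat{A}^i)$ recorded in \cref{thm:sampling} (itself inherited from the Hadamard-product construction in \cref{lem:condition}), so no extra preprocessing of the instance is needed.
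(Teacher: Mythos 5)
Your proposal is correct and follows essentially the same route as the paper: sample from the E-DPP of exponent $p = \lceil 2\sqrt{n}\rceil$ via \cref{thm:sampling} with \cref{thm:fpt-treewidth-m} as the oracle, and bound the probability of drawing a non-$2^{\sqrt{n}}$-approximate set by $2^{n-p\sqrt{n}} \leq 2^{-n}$. Your extra remark that the structural guarantee $\nnz(\mat{A}^{i,\ell}) \subseteq \nnz(\mat{A}^i)$ keeps every oracle call within the same treewidth bound makes explicit a point the paper's proof leaves implicit, but the argument is the same.
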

\begin{proof}
Fix a positive semi-definite matrix $\mat{A} \in \bbQ^{n \times n}$ of treewidth $w$ and
let $\OPT \triangleq \max_{S \subseteq [n]} \det(\mat{A}_{S,S})$ be the optimum value of unconstrained MAP inference on $\mat{A}$.
Define $p \triangleq \lceil 2 \sqrt{n} \rceil$.
Consider an E-DPP of exponent $p$ defined by $\mat{A}$, whose
probability mass for each subset $S \subseteq [n]$
is $ \frac{\det(\mat{A}_{S,S})^p}{\ZZ^p(\mat{A})} $.
Since this E-DPP coincides with the $\Pi$-DPP defined by $p$ copies of $\mat{A}$,
by \cref{thm:sampling,thm:fpt-treewidth-m},
we can draw a random sample $S$ from it in $ w^{\bigO(w \sqrt{n})} n^{\bigO(1)} $ time.
Observe that
the event of $S$ being a $2^{\sqrt{n}}$-approximation
(which includes the case of $\det(\mat{A}_{S,S}) = \OPT$)
occurs with probability at least $\frac{\OPT^p}{\ZZ^p(\mat{A})} $ and that
it does not occur
with probability at most
\begin{align*}
    \sum_{S \subseteq [n]: S \text{ is not } 2^{\sqrt{n}}\text{-approx.}}
    \frac{\det(\mat{A}_{S,S})^p}{\ZZ^p(\mat{A})}
    \leq \sum_{S \subseteq [n]: S \text{ is not } 2^{\sqrt{n}}\text{-approx.}}
    \frac{(2^{-\sqrt{n}} \OPT)^p}{\ZZ^p(\mat{A})}
    \leq \frac{2^n \cdot 2^{-p\sqrt{n}} \OPT^p}{\ZZ^p(\mat{A})}.
\end{align*}
Hence, we have that the probability of success is at least
\begin{align*}
1-\frac{\left(\dfrac{2^n \cdot 2^{-p\sqrt{n}} \OPT^p}{\ZZ^p(\mat{A})}\right)}{\left(\dfrac{\OPT^p}{\ZZ^p(\mat{A})}\right)}
 = 1-2^{n-p\sqrt{n}} \geq 1-2^{-n},
\end{align*}
which completes the proof.
\end{proof}

\begin{remark}
A similar approach does not work for size-constrained MAP inference for the following reason:
for a size parameter $k \in [n]$,
the objective is to compute a $2^k$-approximation, i.e.,
a set $S \in {[n] \choose k}$ such that
$\det(\mat{A}_{S,S}) \geq 2^{-\sqrt{k}} \argmax_{S^* \in {[n] \choose k}} \det(\mat{A}_{S^*,S^*}) $.
Let $p \triangleq \ceil{2\sqrt{k}}$, and 
consider a fixed-size E-DPP of exponent $p$ defined by $\mat{A}$, which coincides with the $k\Pi$-DPP defined by $p$ copies of $\mat{A}$.
Then, a sample drawn from this $k\Pi$-DPP is a $2^{\sqrt{k}}$-approximation with probability at least
\begin{align*}
    1-\frac{{n \choose k} \cdot 2^{-p\sqrt{k}}\OPT^p}{\OPT^p} \geq 
    1 - \bigO\left(\frac{n}{k}\right)^k,
\end{align*}
which can be $0$.
\end{remark}

\section{Concluding Remarks and Open Questions}
\label{sec:conclusion}

We studied the computational complexity of 
the normalizing constant for the product of determinantal point processes.
Our results (almost) rule out the possibility of efficient algorithms for general cases and
devised the fixed-parameter tractability.
Several open questions are listed below.

\begin{itemize}
    \item\textbf{Q1.}~Can we show the intractability of computing
    $ \ZZ^p $ for $p$ which is not a positive even integer,
    such as $p=3$ and $p=1.1$?
    The proof strategy employed in \cref{sec:intract} would no longer work.
    
    \item\textbf{Q2.}~Can we develop more ``practical'' FPT algorithms having a small exponential factor, e.g., $f(k) = 2^k$?
    We might have to avoid enumerating bijections.

    \item\textbf{Q3.}~Can we establish fixed-parameter tractability for parameters other than the treewidth or matrix rank, such as the cliquewidth?

    \item \textbf{Q4.}~Can we devise an \emph{exact} FPT algorithm for computing $\ZZ^p$ for a fractional exponent $p$ parameterized by the treewidth?

    \item \textbf{Q5.}~Can we remove the bounded-treewidth assumption from \cref{thm:map-approx} to obtain a subexponential algorithm for (unconstrained) MAP inference?
\end{itemize}
The first author of this article gave a partial answer to \textbf{Q1} \citep{ohsaka2021unconstrained}: it is \NP-hard to approximate $\ZZ^p$ within an exponential factor in the order of an input matrix if $p \geq 10^{10^{13}}$.

\section*{Acknowledgments}
The authors are grateful for the helpful conversations with Shinji Ito and
thank Leonid Gurvits very much for pointing out the reference \citep{gurvits2009complexity}.

\bibliographystyle{abbrvnat}
\bibliography{ref.bib}

\end{document}